\newcommand{\assign}{\leftarrow}
\newtheorem{theorem}{Theorem}
\newtheorem{lemma}[theorem]{Lemma}
\newtheorem{definition}[theorem]{Definition}
\newcommand{\om}{\textsc{OneMax}\xspace}
\newcommand{\onemax}{\om}
\newcommand{\lo}{\textsc{LeadingOnes}\xspace}
\newcommand{\leadingones}{\lo}
\newcommand{\R}{\ensuremath{\mathbb{R}}}
\newcommand{\N}{\ensuremath{\mathbb{N}}}
\newcommand{\eps}{\varepsilon}
\newcommand{\A}{\ensuremath{\mathcal{A}}}
\newcommand{\F}{\ensuremath{\mathcal{F}}}
\DeclareMathOperator{\E}{E}
\newcommand{\OM}{\textsc{OM}}
\newcommand{\LO}{\textsc{LO}}
\newcommand{\BV}{\textsc{BV}}
\newcommand{\binval}{\textsc{BinaryValue}}
\newcommand{\jump}{\textsc{Jump}\xspace}
\begin{document}

\lhead{\emph{C. Doerr}}
\chead{Survey Article}
\rhead{Black-Box Complexity}
\cfoot{\thepage}

\newcommand{\note}[1]{{\color{red}\it #1}}

\title{Complexity Theory for Discrete Black-Box Optimization Heuristics}
	\author{
		Carola Doerr}
		\date{
Sorbonne Universit\'e, CNRS, Laboratoire d'informatique de Paris 6, LIP6, 75252 Paris, France
\vspace{1.5ex}
\today
}
\maketitle

\sloppy{
\begin{abstract}
A predominant topic in the theory of evolutionary algorithms and, more generally, theory of randomized black-box optimization techniques is 
\emph{running time analysis.} Running time analysis aims at understanding the performance of a given heuristic on a given problem by bounding the number of function evaluations that are needed by the heuristic to identify a solution of a desired quality. 
As in general algorithms theory, this running time perspective is most useful when it is complemented by a meaningful
\emph{complexity theory} that studies the limits of algorithmic solutions. 

In the context of discrete black-box optimization, several \emph{black-box complexity models} have been developed to analyze the best possible performance that a black-box optimization algorithm can achieve on a given problem. The models differ in the classes of algorithms to which these lower bounds apply. This way, black-box complexity contributes to a better understanding of how certain algorithmic choices (such as the amount of memory used by a heuristic, its selective pressure, or properties of the strategies that it uses to create new solution candidates) influences performance. 

In this chapter we review the different black-box complexity models that have been proposed in the literature, survey the bounds that have been obtained for these models, and discuss how the interplay of running time analysis and black-box complexity can inspire new algorithmic solutions to well-researched problems in evolutionary computation. We also discuss in this chapter several interesting open questions for future work. 
\end{abstract}

\vspace{4.5ex}
\textbf{Reference:} This survey article is to appear (in a slightly modified form) in the book ``Theory of Randomized Search Heuristics in Discrete Search Spaces'', which will be published by Springer in 2018. The book is edited by Benjamin Doerr and Frank Neumann.\\
Missing numbers in the pointers to other chapters of this book will be added as soon as possible.

\newpage
\tableofcontents

\pagebreak
\section{Introduction and Historical Remarks}\label{sec:BBCintro}

One of the driving forces in theoretical computer science is the fruitful interplay between \emph{complexity theory} and the \emph{theory of algorithms.} While the former measures the minimal computational effort that is needed to solve a given problem, the latter aims at designing and analyzing efficient algorithmic solutions, which prove that a problem can be solved with a certain computational effort. When for a problem the lower bound for the resources needed to solve it are identical to (or not much smaller than) the upper bounds attained by some specific algorithm, we can be certain to have an (almost) optimal algorithmic solution for this problem. Big gaps between lower and upper bounds, in contrast, indicate that more research effort is needed to understand the problem: it may be that more efficient algorithms for the problem exist, or that the problem is indeed ``harder'' than what the  lower bound suggests. 

Many different complexity models co-exist in the theoretical computer science literature. The arguably most classical one measures the number of arithmetic operations that an algorithm needs to perform on the problem data until it obtains a solution for the problem. A solution can be a ``yes/no'' answer (decision problem), a classification of a problem instance according to some criteria (classification problem), a vector of decision variables that maximize or minimize some objective function (optimization problem), etc. In the optimization context, we are typically only interested in algorithms that satisfy some minimal quality requirements such as a guarantee that the suggested solutions (``the output'' of the algorithm) are always optimal, optimal with some large enough probability, or that they are not worse than an optimal solution by more than some additive or multiplicative factor $C$, etc. 

In the \emph{white-box} setting, in which the algorithms have full access to the data describing the problem instance, complexity theory is a well-established and very intensively studied research objective. In \emph{black-box optimization,} where the algorithms do not have access to the problem data and can learn about the problem at hand only through the evaluation of potential solution candidates, complexity-theory is a much less present topic, with a rather big fluctuation in the number of publications. In the context of heuristic solutions to black-box optimization problems, which is the topic of this book, complexity-theory has been systematically studied only after 2010, under the notion of \emph{black-box complexity}. Luckily, black-box complexity theory can build on results in related research domains such as information theory, discrete mathematics, cryptography, and others. 

In this book chapter we review the state of the art in this currently very active area of research, which is concerned with bounding the best possible performance that an optimization algorithm can achieve in a black-box setting.  

\subsection{Black-Box vs. White-Box Complexity}\label{sec:BBCintrowhite}

Most of the traditional complexity measures assume that the algorithms have access to the problem data and count the number of steps that are needed until they output a solution. In the black-box setting, these complexity measures are not very meaningful, as the algorithms are asked to optimize a problem without having direct access to it. As a consequence, the performance of a black-box optimization algorithm is therefore traditionally measured by the number of function evaluations that the algorithm does until it queries for the first time a solution that satisfies some specific performance criteria. In this book, we are mostly interested in the expected number of evaluations needed until an \emph{optimal} solution is evaluated for the first time. It is therefore natural to define black-box complexity as the \emph{minimal number of function evaluations that any black-box algorithm needs to perform, on average, until it queries for the first time an optimal solution.} 

We typically regard classes of problems, e.g., the set of traveling salesperson instances of planar graphs with integer edge weights. For such a class $\F \subseteq \{f:S\to \R \}$ of problem instances, we take a worst-case view and measure the expected number of function evaluations that an algorithm needs to optimize any instance $f \in \F$. That is, the black-box complexity of a problem $\F$ is $\inf_{A} \sup_{f \in \F} \E[T(A,f)]$, the best (among all algorithms $A$) worst-case (among all problem instances $f$) expected number $\E[T(A,f)]$ of function evaluations that are needed to optimize any $f \in \F$. A formal definition will be given in Section~\ref{sec:BBCunrestricted}. 

The black-box complexity of a problem can be much different from its white-box counterparts. We will discuss, for example, in Sections~\ref{sec:BBCNPhard} and ~\ref{sec:BBCunbiasedpartition} that there are a number of NP-hard problems whose black-box complexity is of small polynomial order.

\subsection{Motivation and Objectives}
The ultimate objective of black-box complexity is to support the investigation and the design of efficient black-box optimization techniques. This is achieved in several complementing ways. 

A first benefit of black-box complexity is that it enables the above-mentioned evaluation of how well we have understood a black-box optimization problem, and how suitable the state-of-the-art heuristics are. Where large gaps between lower and upper bounds exist, we may want to explore alternative algorithmic solutions, in the hope to identify more efficient solvers. Where lower and upper bounds match or are close, we can stop striving for more efficient algorithms. 

Another advantage of black-box complexity studies is that they allow to investigate how certain algorithmic choices influence the performance: By restricting the class of algorithms under consideration, we can judge how these restrictions increase the complexity of a black-box optimization problem. In the context of evolutionary computation, interesting restrictions include the amount of memory that is available to the algorithms, the number of solutions that are sampled in every iteration, the way new solution candidates are generated, the selection principles according to which it is decided which search points to keep for future reference, etc. Comparing the unrestricted with the restricted black-box complexity of a problem (i.e., its black-box complexity with respect to \emph{all} vs. with respect to a \emph{subclass} of all algorithms) quantifies the performance loss caused by these restrictions. This way, we can understand, for example, the effects of not storing the set of all previously evaluated solution candidates, but only a small subset.

The black-box complexity of a problem can be significantly smaller than the performance of a best known ``standard'' heuristic. In such cases, the small complexity is often attained by a problem-tailored black-box algorithm, which is not representative for common black-box heuristics. Interestingly, it turns out that we can nevertheless learn from such highly specific algorithms, as they often incorporate some ideas that could be beneficial much beyond the particular problem at hand. As we shall demonstrate in Section~\ref{sec:BBCalgo}, even for very well-researched optimization problems, such ideas can give rise to the design of novel heuristics which are provably more efficient than standard solutions. This way, black-box complexity serves as a source of inspiration for the development of novel algorithmic ideas that lead to the design of better search heuristics.

\subsection{Relationship to Query Complexity}
As indicated above, black-box complexity is studied in several different contexts, which reach far beyond evolutionary computation. In the sixties and seventies of the twentieth century, for example, this complexity measure was very popular in the context of combinatorial games, such as coin-weighing problems of the type ``given $n$ coins of two different types, what is the minimal number of weighing that is needed to classify the coins according to their weight?''. Interpreting a weighing as a function evaluation, we see that such questions can be formulated as black-box optimization problems. 

Black-box complexity also plays an important role in cryptography, where a common research question concerns the minimal amount of information that suffices to break a secret code. Quantum computing, communication complexity, and information-theory are other research areas where (variants of) black-box complexity are intensively studied performance measures. While in these settings the precise model is often not exactly identical to one faced in black-box optimization, some of the tools developed in these related areas can be useful in our context. 

A significant part of the literature studies the performance of deterministic algorithms. Randomized black-box complexities are much less understood. They can be much smaller than their deterministic counterparts. Since deterministic algorithms form a subclass of randomized ones, any lower bound proven for the randomized black-box complexity of a problem also applies to any deterministic algorithm. In some cases, a strict separation between deterministic and randomized black-box complexities can be proven. This is the case for the \leadingones function, as we shall briefly discuss in Section~\ref{sec:BBCunrestrictedLO}. For other problems, the deterministic and randomized black-box complexity coincide. Characterizing those problems for which the access to random bits can provably decrease the complexity is a widely open research question.

In several contexts, in particular the research domains mentioned above, black-box complexity is typically referred to as \emph{query} or \emph{oracle complexity}, with the idea that the algorithms do not evaluate the function values of the solution candidates themselves but rather query them from an oracle. This interpretation is mostly identical to the black-box scenario classically regarded in evolutionary computation.

\subsection{Scope of this Book Chapter}

Here in this chapter, as in the remainder of this book, we restrict our attention to discrete optimization problems, i.e., the maximization or minimization of functions $f:S\rightarrow \R$ that are defined over finite search spaces $S$. As in the previous chapters, we will mostly deal with the optimization of pseudo-Boolean functions $f:\{0,1\}^n \rightarrow \R$, permutation problems $f:S_n \rightarrow \R$, and functions $f:[0..r-1]^n \rightarrow \R$ defined for strings over an alphabet of bounded size, where we abbreviate here and in the following by $[0..r-1]:=\{0,1,\ldots,r-1\}$ the set of non-negative integers smaller than $r$, $[n]:=\{1,2,\ldots,n\}$, and by $S_n$ the set of all permutations (one-to-one maps) $\sigma: [n] \to [n]$.

We point out that black-box complexity notions are also studied for infinite search spaces $S$. In the context of continuous optimization problems, black-box complexity aims to bound the best possible \emph{convergence rates} that a derivative-free black-box optimization algorithm can achieve, cf.~\cite{TeytaudG06,FournierT11} for examples.

\subsection{Target Audience and Complementary Material}
This book chapter is written with a person in mind who is familiar with black-box optimization, and who brings some background in theoretical running time analysis. We will give an exhaustive survey of existing results. Where appropriate, we provide proof ideas and discuss some historical developments. Readers interested in a more gentle introduction to the basic concepts of black-box complexity are referred to~\cite{JansenBBCchapter}. A slide presentation on selected aspects of black-box complexity, along with a summary of complexity bounds known back in spring 2014, can be found in the tutorial~\cite{DoerrDBBCTutorial}.

\subsection{Overview of the Content}
Black-box complexity is formally defined in Section~\ref{sec:BBCunrestricted}. We also provide there a summary of useful tools. In Section~\ref{sec:BBCNPhard} we discuss why classical complexity statements like NP-hardness results do not necessarily imply hardness in the black-box complexity model. 

In Sections~\ref{sec:BBCunrestrictedResults} to~\ref{sec:BBCcombined} we review the different black-box complexity models that have been proposed in the literature. For each model we discuss the main results that have been achieved for it. For several benchmark problems, including most notably \onemax, \leadingones, and \textsc{jump}, but also combinatorial problems like the minimum spanning tree problem and shortest paths problems, bounds have been derived for various complexity models. For \onemax and \leadingones we compare these different bounds in Section~\ref{sec:BBCtables}, to summarize where gaps between upper and lower bounds exist, and to highlight increasing complexities imposed by the restrictive models. 

We will demonstrate in Section~\ref{sec:BBCalgo} that the complexity-theoretic view on black-box optimization can inspire the design of more efficient optimization heuristics. This is made possible by questioning some of the state-of-the-art choices that are made in evolutionary computation and neighboring disciplines. 

We finally show in Section~\ref{sec:BBCmastermind} that research efforts originally motivated by the study of black-box complexity has yield improved bounds for long-standing open problems in classical computer science. 

In Section~\ref{sec:BBCconclusions} we conclude this chapter with a summary of open questions and problems in discrete black-box complexity and directions for future work.

\section{The Unrestricted Black-Box Model}
\label{sec:BBCunrestricted}

In this section we introduce the most basic black-box model, which is the \emph{unrestricted} one. This model contains all black-box optimization algorithms. Any lower bound in this model therefore immediately applies to any of the restricted models which we discuss in Sections~\ref{sec:BBCmemory} to~\ref{sec:BBCcombined}. We also discuss in this section some useful tools for the analysis of black-box complexities and demonstrate that the black-box complexity of a problem can be much different from its classical white-box complexity.

The unrestricted black-box model has been introduced by Droste, Jansen, Wegener in~\cite{DrosteJW06}. The only assumption that it makes is that the algorithms do not have any information about the problem at hand other than the fact that it stems from some function class $\F \subseteq \{f:S\to \R\}$. The only way an unrestricted black-box algorithm can learn about the instance $f$ is by evaluating the function values $f(x)$ of potential solution candidates $x \in S$. We can assume that the evaluation is done by some oracle, from which $f(x)$ is queried. In the unrestricted model, the algorithms can update after any such query the strategy by which the next search point(s) are generated. In this book, we are mostly interested in the performance of \emph{randomized black-box heuristics,} so that these strategies are often \emph{probability distributions} over the search space from which the next solution candidates are sampled. This process continues until an optimal search point $x \in \arg\max f$ is queried for the first time. 

The algorithms that we are interested in are thus those that maintain a probability distribution $D$ over the search space $S$. In every iteration, a new solution candidate $x$ is sampled from this distribution and the function value $f(x)$ of this search point is evaluated. After this evaluation, the probability distribution $D$ is updated according to the information gathered through the sample $(x,f(x))$. The next iteration starts again by sampling a search point from this updated distribution $D$, and so on. This structure is summarized in Algorithm~\ref{alg:BBCunrestricted}, which models \emph{unrestricted randomized black-box algorithms.} A visualization is provided in Figure~\ref{fig:BBCunrestricted}.

\begin{algorithm2e}%
 \textbf{Initialization:}
 Sample $x^{(0)}$ according to some probability distribution $D^{(0)}$ over $S$ and query $f(x^{(0)})$\;
 \textbf{Optimization:}	
\For{$t=1,2,3,\ldots$}{
  \label{line:BBCunrestrictedmut} Depending on $\big((x^{(0)},f(x^{(0)})), \ldots, (x^{(t-1)},
  f(x^{(t-1)}))\big)$ choose a probability distribution $D^{(t)}$ over $S$ and sample $x^{(t)}$ according to $D^{(t)}$\;
  Query $f(x^{(t)})$\;
 }\caption{Blueprint of an unrestricted randomized black-box algorithm.}
\label{alg:BBCunrestricted}
\end{algorithm2e}

Note that in Algorithm~\ref{alg:BBCunrestricted} in every iteration only one new solution candidate is sampled. In contrast, many evolutionary algorithms and other black-box optimization techniques generate and evaluate several search points \emph{in parallel.} It is not difficult to see that lower bounds obtained for the here-described unrestricted black-box complexity immediately apply to such \emph{population-based} heuristics, since an unrestricted algorithm is free to ignore information obtained from previous iterations. As will be commented in Section~\ref{sec:BBCparallel}, the \emph{parallel} black-box complexity of a function can be (much) larger than its sequential variant. Taking this idea to the extreme, i.e., requiring the algorithm to neglect information obtained through previous queries yields so-called \emph{non-adaptive} black-box algorithms. A prime example for a non-adaptive black-box algorithm is random sampling (with and without repetitions). Non-adaptive algorithms play only a marginal role in evolutionary computation. From a complexity point of view, however, it can be interesting to study how much adaptation is needed for an efficient optimization, cf. also the discussions in Sections~\ref{sec:BBCunrestrictedOM} and~\ref{sec:BBCmastermind}. For most problems, the adaptive and non-adaptive complexity differ by large factors. For other problems, however, the two complexity notions coincide, see Section~\ref{sec:BBCunrestrictedneedle} for an example.

Note also that unrestricted black-box algorithms have access to the full history of previously evaluated solutions. The effects of restricting the available memory to a \emph{population} of a certain size will be the focus of the memory-restricted black-box models discussed in Section~\ref{sec:BBCmemory}. 
 
In line~\ref{line:BBCunrestrictedmut} we do not specify how the probability distribution $D^{(t)}$ is chosen. Thus, in principle, the algorithm can spend significant time to choose this distribution. This can result in small polynomial black-box complexities for NP-hard problems, cf. Section~\ref{sec:BBCNPhard}. Droste, Jansen, and Wegener~\cite{DrosteJW06} therefore suggested to restrict the set of algorithms to those that execute the choice of the distributions $D^{(t)}$ in a polynomial number of algebraic steps (i.e., polynomial time in the input length, where ``time'' refers to the classically regarded complexity measure). They call this model the \emph{time-restricted model.} In this book chapter we will not study this time-restricted model. That is, we allow the algorithms to spend arbitrary time on the choice of the distributions $D^{(t)}$. This way, we obtain very general lower bounds. Almost all upper bounds stated in this book chapter nevertheless apply also to the time-restricted model. The polynomial bounds for NP-hard problems form, of course, an exception to this rule. 

We finally comment on the fact that Algorithm~\ref{alg:BBCunrestricted} runs forever. As we have seen in previous chapters of this book, the pseudocode in Algorithm~\ref{alg:BBCunrestricted} is a common representation of black-box algorithms in the theory of heuristic optimization. Not specifying the termination criterion is justified by our performance measure, which is the expected number of function evaluations that an algorithm performs until (and including) the first iteration in which an optimal solution is evaluated, cf. Definition~\ref{def:BBCcomplexity} below. Other performance measures for black-box heuristics have been discussed in the literature, cf.~\cite{CarvalhoD17,JansenZ14,DoerrJWZ13}, but in the realm of black-box complexity, the \emph{average optimization time} is still the predominant performance indicator. Confer Section~\ref{sec:BBCconclusions} for a discussion of extending existing results to other, possibly more complex performance measures.  

\subsection{Formal Definition of Black-Box Complexity}
\label{sec:BBCunrestrictedformal}

In this section, we give a very general definition of black-box complexity. More precisely, we formally define the the black-box complexity of a class $\F$ of functions with respect to some class $\A$ of algorithms. The unrestricted black-box complexity will be the complexity of $\F$ with respect to all black-box algorithms that follow the blueprint provided in Algorithm~\ref{alg:BBCunrestricted}. 

For a black-box optimization algorithm $A$ and a function $f:S \to \R$, let $T(A,f)\in \R\cup\{\infty\}$ be the number of function evaluations that algorithm $A$ does until and including the evaluation in which it evaluates for the first time an optimal search point $x \in \arg\max f$. As in previous chapters, we call $T(A,f)$ the \emph{running time of $A$ for $f$} or, synonymously, the \emph{optimization time of $A$ for $f$}. When $A$ is a randomized algorithm, $T(A,f)$ is a random variable that depends on the random decisions made by $A$. We are mostly interested in its expected value $\E[T(A,f)]$. 

With this performance measure in place, the definition of the black-box complexity of a class $\F$ of functions $S \to \R$ with respect to some class $\A$ of algorithms now follows the usual approach in complexity theory. 
\begin{definition}
\label{def:BBCcomplexity}
For a given black-box algorithm $A$, the \emph{$A$-black-box complexity of $\F$} is 
$$\E[T(A,\F)]:=~\sup_{f \in \F}{\E[T(A,f)]},$$ 
the worst-case expected running time of $A$ on $\F$. 

The \emph{$\A$-black-box complexity of $\F$} is 
$$\E[T(\A,\F)] := \inf_{A\in\A}{\E[T(A,\F)]},$$ 
the minimum (``best'') complexity among all $A \in \A$ for $\F$.  
\end{definition}
%

Thus, formally, the \emph{unrestricted black-box complexity} of a problem class $\F$ is $\E[T(\A,\F)]$ for $\A$ being the collection of all unrestricted black-box algorithms; i.e., all algorithms that can be expressed in the framework of Algorithm~\ref{alg:BBCunrestricted}. 

The following lemma formalizes the intuition that every lower bound for the unrestricted black-box model also applies to any restricted black-box model. 

\begin{lemma}\label{lem:BBCunrestrictedlower}
Let $\F \subseteq \{ f: S\to\R \}$. For every collection $\A'$ of black-box optimization algorithms for $\F$, the $\A'$-black-box complexity of $\F$ is at least as large as its unrestricted one.  
\end{lemma}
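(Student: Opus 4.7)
The statement is essentially a monotonicity property of the infimum, and the proof plan amounts to making precise the observation that the unrestricted model contains every restricted one. My plan is to unpack the two definitions and then invoke the standard ``infimum over a subset is at least the infimum over the superset'' argument.

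First I would fix an arbitrary collection $\A'$ of black-box algorithms for $\F$ and let $\A$ denote the class of all unrestricted black-box algorithms, i.e., all algorithms that fit the blueprint of Algorithm~\ref{alg:BBCunrestricted}. The key observation is that, by the very generality of Algorithm~\ref{alg:BBCunrestricted} (which imposes no constraint on how $D^{(t)}$ is chosen from the history), every algorithm in $\A'$ can be cast as an instance of this blueprint, so $\A' \subseteq \A$. This containment is the content-bearing step of the argument, even though it is essentially a definitional observation.

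Next I would apply Definition~\ref{def:BBCcomplexity} to both sides. For each individual $A \in \A'$ we have, trivially, $\E[T(A,\F)] = \sup_{f \in \F} \E[T(A,f)]$, with the same quantity appearing in the definition of the unrestricted complexity. Taking the infimum on both sides of the inclusion $\A' \subseteq \A$, and using the elementary fact that $\inf_{A \in \A'} g(A) \geq \inf_{A \in \A} g(A)$ for any function $g : \A \to \R \cup \{\infty\}$, I conclude
\[
\E[T(\A',\F)] \;=\; \inf_{A \in \A'} \E[T(A,\F)] \;\geq\; \inf_{A \in \A} \E[T(A,\F)] \;=\; \E[T(\A,\F)],
\]
which is exactly the claimed inequality.

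There is no real technical obstacle: the only subtle point is confirming that a ``restricted'' model is indeed a subclass of the unrestricted one rather than a genuinely different object. Since every restriction considered later in the chapter (memory bounds, unbiasedness, parallelism, etc.) merely narrows the class of admissible distributions $D^{(t)}$ or the way they may depend on the history, each such restricted algorithm still conforms to the blueprint of Algorithm~\ref{alg:BBCunrestricted}, so the inclusion $\A' \subseteq \A$ holds uniformly across all restricted models introduced later. This justifies the lemma being stated once, in full generality, and then invoked throughout the chapter.
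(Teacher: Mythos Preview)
Your proof is correct and matches the paper's argument exactly: the paper also observes that $\A'$ is a subclass of the set $\A$ of all unrestricted black-box algorithms, so the infimum defining $\E[T(\A',\F)]$ is taken over a smaller class and is therefore at least $\E[T(\A,\F)]$. You have simply written out this one-line justification in more detail.
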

Formally, this lemma holds because $\A'$ is a subclass of the set $\A$ of all unrestricted black-box algorithms. The infimum in the definition of $\E[T(\A',\F)]$ is therefore taken over a smaller class, thus giving values that are at least as large as $\E[T(\A,\F)]$. 

\begin{figure}[t]
\begin{framed}
\begin{center}
\includegraphics[scale=1.5]{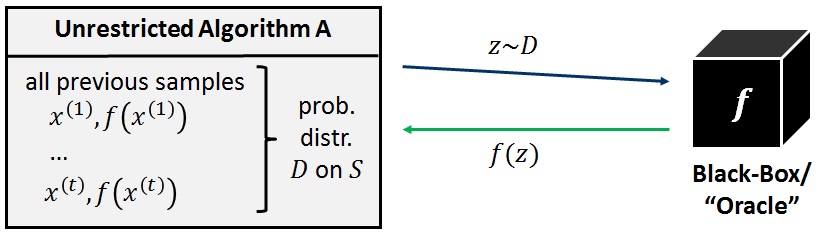}
\end{center}
\caption{In the unrestricted black-box model, the algorithms can store the full history of previously queried search points. For each of these already evaluated candidate solutions $x$ the algorithm has access to its absolute function value $f(x) \in \R$. There are no restrictions on the structure of the distributions $D$ from which new solution candidates are sampled. 
}
\end{framed}
\label{fig:BBCunrestricted}
\end{figure}

\subsection{Tools for Proving Lower Bounds}\label{sec:BBClower}

Lemma~\ref{lem:BBCunrestrictedlower} shows that the unrestricted black-box complexity of a class $\F$ of functions is a lower bound for the performance of any black-box algorithm on $\F$. In other words, no black-box algorithm can optimize $\F$ more efficiently than what the unrestricted black-box complexity of $\F$ indicates. We are therefore particularly interested in proving \emph{lower bounds} for the black-box complexity of a problem. This is the topic of this section. 

To date, the most powerful tool to prove lower bounds for randomized query complexity models like our unrestricted black-box model is the so-called \emph{minimiax principle} of Yao~\cite{Yao77}. In order to discuss this principle, we first need to recall that we can interpret every randomized unrestricted black-box algorithm as a probability distribution over deterministic ones. In fact, randomized black-box algorithms are often defined this way. 

\emph{Deterministic black-box algorithms} are those for which the probability distribution in line~\ref{line:BBCunrestrictedmut} of Algorithm~\ref{alg:BBCunrestricted} are one-point distributions. That is, for every $t$ and for every sequence $\big( (x^{(0)},f(x^{(0)})), \ldots, (x^{(t-1)},f(x^{(t-1)}))\big)$ of previous queries, there exists a search point $s\in S$ such that $D^{(t)}\Big(\big((x^{(0)},f(x^{(0)})), \ldots, (x^{(t-1)},f(x^{(t-1)}))\big)\Big)(s)=1$ and $D^{(t)}\Big(\big((x^{(0)},f(x^{(0)})), \ldots, (x^{(t-1)},f(x^{(t-1)}))\big)\Big)(y)=0$ for all $y \neq s$. In other words, we can interpret deterministic black-box algorithms as \emph{decision trees.} A decision tree for a class $\F$ of functions is a rooted tree in which the nodes are labeled by the search points that the algorithm queries. The first query is the label of the root note, say $x^{(0)}$. The edges from the root node to its neighbors are labeled with the possible objective values $\{g(x^{(0)}) \mid g \in \F\}$. After evaluating $f(x^{(0)})$, the algorithm follows the (unique) edge $\{x^{(0)}, x^{(1)}\}$ which is labeled with the value $f(x^{(0)})$. The next query is the label of the endpoint $x^{(1)}$ of this edge. We call $x^{(1)}$ a level-1 node. The level-2 neighbors of $x^{(0)}$ (i.e., all neighbors of $x^{(1)}$ except the root node $x^{(0)}$) are the potential search points to be queried in the next iteration. As before, the algorithm chooses as next query the neighbor $x^{(2)}$ of $x^{(1)}$ to which the unique edge labeled with the value $f(x^{(1)})$ leads. This process continues until an optimal search point has been queried. The optimization time $T(A,f)$ of the algorithm $A$ on function $f$ equals the depth of this node plus one (the plus one accounts for the evaluation of the root node). 

We easily see that, in this model, it does not make sense to query the same search point twice. Such a query would not reveal any new information about the objective function $f$. For this reason, on every rooted path in the decision tree every search point appears at most once. This shows that the \emph{depth of the decision tree} is bounded by $|S|-1$. The \emph{width} of the tree, however, can be as large as the size of the set $\F(S):=\{ g(s) \mid g \in \F, s \in S\}$, which can be infinite or even uncountable; e.g., if $\F$ equals the set of all linear or monotone functions $f:\{0,1\}^n \to \R$. As we shall see below, Yao's minimax principle can only be applied to problems for which $\F(S)$ is finite. Luckily, it is often possible to identify subclasses $\F'$ of $\F$ for which $\F'(S)$ is finite and whose complexity is identical or not much smaller than that of the whole class $\F$.

When $S$ and $\F(S)$ are finite, the number of (non-repetitive) deterministic decision trees, and hence the number of deterministic black-box algorithms for $\F$ is finite. In this case, we can apply Yao's minimax principle. This theorem, intuitively speaking, allows to restrict our attention to bounding the expected running time $\E[T(A,f)]$ of a best-possible \emph{deterministic} algorithm $A$ on a \emph{random} instance $f$ taken from $\F$ according to some probability distribution $p$. By Yao's minimax principle, this best possible expected running time is a lower bound for the expected performance of a best possible \emph{randomized} algorithm on an \emph{arbitrary} input. In our words, it is thus a lower bound for the unrestricted black-box complexity of the class $\F$. 

Analyzing deterministic black-box algorithms is often considerably easier than directly bounding the performance of any possible randomized algorithm. An a priori challenge in applying this theorem is the identification of a probability distribution $p$ on $\F$ for which the expected optimization time of a best-possible deterministic algorithm is large. Luckily, for many applications some rather simple distributions on the inputs suffice; for example the uniform one, which assigns to each problem instance $f\in\F$ equal probability. Another difficulty in the application is the above-mentioned identification of subclasses $\F'$ of $\F$ for which $\F'(S)$ is finite.

Formally, Yao's minimax principle reads as follows.  
\begin{theorem}[Yao's minimax principle]
\label{thm:BBCYao}
Let $\Pi$ be a problem with a finite set $\mathcal{I}$ of input instances (of a fixed size) permitting a finite set $\A$ of deterministic algorithms. Let $p$ be a probability distribution over $\mathcal{I}$ and $q$ be a probability distribution over $\A$. Then, 
\begin{align*}
	\min_{A \in \A} \E[T(I_p, A)] \leq \max_{I \in \mathcal{I}} \E[T(I,A_q)]\, , 
\end{align*}
where $I_p$ denotes a random input chosen from $\mathcal{I}$ according to $p$, $A_q$ a random algorithm chosen from $\A$ according to $q$, and $T(I,A)$ denotes the running time of algorithm $A$ on input $I$.
\end{theorem}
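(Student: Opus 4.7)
The plan is to establish the inequality by sandwiching a single joint-average quantity between the two sides, avoiding any appeal to a full-strength minimax theorem. Define
$$V := \sum_{A \in \A} \sum_{I \in \mathcal{I}} q(A)\, p(I)\, T(I,A),$$
which is well-defined precisely because $\A$ and $\mathcal{I}$ are finite, and for the same reason one may freely swap the order of summation. The target will then be to show $\min_{A} \E[T(I_p,A)] \leq V \leq \max_{I} \E[T(I,A_q)]$.

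For the lower bound on $V$, I would group the double sum by $A$ first, obtaining $V = \sum_{A \in \A} q(A)\, \E[T(I_p,A)]$. This exhibits $V$ as a convex combination of the numbers $\E[T(I_p,A)]$ with weights $q(A)$, so $V$ is at least the smallest such number, namely $\min_{A' \in \A}\E[T(I_p,A')]$. Symmetrically, grouping the sum by $I$ first gives $V = \sum_{I \in \mathcal{I}} p(I)\, \E[T(I,A_q)]$, a convex combination indexed by $I$, hence $V \leq \max_{I \in \mathcal{I}}\E[T(I,A_q)]$. Chaining these two inequalities yields the claim.

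The main conceptual point, rather than a genuine obstacle, is explaining why finiteness is indispensable: it is what guarantees that $\A$, interpreted (per the preceding discussion) as the set of non-repetitive deterministic decision trees for $\mathcal{I}$, is a finite set, and that $V$ is unambiguously defined as a finite double sum to which Fubini-style reordering applies. This is why the paragraphs preceding the statement take care to pass first to a subclass $\F' \subseteq \F$ with $\F'(S)$ finite before applying the principle. Beyond that, the argument is essentially a two-line linearity-of-expectation manipulation; only the inequality direction stated in the theorem is needed for proving black-box lower bounds, and that direction does not require the full strength of von Neumann's minimax theorem (which would deliver equality in place of ``$\leq$'').
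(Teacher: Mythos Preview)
Your proof is correct and is precisely the standard elementary argument for the easy direction of Yao's principle. Note, however, that the paper does not supply its own proof of this statement: it simply records the formulation and refers the reader to Motwani and Raghavan~\cite{MotwaniR97} for further discussion, so there is no in-paper proof to compare against. Your averaging argument is exactly what one finds in that reference.
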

The formulation of Theorem~\ref{thm:BBCYao} is taken from the book by Motwani and Raghavan~\cite{MotwaniR97}, where an extended discussion of this principle can be found. 

A straightforward, but still quite handy application of Yao's minimax principle gives the following lower bound.
\begin{theorem}[Simple information-theoretic lower bound, Theorem~2 in~\cite{DrosteJW06}]
\label{thm:BBCinfotheo}
Let $S$ be finite. Let $\F$ be a set of functions $\{f:S\to \R\}$ such that for every $s \in S$ there exists a function $f_s \in \F$ for which the size of $f_s(S):=\{ f_s(x) \mid x \in S\}$ is bounded by $k$ and for which $s$ is a unique optimum; i.e., $\arg\max f_s = \{s\}$ and $|f_s(S)| \le k$. The unrestricted black-box complexity of $\F$ is at least $\lceil \log_{k}(|S|) \rceil - 1$.
\end{theorem}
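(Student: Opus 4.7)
The plan is to apply Yao's minimax principle (Theorem~\ref{thm:BBCYao}) to the subclass $\F' := \{f_s \mid s \in S\} \subseteq \F$, where $f_s$ is the witness function guaranteed by the hypothesis. Since $\F' \subseteq \F$, any lower bound on the unrestricted complexity of $\F'$ transfers to $\F$; and because $S$ and each $f_s(S)$ are finite, the set $\F'(S)$ is finite, so Yao's principle applies. I would take $p$ to be the uniform distribution on $\F'$ and lower-bound $\E_{s \sim p}[T(A, f_s)]$ for an arbitrary deterministic $A$, invoking Yao at the end to conclude.

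Next I would model $A$ as a decision tree $\mathcal{T}_A$: the root queries some fixed $x^{(1)} \in S$, every internal node queries a search point, and edges leaving a node are labeled by the observed response. Each instance $f_s \in \F'$ traces a unique root-to-leaf path that terminates the first time the query equals $s$, and the depth of that terminating node is exactly $T(A, f_s)$. Since each $f_s$ takes at most $k$ distinct values, I would argue that the branching factor of $\mathcal{T}_A$, restricted to the behavior on $\F'$, is at most $k$; if necessary this is enforced by relabeling the image of each $f_s$ to a canonical $k$-element alphabet via a monotone bijection (which preserves the arg-max and hence $T(A, f_s)$).

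The core counting step is then elementary. A tree of branching factor at most $k$ has at most $1 + k + \cdots + k^{t-1} = (k^t - 1)/(k - 1) \le k^t$ nodes at depths $0, 1, \ldots, t - 1$. Since every node queries a single search point and $s$ is found precisely at the unique node whose query equals $s$, the set $\{s \in S \mid T(A, f_s) \le t\}$ has size at most $k^t$. Writing $c_t$ for this size, I would substitute into the identity
$$\sum_{s \in S} T(A, f_s) \;=\; \sum_{t \ge 1} \bigl(|S| - c_{t-1}\bigr)$$
and compute: the sum is at least $\sum_{t=1}^{\lceil \log_k |S| \rceil}(|S| - k^{t-1})$, which after simplification yields $\E_{s\sim p}[T(A, f_s)] \ge \lceil \log_k |S|\rceil - 1$. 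Yao's principle then transfers this to a lower bound on the unrestricted black-box complexity of $\F'$, and hence of $\F$.

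The main obstacle is the branching-factor step: the per-instance bound $|f_s(S)| \le k$ does not a priori limit the union $\bigcup_{s \in S} f_s(S)$ of responses observed across $\F'$, so it is not immediate that $\mathcal{T}_A$ has only $k$ children at each node. This is resolved by the relabeling trick mentioned above, or equivalently by observing that two instances inducing the same transcript are indistinguishable to $A$ and may be merged when counting reachable nodes, so only the effective per-instance alphabet of size at most $k$ matters for the counting. Once this step is in place the remainder of the argument is a direct calculation.
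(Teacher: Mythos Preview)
Your overall approach---Yao's minimax principle with the uniform distribution on $\{f_s : s \in S\}$, then counting labeled nodes in the decision tree of a deterministic algorithm---is exactly the argument the paper sketches.

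You are right to flag the branching-factor step as the crux, but neither of your proposed fixes works. The monotone relabeling $f_s \mapsto g_s := \phi_s \circ f_s$ preserves $\arg\max$, yet it does \emph{not} preserve $T(A,f_s)$: the deterministic algorithm $A$ branches on the actual numerical values it observes, so feeding it $g_s$ produces a different transcript and running time than feeding it $f_s$. Concretely, take $S=\{0,1\}^n$, $k=2$, and set $f_s(x):=\mathrm{int}(s)$ for $x\neq s$ and $f_s(s):=2^n+1$. Each $f_s$ has image size $2$, but one query of any fixed point reveals $s$, so the unrestricted complexity of $\{f_s\}$ is at most $2$---far below $\lceil\log_2|S|\rceil-1=n-1$. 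After your relabeling this family becomes \textsc{Needle}, with complexity $(|S|+1)/2$; that lower bound on $\{g_s\}$ simply does not transfer back to $\{f_s\}$. The ``merging indistinguishable instances'' variant fails for the same reason: if different instances $f_s$ report distinct values at the queried point, the algorithm genuinely follows distinct children, so the effective outdegree at that node is $|\{f_s(x):s\}|$, not $k$.

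The honest resolution is that the bound needs the stronger hypothesis that the witnesses $f_s$ take values in a \emph{common} set of size at most $k$ (equivalently $\bigl|\bigcup_{s} f_s(S)\bigr|\le k$), which is how the result is invariably applied in the chapter (e.g.\ $\onemax_n$ with common image $[0..n]$). Under that hypothesis the outdegree really is at most $k$ and both your counting and the paper's go through; the paper's sketch simply asserts ``the outdegree of every node is bounded from above by $k$'' without comment, so it is tacitly relying on the same assumption.
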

To prove Theorem~\ref{thm:BBCinfotheo} it suffices to select for every $s \in S$ one function $f_s$ as in the statement and to regard the uniform distribution over the set $\{f_s \mid s \in S\}$. Every deterministic black-box algorithm that eventually solves any instance $f_s$ has to have at least one node labeled $s$. We therefore need to distribute all $|S|$ potential optima on the decision tree that corresponds to this deterministic black-box algorithm. Since the outdegree of every node is bounded from above by $k$, the average distance of a node to the root is at least $\lceil \log_{k}(|S|) \rceil -2$.

An informal interpretation of Theorem~\ref{thm:BBCinfotheo}, which in addition ignores the rounding of the logarithms, is as follows. In the setting of Theorem~\ref{thm:BBCinfotheo}, optimizing a function $f_s$ corresponds to \emph{learning} $s$. A binary encoding of the optimum $s$ requires $\log_2(|S|)$ bits. With every query, we obtain at most $\log_2(k)$ bits of information; namely the amount of bits needed to encode which of the at most $k$ possible objective value is assigned to the queried search point. We therefore need to query at least $\log_2(|S|)/\log_2(k)=\log_k(|S|)$ search points to obtain the information that is required to decode $s$. This hand-wavy interpretation often gives a good first idea of the lower bounds that can be proven by Theorem~\ref{thm:BBCinfotheo}. 

This intuitive proof for Theorem~\ref{thm:BBCinfotheo} shows that it works best if at every search point \emph{exactly} $k$ answers are possible, and each of them is equally likely. This situation, however, is not typical for black-box optimization processes, where usually only a (possibly small) subset of function values are likely to appear next. As a rule of thumb, the larger the difference of the potential function value to the function value of the current-best solution, the less likely an algorithm is to obtain it in the next iteration. Such \emph{transition probabilities} are not taken into account in Theorem~\ref{thm:BBCinfotheo}. The theorem does also not cover very well the situation in which at a certain step less than $k$ answers are possible. Even for fully symmetric problem classes this situation is likely to appear in the later parts of the optimization process, where those problem instances that are still aligned with all previously evaluated function values all map the next query to one out of less than $k$ possible function values. Covering these two shortcomings of Theorem~\ref{thm:BBCinfotheo} is one of the main challenges in black-box complexity. One step into this direction is the \emph{matrix lower bound theorem} presented in~\cite{BuzdalovDK16} and the subsequent work~\cite{Buzdalov16}. As also acknowledged there, however, the verification of the conditions under which these two generalizations apply is often quite tedious, so that the two methods are unfortunately not yet easily and very generally applicable. So far, they have been used to derive lower bounds for the black-box complexity of the \onemax and the \textsc{jump} benchmark functions, cf. Sections~\ref{sec:BBCunrestrictedOM} and~\ref{sec:BBCunrestrictedjump}.

Another tool that will be very useful in the subsequent sections is the following theorem, which allows to transfer lower bounds proven for a simpler problem to a problem that is derived from it by a composition with another function. Most notably, it allows to bound the black-box complexity of functions of unitation (i.e, functions for which the function value depends only on the number of ones in the string) by that of the \onemax problems. We will apply this theorem to show that the black-box complexity of the jump functions is at least as large as that of \onemax, cf. Section~\ref{sec:BBCunrestrictedjump}. 

\begin{theorem}[Generalization of Theorem~2 in~\cite{DoerrDK15jump}]
\label{thm:BBClowerBoundForMapped} 
For all problem classes $\F$, all classes of algorithms $\A$, and all maps $g: \mathbb{R} \rightarrow \mathbb{R}$ that are such that for all $f \in \F$ it holds that $\{x \mid g(f(x)) \mbox{ optimal}\,\} = \{x \mid f(x) \mbox{ optimal}\,\}$ the $\A$-black-box complexity of $g(\F):=\{g \circ f \mid f \in \F \}$ is at least as large as that of $\F$. 
\end{theorem}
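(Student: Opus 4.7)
The plan is to prove the statement by a simple simulation argument: given any algorithm $A'\in\A$ that optimizes $g(\F)$, I will build from it an algorithm $A\in\A$ that optimizes $\F$ with exactly the same distribution of running times. Taking the infimum over $\A$ on both sides of the resulting equality $\E[T(A,f)]=\E[T(A',g\circ f)]$ then yields the desired inequality.

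First I would describe the simulation. The algorithm $A$ runs $A'$ internally. Whenever $A'$ is about to sample from its distribution $D^{(t)}$ on $S$, the wrapper $A$ lets $A'$ choose the next query $x^{(t)}$, queries its own oracle for $f(x^{(t)})$, then computes $g(f(x^{(t)}))$ and feeds that value back to $A'$ as if it had come from an oracle for $g\circ f$. In this way $A'$ experiences exactly the same sequence of (query, answer) pairs it would have experienced when running on input $g\circ f$, and in particular the probability distributions $D^{(t)}$ it chooses are identical in both executions (coupled via the same random bits of $A'$). Hence the list of search points queried by $A$ on $f$ and by $A'$ on $g\circ f$ coincide.

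Next I would verify that $A$ terminates correctly at the same moment as $A'$. By the hypothesis on $g$, the sets of optima of $f$ and of $g\circ f$ coincide, so the first time $A'$ queries an optimum of $g\circ f$ is exactly the first time $A$ queries an optimum of $f$. Consequently $T(A,f)=T(A',g\circ f)$ as random variables, and taking expectations then suprema over $f\in\F$ gives $\E[T(A,\F)]\le\E[T(A',g(\F))]$. Since $A'\in\A$ was arbitrary, passing to the infimum over $\A$ yields $\E[T(\A,\F)]\le\E[T(\A,g(\F))]$.

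The only subtle point, and the one I would flag as the main obstacle, is ensuring that the wrapper $A$ genuinely lies in the class $\A$ for which the theorem is stated. The transformation is purely a deterministic post-processing of scalar function values by the fixed map $g$; it does not inspect or alter search points, does not enlarge the memory beyond storing function values, and does not break symmetry of any distributions the algorithm produces. Therefore for every natural class $\A$ considered in this chapter (unrestricted, memory-restricted, unbiased, ranking-based, parallel, and their combinations) the wrapper $A$ inherits membership in $\A$ from $A'$, so the reduction goes through uniformly. I would conclude the proof by noting this closure property and observing that it is exactly the reason the theorem can be stated for an arbitrary class $\A$.
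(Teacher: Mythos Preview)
Your argument is exactly the one the paper gives: simulate an optimizer $A'$ for $g(\F)$ by querying $f(x)$, computing $g(f(x))$, and feeding that back to $A'$, so that the query sequences and hence the hitting times coincide. The paper states only this intuition after the theorem and does not elaborate further, so your write-up is already more detailed than what appears there.

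One small overclaim worth tightening: when you list ranking-based models among those for which the wrapper automatically stays in $\A$, this is not true for arbitrary $g$. A ranking-based algorithm for $\F$ sees only the ranking induced by $f$, and from that you cannot in general recover the ranking induced by $g\circ f$ unless $g$ is monotone; the jump map used later in the chapter is not monotone. The theorem, as the paper applies it, is really meant for classes whose algorithms have access to the scalar values $f(x)$ (unrestricted, memory-restricted, unbiased, parallel, and their combinations), where your post-processing step is well-defined. Restrict your closing remark to those classes and the proof is clean.
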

The intuition behind Theorem~\ref{thm:BBClowerBoundForMapped} is that with the knowledge of $f(x)$, we can compute $g(f(x))$, so that every algorithm optimizing $g(\F)$ can also be used to optimize $\F$, by evaluating the $f(x)$-values, feeding $g(f(x))$ to the algorithm, and querying the solution candidates that this algorithm suggests.

\subsection{Tools to Prove Upper Bounds}
\label{sec:BBCtoolsupper}


We now present general upper bounds for the black-box complexity of a problem. We recall that, by definition, a small upper bound for the black-box complexity of a problem $\F$ shows that there exists an algorithm which solves every problem instance $f \in \F$ efficiently. When the upper bound of a problem is smaller than the expected performance of well-understood search heuristics, the question whether these state-of-the-art heuristics can be improved or whether the unrestricted black-box model is too generous arises. 

The simplest upper bound for the black-box complexity of a class $\F$ of functions is the expected performance of random sampling without repetitions. 

\begin{lemma}\label{lem:BBCuppersimple}
For every finite set $S$ and every class $\F\subset \{f:S\to \R\}$ of real-valued functions over $S$, the unrestricted black-box complexity of $\F$ is at most $(|S|+1)/2$. 
\end{lemma}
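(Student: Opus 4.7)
The plan is to exhibit one explicit black-box algorithm whose expected running time on every $f\in\F$ is at most $(|S|+1)/2$; this witnesses the claimed upper bound on $\E[T(\A,\F)]$, since the infimum in Definition~\ref{def:BBCcomplexity} is taken over all unrestricted algorithms. The algorithm $A^*$ I would use fits trivially into the scheme of Algorithm~\ref{alg:BBCunrestricted}: fix once and for all a permutation $\pi$ of $S$ drawn uniformly at random (this can be done, e.g., by sampling $x^{(0)}$ uniformly from $S$, then $x^{(t)}$ uniformly from $S\setminus\{x^{(0)},\ldots,x^{(t-1)}\}$), and query the elements of $S$ in the order dictated by $\pi$. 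Since $|S|<\infty$ and each element is queried at most once, $A^*$ is well defined and makes no use of the observed function values at all (so it is even non-adaptive).

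Next I would bound $\E[T(A^*,f)]$ for an arbitrary fixed $f \in \F$. Let $O:=\arg\max f \subseteq S$ and $k:=|O|\ge 1$. Writing $T=T(A^*,f)$ for the position of the first element of $O$ in the random permutation $\pi$, I would use the standard indicator-variable trick: for each non-optimal $x\in S\setminus O$, let $Y_x$ be the indicator that $x$ appears before every element of $O$ in $\pi$. Then $T=1+\sum_{x\in S\setminus O}Y_x$. Restricting $\pi$ to the $k+1$ elements of $\{x\}\cup O$ yields a uniformly random ordering of these elements, so $\Pr[Y_x=1]=1/(k+1)$. By linearity of expectation,
\begin{equation*}
\E[T] \;=\; 1 + \frac{|S|-k}{k+1} \;=\; \frac{|S|+1}{k+1} \;\le\; \frac{|S|+1}{2},
\end{equation*}
the last inequality following from $k\ge 1$.

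Since the bound holds uniformly in $f\in\F$, we get $\E[T(A^*,\F)]=\sup_{f\in\F}\E[T(A^*,f)]\le(|S|+1)/2$, and therefore $\E[T(\A,\F)]\le (|S|+1)/2$ by Definition~\ref{def:BBCcomplexity}. There is no real obstacle here: the only mild point to watch is that $A^*$ must be phrased as a randomized element of the blueprint of Algorithm~\ref{alg:BBCunrestricted} (which it is, since sampling without repetition is a valid update rule for $D^{(t)}$), and that one should observe that the bound is tight in the worst case (attained by a function class of ``needle-in-a-haystack'' instances with a unique optimum), so no further refinement of the algorithm can improve the constant without exploiting structure of $\F$.
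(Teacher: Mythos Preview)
Your proof is correct and matches the paper's approach exactly: the paper simply asserts that the bound is ``the expected performance of random sampling without repetitions'' and gives no further detail, so your explicit computation via a uniformly random permutation and the indicator decomposition $T=1+\sum_{x\notin O}Y_x$ is precisely the intended argument, just spelled out in full. Your observation that the bound is tight for needle-type classes is also what the paper points to in the sentence following the lemma.
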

This simple bound can be tight, as we shall discuss in Section~\ref{sec:BBCunrestrictedneedle}. A similarly simple upper bound is presented in the next subsection. 

\subsubsection{Function Classes vs. Individual Instances}
\label{sec:BBCclasses}
In all of the above we have discussed the black-box complexity of a \emph{class} of functions, and not of individual problem instances. This is justified by the following observation, which also explains why in the following we will usually regard generalizations of the benchmark problems typically studied in the theory of randomized black-box optimization. 

\begin{lemma}\label{lem:BBCone}
For every function $f:S\to \R$, the unrestricted black-box complexity of the class $\{f\}$ that consists only of $f$ is one. The same holds for any class $\F$ of functions that all have their optimum in the same point, i.e., for which there exists a search point $x \in S$ such that, for all $f \in \F$, $x \in \arg\max f$ holds. 

More generally, if $\F$ is a collection of functions $f:S\to \R$ and $X \subseteq S$ is such that for all $f \in \F$ there exists at least one point $x\in X$ such that $x \in \arg\max f$, the unrestricted black-box complexity of $\F$ is at most $(|X|+1)/2$.

For every finite set $\F$ of functions, the unrestricted black-box complexity is bounded from above by $(|\F|+1)/2$. 
\end{lemma}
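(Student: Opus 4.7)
The plan is to treat the four assertions as a chain of increasing generality, where the first two are immediate from the definition and the last two follow from a single random-sampling argument.

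For the first two assertions I would exploit the fact that in the unrestricted model of Algorithm~\ref{alg:BBCunrestricted} the choice of $D^{(0)}$ is completely free and may depend on the class $\F$ under consideration. When $\F=\{f\}$, the algorithm designer already knows $f$, so I would let $D^{(0)}$ be the one-point distribution on an arbitrary fixed $x\in\arg\max f$; this algorithm queries an optimum on its very first evaluation, so $\E[T(A,f)]=1$. For the second statement, the same argument applies: fix any $x$ lying in $\arg\max f$ for every $f\in\F$ (such a point exists by assumption) and let $D^{(0)}$ place all its mass on $x$.

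For the third assertion I would consider the non-adaptive algorithm $A$ that samples a uniformly random permutation $(y_1,\ldots,y_{|X|})$ of $X$ at the start and queries $y_1,y_2,\ldots$ in that order until it terminates. Fix any $f\in\F$ and let $k_f\ge 1$ denote the number of optima of $f$ that lie in $X$. Then $T(A,f)$ is the position of the first optimum in a uniformly random permutation of $X$, so by a standard symmetry argument (dividing the $|X|-k_f$ non-optimal elements into $k_f+1$ gaps whose sizes sum to $|X|-k_f$ and have equal expectation) one has
\[
\E[T(A,f)] \;=\; \frac{|X|+1}{k_f+1} \;\le\; \frac{|X|+1}{2},
\]
and taking the supremum over $f\in\F$ yields the claimed bound. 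The small technical point to get right is justifying that the uniform distribution over the $k_f+1$ gaps has this expected size; I would either prove it by the indicator-variable trick ($\E[T(A,f)]=\sum_{y\in X\setminus\arg\max f}\Pr[y \text{ precedes every optimum}]+1$) or cite it as a folklore identity.

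The fourth assertion I would obtain as a corollary of the third: for each $f\in\F$, pick an arbitrary representative $x_f\in\arg\max f$ and set $X:=\{x_f\mid f\in\F\}$. Then $|X|\le|\F|$ and every $f\in\F$ has at least one optimum in $X$ by construction, so part~(c) of the lemma supplies the bound $(|X|+1)/2\le(|\F|+1)/2$. The only true obstacle I anticipate is the brief combinatorial calculation for the expected first-hit position; everything else is bookkeeping within Algorithm~\ref{alg:BBCunrestricted}.
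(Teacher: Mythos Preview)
Your proposal is correct and follows essentially the same approach as the paper: query a known optimum for parts~(1) and~(2), sample the elements of $X$ in uniformly random order for part~(3), and reduce part~(4) to part~(3) via the set $X=\{x_f\mid f\in\F\}$. The only difference is that you spell out the expected first-hit computation $(|X|+1)/(k_f+1)\le(|X|+1)/2$, which the paper leaves implicit.
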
 
The proof of this lemma is quite straightforward. For the first statement, the algorithm which queries any point in $\arg\max f$ in the first query certifies this bound. Similarly, the second statement is certified by the algorithm that queries $x$ in the first iteration. The algorithm which queries the points in $X$ in random order proves the third statement. Finally, note that the third statement implies the fourth by letting $X$ be the set that contains for each function $f\in\F$ one optimal solution $x_f \in \arg\max f$.  

Lemma~\ref{lem:BBCone} indicates that function classes $\F$ for which $\cup_{f \in \F} \arg\max f$, or, more precisely for which a small set $X$ as in the third statement of Lemma~\ref{lem:BBCone} exists, are not very interesting research objects in the unrestricted black-box model. We therefore typically choose the generalizations of the benchmark problems in a way that any set $X$ which contains for each objective function $f \in \F$ at least one optimal search point has to be large. We shall often even have $|X|=|\F|$; i.e., the optima of any two functions in $\F$ are pairwise different. 

We will see in Section~\ref{sec:BBCunbiased} that Lemma~\ref{lem:BBCone} does not apply to all of the restricted black-box models. In fact, in the unary unbiased black-box model regarded there, the black-box complexity of a single function can be of order $n \log n$. That is, even if the algorithm ``knows'' where the optimum is, it may still need $\Omega(n \log n)$ steps to generate it. 

\subsubsection{Upper Bounds via Restarts}
\label{sec:BBCrestart}

In several situations, rather than bounding the expected optimization time of a black-box heuristic, it can be easier to show that the probability that it solves a given problem within $s$ iterations is at least $p$. If $p$ is large enough (for an asymptotic bound it suffices that this success probability is constant), then a restarting strategy can be used to obtain upper bounds on the black-box complexity of the problem. Either the algorithm is done after at most $s$ steps, or it is initialized from scratch, independently of all previous runs. This way, we obtain the following lemma. 
%
%

\begin{lemma}[Remark~7 in~\cite{DoerrKLW13}]
\label{lem:BBChighprobability}
Suppose for a problem $\F$ there exists an unrestricted black-box algorithm $A$ that, with constant success probability, solves any instance $f \in \F$ in $s$ iterations (that is, it queries an optimal solution within $s$ queries). Then the unrestricted black-box complexity of $\F$ is at most $O(s)$.
\end{lemma}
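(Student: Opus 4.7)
The plan is to design a wrapper algorithm $B$ that simulates $A$ in independent phases of length $s$ each, and bound $\E[T(B,f)]$ directly. By hypothesis, there is a constant $p > 0$ such that for every $f \in \F$, the probability that a single run of $A$ queries an optimum within its first $s$ iterations is at least $p$. The key observation is that an unrestricted black-box algorithm is allowed to ignore the information it has already collected and restart an arbitrary subroutine with fresh randomness; this is legitimate within the framework of Algorithm~\ref{alg:BBCunrestricted}, since the distributions $D^{(t)}$ may be chosen to depend only on a truncated or reinterpreted history.

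Concretely, I would define $B$ as follows: initialize a counter $i := 1$ and repeat the following phase until an optimum is queried. In phase $i$, run an independent copy $A_i$ of $A$ for exactly $s$ queries, using fresh random bits and discarding the history of all earlier phases. If none of the $s$ queries in phase $i$ is optimal, increment $i$ and start the next phase; otherwise stop. Since $B$'s output distribution over queries is well defined and depends only on $f$-values observed so far (it even throws some of them away), $B$ is a valid unrestricted black-box algorithm.

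To bound $\E[T(B,f)]$ for an arbitrary $f \in \F$, let $N$ be the (random) index of the first successful phase. By independence of the phases and the assumption on $A$, we have $\Pr[N > k] \leq (1-p)^k$, so $N$ is stochastically dominated by a geometric random variable with success probability $p$, giving $\E[N] \leq 1/p$. The total number of queries made by $B$ is at most $s \cdot N$ (we may count the full final phase, which only inflates the bound by at most a factor of $s$). Hence
\begin{equation*}
\E[T(B,f)] \leq s \cdot \E[N] \leq \frac{s}{p} = O(s),
\end{equation*}
where the last equality uses that $p$ is a constant independent of the instance. Taking the supremum over $f \in \F$ and then the infimum over all algorithms in Definition~\ref{def:BBCcomplexity}, the unrestricted black-box complexity of $\F$ is at most $s/p = O(s)$.

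The argument is essentially routine; the only subtle point is justifying that the restarting wrapper $B$ is really an unrestricted black-box algorithm in the sense of Algorithm~\ref{alg:BBCunrestricted}. This is not an obstacle in practice, because the blueprint places no restriction on how $D^{(t)}$ uses the history: $B$ simply chooses $D^{(t)}$ to be the distribution prescribed by $A_i$ conditioned only on the queries made within the current phase, which is a particular (measurable) function of the full history and therefore admissible.
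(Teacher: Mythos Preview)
Your proof is correct and takes essentially the same approach as the paper: the paper explicitly describes the restarting strategy (run $A$ for $s$ steps and, if unsuccessful, reinitialize from scratch independently of all previous runs) as the mechanism behind the lemma, and you have filled in the routine geometric-series computation and the justification that the wrapper fits Algorithm~\ref{alg:BBCunrestricted}.
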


Lemma~\ref{lem:BBChighprobability} also applies to almost all of the restricted black-box models that we will discuss in Sections~\ref{sec:BBCmemory} to~\ref{sec:BBCcombined}. In general, it applies to all black-box models in which restarts are allowed. It does not apply to the (strict version of the) elitist black-box model, which we discuss in Section~\ref{sec:BBCelitist}.

\subsection{Polynomial Bounds for NP-Hard Problems}\label{sec:BBCNPhard}
Our discussion in Section~\ref{sec:BBCintrowhite} indicates that the classical complexity notions developed for white-box optimization and decision problems are not very meaningful in the black-box setting. This is impressively demonstrated by a number of NP-hard problems that have a small polynomial black-box complexity. We present such an example, taken from~\cite[Section~3]{DrosteJW06}.

One of the best-known NP-complete problems is \textsc{MaxClique}. For a given graph $G=(V,E)$ of $|V|=n$ nodes and for a given parameter $k$, it asks whether there exists a complete subgraph $G'=(V'\subseteq V, E':=E\cap \{\{u,v\} \in E \mid u,v \in V'\})$ of size $|V'| \ge k$. A complete graph is the graph in which every two vertices are connected by a direct edge between them. The optimization version of \textsc{MaxClique} asks to find a complete subgraph of largest possible size. A polynomial-time optimization algorithm for this problem implies P=NP. 

The unrestricted black-box complexity of \textsc{MaxClique} is, however, only of order $n^2$. This bound can be achieved as follows. In the first $\binom{n}{2}$ queries, the algorithm queries the presence of individual edges. This way, it learns the structure of the problem instance. From this information, all future solution candidates can be evaluated without any oracle queries. That is, a black-box algorithm can now compute an optimal solution \emph{offline;} i.e., without the need for further function evaluations. This offline computation may take exponential time, but in the black-box complexity model, we do not charge the algorithm for the time needed between two queries. The optimal solution of the \textsc{MaxClique} instance can be queried in the $\big(\binom{n}{2}+1\big)$-st query.

\begin{theorem}[Section~3 in~\cite{DrosteJW06}]
The unrestricted black-box complexity of \textsc{MaxClique} is at most $\binom{n}{2}+1$ and thus $O(n^2)$.
\end{theorem}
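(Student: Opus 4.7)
The plan is to exhibit a deterministic unrestricted black-box algorithm whose worst-case number of queries on the class $\F$ of \textsc{MaxClique} instances on $n$ labeled vertices is at most $\binom{n}{2}+1$. I would parameterize $\F$ by graphs $G=(V,E)$ with $V=[n]$, associating to each such $G$ the function $f_G:\{0,1\}^n \to \N$ whose value on a bit string $x$ is $|S_x|$ if the vertex subset $S_x\subseteq V$ encoded by $x$ induces a clique in $G$, and a strictly smaller value (say $0$) otherwise. The optima of $f_G$ are then exactly the maximum cliques of $G$.

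The key observation is that a single query of the two-element set $\{u,v\}$ reveals whether $\{u,v\}\in E$: one has $f_G(\{u,v\})=2$ if and only if the edge is present, and $f_G(\{u,v\})\neq 2$ otherwise. I would therefore let the algorithm query, in an arbitrary fixed order, the indicator string of every pair $\{u,v\}$ with $u<v$, using exactly $\binom{n}{2}$ evaluations. At the end of this phase it has reconstructed $E$, and hence the entire instance $G$, with certainty.

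The second phase is purely off-line. As emphasized after the statement of Algorithm~\ref{alg:BBCunrestricted}, the unrestricted model does not charge the algorithm for the work performed between queries, so it is free to compute, by brute force if necessary, a maximum clique $C^*$ of the now fully known graph $G$. Submitting the indicator string of $C^*$ as query number $\binom{n}{2}+1$ evaluates an optimum of $f_G$, so $T(A,f_G)\le \binom{n}{2}+1$ on every $f_G\in\F$, and Definition~\ref{def:BBCcomplexity} delivers the bound. The only point to flag — more a feature than an obstacle — is that the off-line step solves an NP-hard problem, so the algorithm does not satisfy the polynomial-time restriction of the time-restricted model mentioned in Section~\ref{sec:BBCunrestricted}; but that restriction is explicitly not imposed here, and dramatizing this gap is precisely the purpose of the example.
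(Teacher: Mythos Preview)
Your proposal is correct and follows essentially the same approach as the paper: query all $\binom{n}{2}$ pairs to learn the edge set, compute a maximum clique off-line (at unbounded computational cost, which the model does not charge), and output it as the final query. Your explicit parameterization of $f_G$ and the observation that $f_G(\{u,v\})=2$ iff $\{u,v\}\in E$ make the edge-detection step precise, and your closing remark about the time-restricted model matches the paper's own commentary.
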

Several similar results can be obtained. For most of the restricted black-box complexity models this has been explicitly done, cf. also Section~\ref{sec:BBCunbiasedpartition}. 

One way to avoid such small complexities would be to restrict the time that an algorithm can spend between any two queries. This suggestion was made in~\cite{DrosteJW06}. In our opinion, this requirement would, however, carry a few disadvantages such as a mixture of different complexity measures. We will therefore, here in this book chapter, not explicitly verify that the algorithms run in polynomial time. Most upper bounds are nevertheless easily seen to be obtained by polynomial-time algorithms. Where polynomial bounds are proven for NP-hard problems, there must be at least one iteration for which the respective algorithm, by today's knowledge, needs excessive time.

\section{Known Black-Box Complexities in the Unrestricted Model} 
\label{sec:BBCunrestrictedResults}

We survey existing results for the unrestricted black-box model, and proceed by problem type. For each considered benchmark problem we first introduce its generalization to classes of similar problem instances. We discuss which of the original problem characteristics are maintained in these generalizations. We will see that for some classical benchmark problems, different generalizations have been proposed in the literature. 

\subsection{Needle}
\label{sec:BBCunrestrictedneedle}

Our first benchmark problem is an example that shows that the simple upper bound given in Lemma~\ref{lem:BBCuppersimple} can be tight. The function that we generalize is the \textsc{Needle} function, which assigns $0$ to all search points $s\in S$ except for one distinguished optimum, which has a function value of one. In order to obtain the above-mentioned property that every function in the generalize class has a different optimum than any other function (cf. discussion after Lemma~\ref{lem:BBCone}), while at the same time maintaining the problem characteristics, the following generalization is imposes itself. For every $s \in S$ we let $f_s:S \to \R$ be the function which assign function value $1$ to the unique optimum $s \in S$ and $0$ to all other search points $x\neq s$. We let $\textsc{Needle}(S):=\{f_s \mid s \in S\}$ be the set of all such functions. 

Confronted with such a function $f_s$ we do not learn anything about the target string $s$ until we have found it. It seems quite intuitive that the best we can do in such a case is to query search points at random, without repetitions. That this is indeed optimal is the statement of the following theorem, which can be easily proven by Yao's minimax principle applied to $\textsc{Needle}(S)$ with the uniform distribution. 

\begin{theorem}[Theorem~1 in~\cite{DrosteJW06}]
\label{thm:BBCneedle}
For every finite set $S$, the unrestricted black-box complexity of $\textsc{Needle}(S)$ is $(|S|+1)/2$.
\end{theorem}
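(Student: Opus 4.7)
The plan is to establish the upper and lower bounds separately; the upper bound is essentially free, while the lower bound is the content and follows from a clean application of Yao's minimax principle.

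For the upper bound, I would simply invoke Lemma~\ref{lem:BBCuppersimple}: since every $f_s \in \textsc{Needle}(S)$ has an optimum in $S$ and $|S|$ is finite, random sampling without repetitions yields unrestricted black-box complexity at most $(|S|+1)/2$. No further work is needed.

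For the lower bound, I would apply Theorem~\ref{thm:BBCYao} with $\mathcal{I} := \textsc{Needle}(S)$ and $p$ the uniform distribution on $\mathcal{I}$; note that $\mathcal{I}$ is finite since $|\textsc{Needle}(S)|=|S|$, and each $f_s$ takes only the two values $0,1$, so $\F(S)$ is finite and only finitely many non-repetitive deterministic decision trees need to be considered. The key observation is structural: for any $f_s\in\textsc{Needle}(S)$ and any queried search point $x\neq s$, the response $f_s(x)=0$ is identical, so a non-optimal query conveys no information distinguishing among the remaining candidate optima. Consequently, a deterministic algorithm's sequence of queries is fixed (independent of the target $f_s$) until the optimum is hit; without loss of generality this sequence is a permutation $(x^{(1)},x^{(2)},\ldots,x^{(|S|)})$ of $S$, since repeating a previously queried point is never beneficial. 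Under the uniform distribution over $\textsc{Needle}(S)$, the optimum $s$ is uniform over $S$, so the position $i\in\{1,\ldots,|S|\}$ at which $x^{(i)}=s$ is uniform as well, giving
\begin{equation*}
\E_{f\sim p}[T(A,f)] = \frac{1}{|S|}\sum_{i=1}^{|S|} i = \frac{|S|+1}{2}
\end{equation*}
for every deterministic algorithm $A$. By Theorem~\ref{thm:BBCYao}, this is a lower bound for the worst-case expected running time of any randomized algorithm, hence for the unrestricted black-box complexity of $\textsc{Needle}(S)$.

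I expect no serious obstacle. The only subtle point is justifying that a deterministic algorithm's query sequence is effectively non-adaptive on this input distribution; this is a short argument based on the fact that the observed response before the optimum is found is constantly $0$, so the path traced through the decision tree is determined until the optimum is queried. One should also briefly remark that we may restrict to algorithms that never repeat a query, since a repeated query is strictly dominated by a fresh one under the Needle class. Combining the two bounds yields the claimed equality $(|S|+1)/2$.
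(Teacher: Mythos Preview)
Your proposal is correct and follows exactly the approach indicated in the paper: the upper bound via Lemma~\ref{lem:BBCuppersimple} and the lower bound via Yao's minimax principle with the uniform distribution over $\textsc{Needle}(S)$. Your added justification that every deterministic algorithm is effectively non-adaptive on this class (because all non-optimal queries return $0$) is the natural fleshing-out of the paper's one-line sketch.
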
  

\subsection{OneMax}\label{sec:BBCunrestrictedOM}

The certainly best-studied benchmark function in the theory of randomized black-box optimization is \onemax. \onemax assigns to each bit string~$x$ of length~$n$ the number $\sum_{i=1}^n{x_i}$ of ones in it. The natural generalization of this particular function to a non-trivial class of functions is as follows.

\begin{definition}[\onemax]
\label{def:BBCOM}
For all $n \in \N$ and all $z \in \{0,1\}^n$ let 
$$\OM_z: \{0,1\}^n \rightarrow [0..n], x \mapsto \OM_z(x)=|\{i \in [n] \mid x_i=z_i\}|,$$
the function that assigns to each length-$n$ bit string $x$ the number of bits in which $x$ and $z$ agree. Being the unique optimum of $\OM_z$, the string $z$ is called its \textbf{target string}. 

We refer to $\onemax_n :=\left\{\OM_z \mid z \in \{0,1\}^n \right\}$ as the set of all (generalized) \onemax functions. We often omit the subscript $n$.
\end{definition}
We easily observe that for every $n$ the original \onemax function $\OM$ counting the number of ones corresponds to $\OM_{(1,\ldots,1)}$. It is furthermore not difficult to prove that for every $z \in \{0,1\}^n$ the \emph{fitness landscape} of $\OM_z$ is \emph{isomorphic} to that of $\OM$. This can be seen by observing that $\OM_z(x)=\OM(x \oplus z \oplus (1,\ldots,1))$ for all $x,z \in \{0,1\}^n$, which shows that $\OM_z = \OM \circ \alpha_z$ for the Hamming automorphism $\alpha_z:\{0,1\}^n \to \{0,1\}^n, x \mapsto x \oplus z \oplus (1,\ldots,1)$. As we shall discuss in Section~\ref{sec:BBCunbiased}, a Hamming automorphism is a one-to-one map $\alpha:\{0,1\}^n \to \{0,1\}^n$ such that for all $x$ and all $z$ the Hamming distance of $x$ and $z$ is identical to that of $\alpha(x)$ and $\alpha(z)$. 
This shows that the generalization of \OM~to functions $\OM_z$ preserves its problem characteristics. In essence, the generalization is just a ``re-labeling'' of the search points. 

\textbf{The unrestricted black-box complexity of OneMax.} With Definition~\ref{def:BBCOM} at hand, we can study the unrestricted black-box complexity of this important class of benchmark functions. 

Interestingly, it turns out that the black-box complexity of $\onemax_n$ has been studied in several different contexts; much before Droste, Jansen, and Wegener introduced black-box complexity. In fact, already Erd{\H{o}}s and R\'enyi~\cite{ErdR63} as well as several other authors studied it in the early 60s of the last century, inspired by a question about so-called \emph{coin-weighing problems.} 

In our terminology, Erd{\H{o}}s and R\'enyi~\cite{ErdR63} showed that the unrestricted black-box complexity of \onemax is at least $(1-o(1)) n/\log_2(n)$ and at most $(1+o(1)) \log_2(9)n/\log_2(n)$. The upper bound was improved to $(1+o(1)) 2n/\log_2(n)$ in~\cite{Lindstrom64,Lindstrom65,CantorM66}. Identical or weaker bounds have been proven several times in the literature. Some works appeared at the same time as the work of Erd{\H{o}}s and R\'enyi (cf. the discussion in~\cite{Bshouty09}), some much later~\cite{DrosteJW06,AnilW09,Bshouty09}. 
\begin{theorem}[\cite{ErdR63,Lindstrom64,Lindstrom65,CantorM66}]
\label{thm:BBCunrestrictedOM}
The unrestricted black-box complexity of \onemax is at least $(1-o(1)) n/\log_2(n)$ and at most $(1+o(1)) 2n/\log_2(n)$. It is thus $\Theta(n/\log n)$.
\end{theorem}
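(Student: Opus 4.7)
My plan is to establish the two bounds separately.

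For the lower bound, I would apply the information-theoretic bound of Theorem~\ref{thm:BBCinfotheo} directly. Take $S = \{0,1\}^n$ and $k = n+1$: for every $s \in S$, the function $\OM_s$ has $s$ as its unique maximum and takes values in $\{0, 1, \ldots, n\}$, so the hypotheses are satisfied. The theorem then yields a lower bound of $\lceil \log_{n+1}(2^n) \rceil - 1 = n/\log_2(n+1) - O(1)$, which equals $(1-o(1))\,n/\log_2 n$ since $\log_2(n+1) = \log_2 n + o(1)$.

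For the upper bound, I would analyze the following non-adaptive algorithm: draw $m := \lceil (1+\varepsilon) \cdot 2n/\log_2 n \rceil$ independent uniformly random strings $x^{(1)}, \ldots, x^{(m)} \in \{0,1\}^n$, query each, and from the observed values $\OM_z(x^{(i)}) = n - d_H(x^{(i)}, z)$ compute the set $C$ of all $z' \in \{0,1\}^n$ consistent with these $m$ measurements; finally, query an arbitrary $z' \in C$. The key claim is that with probability $1-o(1)$ the sample identifies the target, i.e.\ $C = \{z\}$, so the $(m+1)$-st query hits the optimum. Combined with the restart argument of Lemma~\ref{lem:BBChighprobability} and the fact that the success probability is $1-o(1)$ (and not merely constant), the expected number of queries is at most $(1+o(1))(m+1) = (1+o(1)) \cdot 2n/\log_2 n$, matching the claim.

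The technical heart of the argument is the identification claim. For any fixed $z' \neq z$ with $k := d_H(z, z')$, a short calculation shows that $d_H(x, z) - d_H(x, z')$ for uniformly random $x \in \{0,1\}^n$ is distributed as the sum of $k$ independent Rademacher $\pm 1$ variables, so the probability of a collision $\OM_z(x) = \OM_{z'}(x)$ is $0$ for odd $k$ and $\binom{k}{k/2}/2^k = \Theta(1/\sqrt{k})$ for even $k$. By independence of the $m$ queries and a union bound over all $z' \neq z$, grouped by Hamming distance $k$ to $z$, the failure probability is at most
\[
\sum_{\substack{k=2 \\ k \text{ even}}}^{n} \binom{n}{k}\,\Theta\bigl(k^{-m/2}\bigr).
\]
The main obstacle is showing that this sum is $o(1)$. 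I would split the range of $k$ into small, moderate, and large values, and verify that each term is $o(1/n)$ when $m = (1+\varepsilon) \cdot 2n/\log_2 n$; the critical balance is achieved near $k = n$, where $\binom{n}{n}\,n^{-m/2} = 2^{-(1+\varepsilon)n}$---this is precisely what forces the constant $2$ in front of $n/\log_2 n$ in the upper bound. Letting $\varepsilon = \varepsilon(n) \to 0$ sufficiently slowly then yields the $(1+o(1))$ factor claimed in the theorem.
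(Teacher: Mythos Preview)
Your approach is essentially the same as the paper's (and the classical references): the lower bound via Theorem~\ref{thm:BBCinfotheo} and the upper bound via the Erd\H{o}s--R\'enyi random-sampling argument with a union bound over competitors $z'$. The overall plan is sound and would succeed if executed carefully.

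There is, however, one concrete slip in your sketch. You claim that ``the critical balance is achieved near $k=n$, where $\binom{n}{n}\,n^{-m/2}=2^{-(1+\varepsilon)n}$---this is precisely what forces the constant $2$.'' This is wrong: at $k=n$ there is a single competitor (the bitwise complement of $z$) and the term is already minuscule for any positive $m$. The tight constraint sits at $k\approx n/2$, where $\binom{n}{k}\approx 2^n$ while the collision probability is $\Theta(1/\sqrt{k})=\Theta(n^{-1/2})$. There the term is roughly $2^{n}\cdot n^{-m/2}$, and requiring this to be $o(1)$ gives precisely $m>(1+o(1))\,2n/\log_2 n$; this is where the factor $2$ comes from. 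The paper notes the same intuition: random queries have values concentrated in an $O(\sqrt{n})$ window around $n/2$, so each effectively carries only about $\tfrac{1}{2}\log_2 n$ bits.

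This misidentification does not break your proof---your proposed case split over small, moderate, and large $k$ will still go through---but you would discover during the ``moderate'' case that \emph{that} is the bottleneck, not the ``large'' one. Fix the sentence and the argument is complete.
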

The lower bound of Theorem~\ref{thm:BBCunrestrictedOM} follows from Yao's minimax principle, applied to $\onemax_n$ with the uniform distribution. Informally, we can use the arguments given after Theorem~\ref{thm:BBCinfotheo}: since the optimum can be anywhere in $\{0,1\}^n$, we need to learn the $n$ bits of the target string $z$. With each function evaluation, we receive at most $\log_2(n+1)$ bits of information, namely the objective value, which is an integer between $0$ and $n$. We therefore need at least (roughly) $n/\log_2(n+1)$ iterations. Using Theorem~\ref{thm:BBCinfotheo}, this reasoning can be turned into a formal proof. 

The upper bound given in Theorem~\ref{thm:BBCinfotheo} is quite interesting because it is obtained by a very simple strategy. Erd{\H{o}}s and R\'enyi showed that $O(n/\log n)$ bit strings sampled independently and uniformly at random from the hypercube $\{0,1\}^n$ have a high probability of revealing the target string. That is, an asymptotically optimal unrestricted black-box algorithm for \onemax can just sample $O(n/\log n)$ random samples. From these samples and the corresponding objective values, the target string can be identified without further queries. Its computation, however, may not be possibly in polynomial time. That $\onemax_n$ can be optimized in $O(n/\log n)$ queries also in polynomial time was proven in~\cite{Bshouty09}.\footnote{Bshouty mentions that also the constructions of Lindstr\"om and Cantor and Mills can be done in polynomial time. But this was not explicitly mentioned in their works. The work of Bshouty also has the advantage that it generalizes to \onemax functions over alphabets larger than $\{0,1\}$, cf. also Section~\ref{sec:BBCmastermind}.} The reader interested in a formal analysis of the strategy by Erd{\H{o}}s and R\'enyi may confer Section~3 of \cite{DoerrJKLWW11}, where a detailed proof of the $O(n/\log n)$ random sampling strategy is presented. 

In the context of \emph{learning}, it is interesting to note that the random sampling strategy by Erd{\H{o}}s and R{\'e}nyi is \emph{non-adaptive;} i.e., the $t$-th search point does not depend on the previous $t-1$ evaluations. In the black-box context, a last query, in which the optimal solution is evaluated, is needed. This query certainly depends on the previous $O(n/\log n)$ evaluations, but note that here we know the answer to this evaluation already (with high probability). For non-adaptive strategies, learning $z$ with $(1+o(1)) 2n/\log n$ queries is optimal~\cite{ErdR63}. The intuitive reason for this lower bound is that a random guess typically has an objective value close to $n/2$. More precisely, instead of using the whole range of $n+1$ possible answers, almost all function values are in an $O(\sqrt{n})$ range around $n/2$, giving, very informally, the lower bound $\log_2(n)/\log_2(O(\sqrt{n}))=\Omega(2n/\log n)$. 

Using the probabilistic method (or the constructive result by Bshouty~\cite{Bshouty09}), the random sampling strategy can be derandomized. This derandomization says that for every $n$, there is a sequence of $t = \Theta(n/\log n)$ strings $x^{(1)}, \ldots, x^{(t)}$ such that the objective values $\OM_z(x^{(1)}), \ldots, \OM_z(x^{(t)})$ uniquely determine the target string $z$. Such a derandomized version will be used in later parts of this chapter, e.g., in the context of the $k$-ary unbiased black-box complexity of \onemax studied in Section~\ref{sec:BBCunbiasedOM}. 

\begin{theorem}[from~\cite{ErdR63} and others]
\label{thm:BBCunrestrictedOMderandomized}
For every $n$ there is a sequence $x^{(1)}, \ldots, x^{(t)}$ of $t=\Theta(n/\log n)$ bit strings such that for every two length-$n$ bit strings $y \neq z$ there exists an index $i$ with $\OM_z(x^{(i)}) \neq \OM_y(x^{(i)})$.
%
\end{theorem}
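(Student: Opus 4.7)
The plan is to reduce the statement to the existence of a \emph{detecting matrix} and then invoke the classical constructions from the coin-weighing literature already referenced in the discussion preceding the theorem. First I would reformulate the conclusion algebraically: setting $v^{(i)}_j := 1 - 2 x^{(i)}_j \in \{-1,+1\}$, a short calculation gives $\OM_z(x) = n/2 + \tfrac{1}{2}\langle v, 1 - 2z\rangle$, so the condition $\OM_z(x^{(i)}) = \OM_y(x^{(i)})$ translates into $\langle v^{(i)}, y - z\rangle = 0$. The theorem therefore becomes: there exists a matrix $V \in \{-1,+1\}^{t \times n}$ with $t = \Theta(n/\log n)$ such that $Vu \neq 0$ over $\Z$ for every non-zero $u \in \{-1,0,+1\}^n$, i.e., a detecting matrix in the sense of Lindström.

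The lower bound $t = \Omega(n/\log n)$ will be immediate from Theorem~\ref{thm:BBCunrestrictedOM}, since any such sequence $x^{(1)}, \ldots, x^{(t)}$, followed by one additional query on the (uniquely determined) identified target $z$, yields a non-adaptive deterministic black-box algorithm of running time $t+1$, and this must be at least the unrestricted black-box complexity $(1-o(1))\, n/\log_2 n$. For the matching upper bound I would cite the explicit detecting-matrix constructions of Lindström~\cite{Lindstrom64, Lindstrom65} and Cantor-Mills~\cite{CantorM66}, each of size $(1+o(1))\cdot 2n/\log_2 n$, or the polynomial-time construction of Bshouty~\cite{Bshouty09}; their rows directly provide the required sequence after transcribing $v^{(i)}$ back to $x^{(i)} = (1 - v^{(i)})/2 \in \{0,1\}^n$.

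The main obstacle — and the reason I would not attempt an elementary probabilistic-method proof with uniform $\{0,1\}^n$ queries — is the contribution of low-Hamming-distance pairs. For $y,z$ with $d(y,z) = 2$, a single uniform random query distinguishes them with probability only $1/2$, and there are $\binom{n}{2} \cdot 2^{n-1}$ such unordered pairs, so a straightforward union bound over all pairs forces $t = \Omega(n)$ rather than the target $\Theta(n/\log n)$. The cited constructions circumvent this by imposing structured (algebraic) correlations between queries — essentially Reed-Solomon- or number-theoretically inspired row sets — that guarantee distinguishability of every small-support $u$ with exponentially fewer rows than a union bound would suggest. I would invoke these as a black box rather than reprove them here.
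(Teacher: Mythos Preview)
Your reduction to detecting matrices and invocation of the Lindstr\"om, Cantor--Mills, and Bshouty constructions are correct and suffice; the lower bound via Theorem~\ref{thm:BBCunrestrictedOM} is also fine. This is one of the two routes the paper itself points to, the other being derandomization of the Erd\H{o}s--R\'enyi random-sampling result via the probabilistic method.

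Where you go astray is in your stated reason for rejecting the probabilistic route. Your union bound is over \emph{pairs} $(y,z)$, but your own algebraic reformulation already shows that whether a query $x^{(i)}$ distinguishes $y$ from $z$ depends only on the difference $u = y - z \in \{-1,0,+1\}^n$, not on the pair itself. The correct union bound is therefore over the at most $3^n$ non-zero differences, and the spurious $2^{n-1}$ factor in your count disappears. Grouping by the number $k$ of non-zero entries of $u$ and using that $P(\langle v, u\rangle = 0) = \binom{k}{k/2}2^{-k}$ for even $k$ (and $0$ for odd $k$) when $v$ is a uniform $\pm 1$ row, one checks that $t = O(n/\log n)$ i.i.d.\ random rows already make $\sum_k \binom{n}{k}2^k p_k^t < 1$; the dominant contribution comes from $k = \Theta(n)$, where $p_k = \Theta(k^{-1/2})$, not from $k = 2$. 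This \emph{is} the Erd\H{o}s--R\'enyi argument, and it is the paper's primary route to the present theorem. The explicit constructions you cite sharpen the leading constant to~$2$ and give polynomial-time computability, but the probabilistic method is not blocked by the obstacle you describe.
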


For very concrete \onemax instances, i.e., for instances of bounded dimension $n$, very precise bounds for the black-box complexity are known, cf.~\cite{Buzdalov16} and the pointers in~\cite[Section~1.4]{DoerrDST16} for details. Here in this chapter we are only concerned with the asymptotic complexity of $\onemax_n$ with respect to the problem dimension $n$. Non-surprisingly, this benchmark problem will also be studied in almost all of the restricted black-box models that we describe in the subsequent sections. A summary of known results can be found in Section~\ref{sec:BBCtables}.

\subsection{BinaryValue}\label{sec:BBCunrestrictedBV}

Another intensively studied benchmark function is the binary-value function 
$\BV(x) := \sum_{i=1}^n 2^{i-1} x_i$, which assigns to each bit string the value of the binary number it represents.
As $2^{i} > \sum_{j=1}^{i} 2^{j-1}$, the bit value of the bit~$i+1$ dominates the effect of all bits $1, \ldots, i$ on the function value.

Two straightforward generalizations of $\BV$ to function classes exist. The first one is the collection of all functions 
$$\BV_z: \{0,1\}^n \rightarrow [0..2^{n}], x \mapsto \sum_{i=1}^n{2^{i-1} \mathds{1}(x_i,z_i)},$$
where $\mathds{1}(a,b):=1$ if and only if $a=b$ and $\mathds{1}(a,b):=0$ otherwise. In light of Definition~\ref{def:BBCOM}, this may seem like a natural extension of $\BV$ to a class of functions. It also satisfies our sought condition that for any two functions $\BV_z \neq \BV_{z'}$ the respective optima $z$ and $z'$ differ, so that the smallest set containing for each function its optimum is the full $n$-dimensional hypercube $\{0,1\}^n$. However, we easily see that the unrestricted black-box complexity of the so-defined set $\binval_n^*:=\{\BV_z \mid z \in \{0,1\}^n \}$ is very small. 

\begin{theorem}[Theorem~4 in~\cite{DrosteJW06}]
\label{thm:BBCunrestrictedBV1}
The unrestricted black-box complexity of $\binval_n^*$ is $2-2^{-n}$.
\end{theorem}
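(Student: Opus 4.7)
The plan rests on a single structural observation: every evaluation of $\BV_z$ reveals the target $z$ in full. Indeed, for any query $x \in \{0,1\}^n$ the value
$$\BV_z(x) = \sum_{i=1}^{n} 2^{i-1} \mathds{1}(x_i,z_i) \in [0..2^n-1]$$
has a binary expansion whose $i$-th bit is exactly $\mathds{1}(x_i,z_i)$. Since the queried $x$ is known to the algorithm, this bit vector, combined with $x$, determines $z$ coordinate by coordinate. Consequently, after \emph{any} single evaluation the optimum can be computed offline and queried on the following iteration.

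For the upper bound I would analyse the randomized algorithm that samples $x^{(1)}$ uniformly from $\{0,1\}^n$, evaluates $\BV_z(x^{(1)})$, decodes $z$ as above, and (if the first query was not already optimal) queries $z$ on the second iteration. Since $\Pr[x^{(1)}=z] = 2^{-n}$ for every fixed target $z$, the expected optimization time is $1\cdot 2^{-n} + 2\cdot(1-2^{-n}) = 2-2^{-n}$, uniformly over $\binval_n^*$, giving the claimed upper bound.

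The matching lower bound follows from Yao's minimax principle (Theorem~\ref{thm:BBCYao}): every $\BV_z$ takes values in the finite set $[0..2^n-1]$, so the associated class of (non-repetitive) deterministic decision trees is finite and Yao applies. Equip $\binval_n^*$ with the uniform distribution and consider an arbitrary deterministic black-box algorithm; its first query is some fixed $x^{(1)} \in \{0,1\}^n$, and under the uniform prior $\Pr[x^{(1)}=z]=2^{-n}$. On this event the running time is at least $1$, and on the complement it is at least $2$ (the first query is not optimal, so a further evaluation is needed). Averaging gives expected running time at least $2^{-n}+2(1-2^{-n})=2-2^{-n}$, which Yao's principle transfers into the same lower bound for all randomized black-box algorithms. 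The main (and essentially only) subtlety in the write-up is the decoding step; once that is spelled out explicitly, both directions reduce to the same one-line expected-value calculation and no further difficulty arises.
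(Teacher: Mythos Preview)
Your proof is correct and follows essentially the same approach as the paper's: both exploit that a single evaluation $\BV_z(x)$ uniquely determines $z$ via its binary expansion, giving the $2-2^{-n}$ upper bound by a random first query, and both obtain the matching lower bound from the uniform distribution over targets (you invoke Yao's principle explicitly, the paper leaves this implicit).
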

\begin{proof}
The lower bound follows from observing that for an instance $\BV_z$ for which $z$ is chosen uniformly at random, the probability to query the optimum $z$ in the first query is $2^{-n}$. In all other cases at least two queries are needed. 

For the upper bound, we only need to observe that for any two target strings $z \neq z'$ and for every search point $x \in \{0,1\}^n$ we have $\BV_{z}(x) \neq \BV_{z'}(x)$. More precisely, it is easy to see that from $\BV_{z}(x)$ we can easily determine for which bits $i \in [n]$ the bit value of $x_i$ is identical to $z_i$. This shows that from querying the objective value of a random string in the first query we can compute the optimum $z$, which we query in the second iteration, if the first one was not optimal already. 
\end{proof}

Theorem~\ref{thm:BBCunrestrictedBV1} is possible because the objective values disclose a lot of information about the target string. A second generalization of $\BV$ has therefore been suggested in the literature. In light of the typical behavior of black-box heuristics, which do not discriminate between the bit positions, and in particular with respect to the unbiased black-box model defined in Section~\ref{sec:BBCunbiased}, this variant seems to be the more ``natural'' choice in the context of evolutionary algorithms. This second generalization of $\BV$ collects all functions $\BV_{z,\sigma}$ defined as  
$$\BV_{z,\sigma}: \{0,1\}^n \rightarrow \N_0, x \mapsto \sum_{i=1}^n{2^{i-1} \delta(x_{\sigma(i)},z_{\sigma(i)})}\,.$$
Denting by $\sigma(x)$ the string $(x_{\sigma(1)} \ldots x_{\sigma(n)})$, we easily see that $\BV_{z,\sigma}(x)=\BV\big(\sigma(x \oplus z \oplus (1,\ldots,1))\big)$, thus showing that the class $\{ \BV_{z,\sigma} \mid z \in \{0,1\}^n, \sigma \in S_n \}$ can be obtained from $\BV$ by composing it with an $\oplus$-shift of the bit values and a permutation of the indices $i \in [n]$. Since $z=\arg\max \BV_{z,\sigma}$, we call $z$ the \emph{target string} of $\BV_{z,\sigma}$. Similarly, we call $\sigma$ the \emph{target permutation} of $\BV_{z,\sigma}$. 

Going through the bit string one by one, i.e., flipping one bit at a time, shows that at most $n+1$ function evaluations are needed to optimize any $\BV_{z,\sigma}$ instance. This simple upper bound can be improved by observing that for each query $x$ and for each $i \in [n]$ we can derive from $\BV_{z,\sigma}(x)$ whether or not $x_{\sigma(i)}=z_{\sigma(i)}$, even if we cannot yet locate $\sigma(i)$. Hence, all we need to do is to identify the target permutation $\sigma$. This can be done by a binary search, which gives the following result.

\begin{theorem}[Theorem~16 in~\cite{DoerrW14ranking}]
\label{thm:BBCunrestrictedBV}
The unrestricted black-box complexity of $\binval_n:=\{\BV_{z,\sigma} \mid z \in \{0,1\}^n, \sigma \in S_n \}$ is at most $\lceil \log_2 n \rceil+2$.
\end{theorem}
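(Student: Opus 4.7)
The plan is to exhibit an explicit deterministic algorithm that optimizes any $\BV_{z,\sigma} \in \binval_n$ using at most $\lceil \log_2 n \rceil + 2$ queries. The key observation, already flagged in the paragraph preceding the theorem, is that a single oracle call returns far more than a scalar: since $\BV_{z,\sigma}(x) = \sum_{i=1}^n 2^{i-1} \mathds{1}(x_{\sigma(i)}, z_{\sigma(i)})$, the $i$-th bit of its binary representation is exactly $b_i(x) := \mathds{1}(x_{\sigma(i)}, z_{\sigma(i)})$. Each query thus yields an $n$-bit ``match vector'', although indexed through the unknown permutation $\sigma$.

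First I would query the all-zeros string $0^n$. The resulting match vector satisfies $b_i(0^n) = 1 - z_{\sigma(i)}$, so this single call already encodes the sequence $(z_{\sigma(1)}, \ldots, z_{\sigma(n)})$; all that remains is to identify $\sigma$. For this I would issue the next $\lceil \log_2 n \rceil$ queries $x^{(0)}, \ldots, x^{(\lceil \log_2 n \rceil - 1)}$, where $x^{(k)}_j$ is defined as the $k$-th bit in the binary expansion of $j-1$. Comparing $b_i(x^{(k)})$ with $b_i(0^n)$, one sees that the two agree if and only if $x^{(k)}_{\sigma(i)} = 0$, so their XOR is precisely the $k$-th bit of $\sigma(i) - 1$. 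Ranging over all $k$ recovers $\sigma(i)$ for every $i \in [n]$, and hence $\sigma$ in full. With $\sigma$ and $b(0^n)$ in hand, $z$ is reconstructed offline via $z_j = 1 - b_{\sigma^{-1}(j)}(0^n)$ and queried as the final call. Counting queries gives $1 + \lceil \log_2 n \rceil + 1 = \lceil \log_2 n \rceil + 2$ (with early termination if some intermediate query happens to coincide with $z$); since the algorithm is deterministic, this bound transfers directly to the expected running time.

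There is no real obstacle once one accepts the reinterpretation of an oracle response as $n$ parallel indicator bits; after that, the argument is essentially a textbook bitwise binary search on the indices $\sigma(i)$. The only subtlety requiring care is bookkeeping: one must keep straight that the $i$-th bit of $\BV_{z,\sigma}(x)$ corresponds to the \emph{position} $\sigma(i)$ of $x$, not to $i$ itself, so the match vector of $x^{(k)}$ encodes the $k$-th bits of $\sigma(i) - 1$ in a $\sigma$-shuffled order rather than giving $\sigma^{-1}$ directly. The computational work done between queries to reconstruct $\sigma$ and $z$ is polynomial, and in any case irrelevant in the unrestricted black-box model.
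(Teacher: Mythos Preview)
Your proof is correct and follows essentially the same approach as the paper: the paragraph preceding the theorem already notes that a single oracle value reveals, for every $i$, whether $x_{\sigma(i)} = z_{\sigma(i)}$, so one only needs to identify $\sigma$, which the paper says ``can be done by a binary search''. Your explicit choice of the $\lceil \log_2 n \rceil$ query strings $x^{(k)}$ (with $x^{(k)}_j$ the $k$-th bit of $j-1$) is precisely the standard implementation of that binary search, and your bookkeeping on how the XOR of match bits recovers $\sigma(i)$ is accurate.
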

In a learning sense, in which we want to \emph{learn} both $z$ and $\sigma$, the bound of Theorem~\ref{thm:BBCunrestrictedBV} is tight, as, informally, the identification of $\sigma$ requires to learn $\Theta(\log (n!))=\Theta(n \log n)$ bits, while with every query we obtain $\log_2(2^n)=n$ bits of information. In our optimization context, however, we do not necessarily need to learn $\sigma$ in order to optimize $\BV_{z,\sigma}$. A similar situation will be discussed in Section~\ref{sec:BBCunrestrictedLO}, where we study the unrestricted black-box complexity of \leadingones. For \leadingones, it could be formally proven that the complexity of optimization and learning are identical (up to at most $n$ queries). We are not aware of any formal statement showing whether or not a similar argument holds for the class $\binval_n$.

\subsection{Linear Functions}\label{sec:BBCunrestrictedLin}
$\OM$ and $\BV$ are representatives of the class of linear functions $f:\{0,1\}^n \to \R, x \mapsto \sum_{i=1}^n{f_i x_i}$. We can generalize this class in the same way as above to obtain the collection
$$\textsc{Linear}_n:=\Big\{ f_z:\{0,1\}^n \to \R, x \mapsto \sum_{i=1}^n{f_i \mathds{1}(x_i,z_i)} \mid z \in \{0,1\}^n \Big\}$$ 
of generalized linear functions. $\onemax_n$ and $\binval_n$ are both contained in this class. 

Not much is known about the black-box complexity of this class. The only known bounds are summarized by the following theorem. 

\begin{theorem}[Theorem~\ref{thm:BBCunrestrictedOM} above and Theorem~4 in~\cite{DrosteJW06}]
\label{thm:BBCunrestrictedLin}
The unrestricted black-box complexity of the class $\textsc{Linear}_n$ is at most $n+1$ and at least $(1-o(1)) n/\log_2 n$.
\end{theorem}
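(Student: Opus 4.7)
The plan is to split the proof into the lower and upper bounds. For the lower bound I would simply observe the containment $\onemax_n \subseteq \textsc{Linear}_n$, obtained by taking $f_1 = \dots = f_n = 1$ in the definition of a generalized linear function. Since the unrestricted black-box complexity is monotone in the function class — restricting the supremum over instances to a subclass never increases the infimum over algorithms — the lower bound of Theorem~\ref{thm:BBCunrestrictedOM} transfers directly, yielding $(1-o(1))\,n/\log_2 n$ for $\textsc{Linear}_n$ at no extra cost.

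For the upper bound $n+1$ I would exhibit an explicit deterministic algorithm that exploits the coordinate-wise separability of linear functions. The structural observation is this: for any $f_z \in \textsc{Linear}_n$ and any two strings $y, y' \in \{0,1\}^n$ that differ in exactly one coordinate $i$, we have $f_z(y') - f_z(y) = \pm f_i$, and the sign of this difference tells us unambiguously which of the two bit values at position $i$ agrees with bit $i$ of the global maximizer — call it $o_i$ — regardless of the sign of $f_i$ or the value of $z_i$ individually. Hence one well-placed extra query is enough to ``solve'' one coordinate of the optimum, without the algorithm ever needing to separately recover $f_i$ or $z_i$.

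The algorithm then queries $y := 0^n$ once, and for $t = 1, \dots, n$ queries $y \oplus e_t$ and updates $y$ to whichever of the two current strings has the larger $f_z$-value. A short induction on $t$ shows that after step $t$ the bits $y_1, \dots, y_t$ agree with $o_1, \dots, o_t$: the first $t-1$ bits are already correct by the inductive hypothesis, and since the compared strings differ only in bit $t$, the separability observation picks out $o_t$ correctly. After step $n$, $y$ equals the optimum; and since $y$ is either the candidate from the end of step $n-1$ or the newly queried string at step $n$, $y$ has already been evaluated. This gives $n+1$ queries in total.

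The only minor wrinkle I anticipate is the degenerate case $f_i = 0$, where both compared strings have equal fitness and bit $i$ is genuinely irrelevant to the objective; here any tie-breaking rule works. Beyond that, I do not foresee a real obstacle: both bounds fall out of short observations. The genuinely hard question is the large asymptotic gap between $(1-o(1))\,n/\log_2 n$ and $n+1$ — closing it appears to be an open problem — but it is not an obstacle to proving the theorem as stated.
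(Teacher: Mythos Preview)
Your proposal is correct and follows essentially the same approach as the paper: the lower bound is inherited from $\onemax_n \subseteq \textsc{Linear}_n$ via Theorem~\ref{thm:BBCunrestrictedOM}, and the upper bound is certified by the one-bit-at-a-time hill climber that keeps the better of parent and offspring. Your write-up is more explicit than the paper's one-sentence sketch --- in particular the induction and the handling of the $f_i = 0$ tie case --- but the underlying algorithm and the argument are identical.
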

The upper bound is attained by the algorithm that starts with a random or a fixed bit string $x$ and flips one bit at a time, using the better of parent and offspring as starting point for the next iteration. A linear lower bound seems likely, but has not been formally proven.

\subsection{Monotone and Unimodal Functions}\label{sec:BBCunrestrictedMonotone}

For the sake of completeness, we mention that the class $\textsc{Linear}_n$ is a subclass of the class of generalized monotone functions. 

\begin{definition}[Monotone functions]
\label{def:BBCmonotone}
Let $n \in \N$ and let $z \in \{0,1\}^n$. 
A function $f:\{0,1\}^n \rightarrow \R$ is said to be \emph{monotone with respect to $z$} if for all $y, y'\in \{0,1\}^n$ with $\{i \in [n] \mid y_i = z_i\} \subsetneq \{i \in [n] \mid y'_i = z_i\}$ it holds that $f(y) < f(y')$. The class $\textsc{Monotone}_n$ contains all such functions that are monotone with respect to some $z \in \{0,1\}^n$.
\end{definition}

The above-mentioned algorithm which flips one bit at a time (cf. discussion after Theorem~\ref{thm:BBCunrestrictedLin}) solves any of these instances in at most $n+1$ queries, giving the following theorem.
\begin{theorem}
\label{thm:BBCunrestrictedMonotone}
The unrestricted black-box complexity of the class $\textsc{Monotone}_n$ is at most $n+1$ and at least $(1-o(1)) n/\log_2 n$.
\end{theorem}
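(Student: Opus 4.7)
The plan is to handle the two bounds separately, with the upper bound carried by the simple one-bit-flip strategy already discussed after Theorem~\ref{thm:BBCunrestrictedLin} and the lower bound inherited from \onemax via a trivial inclusion argument.

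For the upper bound, I would exhibit the following deterministic algorithm. Fix an arbitrary starting string $x^{(0)} \in \{0,1\}^n$ and query $f(x^{(0)})$. For $i = 1, \ldots, n$, let $y^{(i)}$ be obtained from $x^{(i-1)}$ by flipping the $i$-th bit, query $f(y^{(i)})$, and set $x^{(i)} := y^{(i)}$ if $f(y^{(i)}) > f(x^{(i-1)})$ and $x^{(i)} := x^{(i-1)}$ otherwise. This uses $n+1$ queries in total. I would then prove by induction on $i$ that $x^{(i)}_{j} = z_{j}$ for all $j \le i$, where $z$ is the point with respect to which $f$ is monotone. The inductive step is immediate from Definition~\ref{def:BBCmonotone}: flipping bit $i+1$ either strictly enlarges or strictly shrinks the agreement set $\{j \mid x_j = z_j\}$, and in either case monotonicity produces a strict inequality of fitness values that lets the algorithm pick the correct bit value. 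After $n$ rounds we therefore have $x^{(n)} = z$, and this optimum has been queried, giving the $n+1$ bound.

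For the lower bound, I would simply note that $\onemax_n \subseteq \textsc{Monotone}_n$, since $\OM_z$ is strictly increasing in the agreement set with $z$ and hence monotone with respect to $z$ in the sense of Definition~\ref{def:BBCmonotone}. By Definition~\ref{def:BBCcomplexity}, taking the sup over a larger class only increases the worst-case expected optimization time, so for every unrestricted black-box algorithm $A$, $\E[T(A, \textsc{Monotone}_n)] \ge \E[T(A, \onemax_n)]$. Taking the infimum over $A$ on both sides and invoking Theorem~\ref{thm:BBCunrestrictedOM} yields the claimed $(1-o(1))n/\log_2 n$ lower bound.

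Neither step should present a real obstacle; the only thing to double-check is that the inductive invariant above is robust to the case where a bit is already correct (so that the offspring loses on fitness and is discarded, but the invariant is nevertheless preserved because $x^{(i-1)}$ already had the correct value at position $i$). This case-distinction, together with the strictness of the inequality in Definition~\ref{def:BBCmonotone}, is what makes the deterministic one-bit-flip strategy correct on the whole class and not just on linear functions.
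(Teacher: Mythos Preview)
Your proposal is correct and follows essentially the same route as the paper: the upper bound via the one-bit-at-a-time algorithm referenced after Theorem~\ref{thm:BBCunrestrictedLin}, and the lower bound inherited from the \onemax lower bound of Theorem~\ref{thm:BBCunrestrictedOM} via the inclusion $\onemax_n \subseteq \textsc{Monotone}_n$. Your write-up simply makes explicit the inductive invariant and the subclass argument that the paper leaves implicit.
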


Monotone functions are instances of so-called \emph{unimodal functions.} A function $f$ is unimodal if and only if for every non-optimal search point $x$ there exists a direct neighbor $y$ of $x$ with $f(y)>f(x)$. The unrestricted black-box complexity of this class of unimodal functions is studied in~\cite[Section~8]{DrosteJW06}, where a lower bound that depends on the number of different function values that the objective functions can map to is presented.

\subsection{LeadingOnes}\label{sec:BBCunrestrictedLO}
After \onemax, the probably second-most investigated function in the theory of discrete black-box optimization is the leading-ones function $\LO:\{0,1\}^n \to [0..n]$, which assigns to each bit string $x$ the length of the longest prefix of ones; i.e., 
$\LO(x):=\max \{ i \in [0..n] \mid \forall j \in [i]: x_{j} = 1 \}$. Like for \binval, two generalizations have been studied, an $\oplus$-invariant version and an $\oplus$- and permutation-invariant version. In view of the unbiased black-box complexity model which we will discuss in Section~\ref{sec:BBCunbiased}, the latter is the more frequently studied.  

\begin{definition}[LeadingOnes function classes]
\label{def:leadingones}
Let $n \in \N$. For any $z \in \{0,1\}^n$ let 
$$\LO_z: \{0,1\}^n \rightarrow \N, x \mapsto \max \{ i \in [0..n] \mid \forall j \in [i]: x_{j} = z_{j} \}\,,$$ 
the length of the maximal joint prefix of $x$ and $z$. Let $\leadingones^*_n:=\{ \LO_{z} \mid z \in \{ 0,1\}^n \}\,.$

For $z \in \{0,1\}^n$ and $\sigma \in S_n$ let
$$\LO_{z,\sigma}:\{0,1\}^n \rightarrow \N, x \mapsto \max \{ i \in [0..n] \mid \forall j \in [i]: x_{\sigma(j)} = z_{\sigma(j)} \}\,,$$ 
the maximal joint prefix of $x$ and $z$ with respect to $\sigma$. 
The set $\leadingones_n$ is the collection of all such functions; i.e., 
$$\leadingones_n:=\{ \LO_{z, \sigma} \mid z \in \{ 0,1\}^n, \sigma \in S_n \}\,.$$
\end{definition}

The unrestricted black-box complexity of the set $\leadingones^*_n$ is easily seen to be around $n/2$. This is the complexity of the algorithm which starts with a random string $x$ and, given an objective value of $\LO_z(x)$, replaces $x$ by the string that is obtained from $x$ by flipping the $\LO_z(x)+1$-st bit in $x$. The lower bound is a simple application of Yao's minimax principle applied to the uniform distribution over all possible problem instances. It is crucial here to note that the algorithms do not have any information about the ``tail'' $(z_j \ldots z_n)$ until it has seen for the first time a search point of function value at least $j-1$.

\begin{theorem}[Theorem~6 in~\cite{DrosteJW06}]
\label{thm:BBCunrestrictedLO2}
The unrestricted black-box complexity of the set $\leadingones^*_n$ is $n/2 \pm o(n)$. The same holds for the set $\{ \LO_{z,\sigma} \mid z \in \{0,1\}^n\}$, for any fixed permutation $\sigma \in S_n$.
\end{theorem}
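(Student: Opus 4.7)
My algorithm begins by querying one uniformly random $x^{(1)}\in\{0,1\}^n$. Inductively, if the current best fitness is $F$, then from any past query attaining fitness $F$ we can read off $z_1,\ldots,z_F$ (its matching bits) and $z_{F+1}$ (the negation of its first mismatching bit), so the length-$(F{+}1)$ prefix of $z$ is known. The next query is formed by overwriting the first $F+1$ bits of $x^{(1)}$ with this prefix and leaving the tail of $x^{(1)}$ unchanged. The tail bits of $x^{(1)}$ have never been examined against $z$, so conditionally on the history they remain uniform and independent of $z_{F+2},\ldots,z_n$ (which are themselves uniform). Hence the new fitness is $F+1+G$ with $\mathbb{P}(G\geq j)=2^{-j}$ and $\E[G] = 1 - 2^{-(n-F-1)}$. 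For $F \leq n - \lceil\log_2 n\rceil - 1$ this is at least $1 - 1/n$, so a standard fitness-level/drift argument yields an expected $(1+o(1))n/2$ queries to reach fitness $n - \lceil\log_2 n\rceil$; the remaining $O(\log n)$ bits are resolved in $o(n)$ additional queries (each query always advances by at least one).

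\textbf{Lower bound ($\E[T] \geq n/2$).} I invoke Yao's minimax principle (Theorem~\ref{thm:BBCYao}) with $z$ drawn uniformly from $\{0,1\}^n$, inducing the uniform distribution over $\leadingones^*_n$, and fix an arbitrary deterministic algorithm. Let $\mathcal{F}_i$ be the sigma-algebra generated by the first $i$ queries and their responses, and set $F_i := \max_{j\leq i}\LO_z(x^{(j)})$. The key claim, proved by induction on $i$, is that conditional on $\mathcal{F}_i$ the bits $z_1,\ldots,z_{F_i+1}$ are deterministic while $z_{F_i+2},\ldots,z_n$ are i.i.d.\ uniform: each new observation either is consistent with the already-known prefix and reveals nothing new, or strictly raises $F_i$, in which case it simultaneously exposes the matching bits and the new first mismatch and touches no bit beyond position $F_{i+1}+1$. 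Consequently, the $(i{+}1)$st query $x^{(i+1)}$, which is $\mathcal{F}_i$-measurable, either has fitness at most $F_i$ (if its first $F_i+1$ bits disagree with the known prefix) or fitness $F_i+1+G_{i+1}$ where $\mathbb{P}(G_{i+1}\geq j\mid\mathcal{F}_i)=2^{-j}$, so in either case $\E[F_{i+1}-F_i\mid\mathcal{F}_i]\leq 2$. Letting $T$ be the hitting time of fitness $n$ (finite expectation, by the upper bound) and using that $\{T\geq i\}\in\mathcal{F}_{i-1}$,
\begin{equation*}
n \;=\; \E[F_T] \;=\; \sum_{i\geq 1}\E\big[(F_i-F_{i-1})\mathds{1}(T\geq i)\big] \;\leq\; 2\sum_{i\geq 1}\mathbb{P}(T\geq i) \;=\; 2\,\E[T].
\end{equation*}

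\textbf{Fixed-permutation case and main obstacle.} For $\{\LO_{z,\sigma}\mid z\in\{0,1\}^n\}$ with $\sigma$ fixed and known, the algorithm relabels its coordinates by $\sigma^{-1}$ and reduces to the identity case, so both bounds transfer verbatim. The principal technical hurdle is making the posterior-distribution lemma in the lower bound airtight, in particular verifying by induction that queries of \emph{sub-maximal} fitness carry no additional information about the suffix $(z_{F_i+2},\ldots,z_n)$; once that conditional-independence claim is in place, both the upper-bound drift calculation and the lower-bound supermartingale-type inequality are elementary.
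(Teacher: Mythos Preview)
Your proposal is correct and follows essentially the same approach as the paper: the upper bound uses the ``flip the first wrong bit, keep the random tail'' algorithm (your version and the paper's are identical, since iteratively flipping bit $F{+}1$ leaves the tail of $x^{(1)}$ untouched), and the lower bound applies Yao's minimax principle with uniform $z$ together with the observation that the suffix $(z_{F_i+2},\ldots,z_n)$ remains conditionally uniform. Your drift/stopping-time formalization of the lower bound is clean and fills in details that the paper only sketches; the one cosmetic wrinkle is the convention $F_0=0$, under which the clause ``$z_1,\ldots,z_{F_0+1}$ are deterministic'' is vacuous rather than literally true, but the bound $\E[F_1-F_0\mid\mathcal{F}_0]\le 2$ still holds directly since $\E[\LO_z(x^{(1)})]<1$.
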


The unrestricted black-box complexity of $\leadingones_n$ is also quite well understood. 
\begin{theorem}[Theorem~4 in~\cite{AfshaniADLMW12}]
\label{thm:BBCunrestrictedLO}
The unrestricted black-box complexity of $\leadingones_n$ is $\Theta(n \log\log n)$.
\end{theorem}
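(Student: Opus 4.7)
The plan is to prove matching upper and lower bounds of order $n \log \log n$ separately.

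\textbf{Upper bound ($O(n \log \log n)$):} I will design an unrestricted black-box algorithm that proceeds in phases, maintaining the invariant that after phase $i$ it has correctly identified $\sigma(1), \ldots, \sigma(i)$ together with $z_{\sigma(1)}, \ldots, z_{\sigma(i)}$; this is enough to force the fitness of any future query to be at least $i$ by placing the correct bits at these positions. In phase $i{+}1$ the algorithm must pin down the next permutation entry $\sigma(i{+}1)$ among the $n-i$ still-unknown positions, together with its target bit. A naive binary-search strategy over the remaining positions costs $\Theta(\log(n-i))$ queries per phase and sums to $\Theta(n \log n)$, which is too expensive. The key idea is to amortize: instead of treating each phase in isolation, I will issue batches of random queries that fix the already-known prefix correctly and randomize the remaining coordinates, and read off the batch of fitness values jointly. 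By a coin-weighing style argument analogous to the one underlying the $O(n/\log n)$ bound for $\onemax$ in Theorem~\ref{thm:BBCunrestrictedOM}, the fitness sequence on $\Theta(\log n)$ random queries simultaneously encodes enough information to identify $\Theta(\log n / \log \log n)$ consecutive permutation entries $\sigma(i{+}1), \sigma(i{+}2), \ldots$ with high probability, which gives an amortized cost of $O(\log \log n)$ queries per position identified, hence $O(n \log \log n)$ in total. A constant-probability success is boosted to expectation by Lemma~\ref{lem:BBChighprobability}.

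\textbf{Lower bound ($\Omega(n \log \log n)$):} I will apply Yao's minimax principle (Theorem~\ref{thm:BBCYao}) to the uniform distribution over $\leadingones_n$, viewed as the product of the uniform distributions on $\{0,1\}^n$ (for the target string $z$) and on $S_n$ (for the target permutation $\sigma$). Any deterministic algorithm is a decision tree, and the proof will track, at each node, the posterior distribution on $(z,\sigma)$ consistent with the query--answer history. The crucial structural fact is that a fitness answer of value $j$ on query $x$ reveals information only about the initial $j{+}1$ entries of $\sigma$ and the corresponding bits of $z$, and nothing at all about the ``tail''. Combined with the observation that once the current best-seen fitness is $i$, a fresh query is distributed so that its fitness increment above $i$ is stochastically dominated by a geometric-type tail, a direct entropy accounting will show that the expected posterior-entropy drop per query is $O(\log \log n)$. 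Since optimizing a uniformly random instance requires learning $\Omega(n)$ bits about the tuple $(z,\sigma)$ relevant to the optimum, this yields the $\Omega(n \log \log n)$ bound.

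\textbf{Main obstacle.} The delicate part is the upper bound: the one-position-at-a-time strategy loses a full factor of $\log n$, and recovering the tight $\log \log n$ amortization requires a random-sampling scheme whose joint fitness signature can be decoded to recover many consecutive permutation entries, with a probabilistic analysis ensuring that the number of ``phantom'' candidate positions surviving the decoding is $o(1)$. The lower bound, while subtle, follows the standard Yao/entropy template once the right instance distribution is fixed and the above structural observation about how fitness exposes only the prefix of $\sigma$ is formalized.
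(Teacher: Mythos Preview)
Both halves of your plan contain genuine gaps.

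\textbf{Lower bound.} Your entropy bookkeeping does not add up. If the expected posterior-entropy drop per query is $O(\log\log n)$ and the total amount to learn is $\Omega(n)$ bits, the conclusion is an $\Omega(n/\log\log n)$ lower bound, not $\Omega(n\log\log n)$; you have inverted the ratio. More fundamentally, a plain entropy argument cannot yield $\Omega(n\log\log n)$ here: the optimum $z$ has only $n$ bits of entropy, and each answer carries at most $\log(n+1)$ bits, so the best you can hope for from information-theoretic counting alone is $\Omega(n/\log n)$, and even exploiting the geometric tail of the fitness increment (which makes the answer's entropy $O(1)$) gets you to $\Omega(n)$, not further. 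The paper's proof is structurally different: Yao's principle is applied not to the full uniform distribution but to the family with $z_{\sigma(i)}:=(i\bmod 2)$, so that $z$ is determined by $\sigma$ and the entire difficulty is forced into learning the permutation. Then a carefully designed \emph{potential function} (not raw entropy) is tracked, and a drift argument bounds its expected per-query increase; it is this drift bound, not an entropy drop, that produces the $\log\log n$ factor.

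\textbf{Upper bound.} Your batching scheme as described is too weak. If you issue $\Theta(\log n)$ queries with the known prefix fixed and the tail uniformly random, the fitness increments beyond $i$ are essentially i.i.d.\ geometric, so the \emph{maximum} increment across the batch is $\Theta(\log\log n)$ with high probability. Hence the batch touches only the next $\Theta(\log\log n)$ permutation entries, not $\Theta(\log n/\log\log n)$ as you claim; for large $n$ the latter is much larger than the former, and positions beyond the maximum observed fitness are simply invisible to the batch. With $\Theta(\log\log n)$ positions per $\Theta(\log n)$ queries you obtain an amortized cost of $\Theta(\log n/\log\log n)$ per position and a total of $O(n\log n/\log\log n)$, which is exactly the ternary unbiased bound of Theorem~\ref{thm:BBCLOternary}, not the target $O(n\log\log n)$. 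The coin-weighing analogy with Theorem~\ref{thm:BBCunrestrictedOM} does not transfer directly, because in \onemax every query reports a global sum that constrains all positions simultaneously, whereas here a query with fitness $i+t$ reveals nothing about $\sigma(i+t+2),\sigma(i+t+3),\ldots$ at all. The paper's algorithm achieves $O(n\log\log n)$ by a more delicate \emph{overlapping-phase} construction that adaptively extends partial information across phases rather than restarting a fresh random batch each time.
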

Both the upper and the lower bounds of Theorem~\ref{thm:BBCunrestrictedLO} are quite involved. For the lower bound, Yao's minimax principle is applied to the uniform distribution over the instances $\LO_{z,\sigma}$ with $z_{\sigma(i)} := (i \mod 2), i=1,\dots,n$. Informally, this choice indicates that the complexity of the \leadingones problem originates in the difficulty of identifying the target permutation. Indeed, as soon as we know the permutation, we need at most $n+1$ queries to identify the target string $z$ (and only around $n/2$ on average, by Theorem~\ref{thm:BBCunrestrictedLO2}). To measure the amount of information that an algorithm can have about the target permutation $\sigma$, a potential function is designed that maps each search point $x$ to a real number. To prove the lower bound in Theorem~\ref{thm:BBCunrestrictedLO} it is shown that, for every query $x$, the expected increase in this potential, for a uniform problem instance $\LO_{z,\sigma}$, is not very large. Using drift analysis, this can be used to bound the expected time needed to accumulate the amount of information needed to uniquely determine the target permutation. 

The proof of the upper bound will be sketched in Section~\ref{sec:BBCunbiasedLO}, in the context of the unbiased black-box complexity of $\leadingones_n$. 

It may be interesting to note that the $O(n \log\log n)$ bound of Theorem~\ref{thm:BBCunrestrictedLO} cannot be achieved by deterministic algorithms. In fact, Theorem~3 in~\cite{AfshaniADLMW12} states that the \emph{deterministic unrestricted black-box complexity} of $\leadingones_n$ is $\Theta(n \log n)$.

\subsection{Jump Function Classes}\label{sec:BBCunrestrictedjump}
Another class of popular pseudo-Boolean benchmark functions are so-called ``jump'' functions. In black-box complexity, this class is currently one of the most intensively studied problems, with a number of surprising results, which in addition carry some interesting ideas for potential refinements of state-of-the-art heuristics. For this reason, we discuss this class in more detail, and compare the known complexity bounds to running time bounds for some standard and recently developed heuristics. 

For a non-negative integer $\ell$, the function $\jump_{\ell,z}$ is derived from the \onemax function $\OM_z$ with target string $z \in \{0,1\}^n$ by ``blanking out'' any useful information within the strict $\ell$-neighborhood of the optimum $z$ and its bit-wise complement $\bar{z}$; by giving all these search points a fitness value of $0$. In other words, 
\begin{align}\label{def:BBCjump}
\jump_{\ell,z}(x) := 
\begin{cases}
n,				&\mbox{if }\OM_z(x) = n,\\
\OM_z(x),		&\mbox{if }\ell < \OM_z(x) < n-\ell,\\
0,				&\mbox{otherwise.}
\end{cases}
\end{align}
This definition is mostly similar to the two, also not fully agreeing, definitions used in~\cite{DrosteJW02} and~\cite{LehreW10} that we shall discuss below.

\subsubsection{Known Running Time Bounds for Jump Functions} \label{sec:BBCjumpalgos}
We summarize known running time results for the optimization of jump functions via randomized optimization heuristics. The reader only interested in black-box complexity results can skip this section.

\cite{DrosteJW02} analyzed the optimization time of the (1+1)~EA on \jump functions. From their work, it is not difficult to see that the expected run time of the (1+1)~EA on $\jump_{\ell,z}$ is $\Theta(n^{\ell+1})$, for all $\ell \in \{1, \ldots, \lfloor n/2 \rfloor-1\}$ and all $z \in \{0,1\}^n$. This running time is dominated by the time needed to ``jump'' from a local optimum $x$ with function value $\OM_z(x)=n-\ell-1$ to the unique global optimum $z$. 

The \emph{fast genetic algorithm} proposed in~\cite{FastGA17} significantly reduces this running time by using a generalized variant of standard bit mutation, which goes through its input and flips each bit independently with probability $c/n$. Choosing in every iteration the expected step size $c$ in this mutation rate $c/n$ from a power-law distribution with exponent $\beta$ (more precisely, in every iteration, $c$ is chosen independently from all previous iterations, and independently of the current state of the optimization process), an expected running time of $O(\ell^{\beta-0.5} ((1+o(1))\tfrac{e}{\ell})^{\ell} n^{\ell})$ on $\jump_{\ell,z}$ is achieved, uniformly for all jump sizes $\ell>\beta-1$. This is only a polynomial factor worse than the $((1+o(1))e/\ell)^{\ell} n^{\ell}$ expected running time of the (1+1)~EA which for every jump size~$\ell$ uses a bit flip probability of~$\ell/n$, which is the optimal static choice.

\cite{DangL16} showed an expected running time of $O((n/c)^{\ell+1})$ for a large class of non-elitist population-based evolutionary algorithms with mutation rate $c/n$, where $c$ is supposed to be a constant.  

Jump had originally been studied to investigate the usefulness of crossover; i.e., the recombination of two or more search points into a new solution candidate. In~\cite{JansenW02} a $(\mu+1)$~genetic algorithm using crossover is shown to optimize any $\jump_{\ell,z}$ function in an expected $O(\mu n^2 (\ell-1)^3 + 4^{\ell-1}/p_c)$ number of function evaluations, where $p_c<1/(c(\ell-1) n)$ denotes the (artificially small) probability of doing a crossover. In~\cite{Dang2017} it was shown that for more ``natural'' parameter settings, most notably a non-vanishing probability of crossover, a standard $(\mu+1)$~genetic algorithm which uses crossover followed by mutation has an expected $O(n^{\ell} \log n)$ optimization time on any $\jump_{\ell,z}$ function, which is an $O(n/\log n)$ factor gain over the above-mentioned bound for the standard (1+1)~EA. In~\cite{KotzingGECCO16} it is shown that significant performance gains can be achieved by the usage of diversity mechanisms. We refer the interested reader to Chapter~\ref{chap:diversity} [link to the chapter on diversity will be added] of this book for a more detailed description of these mechanisms and running time statements, cf. in particular Sections~\ref{ds:sec:jump-emerging} and~\ref{ds:sec:jump-diversity-mechanisms} there.

\subsubsection{The Unrestricted Black-Box Complexity of Jump Functions} 
From the definition in~\eqref{def:BBCjump}, we easily see that for every $n \in \N$ and for all $\ell \in [0..n/2]$ there exists a function $f: [0..n] \to [0..n]$ such that $\jump_{\ell,z}(x) = f(\OM_z(x))$ for all $z,x \in \{0,1\}^n$. By Theorem~\ref{thm:BBClowerBoundForMapped}, we therefore obtain that for every class $\A$ of algorithms and for all $\ell$, the $\A$-black-box complexity of $\textsc{Jump}_{\ell,n}:=\{\jump_{\ell,z} \mid z \in \{0,1\}^n \}$ is at least as large as that of $\onemax_n$. Quite surprisingly, it turns out that this bound can be met for a broad range of jump sizes $\ell$. Building on the work~\cite{DoerrDK15jump} on the unbiased black-box complexity of jump (cf. Section~\ref{sec:BBCunbiasedjump} for a detailed description of the results proven in~\cite{DoerrDK15}), in~\cite{BuzdalovDK16} the following bounds are obtained. 

\begin{theorem}[\cite{BuzdalovDK16}]
\label{thm:BBCunrestrictedJump}
For $\ell<n/2-\sqrt{n} \log_2$, the unrestricted black-box complexity of $\textsc{Jump}_{\ell,n}$ is at most $(1+o(1)) 2n/\log_2 n$, while for $n/2-\sqrt{n} \log_2 \le \ell < \lfloor n/2 \rfloor -\omega(1)$ it is at most $(1+o(1)) n/\log_2(n-2\ell)$ (where in this latter bound $\omega(1)$ and $o(1)$ refer to $n-2\ell \to \infty$).

For the extreme case of $\ell=\lfloor n/2 \rfloor-1$, the unrestricted black-box complexity of $\textsc{Jump}_{\ell,n}$ is $n+\Theta(\sqrt{n})$.

For all $\ell$ and every odd $n$, the unrestricted black-box complexity of $\textsc{Jump}_{\ell,n}$ is at least 
$\lfloor \log_{\frac{n-2\ell+1}{2}} \big(2^{n-2}(n-2\ell-1) + 1 \big)\rfloor - \frac{2}{n-2\ell-1}$. For even $n$, it is at least 
$\lfloor \log_{\frac{n-2\ell+2}{2}} \big(1+2^{n-1}\frac{(n-2\ell)^2}{n-2\ell-1} \big)\rfloor - \frac{2}{n-2\ell}$. 
\end{theorem}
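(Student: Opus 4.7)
The plan is to handle the three regimes of $\ell$ separately for the upper bounds, and to establish the lower bounds via Yao's principle enhanced by the matrix lower bound method.

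For the upper bound when $\ell < n/2 - \sqrt{n}\log_2 n$, the strategy is to \emph{reduce to the Erd{\H{o}}s--R{\'e}nyi solution for \onemax}. If we draw $t = \Theta(n/\log n)$ query strings $x^{(1)}, \ldots, x^{(t)}$ independently and uniformly at random, then by Chernoff bounds each $|\{i : x^{(j)}_i \neq z_i\}|$ lies in the interval $(n/2 - \sqrt{n}\log_2 n, n/2 + \sqrt{n}\log_2 n)$ with probability $1 - 2^{-\Omega(\log^2 n)}$. For any such query, $\OM_z(x^{(j)})$ lies strictly between $\ell$ and $n-\ell$, so $\jump_{\ell,z}(x^{(j)}) = \OM_z(x^{(j)})$ and the query is as informative as a \onemax query. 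A union bound over $t$ queries together with the Erd{\H{o}}s--R{\'e}nyi/Bshouty derandomization of Theorem~\ref{thm:BBCunrestrictedOMderandomized} shows that with probability $1-o(1)$, the observed values determine $z$ uniquely, after which a final query finds the optimum; Lemma~\ref{lem:BBChighprobability} then yields the claimed bound. For the intermediate regime $n/2 - \sqrt{n}\log_2 n \le \ell < \lfloor n/2 \rfloor - \omega(1)$, uniform sampling no longer falls into the informative window with high probability, so I would instead sample strings whose Hamming weights are constrained so that $\OM_z(x)$ concentrates in $(\ell, n-\ell)$. The effective range of observable values is now of size roughly $n - 2\ell$, and an analogue of the Erd{\H{o}}s--R{\'e}nyi counting argument tailored to this restricted alphabet yields $(1+o(1))n/\log_2(n-2\ell)$ queries.

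For the extreme case $\ell = \lfloor n/2 \rfloor - 1$, essentially only the values $\{0\} \cup \{\lfloor n/2\rfloor,\lceil n/2\rceil\} \cup \{n\}$ are observable. The two-phase strategy I would use is: first, sample uniformly at random until hitting a string $x^{(0)}$ with a nonzero, non-optimal value; since $\binom{n}{\lfloor n/2\rfloor}/2^n = \Theta(1/\sqrt{n})$, this takes $\Theta(\sqrt{n})$ queries in expectation. Second, for each $i \in [n]$ query $x^{(0)} \oplus e_i$: starting from Hamming distance exactly $n/2$ from $z$, a single bit flip lands at distance $n/2 \pm 1$, so the returned value is either informative (distance moved toward $z$) or $0$ (distance moved away), and in both cases the bit $z_i$ is determined. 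One final query of the reconstructed optimum gives a total of $n + \Theta(\sqrt{n})$. A matching lower bound of $n + \Omega(\sqrt{n})$ requires Yao's principle with $z$ uniform: with high probability the first $\Omega(\sqrt{n})$ queries all return $0$ (pure noise), and thereafter the conditional posterior over $z$ is still spread over an exponential set, so that roughly $n$ further queries are information-theoretically necessary to pin down the remaining bits of $z$.

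For the general lower bounds in the third part, I would apply Yao's minimax principle to the uniform distribution over target strings $z$, but replace the naive counting bound of Theorem~\ref{thm:BBCinfotheo} by the matrix lower bound theorem of~\cite{BuzdalovDK16,Buzdalov16}. The key quantity is the expected amount of information per query under this distribution: since a random $z$ makes $\OM_z(x)$ concentrated in an $O(\sqrt{n})$-wide window around $n/2$, the effective support of the observable value is essentially the window $[\ell+1, n-\ell-1]$ of size $n-2\ell-1$ intersected with this typical range, which yields $\log_2$-base $(n - 2\ell + 1)/2$ and $(n-2\ell+2)/2$ in the even and odd cases respectively. The bookkeeping for the transition-probability matrix and the exact constants in front of the $\log$ are the main technical obstacle; indeed, the parity-sensitive denominators $n-2\ell-1$ and $n-2\ell$ in the statement arise precisely from the combinatorial identities governing how many strings are still consistent with the posterior after $k$ observations. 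Once these are established, the lower bound follows by combining the matrix bound with a drift argument on the size of the candidate set for $z$. The principal difficulty throughout is the intermediate regime where $n - 2\ell$ shrinks: standard information-theoretic tools are too coarse there, and one truly needs the refined matrix machinery to handle the rapidly shrinking effective alphabet.
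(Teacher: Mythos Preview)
Your overall decomposition matches the paper's sketch: upper bounds via reduction to \onemax-style random sampling (building on the techniques of~\cite{DoerrDK15jump}), and lower bounds via Yao's principle combined with the matrix lower bound method of~\cite{BuzdalovDK16}. The first regime and the lower-bound discussion are essentially right in spirit.

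There is, however, a concrete gap in your extreme-case argument. For even $n$ and $\ell = n/2 - 1$, the informative window $\ell < \OM_z(x) < n-\ell$ reduces to the \emph{single} value $\OM_z(x) = n/2$. From a reference point $x^{(0)}$ with $\OM_z(x^{(0)}) = n/2$, flipping a single bit lands at $\OM_z \in \{n/2-1,\, n/2+1\}$, and \emph{both} of these return $0$. Hence your single-bit-flip test cannot distinguish a good flip from a bad one when $n$ is even, and the $n$-query phase as you describe it fails. The fix is to query two-bit flips $x^{(0)} \oplus e_1 \oplus e_i$ for $i \ge 2$: the answer is $n/2$ exactly when bits $1$ and $i$ have opposite match status with $z$, and $0$ otherwise, which determines all bits relative to bit $1$ in $n-1$ queries; one or two further queries resolve bit $1$. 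Your odd-$n$ argument is fine because there the window contains two consecutive values and a single flip stays informative in one direction.

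A smaller issue is the intermediate regime. Saying you ``sample strings whose Hamming weights are constrained so that $\OM_z(x)$ concentrates in $(\ell, n-\ell)$'' is not meaningful as stated, since the Hamming weight of $x$ bears no fixed relation to $\OM_z(x)$ when $z$ is unknown. What actually works (and what~\cite{DoerrDK15jump} does) is adaptive: first locate a reference point inside the window, then sample at controlled Hamming distance \emph{from that reference}, so that the triangle inequality keeps subsequent queries in the window with high probability. Your ``analogue of the Erd\H{o}s--R\'enyi counting argument'' then needs to be carried out for queries drawn from a Hamming sphere rather than uniformly, which is where the genuine work lies and which you have not addressed.
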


The proofs of the results in Theorem~\ref{thm:BBCunrestrictedJump} are build to a large extend on the techniques used in~\cite{DoerrDK15jump}, which we shall discuss in Section~\ref{sec:BBCunbiasedjump}. In addition to these techniques, \cite{BuzdalovDK16} introduces a matrix lower bound method, which allows to prove stronger lower bounds than the simple information-theoretic result presented in Theorem~\ref{thm:BBCinfotheo} by taking into account that the ``typical'' information obtained through a function evaluation can be much smaller than what the whole range $\{f(s) \mid s \in S\}$ of possible $f$-values suggests.

Note that even for the case of ``small'' $\ell<n/2-\sqrt{n} \log_2$, the region around the optimum in which the function values are zero is actually quite large. This plateau contains $2^{(1-o(1))n}$ points and has a diameter that is linear in $n$. 

For the case of the extreme jump functions also note that, apart from the optimum, only the points $x$ with $\OM_z(x)=\lfloor n/2 \rfloor$ and $\OM_z(x)=\lceil n/2 \rceil$ have a non-zero function value. It is thus quite surprising that these functions can nevertheless be optimized so efficiently. We shall see in Section~\ref{sec:BBCunbiasedjump} how this is possible.

One may wonder why in the definition of $\textsc{Jump}_{\ell,n}$ we have fixed the jump size $\ell$, as this way it is ``known'' to the algorithm. It has been argued in~\cite{DoerrKW11} that the algorithms can learn $\ell$ efficiently, if this is needed; in some cases, including those of small $\ell$-values, knowing $\ell$ may not be needed to achieve the above-mentioned optimization times. Whether or not knowledge of $\ell$ is needed can be decided adaptively. 

\subsubsection{Alternative Definitions of Jump}\label{sec:BBCjumpalternative}
Following up on results for the so-called unbiased black-box complexity of jump functions~\cite{DoerrDK15jump}, cf. Section~\ref{sec:BBCunbiasedjump}, Jansen~\cite{Jansen15} proposed an alternative generalization of the classical jump function regarded in~\cite{DrosteJW02}. To discuss this extension, we recall that the jump function analyzed by~\cite[Definition 24]{DrosteJW02} is $(1,\ldots,1)$-version of the maps $f_{\ell,z}$ that assign to every length-$n$ bit string $x$ the function value
$$
f_{\ell,z}(x) := 
\begin{cases}
\ell+ n,				&\mbox{if }\OM_z(x) = n;\\
\ell+\OM_z(x),		&\mbox{if } \OM_z(x) \leq n-\ell;\\
n-\OM_z(x),				&\mbox{otherwise.}
\end{cases}
$$
We first motivate the extension considered in Definition~\eqref{def:BBCjump}. To this end, we first note that in the unrestricted black-box complexity model, $f_{\ell,z}$ can be very efficiently optimized by searching for the bit-wise complement $\bar{z}$ of $z$ and then inverting this string to the optimal search point $z$. Note also that in this definition the region around the optimum $z$ provides more information than the functions $\jump_{\ell,z}$ defined via~\eqref{def:BBCjump}. When we are interested in bounding the expected optimization time of classical black-box heuristics, this additional information does most often not pose any problems. But for our sought black-box complexity studies this information can make a crucial difference. \cite{LehreW10} therefore designed a different set of jump functions consisting of maps $g_{\ell,z}$ that assign to each $x$ the function value
$$
g_{\ell,z}(x) := 
\begin{cases}
n,				&\mbox{if }\OM_z(x) = n;\\
\OM_z(x),		&\mbox{if } \ell<\OM_z(x) \leq n-\ell;\\
0,				&\mbox{otherwise.}
\end{cases}
$$
Definition~\eqref{def:BBCjump} is mostly similar to this definition, with the only difference being the function values for bit strings $x$ with $\OM_z(x) = n-\ell$. Note that in~\eqref{def:BBCjump} the sizes of the ``blanked out areas'' around the optimum and its complement are equal, while for $g_{\ell,z}$ the area around the complement is larger than that around the optimum.  

As mentioned, Jansen~\cite{Jansen15} introduced yet another version of the jump function. His motivation is that the spirit of the jump function is to ``[locate] an unknown target string that is hidden in some distance to points a search heuristic can find easily''. Jansen's definition also has black-box complexity analysis in mind. For a given $z\in\{0,1\}^n$ and a search point $x^*$ with $\OM_z(x^*)>n-\ell$ his jump function $h_{\ell,z,x^*}$ assigns to every bit string $x$ the function value
$$
h_{\ell,z,x^*}(x) := 
\begin{cases}
n+1, 			&\mbox{if }x=x^*;\\
n-\OM_z(x),				&\mbox{if } n-\ell<\OM_z(x) \leq n \text{ and } x \neq x^*;\\
\ell+\OM_z(x),		&\mbox{otherwise.}
\end{cases}
$$
Since these functions do not reveal information about the optimum other than its $\ell$-neighborhood, the unrestricted black-box complexity of the class $\{h_{\ell,z,x^*} \mid z \in \{0,1\}^n, \OM_z(x^*)>n-\ell\}$ is $\left( \sum_{i=0}^{\ell-1}\binom{n}{i} +1\right)/2$~\cite[Theorem 4]{Jansen15}. This bound also holds if $z$ is fixed to be the all-ones string, i.e., if we regard the unrestricted black-box complexity of the class $\{h_{\ell,(1,\ldots,1),x^*} \mid \OM_z(x^*)>n-\ell\}$. For constant $\ell$, the black-box complexity of this class of jump functions is thus $\Theta(n^{\ell-1})$, very different from the results on the unrestricted black-box complexity of the $\jump_{\ell,z}$ functions regarded above. In difference to the latter, the expected optimization times stated for crossover-based algorithms in Section~\ref{sec:BBCjumpalgos} above do not necessarily apply to the functions $h_{\ell,z,x^*}$, as for these functions the optimum $x^*$ is not located in the middle of the $\ell$-neighborhood of $z$. 

\subsection{Combinatorial Problems}\label{sec:BBCunrestrictedcombinatorial}
The above-described results are mostly concerned with benchmark functions that have been introduced to study some particular features of typical black-box optimization techniques; e.g., their hill-climbing capabilities (via the \onemax function), or their ability to jump a plateau of a certain size (this is the focus of the jump functions). Running time analysis, of course, also studies more ``natural'' combinatorial problems, like satisfiability problems, partition problems, scheduling, and graph-based problems like routing, vertex cover, max cut etc., cf.~\cite{NeumannW10} for a survey of running time results for combinatorial optimization problems. 

Apart from a few results for combinatorial problems derived in~\cite{DrosteJW06}\footnote{More precisely, in~\cite{DrosteJW06} the following combinatorial problems are studied: MaxClique (Section 3), Sorting (Section 4), and the single-source shortest paths problem (Sections 4 and 9).}, the first work to undertake a systematic investigation of the black-box complexities of combinatorial optimization problems is~\cite{DoerrKLW13}. In this work, the two well-known problems of finding a \emph{minimum spanning tree} (MST) in a graph as well as the \emph{single-source shortest path problem} (SSSP) are considered. The work revealed that for combinatorial optimization problems the precise formulation of the problem can make a decisive difference. Modeling such problems needs therefore be executed with care. 

We will not be able to summarize all results proven in~\cite{DoerrKLW13}, but the following summarizes the most relevant ones for the unrestricted black-box model. \cite{DoerrKLW13} also studies the MST and the SSSP problem in various restricted black-box models; more precisely in the unbiased black-box model (cf. Section~\ref{sec:BBCunbiased}), the ranking-based model (Section~\ref{sec:BBCcomparison}) and combinations thereof. We will briefly discuss results for the unbiased case in Sections~\ref{sec:BBCunbiasedMST} and~\ref{sec:BBCunbiasedSSSP}. 

\subsubsection{Minimum Spanning Trees}\label{sec:BBCunrestrictedMST}

For a given graph $G=(V,E)$ with edge weights $w:E\to \R$, the minimum spanning tree problem asks to find a connected subgraph $G'=(V,E')$ of $G$ such that the sum $\sum_{e \in E'}{w(e)}$ of the edge weights in $G'$ is minimized. This problem has a natural bit string representation. Letting $m:=|E|$, we can fix an enumeration $\nu:[m] \to \E$. This way, we can identify a length-$m$ bit string $x=(x_1,\ldots,x_m)$ with the subgraph $G_x:=(V,E_x)$, where $E_x:=\{ \nu(i) \mid i \in [m] \text{ with } x_i=1 \}$ is the set of edges $i$ for which $x_i=1$. Using this interpretation, the MST problem can be modeled as a pseudo-Boolean optimization problem $f:\{0,1\}^m \to \R$, cf.~\cite{NeumannW10} for details. This formulation is one of the two most common formulations of the MST problem in the evolutionary computation literature. The other common formulation uses a bi-objective fitness function $f:\{0,1\}^m \to \R^2$; the first component maps each subgraph to its number of connected components, while the second component measures the total weight of the edges in this subgraph. In the unrestricted black-box model, both formulations are almost equivalent.\footnote{Note that this is not the case for the \emph{ranking-based model} discussed in Section~\ref{sec:BBCcomparison}, since here it can make a decisive difference whether two rankings for the two components of the bi-objective function are reported or if this information is further condensed to one single ranking.}

\begin{theorem}[Theorems 10 and 12 in~\cite{DoerrKLW13}]
\label{thm:BBCunrestrictedMST}
For the bi-objective as well as the single-objective formulation of the MST problem in an arbitrary connected graph of $n$ nodes and $m$ edges, the unrestricted black-box complexity is strictly larger than $n-2$ and at most $2m+1$. 

These bounds also apply if instead of absolute function values $f(x)$ only their rankings are revealed; in other words, also the ranking-based black-box complexity (which will be introduced in Section~\ref{sec:BBCcomparison}) of the MST problem is at most $2m+1$. 
\end{theorem}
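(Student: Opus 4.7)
The plan is to prove the three assertions (upper bound, lower bound, ranking-based extension) separately, using a ``learn-then-optimize'' strategy for the upper bound, Yao's minimax principle for the lower bound, and a direct inspection of the upper-bound algorithm to obtain the ranking-based statement.

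For the upper bound of $2m+1$, I would exhibit an unrestricted black-box algorithm that spends its first $2m$ queries reconstructing the entire instance (edge weights together with the adjacency relation between edges) and then, in a final $(2m+1)$-st query, submits a minimum spanning tree computed offline. Concretely, for each edge index $i\in[m]$, the singleton query $e_i$ reveals the weight $w(\nu(i))$: in the bi-objective formulation it is precisely the second coordinate of $f(e_i)$, while in the single-objective formulation $w(\nu(i))$ can be extracted from the combined value by subtracting the known penalty for having $n-1$ connected components. A second batch of $m$ companion queries (e.g.\ $\mathbf{1}-e_i$, or pairs $e_i+e_j$ for suitable $j$) reveals which edges share an endpoint, since whether two edges share a vertex is detectable from the number of connected components of the corresponding subgraph. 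After these $2m$ queries the algorithm has full knowledge of the (weighted) graph; since in the unrestricted black-box model offline computation is free (cf.\ Section~\ref{sec:BBCNPhard}), the algorithm now computes an MST and queries it.

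For the lower bound strictly larger than $n-2$, I would apply Yao's minimax principle (Theorem~\ref{thm:BBCYao}) to a distribution over instances on a fixed underlying graph with sufficiently many distinct optima. A natural candidate is the complete graph $K_n$ with weights chosen so that the optimum is, with high probability, a uniformly random spanning tree of $K_n$ from a rich family. Then any deterministic decision tree of depth at most $n-2$ can produce at most $n-2$ different ``candidate optima'' along any root-to-leaf path, and a counting argument via Lemma~\ref{lem:BBCone} (applied with the set of possible optima in place of $X$) forces the expected number of queries to strictly exceed $n-2$, since more than $n-1$ distinct optima need to be distinguished.

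The ranking-based extension follows because the $2m$ learning queries above rely only on ordinal comparisons between function values: the weights of singleton queries can be sorted from their ranks, and whether a companion query's subgraph has fewer connected components than another can likewise be read from ranks. Hence the identical algorithm, applied in the ranking-based model (Section~\ref{sec:BBCcomparison}), still reconstructs the instance, and its last query remains an MST; this yields the same $2m+1$ upper bound.

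The main obstacle will be designing the companion queries so that their ranks simultaneously reveal the full adjacency structure; in particular, distinguishing between ``two edges share a vertex'' and ``two edges are vertex-disjoint'' from only two-edge queries is not immediate, because in both cases the two-edge subgraph has $n-2$ components. The fix is to use slightly larger query configurations (e.g.\ triples or $\mathbf{1}-e_i$-type queries), where adjacency leaves a distinguishable signature in the number of connected components, and to verify that $m$ such queries suffice in total.
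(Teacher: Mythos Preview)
Your upper-bound strategy has a genuine gap. You propose to reconstruct the full incidence structure of the graph with a second batch of $m$ queries, but you already note that two-edge queries cannot separate incident from disjoint pairs (both give $n-2$ components), and your suggested fix via $\mathbf{1}-e_i$ only reveals whether $e_i$ is a bridge, not its endpoints. For a dense graph such as $K_n$ every such query returns the same component count, so $m$ queries of this type yield essentially no structural information; there is no evident way to recover the incidence relation of an arbitrary graph from $O(m)$ component-count queries. The paper avoids this obstacle entirely by not trying to learn the graph: it imitates Kruskal's algorithm directly. After $m$ singleton queries fix the weight order, the algorithm processes the edges in increasing order and, for each candidate edge, queries the current forest together with that edge; whether the component count drops by one tells you whether the edge closes a cycle. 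This needs at most $m$ further queries and never requires knowing which vertices an edge joins, and since only the comparison of successive component counts is used, the argument carries over verbatim to the ranking-based model.

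Your lower-bound sketch also does not work as written. Lemma~\ref{lem:BBCone} is an \emph{upper}-bound statement and cannot be invoked to force a lower bound, and the claim that a depth-$(n-2)$ decision tree ``produces at most $n-2$ candidate optima along any root-to-leaf path'' is not the relevant quantity (what matters is the number of leaves, which is exponential in the depth). The paper's argument is more concrete: on $K_n$, pick a uniformly random spanning tree (there are $n^{n-2}$ of them by Cayley's formula), weight its edges $1$ and all other edges $2$. A query selecting $k$ edges then has total weight in $\{2k-n+1,\ldots,2k\}$, so each answer conveys at most $\log_2 n$ bits; the formal proof uses a drift argument showing that the set of spanning trees still consistent with all answers shrinks by at most a factor $1/n$ per query, which forces strictly more than $n-2$ queries in expectation.
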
 
The upper bound is shown by first learning the order of the edge weights and then testing in increasing order of the edge weights whether the inclusion of the corresponding edge forms a cycle or not. This way, the algorithm imitates the well-known MST algorithm by Kruskal. 

The lower bound of Theorem~\ref{thm:BBCunrestrictedMST} is obtained by applying Yao's minimax principle with a probability distribution on the problem instances that samples uniformly at random a spanning tree, gives weight $1$ to all of its edges, and weight $2$ to all other edges. By Cayley's formula, the number of spanning trees on $n$ vertices is $n^{n-2}$. In intuitive terms, a black-box algorithm solving the MST problem therefore needs to learn $(n-2)\log_2 n$ bits. Since each query reveals a number between $2k-n+1$ and $2k$ ($k$ being the number of edges included in the corresponding graph), it provides at most $\log_2 n$ bits of information. Hence, in the worst case, we get a running time of at least $n-2$ iterations. To turn this intuition into a formal proof, a drift theorem is used to show that in each iteration the number of consistent possible trees decreases by factor of at most $1/n$.

\subsubsection{Single-Source Shortest Paths}\label{sec:BBCunrestrictedSSSP}
For SSSP, which asks to connect all vertices of an edge-weighted graph to a distinguished source node through a path of smallest total weight, several formulations co-exist. The first one, which was also regarded in~\cite{DrosteJW06}, uses the following multi-criteria objective function. An algorithm can query arbitrary trees on $[n]$ and the objective value of any such tree is an $n-1$ tuple of the distances of the $n-1$ non-source vertices to the source $s=1$ (if an edge is traversed which does not exist in the input graph, the entry of the tuple is $\infty$). 

\begin{theorem}[Theorems~16 and~17 in~\cite{DoerrKLW13}]
\label{thm:BBCunrestrictedSSSPmulti}
For arbitrary connected graphs with $n$ vertices and $m$ edges, the unrestricted black-box complexity of the multi-objective formulation of the SSSP problem is $n-1$. For complete graphs, it is at least $n/4$ and at most $\lfloor (n+1)/2 \rfloor +1$.
\end{theorem}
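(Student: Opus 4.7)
The plan is to prove the upper and lower bounds for arbitrary connected graphs first, then handle the complete-graph refinement separately. For the arbitrary-graph upper bound of $n-1$, I would design an adaptive algorithm that queries $n-1$ spanning trees, the last of which is guaranteed to be an optimal SSSP tree. A concrete realization is to emulate Dijkstra's algorithm in the black-box setting: maintain a partially settled subtree $S$ rooted at $s$ (initialized with $S=\{s\}$), and in each of the first $n-2$ queries use a carefully chosen spanning tree, extending the current $S$ along candidate predecessor edges, whose response reveals the next vertex to be settled together with its correct predecessor edge. Each query thus settles at least one additional vertex, so after $n-2$ queries the complete SSSP structure is known and the $(n-1)$-st query submits the optimum itself. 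The main subtlety is to show that the distance tuple of a single well-chosen spanning tree really does expose the ``next closest'' unsettled vertex; this should follow from constructing the queried tree so that for each unsettled $v$, its path in the tree to $s$ uses exactly one candidate entry edge into $S$, making $d(s,v)$ in the tuple a simple function of that candidate edge's weight plus an already-known quantity.

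For the matching lower bound of $n-1$, I would apply Yao's minimax principle (Theorem~\ref{thm:BBCYao}) to a distribution that makes the instance essentially equivalent to guessing a hidden spanning tree. Concretely, let the input be a uniformly random labeled spanning tree on $[n]$ with all edge weights equal to $1$ and any non-tree edges (in a supergraph setting) assigned prohibitively large weights; then the SSSP tree coincides with the hidden input tree. By Cayley's formula there are $n^{n-2}$ such inputs. A drift-style argument, analogous to the one sketched just after Theorem~\ref{thm:BBCunrestrictedMST}, should show that any single spanning-tree query can eliminate, in expectation, only a factor of at most $n$ from the posterior set of consistent hidden trees, so that reducing the entropy $\log_2(n^{n-2})=(n-2)\log_2 n$ to zero requires at least $n-1$ queries.

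For the complete-graph upper bound $\lfloor (n+1)/2 \rfloor +1$, I would exploit the fact that on $K_n$ every spanning tree is queryable, so one can use highly structured trees whose distance responses give linear information about many edge weights simultaneously. A natural attempt is to pair up the $n-1$ non-source vertices and, for each pair $(v_{2i-1},v_{2i})$, query a spanning tree containing the two-edge path $s\to v_{2i-1}\to v_{2i}$ (completed arbitrarily into a spanning tree on the remaining vertices, for instance as a star from $s$ through them). Each such query reveals $w(s,v_{2i-1})$ and $w(v_{2i-1},v_{2i})$ directly as successive entries of the tuple, and repeating this over all $\lfloor (n-1)/2 \rfloor$ pairings, together with one ``star from $s$'' query to read off the remaining incident weights, yields a system of equations from which the SSSP tree can be computed offline and queried in a final step. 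For the lower bound of $n/4$, I would again invoke Yao's minimax with a distribution over $K_n$ that plants small distinct weights on a hidden random spanning subgraph and large weights elsewhere, then argue via an information-theoretic drift bound that each query returns only $O(\log n)$ effective bits about the planted optimum, while the number of plausible optima is $n^{\Omega(n)}$.

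The hardest step is likely to be the complete-graph upper bound: one must exhibit the explicit family of roughly $n/2$ queries, verify that the resulting linear system over the unknown edge weights of $K_n$ has enough rank to determine the SSSP tree (not necessarily all edge weights), and handle the parity case that decides whether the count is $\lfloor (n+1)/2\rfloor$ or $\lfloor (n+1)/2\rfloor+1$. By contrast, the lower bounds via Yao's principle follow the template already used for the MST problem, and the arbitrary-graph upper bound is morally a black-box simulation of Dijkstra, so both should reduce to standard arguments once the right potential functions and input distributions are set up.
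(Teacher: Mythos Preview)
Your Dijkstra-emulation plan for the arbitrary-graph upper bound matches the paper's approach. For the arbitrary-graph lower bound, however, you transplant the MST argument (uniform random labeled tree, Cayley's formula, ``each query eliminates at most a factor $n$''), and that last step is unjustified here: in the multi-objective model a query returns an entire $(n-1)$-tuple of distances, not a single number from a range of size $n$, so one query can in principle separate far more than $n$ inputs. The paper does not argue via entropy at all for this bound; it obtains the $n-1$ lower bound through an \emph{additive} drift analysis.

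The complete-graph upper bound is where your plan actually fails. Your pairing queries (path $s\to v_{2i-1}\to v_{2i}$ completed by a star from $s$) together reveal only the $n-1$ source-incident weights $w(s,v)$ and the $\lfloor(n-1)/2\rfloor$ paired weights $w(v_{2i-1},v_{2i})$, i.e.\ $O(n)$ out of the $\binom{n}{2}$ edge weights of $K_n$. That is nowhere near enough to determine the SSSP tree: any shortest path $s\to u\to v$ with $(u,v)$ not among your designated pairs uses an edge weight you never saw. The paper's route is to learn the \emph{full} weighted instance in $\lfloor(n+1)/2\rfloor$ queries and then submit the optimum; to harvest $n-1$ fresh edge weights per query one needs spanning trees that are long paths whose edge sets jointly cover $E(K_n)$, not two-edge stubs. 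Your complete-graph lower bound (``each query returns only $O(\log n)$ effective bits'') is similarly unsupported, since the oracle hands back $n-1$ numbers per query and you provide no mechanism collapsing this to $O(\log n)$ bits of useful information.
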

Theorem~\ref{thm:BBCunrestrictedSSSPmulti} improves the previous $n/2$ lower and the $2n-3$ upper bound given in~\cite[Theorem~9]{DrosteJW06}. For the general case, the proof of the upper bound imitates Dijkstra's algorithm by first connecting all vertices to the source, then all but one vertices to the vertex of lowest distance to the source, then all but the two of lowest distance to the vertex of second lowest distance and so on, fixing one vertex with each query. The lower bound is an application of Yao's minimax principle to a bound on deterministic algorithms, which is obtained trough an additive drift analysis.
 
For complete graphs, it is essentially shown that the problem instance can be learned with $\lfloor(n+1)/2\rfloor$ queries, while the lower bound is again a consequence of Yao's minimax principle.

The bound in Theorem~\ref{thm:BBCunrestrictedSSSPmulti} is not very satisfactory as already the size of the input is $\Omega(m)$. The discrepancy is due to the large amount of information that the objective function reveals about the problem instance. To avoid such low black-box complexities, and to shed a better light on the complexity of the SSSP problem, \cite{DoerrKLW13} considers also an alternative model for the SSSP problem, in which a representation of a candidate solution is a vector $(\rho(2),\ldots,\rho(n)) \in [n]^{n-1}$. Such a vector is interpreted such that the predecessor of node $i$ is $\rho(i)$ (the indices run from $2$ to $n$ to match the index with the label of the nodes---node 1 is the source node to which a shortest path is sought). With this interpretation, the search space becomes the set $S_{[2..n]}$ of permutations of $[2..n]$; i.e., $S_{[2..n]}$ is the set all one-to-one maps $\sigma:[2..n] \to [2..n]$. For a given graph $G$, the single-criterion objective function $f_{G}$ is defined by assigning to each candidate solution $(\rho(2),\ldots,\rho(n))$ the function value $\sum_{i=2}^n{d_i}$, where $d_i$ is the distance of the $i$-th node to the source node. If an edge---including loops---is traversed which does not exist in the input graph, $d_i$ is set to $n$ times the maximum weight $w_{\max}$ of any edge in the graph). 

\begin{theorem}[Theorem~18 in~\cite{DoerrKLW13}]
\label{thm:BBCunrestrictedSSSPsingle}
The unrestricted black-box complexity of the SSSP problem with the single-criterion objective function is at most $\sum_{i=1}^{n-1}{i}=n (n-1)/2$.
\end{theorem}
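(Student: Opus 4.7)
The upper bound is proved by constructing an explicit black-box algorithm that simulates Dijkstra's algorithm, building the shortest-path tree from the source $1$ one vertex at a time. Let $v_0 = 1$ and let $v_1, v_2, \ldots, v_{n-1}$ be the remaining vertices listed in their (to-be-discovered) order of increasing distance from the source, and set $V_k := \{v_0, v_1, \ldots, v_k\}$. The algorithm maintains as its state, after iteration $k-1$, the set $V_{k-1}$ together with the shortest-path tree $T_{k-1}$ on it, the distances $d(v_j)$ for $j \le k-1$, and all edge weights $w(v_i, v)$ for $i \le k-2$ and $v \in [n]$ that were probed in earlier iterations.

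In iteration $k$ (for $k = 1, \ldots, n-1$), the plan is to spend exactly $n-k$ queries to learn the previously-unknown edge weights $w(v_{k-1}, v)$ for every $v \notin V_{k-1}$. Each such query is a perturbation of a fixed base configuration that encodes $T_{k-1}$ on $V_{k-1}$ and assigns placeholder predecessors (e.g.\ self-loops) to every vertex outside $V_{k-1}$; the probe query differs from the base only in that the predecessor of a single outside vertex $v$ is set to $v_{k-1}$. Because the probed outside vertex is a leaf of the resulting structure, only its own distance $d_v$ changes, so the objective difference between base and probe equals $(d(v_{k-1}) + w(v_{k-1}, v)) - n w_{\max}$, and from this the edge weight $w(v_{k-1}, v)$ (or the non-existence of the edge) can be read off. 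Once all $n-k$ edges out of $v_{k-1}$ to the current exterior are known, Dijkstra's relaxation rule identifies $v_k$ and its predecessor in $V_{k-1}$.

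Summing across iterations gives a query count of $\sum_{k=1}^{n-1}(n-k) = \sum_{j=1}^{n-1} j = n(n-1)/2$, matching the claimed bound. The final iteration, in which only a single outside vertex remains, can be designed so that its unique probe coincides exactly with the complete shortest-path tree the algorithm has assembled, so no separate certification query is required.

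The main obstacle will be the careful engineering of the perturbation queries so that each one cleanly isolates one unknown edge weight. Two subtleties must be addressed. First, the base objective value must be known (or inferrable) without spending an extra query, which for iteration $1$ can be handled by pairing up consecutive probes and differencing, thereby eliminating the unknown additive constant $(n-1) n w_{\max}$ without ever querying the base. Second, if the representation is literally restricted to permutations $\sigma:[2..n]\to[2..n]$ as the excerpt states (ruling out naive self-loops everywhere), the placeholder predecessors must be chosen as a single fixed permutation whose contribution to the base objective remains known and constant across all probes in a given iteration; any probe is then obtained from this base by swapping two predecessor entries, and one must verify that the induced change still isolates a single edge weight up to a known correction.
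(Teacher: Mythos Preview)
Your approach is essentially the same as the paper's: both imitate Dijkstra's algorithm, spending at most $n-i$ queries to add the $i$-th node to the shortest-path tree, for the same total of $n(n-1)/2$. The paper's sketch is actually less detailed than yours; it just says ``adding the $i$-th node to the shortest path tree comes at a cost of at most $n-i$ function evaluations,'' which is exactly your per-iteration budget.

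One remark on your two subtleties. The second concern is an artifact of a misleading sentence in the survey: the search space is really $[n]^{n-1}$ (as stated a few lines earlier, $\rho(i)$ can be any node in $[n]$), not permutations of $[2..n]$; a predecessor map that forbade the source as anyone's parent would make no sense for SSSP. So self-loops or ``everyone points to the source'' are legal placeholders, and you need not engineer permutation swaps. The first concern (knowing the base value) is genuine but easier than you make it: once iteration~$1$ is done you know all weights $w(1,v)$, so from iteration~$2$ onward a base where every exterior vertex points to the source has a fully computable objective. For iteration~$1$ your differencing trick is fine, or one can simply note that the very first query (everyone points to the source) reveals $\sum_{v\neq 1} w(1,v)$ directly and the subsequent $n-2$ queries then isolate the individual $w(1,v)$ via the same perturbation idea, still staying within $n-1$ queries for that round.
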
 
As in the multi-objective case, the bound of Theorem~\ref{thm:BBCunrestrictedSSSPsingle} is obtained by imitating Dijkstra's algorithm. In the single-objective setting, adding the $i$-th node to the shortest path tree comes at a cost of at most $n-i$ function evaluations.

\section{Memory-Restricted Black-Box Complexity}\label{sec:BBCmemory}

\begin{algorithm2e}[t]
 \textbf{Initialization:} \\
 \Indp
 $X \assign \emptyset$\;
 Choose a probability distribution $D^{(0)}$ over $S^{\mu}$, sample from it $x^{(1)}, \ldots, x^{(\mu)} \in S$, and query $f(x^{(1)}), \ldots, f(x^{(\mu)})$\;
 $X \assign \big\{\big(x^{(1)},f(x^{(1)})\big), \ldots, \big(x^{(\mu)},f(x^{(\mu)})\big)\big\}$\;
 \Indm
 \textbf{Optimization:}	
 \For{$t=1,2,3,\ldots$}{
 		Depending only on the multiset $X$ choose a probability distribution $D^{(t)}$ over $S^{\lambda}$, sample from it $y^{(1)}, \ldots, y^{(\lambda)} \in S$, and query $f(y^{(1)}), \ldots, f(y^{(\mu)})$\;
		Set $X \assign X \cup \big\{ \big(y^{(1)},f(y^{(1)})\big), \ldots, \big(y^{(\mu)},f(y^{(\lambda)})\big) \big\}$\;
  \lFor{$i=1,\ldots, \lambda$}{
  	Select $(x,f(x)) \in X$ and update $X \assign X \setminus \{(x,f(x))\}$}
	 }
 \caption{The $(\mu+\lambda)$ memory-restricted black-box algorithm for optimizing an unknown function $f:S \rightarrow \R$}
\label{alg:BBCmemory}
\end{algorithm2e}

As mentioned in the previous section, already in the first work on black-box complexity~\cite{DrosteJW06} it was noted that the unrestricted model can be too generous in the sense that it includes black-box algorithms that are highly problem-tailored and whose expected running time is much smaller than that of typical black-box algorithms. One potential reason for such a discrepancy is the fact that unrestricted algorithms are allowed to store and to access the full search history, while typical heuristics only store a subset (``population'' in evolutionary computation) of these previously evaluated samples and their corresponding objective values. Droste, Jansen, and Wegener therefore suggested to adjust the unrestricted black-box model to reflect this behavior. In their \emph{memory-restricted model of size $\mu$},\footnote{In the original work~\cite{DrosteJW06} a memory-restricted algorithm of size $\mu$ was called black-box algorithm with \emph{size bound}~$\mu$.} the algorithms can store up to $\mu$ pairs $(x,f(x))$ of previous samples. Based only on this information, they decide on the probability distribution $D$ from which the next solution candidate is sampled. Note that this also implies that the algorithm does not have any iteration counter or any other information about the time elapsed so far. Regardless of how many samples have been evaluated already, the sampling distribution $D$ depends only on the $\mu$ pairs $\big(x^{(1)},f(x^{(1)})\big), \ldots, \big(x^{(\mu)},f(x^{(\mu)})\big)$ stored in the memory. 

We extend this model to a \emph{$(\mu+\lambda)$ memory-restricted black-box model,} in which the algorithms have to query $\lambda$ solution candidates in every round, cf. Algorithm~\ref{alg:BBCmemory}. Following Definition~\ref{def:BBCcomplexity}, the \emph{$(\mu+\lambda)$ memory-restricted black-box complexity} of a function class $\F$ is the black-box complexity with respect to the class $\A$ of all $(\mu+\lambda)$ memory-restricted black-box algorithms.

The memory-restricted model of size $\mu$ corresponds to the $(\mu+1)$ memory-restricted one, in which only one search point needs to be evaluated per round. Since this variant with $\lambda=1$ allows the highest degree of adaptation, it is not difficult to verify that for all $\mu \in \N$ and for all $\lambda > 1$ the $(\mu+\lambda)$ memory-restricted black-box complexity of a problem $\F$ is at least as large as its $(\mu+1)$ black-box complexity. The effects of larger $\lambda$ have been studied in a \emph{parallel black-box complexity model}, which we will discuss in Section~\ref{sec:BBCparallel}.

\begin{figure}
\begin{framed}
\begin{center}
\includegraphics[scale=1.5]{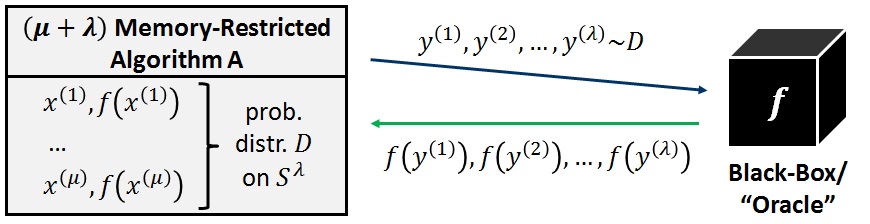}
\end{center}
\caption{In the $(\mu+\lambda)$ memory-restricted black-box model, the algorithms can store up to $\mu$ previously evaluated search points and their absolute function values. In each iteration, it queries the function values of $\lambda$ new solution candidates. It then has to decide which of the $\mu+\lambda$ search points to keep in the memory for the next iteration.}
\end{framed}
\label{fig:BBCmemory}
\end{figure}

While it seems intuitive that larger memory sizes yield to smaller optimization times, this in not necessarily true for all functions. Indeed, the following discussion shows that memory is not needed for the efficient optimization of \onemax. 

\subsection{OneMax}\label{sec:BBCmemoryOM}
Droste, Jansen, and Wegener conjectured in~\cite[Section~6]{DrosteJW06} that the $(1+1)$ memory-restricted black-box complexity of \onemax is $O(n \log n)$, in the belief that Randomized Local Search and the (1+1)~EA are asymptotically optimal representatives of this class. This conjecture was refuted in~\cite{DoerrW12IPL}, where a linear upper bound was presented. This bound was further reduced to $O(n/\log n)$ in~\cite{DoerrW14memory}, showing that even the most restrictive version of the memory-restricted black-box model does not increase the asymptotic complexity of \onemax. By the lower bound presented in Theorem~\ref{thm:BBCunrestrictedOM}, the $O(n/\log n)$ bound is asymptotically best possible.

\begin{theorem}[Theorem~1 in~\cite{DoerrW14memory}]
\label{thm:BBCmemoryOM}
The $(1+1)$ memory-restricted black-box complexity of \onemax is $\Theta(n/\log n)$.
\end{theorem}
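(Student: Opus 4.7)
The lower bound is immediate. By Lemma~\ref{lem:BBCunrestrictedlower} every $(1+1)$ memory-restricted black-box algorithm is in particular an unrestricted one, so Theorem~\ref{thm:BBCunrestrictedOM} gives a lower bound of $(1-o(1))\,n/\log_2 n$ on its expected running time on $\onemax$.

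For the matching upper bound, the plan is to simulate the derandomized Erd\H{o}s--R\'enyi strategy of Theorem~\ref{thm:BBCunrestrictedOMderandomized} block by block. First, I would partition $[n]$ into $m=\lceil n/k\rceil$ blocks $B_1,\ldots,B_m$ of size $k=\lceil\sqrt{n}\rceil$. Applying Theorem~\ref{thm:BBCunrestrictedOMderandomized} at length $k$ yields a fixed sequence of $t_k=\Theta(k/\log k)$ length-$k$ test strings whose OneMax values uniquely identify any length-$k$ target. Spending $t_k$ queries per block gives a total budget of $m\cdot t_k = \Theta(n/\log k) = \Theta(n/\log n)$, matching the lower bound up to constants.

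The algorithm would maintain the invariant that after blocks $B_1,\ldots,B_{j-1}$ have been processed, the stored pair $(x,f(x))$ has $x$ equal to $z$ on those blocks and equal to a fixed canonical pattern (say, all zeros) on $B_j\cup\cdots\cup B_m$. Under the invariant, for any candidate $y$ that coincides with $x$ outside $B_j$ the difference $f(y)-f(x)$ equals the length-$k$ \onemax comparison of $y$ and $x$ restricted to $B_j$, so the derandomized block-level protocol can be executed by flipping $x$ on prescribed test subsets $T_1,\ldots,T_{t_k}\subseteq B_j$. After $t_k$ intra-block queries, the algorithm commits the correct bits of $B_j$ into $x$ and re-establishes the invariant for $B_{j+1}$.

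The main obstacle is that, with $\mu=1$, the $t_k$ intra-block fitness values cannot simply be remembered across iterations: only one pair $(x,f(x))$ survives the selection step, and the next query must be a function of that pair alone. The plan to overcome this is to make the stored pair self-describing, using a small, carefully chosen region of $x$ inside the still-canonical unprocessed blocks to encode both the current block index $j$ and the intra-block step counter, so that the whole protocol behaves like a finite-state machine whose state is read off $(x,f(x))$. The delicate part, and the technical heart of the construction, is designing this encoding so that the bookkeeping bits neither corrupt the fitness-difference extraction on the active block nor require extra queries to be decoded; once it closes consistently, one final query at the fully reconstructed $z$ completes the optimization in $O(n/\log n)$ evaluations.
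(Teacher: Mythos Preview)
Your lower bound and the overall decomposition---blocks of size $\sqrt{n}$, the derandomized Erd\H{o}s--R\'enyi protocol on each block, and using the not-yet-processed region of the string as scratch space---match the paper. The gap is in what you propose to encode in the surviving search point.

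Encoding only the block index $j$ and the intra-block step counter $i$ is not sufficient. Those two numbers determine the next test subset $T_i$ (the protocol is non-adaptive), but in order to ``commit the correct bits of $B_j$'' after the $t_k$ intra-block tests you need \emph{all} $t_k$ observed fitness differences $d_1,\ldots,d_{t_k}$. In the $(1+1)$ model, after querying $y$ from state $(x,f(x))$ you momentarily see both $f(x)$ and $f(y)$, but the next query must be a function of the single surviving pair alone; the value $d_i=f(y)-f(x)$ is therefore lost unless it has already been written into the survivor. Your finite-state-machine description never says where the $d_i$ go, so as stated the algorithm arrives at the end of a block with no way to decode it.

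The paper's fix is not a cleverer self-describing encoding that avoids overhead, but rather to \emph{accept} a constant-factor overhead: the algorithm alternates between an information query (which reveals a new fitness value) and a dedicated storage query whose only purpose is to write that value into a reserved region of the current string. This doubling keeps the total at $O(n/\log n)$. The reserved storage region ultimately consumes $O(n/\log n)$ positions; those remaining bits of $z$ are recovered at the very end with a constant number of guesses per position, adding only another $O(n/\log n)$ term.
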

The proof of Theorem~\ref{thm:BBCmemoryOM} makes use of the $O(n/\log n)$ unrestricted strategy by Erd{\H{o}}s and R{\'e}nyi. To respect the memory-restriction, the algorithm achieving the $O(n/\log n)$ expected optimization time works in rounds. In every round, a substring of size $s:=\sqrt{n}$ of the target string $z$ is identified, using $O(s/\log s)$ queries. The algorithm alternates between querying a string to obtain new information and queries which are used only to store the function value of the last query in the current memory. This works only if sufficiently many bits are available in the guessing string to store this information. It is shown that $O(n/\log n)$ bits suffice. These last remaining $O(n/\log n)$ bits of $z$ are then identified with a constant number of guesses per position, giving an overall expected optimization time of $O(n/s)O(s/\log s) + O(n/\log n)=O(n/\log n)$.

\subsection{Difference to the Unrestricted Model}\label{sec:BBCmemorygap}

In light of Theorem~\ref{thm:BBCmemoryOM} it is interesting to find examples for which the $(\mu+1)$ memory-restricted black-box complexity is strictly (and potentially much) larger than its $((\mu+1)+1)$ memory-restricted one. This question was addressed in~\cite{Storch16}. 

In a first step, it is shown that having a memory of one can make a decisive difference over not being able to store any information at all. In fact, it is easily seen that without any memory, every function class~$\F$ that for every $s \in S$ contains a function $f_s$ such that $s$ is the only optimal solution of $f_s$, then without any memory, the best one can do is random sampling, resulting in an expected optimization time of $|S|$. Assume now that there is a (fixed) search point $h\in S$ where a hint is given, in the sense that for all $s\in S$ the objective value $f_s(h)$ uniquely determines where the optimum $s$ is located. Then clearly the $(1+1)$ memory-restricted algorithm which first queries $h$ and then based on $(h,f_s(h))$ queries $s$ solves any problem instance $f_s$ in at most 2 queries. 

This idea can be generalized to a function class of functions with two hints hidden in two different distinguished search points $h_1$ and $h_2$. Only the combination of $(h_1,f_s(h_1))$ with $(h_2,f_s(h_2))$ defines where to locate the optimum $s$. This way, the $(2+1)$ memory-restricted black-box complexity of this class $\F(h_1,h_2)$ is at most three, while its $(1+1)$ memory-restricted one is at least $(S+1)/2$. For, say, $S=\{0,1\}^n$ we thus see that the discrepancies between the $(0+1)$ memory-restricted black-box complexity of a problem $\F$ and its $(1+1)$ memory-restricted one can be exponential, and so can be the difference between the $(1+1)$ memory-restricted black-box complexity and the $(2+1)$ memory-restricted one. We are not aware of any generalization of this result to arbitrary values of $\mu$. 

\begin{theorem}[\cite{Storch16}]
\label{thm:BBCmemorygap}
There are classes of functions $\F(h) \subset \{ f \mid f:\{0,1\}^n \to \R\}$ and $\F(h_1,h_2) \subset \{ f \mid f:\{0,1\}^n \to \R\}$ such that 
\begin{itemize}
	\item the $(0+1)$ memory-restricted black-box complexity of $\F(h)$ is exponential in $n$, while its $(1+1)$ memory-restricted one is at most two, and
	\item the $(1+1)$ memory-restricted black-box complexity of $\F(h_1,h_2)$ is exponential in $n$, while its $(2+1)$ memory-restricted one is at most three.
\end{itemize}
\end{theorem}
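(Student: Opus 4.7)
The plan is to supply explicit constructions of $\F(h)$ and $\F(h_1,h_2)$ and verify the four bounds separately. For the first class, fix any $h\in\{0,1\}^n$ and, for each $s\in\{0,1\}^n\setminus\{h\}$, define $f_s$ by $f_s(s):=1$, $f_s(h):=\operatorname{enc}(s)$ for an injective encoding $\operatorname{enc}:\{0,1\}^n\to\R$, and $f_s(x):=0$ otherwise. The $(1+1)$ algorithm that queries $h$ first and then, based on the stored pair $(h,\operatorname{enc}(s))$, decodes and queries $s$ certifies the upper bound of two. For the second class, fix distinct $h_1,h_2\in\{0,1\}^n$ and, for every pair $(s,r)\in(\{0,1\}^n\setminus\{h_1,h_2\})\times\{0,1\}^n$, define $f_{s,r}$ by $f_{s,r}(s):=1$, $f_{s,r}(h_1):=\operatorname{enc}(r)$, $f_{s,r}(h_2):=\operatorname{enc}(s\oplus r)$, and $f_{s,r}(x):=0$ otherwise. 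A $(2+1)$ algorithm queries $h_1,h_2$ during its two initialization evaluations, decodes $r$ and $s\oplus r$ from the two stored pairs, computes $s=r\oplus(s\oplus r)$, and queries $s$ in its first optimization iteration, yielding three queries in total.

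The $(0+1)$ lower bound follows from the structural restriction of the model. Because the memory is empty at every step, the distribution $D^{(t)}$ in Algorithm~\ref{alg:BBCmemory} cannot depend on anything, so the algorithm samples every query i.i.d.\ from a single fixed distribution $D$ on $\{0,1\}^n$. Applying Yao's minimax principle (Theorem~\ref{thm:BBCYao}) with the uniform distribution over the instances $f_s$, the expected hitting time for the instance $f_s$ equals $1/D(s)$; the AM--HM inequality gives
\begin{equation*}
\frac{1}{2^n}\sum_{s}\frac{1}{D(s)}\;\ge\;\frac{2^n}{\sum_{s}D(s)}\;=\;2^n,
\end{equation*}
so the worst-case instance requires expected time at least $2^n$.

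The $(1+1)$ lower bound on $\F(h_1,h_2)$ is the main technical step. Apply Yao's principle with the uniform distribution on $(s,r)\in\{0,1\}^{2n}$; under this distribution both $r$ and $s\oplus r$ are marginally uniform on $\{0,1\}^n$ and each is independent of $s$, while the joint pair $(r,s\oplus r)$ determines $s$. Fix any deterministic $(1+1)$ algorithm. I would show by induction on $t$ that the pre-query memory pair $M_t=(x_t,v_t)$ can never simultaneously record both $r$ and $s\oplus r$: whenever the enlarged-memory multiset transiently contains both hint values, the compulsory discard must return a single pair whose $v$-component is one of $\{r,s\oplus r,0\}$, a random variable independent of $s$. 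Consequently the next query $x_{t+1}=\delta(M_t)$ is, up to at most one bit of information smuggled through the discard choice, a deterministic function of one of $\{r,s\oplus r\}$ alone, hence essentially independent of $s$. This yields $\Pr_{(s,r)}[x_{t+1}=s]=O(2^{-n})$ for every $t$, and a union bound on $T$ iterations then gives $\E[T]\ge(|S|+1)/2$, which is exponential in $n$.

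The hard part is the inductive bookkeeping in the last step: during the enlarged-memory phase the algorithm momentarily has access to both $r$ and $s\oplus r$, and therefore to $s$ itself, yet must collapse this knowledge back into a single pair before the next query can be issued. The delicate point is to rule out clever encodings in which the $x$-component of the retained pair serves as extra memory and covertly stores $s$. The crucial observation is temporal: any $x$-component currently in memory was selected as a query in some earlier iteration, at which time the pre-query memory was itself already a single pair containing at most one of the two hints; hence $x$ cannot depend on both hints simultaneously, and each iteration's discard decision contributes only a single additional bit of dependence on $s$. A careful accounting of these bits then shows that the memory never accumulates the $n$ bits of $s$ needed to steer the next query onto the optimum, completing the lower bound.
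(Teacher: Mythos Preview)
Your constructions are exactly the ones the paper sketches: a single encoded hint at $h$ for $\F(h)$, and a two-share secret (your XOR split is a clean instantiation) at $h_1,h_2$ for $\F(h_1,h_2)$. The upper bounds of two and three queries are correct and match the paper. The $(0+1)$ lower bound is also fine, though note that the computation you give is really a direct analysis of the randomized algorithm (i.i.d.\ sampling from a fixed $D$), not an application of Yao; this matters, see below.

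The $(1+1)$ lower bound argument has two genuine gaps. First, Yao's minimax principle does not apply to the memory-restricted model as stated: a randomized $(\mu+\lambda)$ memory-restricted algorithm re-samples fresh randomness each time it revisits a memory state, so it is \emph{not} a convex combination of deterministic memory-restricted algorithms (the paper flags exactly this obstruction for the related elitist model in Section~\ref{sec:BBCelitistYao}). You therefore cannot reduce to deterministic $(1+1)$ algorithms without first enlarging the class, and you do not say to what. Second, and more seriously, your closing ``bit-accounting'' argument is too weak for the claimed bound. Granting that each discard leaks at most one bit of $s$ into the choice of which pair to retain, the recursion $K_t\le K_{t-1}+1$ on the number of bits of $s$ encoded in $M_t$ yields only $K_t\le t$, hence a \emph{linear} lower bound $t\ge n$, not the exponential $(|S|+1)/2$ that the theorem asserts. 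To get the exponential bound you must argue that the leaked bits do not accumulate in the single retained pair---intuitively because keeping the old pair forces a deterministic loop while keeping the new pair $(x_t,f(x_t))$ stores no discard information in its components (both $x_t$ and $f(x_t)$ were fixed before the discard). Your temporal observation about the $x$-component points in this direction, but the argument as written stops short of establishing that the mutual information between $M_t$ and $s$ stays $O(1)$ rather than growing with $t$.
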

Storch~\cite{Storch16} also presents a class of functions that is efficiently optimized by a standard (2+1) genetic algorithm, which is a $(2+1)$ memory-restricted black-box algorithm, in $O(n^2)$ queries, on average, while its $(1+1)$ memory-restricted black-box complexity is exponential in $n$. This function class is build around so-called royal road functions; the main idea being that the genetic algorithm is guided towards the two ``hints'', between which the unique global optimum is located.

\section{Comparison- and Ranking-Based Black-Box Complexity}\label{sec:BBCcomparison}

Many standard black-box heuristics do not take advantage of knowing \emph{exact} objective values. Instead, they use these function values only to rank the search points. This ranking determines the next steps, so that the absolute function values are not needed. Such algorithms are often referred to as \emph{comparison-based} or \emph{ranking-based}. To understand their efficiency \emph{comparison-based} and \emph{ranking-based} black-box complexity models have been suggested in~\cite{TeytaudG06,FournierT11,DoerrW14ranking}.

\subsection{The Ranking-Based Black-Box Model}\label{sec:BBCranking}
In the ranking-based black-box model, the algorithms receive a ranking of the search points currently stored in the memory of the population. This ranking is defined by the objective values of these points.

\begin{definition}
Let $S$ be a finite set, let $f:S \rightarrow \R$ be a function, and let $\mathcal{C}$ be a subset of $S$. 
The \emph{ranking} $\rho$ of $\mathcal{C}$ with respect to $f$ assigns to each element $c \in \mathcal{C}$ the number of elements in $\mathcal{C}$ with a smaller $f$-value plus $1$, formally, $\rho(c):=1+\left|\{ c' \in \mathcal{C} \, |\, f(c')<f(c) \}\right|$. 
\end{definition}
Note that two elements with the same $f$-value are assigned the same ranking.

In the ranking-based black-box model without memory restriction, an algorithm receives thus with every query a ranking of \emph{all} previously evaluated solution candidates, while in the memory-restricted case, naturally, only the ranking of those search points currently stored in the memory is revealed. To be more precise, the ranking-based black-box model without memory restriction subsumes all algorithms that can be described via the scheme of Algorithm~\ref{alg:BBCranking}. Figure~\ref{fig:BBCranking} illustrates these ranking-based black-box algorithms.

\begin{algorithm2e}%
 \textbf{Initialization:}
	\Indp
		Sample $x^{(0)}$ according to some probability distribution $D^{(0)}$ over $S$\;
		$X \leftarrow \{x^{(0)}\}$\;
	\Indm
 \textbf{Optimization:}	
\For{$t=1,2,3,\ldots$}{
  \label{line:BBCrankingmut} Depending on $\{x^{(0)}, \ldots, x^{(t-1)}\}$ and its ranking $\rho(X,f)$ with respect to $f$, choose a probability distribution $D^{(t)}$ on $S$ and sample from it $x^{(t)}$\;
	$X \leftarrow X \cup \{x^{(t)}\}$\;
  Query the ranking $\rho(X,f)$ of $X$ induced by $f$\;
 }\caption{Blueprint of a ranking-based black-box algorithm without memory restriction.}
\label{alg:BBCranking}
\end{algorithm2e}

\begin{figure}
\begin{framed}
\begin{center}
\includegraphics[scale=1.5]{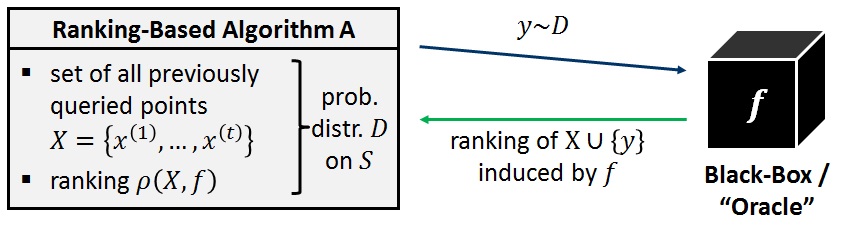}
\end{center}
\caption{A ranking-based black-box algorithm without memory restriction can store all previously evaluated search points. Instead of knowing their function values, it only has access to the ranking of the search points induced by the objective function $f$. Based on this, it decides upon a distribution $D$ from which the next search point is sampled.}
\end{framed}
\label{fig:BBCranking}
\end{figure}

Likewise, the $(\mu+\lambda)$ memory-restricted ranking-based model contains those $(\mu+\lambda)$ memory-restricted algorithms that follow the blueprint in Algorithm~\ref{alg:BBCrankingmemory}; Figure~\ref{fig:BBCrankingmemory} illustrates this pseudo-code. 

\begin{algorithm2e}
 \textbf{Initialization:} \\
 \Indp
 $X \assign \emptyset$\;
 Choose a probability distribution $D^{(0)}$ over $S^{\mu}$ and sample from it $X=\{ x^{(1)}, \ldots, x^{(\mu)} \} \subseteq S$\;
 Query the ranking $\rho(X,f)$ of $X$ induced by $f$\;
 \Indm
 \textbf{Optimization:}	
 \For{$t=1,2,3,\ldots$}{
 		Depending only on the multiset $X$ and the ranking $\rho(X,f)$ of $X$ induced by $f$
	\label{line:BBCranking-mut}	choose a probability distribution $D^{(t)}$ on $S^{\lambda}$ and 
		sample from it $y^{(1)},\ldots,y^{(\lambda)}$\;
		Set $X \assign X \cup \{y^{(1)},\ldots,y^{(\lambda)}\}$ and query the ranking $\rho(X,f)$ of $X$ induced by $f$\;
  \lFor{$i=1,\ldots, \lambda$}{
  	\label{line:BBCranking-selection} Based on $X$ and $\rho(X,f)$ select a (multi-)subset $Y$ of $X$ of size $\mu$ and update $X \assign Y$}
	 }
 \caption{The $(\mu+\lambda)$ memory-restricted ranking-based black-box algorithm for maximizing an unknown function $f:\{0,1\}^n \rightarrow \R$}
\label{alg:BBCrankingmemory}
\end{algorithm2e}

\begin{figure}
\begin{framed}
\begin{center}
\includegraphics[scale=1.5]{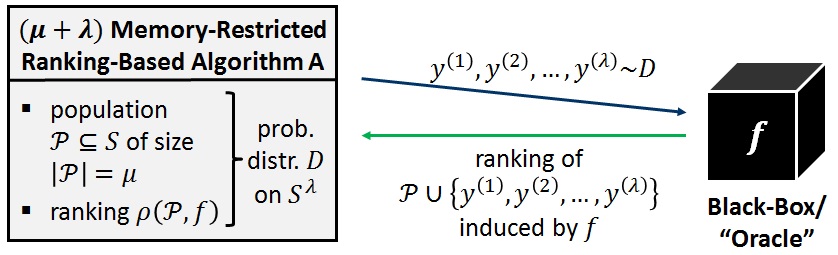}
\end{center}
\caption{A $(\mu+\lambda)$ memory-restricted ranking-based black-box algorithm can store up to $\mu$ previously evaluated search points and the ranking of this population induced by the objective function $f$. Using only this information, $\lambda$ new solution candidates are sampled in each iteration and the ranking of the $(\mu+\lambda)$ points is revealed. Based on this ranking, the algorithm needs to select which $\mu$ points to keep in the memory.}
\end{framed}
\label{fig:BBCrankingmemory}
\end{figure}

These ranking-based black-box models capture many common search heuristics, such as $(\mu+\lambda)$ evolutionary algorithms, some ant colony optimization algorithms (for example simple versions of Max-Min Ant Systems as analyzed in \cite{NeumannW09,KotzingNSW11}), and Randomized Local Search. They do not include algorithms with fitness-dependent parameter choices, such as fitness-proportional mutation rates or fitness-dependent selection schemes.

Surprisingly, the unrestricted and the non-memory-restricted ranking-based black-box complexity of \onemax coincide in asymptotic terms; the leading constants may be different. 
\begin{theorem}[Theorem~6 in~\cite{DoerrW14ranking}]
\label{thm:BBCrankingOM}
The ranking-based black-box complexity of \onemax without memory-restriction is $\Theta(n/\log n)$.  
\end{theorem}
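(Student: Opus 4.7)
For the lower bound, observe that every ranking-based algorithm is in particular an unrestricted black-box algorithm: from the absolute fitness values one can always compute the ranking and hence simulate any ranking-based strategy. Lemma~\ref{lem:BBCunrestrictedlower} combined with the $\Omega(n/\log n)$ bound from Theorem~\ref{thm:BBCunrestrictedOM} therefore yields the matching lower bound immediately, and I would dispatch this in one line. The entire work is on the upper bound.

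The plan for the upper bound is to emulate the Erd\H{o}s--R\'enyi random-sampling strategy of Theorem~\ref{thm:BBCunrestrictedOM} inside the ranking-based model. Recall that strategy samples $k=\Theta(n/\log n)$ uniformly random bit strings $x^{(1)},\ldots,x^{(k)}$; the vector $(\OM_z(x^{(1)}),\ldots,\OM_z(x^{(k)}))$ uniquely determines $z$ with high probability, and by Theorem~\ref{thm:BBCunrestrictedOMderandomized} the sample sequence can even be fixed deterministically. The ranking-based algorithm I propose would query exactly these $k$ strings and then, in one final query, the bit string $z$ that it has decoded from the observed ranking, for a total budget of $k+1 = O(n/\log n)$.

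The crux is to show that the \emph{ranking} of $k=\Theta(n/\log n)$ uniform samples already suffices to recover $z$ with constant probability. The heuristic rationale is an information count: the ranking of $k$ items carries $\log_2(k!) = \Theta(k \log k) = \Theta(n)$ bits, matching the $n$ bits needed to describe $z$. To turn this into a proof I would fix $z$ and, for each alternative candidate $z' \neq z$, estimate the probability that $\OM_z$ and $\OM_{z'}$ induce the same ranking on $x^{(1)},\ldots,x^{(k)}$. A natural reduction writes $\OM_z(x)-\OM_{z'}(x) = n_0(z,z') - 2\cdot|\{i \in D : x_i \neq z_i\}|$, where $D$ is the set of coordinates on which $z$ and $z'$ differ; thus the sign pattern that the ranking must preserve is determined by the projections of the $x^{(i)}$ onto $D$, which for uniform samples behave like independent $\mathrm{Bin}(|D|,1/2)$ variables. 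Using this, I would bound the probability that $z'$ fools the ranking by $\exp(-\Omega(k\cdot \min(|D|,\log k)/\log n))$ or similar, and then union-bound over the at most $\binom{n}{d}$ candidates $z'$ at Hamming distance $d$ from $z$, summing over $d\in[1..n]$. For a sufficiently large constant $C$ in $k=Cn/\log n$, the total failure probability would be $o(1)$; a single failure is then absorbed by restarting via Lemma~\ref{lem:BBChighprobability}.

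The main obstacle is making the pairwise bound tight enough to survive the $2^n$-term union bound, since the events ``$z'$ preserves the ranking'' for different $z'$ are highly dependent across the shared samples, and a naive Hoeffding estimate on pair-of-pair comparisons loses too much. The hard work will be in exploiting the rigid structure of $\OM_z-\OM_{z'}$ (it depends only on the coordinates in $D$) to get a clean concentration inequality for the number of sign-agreements between two OneMax-style functions evaluated on the same uniform samples, strong enough that the $d=\Theta(n)$ range of the union bound still gives $o(1)$. Once this decoupling lemma is in hand, the rest of the argument --- decoding via Theorem~\ref{thm:BBCunrestrictedOMderandomized} and a final confirming query --- is routine.
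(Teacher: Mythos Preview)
Your lower bound matches the paper's (one-line reduction to the unrestricted model). For the upper bound the paper takes a genuinely different and much simpler route than your union-bound plan.

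The paper does not try to show directly that the ranking separates $z$ from every other candidate $z'$. Instead it \emph{recovers the absolute $\OM_z$-values from the ranking} for enough of the samples, and then reuses the unrestricted Erd\H{o}s--R\'enyi argument as a black box. The calibration trick is this: among $\Theta(n/\log n)$ uniform samples, with high probability the median-ranked sample has $\OM_z$-value $n/2$, and moreover every integer in a window $[n/2-\kappa\sqrt n,\, n/2+\kappa\sqrt n]$ is attained by at least one sample. Because this window has no gaps, consecutive rank-blocks inside it correspond to consecutive integers, so the rank of any sample landing in the window determines its exact $\OM_z$-value. One then checks that the samples falling in this $\Theta(\sqrt n)$-wide central window already suffice to identify $z$, which is where the unrestricted argument is applied to those samples with their now-known absolute values.

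Your direct union bound over all $2^n$ candidates $z'$ is the natural first idea, but --- as you yourself flag --- driving the per-$z'$ failure probability below $\binom{n}{d}^{-1}$ uniformly in $d$ from only $\Theta(n/\log n)$ samples is delicate, and the concentration (``decoupling'') lemma you postulate is neither standard nor obviously true at the required strength. The calibration idea sidesteps this obstacle entirely: once absolute values are reconstructed, no union bound over targets is needed at all. It is worth internalising as a general technique for ranking-based models.
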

The upper bound for \onemax is obtained by showing that, for a sufficiently large sample base, a median search point $x$ (i.e., a search point for which half of the search points have a ranking that is at most as large as that of $x$ and the other half of the the search points have rankings that are at least as large as that of $x$) is very likely to have $n/2$ correct bits. It is furthermore shown that with $O(n/\log n)$ random queries each of the function values in the interval $[n/2-\kappa \sqrt{n}, n/2+\kappa \sqrt{n}]$ appears at least once. This information is used to translate the ranking of the random queries into absolute function values, for those solution candidates $y$ for which $\OM_z(y)$ lies in the interval $[n/2-\kappa \sqrt{n}, n/2+\kappa \sqrt{n}]$. The proof is then concluded by showing that it suffices to regard only these samples in order to identify the target string $z$ of the problem instance~$\OM_z$. 

For \binval, in contrast, it makes a substantial difference whether absolute or relative objective values are available.
\begin{theorem}[Theorem~17 in~\cite{DoerrW14ranking}]
\label{thm:BBCrankingBV}
The ranking-based black-box complexity of $\binval_n$ and $\binval_n^*$ is strictly larger than $n-1$, even when the memory is not bounded.  
\end{theorem}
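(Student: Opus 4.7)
My plan is to apply Yao's minimax principle with the uniform distribution on target strings $z \in \{0,1\}^n$ and to lower-bound the expected running time of an arbitrary deterministic ranking-based algorithm $A$ (with unbounded memory) by a quantity strictly greater than $n-1$. Since $\BV_{z,\id} = \BV_z$ witnesses the inclusion $\binval_n^* \subseteq \binval_n$, a lower bound on the ranking-based complexity of $\binval_n^*$ transfers automatically to $\binval_n$, so I focus on $\binval_n^*$.

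The central observation is that each pairwise rank comparison reveals at most one bit of the target string. For any two distinct $x, y \in \{0,1\}^n$ let $h = h(x,y)$ denote the largest index at which $x$ and $y$ differ. Since the contributions to $\BV_z$ from positions below $h$ are bounded in absolute value by $2^{h-1}-1 < 2^{h-1}$, a direct geometric-series calculation yields $\BV_z(x) > \BV_z(y)$ if and only if $x_h = z_h$. Consequently, after $t$ queries the observed ranking reveals $z_p$ exactly for the positions $p$ in the set $H := \{h(x^{(i)}, x^{(j)}) : 1 \le i < j \le t\}$, and nothing else about $z$. The main technical obstacle is the combinatorial lemma $|H| \le t-1$ for every set of $t$ distinct bit strings; I would prove it by induction on $t$, letting $p$ be the largest coordinate at which not all $t$ strings agree, splitting them into the two non-empty groups $A, B$ according to their bit at $p$, and observing that every cross-pair contributes $p$ to $H$ while every intra-group pair contributes some $h < p$; the inductive hypothesis then gives $|H| \le 1 + (|A|-1) + (|B|-1) = t-1$.

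With the lemma in hand, the remainder is decision-tree bookkeeping. Modelling $A$ as an infinite decision tree whose branching at depth $t$ is governed by the rank of the $t$-th query among $x^{(1)}, \ldots, x^{(t)}$, I let $C_v \subseteq \{0,1\}^n$ denote the set of targets whose run through $A$ traverses a fixed node $v$ at depth $t$. The analysis above gives $|C_v| \ge 2^{n-(t-1)} = 2^{n-t+1}$, and since the sets $C_v$ for depth-$t$ nodes partition $\{0,1\}^n$, there are at most $2^{t-1}$ such nodes. At every depth-$(t-1)$ node the next query $x^{(t)}$ is fixed, so at most one $z \in C_v$ can satisfy $T(A,z) = t$; hence $|\{z : T(A,z) = t\}| \le 2^{t-2}$ for $t \ge 2$ and $\le 1$ for $t = 1$. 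Subject to these bounds and to $\sum_t |\{z : T(A,z) = t\}| = 2^n$, the average $\E_z[T(A,z)] = 2^{-n}\sum_t t\,|\{z : T(A,z) = t\}|$ is minimised by front-loading, which forces $2^{n-1}$ targets into the bucket $T = n+1$; a short geometric-sum computation then gives $\E_z[T(A,z)] \ge n + 2^{-n} > n-1$, and Yao's principle concludes the proof.
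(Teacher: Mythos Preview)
Your argument is correct and matches the paper's approach closely: Yao's minimax principle under the uniform distribution over targets, together with the key combinatorial fact that $t$ queries can reveal at most $t-1$ bits of $z$ (your lemma $|H|\le t-1$), followed by a decision-tree counting argument. The paper's sketch only states the ``at most $t-1$ bits'' observation; your induction on the highest disagreeing coordinate is exactly the right way to make it precise, and your front-loading computation even yields the sharper bound $\E_z[T(A,z)]\ge n+2^{-n}$ rather than merely $>n-1$.
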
 
This lower bound of $n-1$ is almost tight. In fact, an $n+1$ ranking-based algorithm is easily obtained by starting with a random initial search point and then, from left to right, flipping in each iteration exactly one bit. The ranking uniquely determines the permutation $\sigma$ and the string $z$ of the problem instance $\BV_{z,\sigma}$.

Theorem~\ref{thm:BBCrankingBV} is shown with Yao's minimax principle applied to the uniform distribution over the problem instances. The crucial observation is that when optimizing $\BV_{z,\sigma}$ with a ranking-based algorithm, then from $t$ samples we can learn at most $t-1$ bits of the hidden bit string $z$, and not $\Theta(t \log t)$ bits as one might guess from the fact that there are $t!$ permutations of the set~$[t]$.  

This last intuition, however, gives a very general lower bound. Intuitively, if $\F$ is such that every $z \in \{0,1\}^n$ is the unique optimum for a function $f_z\in \F$, and we only learn the ranking of the search points evaluated so far, then for the $t$-th query, we learn at most $\log_2(t!)=\Theta(t \log t)$ bits of information. Since we need to learn $n$ bits in total, the ranking-based black-box complexity of $\F$ is of order at least $n/\log n$. 

\begin{theorem}[Theorem~21 in~\cite{DoerrW14ranking}]
\label{thm:BBCrankinggeneral}
Let $\F$ be a class of functions such that each $f \in \F$ has a unique global optimum and such that for all $z\in \{0,1\}^n$ there exists a function $f_z \in \F$ with $\{z\}=\arg \max f_z$. Then the unrestricted ranking-based black-box complexity of $\F$ is $\Omega(n/\log n)$.
\end{theorem}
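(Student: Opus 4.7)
The plan is to apply Yao's minimax principle (Theorem~\ref{thm:BBCYao}) after restricting to a convenient finite subclass and a uniform distribution, then prove the deterministic lower bound by an information-theoretic counting argument on transcripts.

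First I would reduce to the subclass $\F':=\{f_z \mid z \in \{0,1\}^n\}$ obtained by selecting, for each $z \in \{0,1\}^n$, one function $f_z \in \F$ with $\arg\max f_z = \{z\}$; such a choice exists by hypothesis, and since $\F' \subseteq \F$, any lower bound for the ranking-based complexity of $\F'$ transfers to $\F$. I would then endow $\F'$ with the uniform distribution $p$ and invoke Theorem~\ref{thm:BBCYao}: it suffices to exhibit a constant $c>0$ such that \emph{every} deterministic ranking-based algorithm $A$ satisfies $\E_{z \sim p}[T(A,f_z)] \geq c\, n/\log n$.

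Fix such a deterministic $A$ and a query budget $t$. The key observation is that, for a deterministic ranking-based algorithm, the sequence of queries $x^{(1)},\ldots,x^{(t)}$ is determined by the sequence of rankings returned so far, and the latter is in turn determined by the final ranking of the $t$ queried points (since the ranking on a subset is a restriction of the ranking on a superset). Define the \emph{transcript} after $t$ queries to be this ranking; then two instances $f_z$ and $f_{z'}$ producing the same transcript induce the same trajectory $x^{(1)},\ldots,x^{(t)}$. The number of distinct transcripts is at most the number of orderings with ties of $t$ points, which is at most $t!$. Now, inside any single transcript class, the only instances $f_z$ that $A$ has \emph{solved} within $t$ queries are those whose optimum $z$ coincides with one of $x^{(1)},\ldots,x^{(t)}$, giving at most $t$ solved instances per class. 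Hence the total number of $z \in \{0,1\}^n$ for which $T(A,f_z) \leq t$ is at most $t \cdot t!$.

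Choosing $t = \lfloor c\, n/\log_2 n \rfloor$ for a sufficiently small constant $c>0$, Stirling yields $t \cdot t! \leq 2^{t \log_2 t + O(t)} \leq 2^{n/2}$, so at most a $2^{-n/2}$ fraction of the $2^n$ instances are solved within $t$ queries. Therefore under the uniform distribution, $\Pr_{z \sim p}[T(A,f_z) > t] \geq 1 - 2^{-n/2} \geq 1/2$, which gives $\E_{z \sim p}[T(A,f_z)] \geq t/2 = \Omega(n/\log n)$. Yao's principle then transfers this bound to the randomized unrestricted ranking-based complexity of $\F'$ and hence of $\F$. The main delicate point is the transcript-determinism step: one must verify that because $A$ is deterministic and ranking-based, the pair (query sequence, ranking received) at time $t$ is a function of the final ranking alone, so that the $t!$ bound on transcripts is legitimate; ties only reduce the number of transcripts, so they strengthen rather than weaken the argument.
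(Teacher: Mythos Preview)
Your approach is essentially the same as the paper's sketch: apply Yao's minimax principle to the uniform distribution over $\{f_z : z\in\{0,1\}^n\}$ and bound the number of ranking-transcripts after $t$ queries by $2^{\Theta(t\log t)}$, forcing $t\log t = \Omega(n)$.

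One technical slip to fix: the number of orderings \emph{with ties} of $t$ points (weak orderings, i.e.\ ordered set partitions) is the Fubini number, which \emph{exceeds} $t!$ for $t\ge 3$, and your parenthetical claim that ``ties only reduce the number of transcripts'' is false --- a tie can produce a transcript distinct from every strict ordering, so allowing ties enlarges the transcript space. The repair is painless: bound the number of weak orderings crudely by $t^t$ (each of the $t$ queried points receives a rank in $[t]$), or by $\prod_{i=1}^{t}(2i-1)$ via the incremental ``insert the $i$-th point into one of $2i-1$ slots'' argument. Either way $\log_2(\#\text{transcripts}) = t\log_2 t + O(t)$, and your counting $t\cdot 2^{O(t\log t)} \le 2^{n/2}$ still goes through for $t = c\,n/\log_2 n$ with a suitably small constant $c$.
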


Results for the ranking-based black-box complexity of the two combinatorial problems MST and SSSP have been derived in~\cite{DoerrKLW13}. Some of these bounds were mentioned in Section~\ref{sec:BBCunrestrictedcombinatorial}.

\subsection{The Comparison-Based Black-Box Model}\label{sec:BBCcomparison2}
In the ranking-based model, the algorithms receive for every query quite a lot of information, namely the full ranking of the current population and its offspring. One may argue that some evolutionary algorithms use even less information. Instead of regarding the full ranking, they base their decisions on a few selected points only. This idea is captured in the \emph{comparison-based black-box model}. In contrast to the ranking-based model, here only the ranking of the queried points is revealed. In this model it can therefore make sense to query a search point more than once; to compare it with a different offspring, for example. Figure~\ref{fig:BBCcomparison} illustrates the $(\mu+\lambda)$ memory-restricted comparison-based black-box model. A comparison-based model without memory-restriction is obtained by setting $\mu=\infty$. 

We do not further detail this model, as it has received only marginal attention so far in the black-box complexity literature. We note, however, that Teytaud and co-authors~\cite{TeytaudG06,FournierT11} have presented some very general lower bounds for the convergence rate of
comparison-based and ranking-based evolutionary strategies in continuous domains. From these works results for the comparison-based black-box complexity of problems defined over discrete domains can be obtained. These bounds, however, seem to coincide with the information-theoretic ones that can be obtained through Theorem~\ref{thm:BBCinfotheo}. 

\begin{figure}
\begin{framed}
\begin{center}
\includegraphics[scale=1.5]{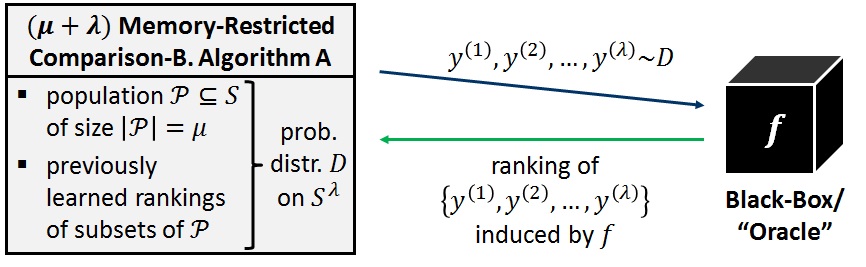}
\end{center}
\caption{A $(\mu+\lambda)$ memory-restricted comparison-based black-box algorithm can store up to $\mu$ previously evaluated search points and the comparison of these points that have been learned through previous queries. In the next iteration, $\lambda$ solution candidates are queried, possibly containing some of the current population. Only the ranking of the $\mu$ queried points is revealed. Based on this ranking and the previous information about the relative fitness values, the algorithm needs to select which $\mu$ points to keep in the memory.}
\end{framed}
\label{fig:BBCcomparison}
\end{figure}

\section{Unbiased Black-Box Complexity}\label{sec:BBCunbiased}

As previously commented, the quest to develop a meaningful complexity theory for evolutionary algorithms and other black-box optimization heuristics seemed to have come to an early end after 2006; the only work who picked up on this topic being the work of Anil and Wiegand on the unrestricted black-box complexity of \onemax~\cite{AnilW09} (cf. Section~\ref{sec:BBCunrestrictedOM}). In 2010, the situation has changed drastically. Black-box complexity was revived by Lehre and Witt in~\cite{LehreW10} (journal version appeared as~\cite{LehreW12}). To overcome the drawbacks of the previous unrestricted black-box model, they restrict the class of black-box optimization algorithms in a natural way that still covers a large class of classically used algorithms. 

In their \emph{unbiased black-box complexity model}, Lehre and Witt regard pseudo-Boolean optimization problems $\F \subseteq \{f:\{0,1\}^n \to \R \}$. The unbiased black-box model requires that all solution candidates must be sampled from distributions that are \emph{unbiased.} In the context of pseudo-Boolean optimization, unbiasedness means that the distribution can not discriminate between bit positions $1,2,\ldots,n$ nor between the bit entries 0 and 1; a formal definition will be given in Sections~\ref{sec:BBCunbiased-operators} and~\ref{sec:BBCunbiased-model}. The unbiased black-box model also admits a notion of arity. A $k$-ary unbiased black-box algorithm is one that employs only such variation operators that take up to $k$ arguments. This allows, for example, to talk about mutation-only algorithms (unary unbiased algorithms) and to study the potential benefits of recombining previously samples search points through distributions of higher arity. 

In crucial difference to the memory-restricted model, in the pure version of the unbiased black-box model, the memory is not restricted. That is, the~$k$ search points that form the input for the $k$-ary variation operator can be any random or previously evaluated solution candidate. As in the case of the comparison- and the ranking-based black-box model, combined unbiased memory-restricted models have also been studied, cf. Section~\ref{sec:BBCcombined}.

Before we formally introduce the unbiased black-box models for pseudo-Boolean optimization problems in Section~\ref{sec:BBCunbiased-model}, we define and discuss in Section~\ref{sec:BBCunbiased-operators} the concept of unbiased variation operators. Known black-box complexities in the unbiased black-box models are surveyed in Section~\ref{sec:BBCunbiased-results}. In Section~\ref{sec:BBCunbiased-other} we present extensions of the unbiased black-box models to search spaces different from $\{0,1\}^n$. 

\subsection{Unbiased Variation Operators}\label{sec:BBCunbiased-operators}

In order to formally define the unbiased black-box model, we first introduce the notion of \emph{$k$-ary unbiased variation operators.} Informally, a $k$-ary unbiased variation operator takes as input up to $k$ search points. It samples a new point $z\in\{0,1\}^n$ by applying some procedure to these previously evaluated solution candidates that treats all bit positions and the two bit values in an equal way. 

\begin{definition}[$k$-ary unbiased variation operator]
\label{def:BBCunbiased-variation}
Let $k \in \N$. 
A \emph{$k$-ary unbiased distribution} $(D(. \mid y^{(1)},\ldots,y^{(k)}))_{y^{(1)},\ldots,y^{(k)} \in \{0,1\}^n}$ is a family of probability distributions over 
$\{0,1\}^n$ such that for all inputs $y^{(1)},\ldots,y^{(k)} \in \{ 0,1\}^n$ the following two conditions hold.  
\begin{align*}
& (i)\,\, \forall x,z \in \{0,1\}^n: 
D(x \mid y^{(1)},\ldots,y^{(k)}) = D(x \oplus z \mid y^{(1)} \oplus z,\ldots,y^{(k)}\oplus z)\,,\\
& (ii)\, \forall x \in \{0,1\}^n \, \forall \sigma\in S_n: D(x \mid y^{(1)},\ldots,y^{(k)}) = D(\sigma(x) \mid \sigma(y^{(1)}), \ldots, \sigma(y^{(k)}))\,.
\end{align*} 
We refer to the first condition as \emph{$\oplus$-invariance} and we refer to the second as \emph{permutation invariance}.
A variation operator creating an offspring by sampling from a $k$-ary unbiased distribution is called a \emph{$k$-ary unbiased variation operator}. 
\end{definition}

To get some intuition for unbiased variation operators, we summarize a few characterizations and consequences of Definition~\ref{def:BBCunbiased-variation}. 

We first note that the combination of $\oplus$- and permutation invariance can be characterized as invariance under Hamming-automorphisms. A Hamming-automorphism is a one-to-one map $\alpha:\{0,1\}^n \to \{0,1\}^n$ that satisfies that for any two points $x,y\in \{0,1\}^n$ their Hamming distance $H(x,y)$ is equal to the Hamming distance $H(\alpha(x),\alpha(y))$ of their images. A formal proof for the following lemma can be found in~\cite[Lemma~3]{DoerrKLW13}.

\begin{lemma}\label{lem:BBCunbiased-characterization} 
A distribution $D(\cdot \mid x^1, \ldots, x^k)$ is unbiased if and only if, for all Hamming automorphisms $\alpha : \{0,1\}^n \rightarrow \{0,1\}^n$ and for all bit strings $y \in \{0,1\}^n$, the probability $D(y \mid x^1, \ldots, x^k)$ to sample $y$ from $(x^1, \ldots, x^k)$ equals the probability $D(\alpha(y) \mid \alpha(x^1), \ldots, \alpha(x^k))$ to sample $\alpha(y)$ from $(\alpha(x^1), \ldots, \alpha(x^k))$.
\end{lemma}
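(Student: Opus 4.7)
The plan is to prove the two directions of the equivalence separately, and to reduce the whole statement to a structural fact: every Hamming automorphism of $\{0,1\}^n$ is the composition of an $\oplus$-shift and a coordinate permutation.

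The easy direction is $(\Leftarrow)$. First I would observe that, for every fixed $z \in \{0,1\}^n$, the map $\tau_z : x \mapsto x \oplus z$ preserves Hamming distances, since $H(x \oplus z, y \oplus z) = |(x \oplus z) \oplus (y \oplus z)|_1 = |x \oplus y|_1 = H(x,y)$. Similarly, every coordinate permutation $\sigma \in S_n$ is a Hamming automorphism. Applying the hypothesised invariance under arbitrary Hamming automorphisms to $\alpha = \tau_z$ gives $\oplus$-invariance, and applying it to $\alpha = \sigma$ gives permutation invariance. Hence $D$ is unbiased.

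For $(\Rightarrow)$, the key lemma to establish is that every Hamming automorphism $\alpha$ factors as $\alpha(x) = \sigma(x) \oplus z$ for some $z \in \{0,1\}^n$ and some $\sigma \in S_n$. I would prove this by setting $z := \alpha(0^n)$ and considering $\beta(x) := \alpha(x) \oplus z$, which is again a Hamming automorphism (composition of two Hamming automorphisms) and which satisfies $\beta(0^n) = 0^n$. Because $\beta$ preserves Hamming distances to $0^n$, it preserves Hamming weight, so it restricts to a bijection on the weight-$1$ strings $\{e_1, \ldots, e_n\}$. Define $\sigma \in S_n$ by $\beta(e_i) = e_{\sigma(i)}$. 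For a general $x \in \{0,1\}^n$, the bit $\beta(x)_{\sigma(i)}$ is determined by the two distances $H(\beta(x), 0^n) = |x|_1$ and $H(\beta(x), e_{\sigma(i)}) = H(\beta(x), \beta(e_i)) = H(x, e_i)$, via the identity $H(x, e_i) = |x|_1 + 1 - 2 x_i$. This forces $\beta(x)_{\sigma(i)} = x_i$ for every $i$, i.e.\ $\beta$ is precisely the coordinate permutation induced by $\sigma$, and $\alpha(x) = \sigma(x) \oplus z$ as claimed.

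With the factorisation in hand, the rest is a two-step chain: for any $y, x^1, \ldots, x^k \in \{0,1\}^n$ and any Hamming automorphism $\alpha = \tau_z \circ \sigma$, permutation invariance gives
\begin{equation*}
D(y \mid x^1, \ldots, x^k) = D(\sigma(y) \mid \sigma(x^1), \ldots, \sigma(x^k)),
\end{equation*}
and then $\oplus$-invariance applied with the shift vector $z$ gives
\begin{equation*}
D(\sigma(y) \mid \sigma(x^1), \ldots, \sigma(x^k)) = D(\sigma(y) \oplus z \mid \sigma(x^1) \oplus z, \ldots, \sigma(x^k) \oplus z) = D(\alpha(y) \mid \alpha(x^1), \ldots, \alpha(x^k)),
\end{equation*}
which concludes the proof. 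The main obstacle is the structural decomposition of $\alpha$; once that is secured, the equivalence is essentially a bookkeeping exercise. I would check carefully the convention under which $S_n$ acts on $\{0,1\}^n$ to make sure the identity $\beta(x)_{\sigma(i)} = x_i$ is stated consistently with the paper's notation $\sigma(x) = (x_{\sigma(1)}, \ldots, x_{\sigma(n)})$.
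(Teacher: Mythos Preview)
Your proof is correct and follows the standard route: the structural decomposition of every Hamming automorphism as $\tau_z \circ \sigma$ is exactly the lemma on which this characterization rests, and your two-step chain then finishes the job. The paper itself does not prove the lemma but defers to \cite[Lemma~3]{DoerrKLW13}, where the same decomposition argument is used; your caveat about the $S_n$-action convention is on point (under the paper's convention $\sigma(x)_i = x_{\sigma(i)}$, the permutation realizing $\beta$ is $\sigma^{-1}$ rather than $\sigma$), but this is a relabelling that does not affect the argument.
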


It is not difficult to see that the only $0$-ary unbiased distribution over $\{0,1\}^n$ is the uniform one.

$1$-ary operators, also called \emph{unary} operators, are sometimes referred to as \emph{mutation operators,} in particular in the field of evolutionary computation. Standard bit mutation, as used in several $(\mu+\lambda)$~EAs and $(\mu+\lambda)$~EAs, is a unary unbiased variation operator. The random bit flip operation used by RLS, which chooses at random a bit position $i \in [n]$ and replaces the entry $x_i$ by the value $1-x_i$, is also unbiased. In fact, all unary unbiased variation operators are of a very similar type, as the following definition and lemma, taken from~\cite{DoerrDY16} but known in a much more general form already in~\cite{DoerrKLW13}, shows.

\begin{definition}\label{def:BBCflip_r}
Let $n\in \N$ and $r \in [0..n]$. For every $x \in \{0,1\}^n$ let $\text{flip}_r$ be the variation operator that creates an offspring $y$ from $x$ by selecting $r$ positions $i_1, \ldots, i_r$ in $[n]$ uniformly at random (without replacement), setting $y_i:=1-x_i$ for $i \in \{i_1,\ldots,i_r\}$, and copying $y_i:=x_i$ for all other bit positions $i \in [n]\setminus \{i_1,\ldots,i_r\}$.
\end{definition}

Using this definition, unary unbiased variation operators can be characterized as follows. 
\begin{lemma}[Lemma~1 in \cite{DoerrDY16}]
\label{lem:BBCunbiased-unarycharacterization}
For every unary unbiased variation operator $(p(\cdot|x))_{x \in \{0,1\}^n}$ there exists a family of probability distributions $(r_{p,x})_{x \in \{0,1\}^n}$ on $[0..n]$ such that for all $x,y\in\{0,1\}^n$ the probability $p(y|x)$ that $(p(\cdot|x))_{x \in \{0,1\}^n}$ samples $y$ from $x$ equals the probability that the routine first samples a random number $r$ from $r_{p,x}$ and then obtains $y$ by applying $\text{flip}_r$ to $x$. On the other hand, each such family of distributions $(r_{p,x})_{x \in \{0,1\}^n}$ on $[0..n]$ induces a unary unbiased variation operator.
\end{lemma}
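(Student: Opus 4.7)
The plan is to show both directions of the characterization by separately exploiting the two invariance properties of Definition~\ref{def:BBCunbiased-variation}. Throughout I write $0^n$ for the all-zeros string and $|x|$ for the Hamming weight of $x$.

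For the forward direction, I would first use $\oplus$-invariance with $z = x$ to reduce everything to the single reference point $0^n$: for any $x,y \in \{0,1\}^n$,
\[
p(y \mid x) \;=\; p(y \oplus x \mid x \oplus x) \;=\; p(y \oplus x \mid 0^n).
\]
Thus it suffices to understand the distribution $p(\cdot \mid 0^n)$. Next, I would apply permutation invariance together with the trivial observation $\sigma(0^n) = 0^n$ for every $\sigma \in S_n$: this yields $p(y \mid 0^n) = p(\sigma(y) \mid 0^n)$ for all $\sigma \in S_n$, so $p(\cdot \mid 0^n)$ assigns the same mass to any two strings of the same Hamming weight. Hence there exist nonnegative numbers $c_0, c_1, \ldots, c_n$ with $p(y \mid 0^n) = c_{|y|}$ for all $y$, and the probabilities $r_{p}(r) := \binom{n}{r} c_r$ sum to $1$ and therefore define a probability distribution on $[0..n]$. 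I would then define $r_{p,x} := r_p$ for every $x$ (independent of $x$) and verify the equivalence: sampling a weight-$r$ string uniformly and xor-ing it with $x$ is exactly $\text{flip}_r(x)$, because flipping a uniformly random $r$-subset of the bits of $x$ is the same as xor-ing $x$ with a uniformly random weight-$r$ mask.

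For the converse, given any probability distribution $r$ on $[0..n]$ (and the ``family'' $r_{p,x} := r$), I would define the operator $p(\cdot \mid x)$ by first sampling $R \sim r$ and then returning $\text{flip}_R(x)$. The check that this is unbiased reduces to checking that $\text{flip}_R$ itself is unbiased for every fixed $R \in [0..n]$, since mixtures of unbiased distributions over the same argument are unbiased. This in turn follows because $\text{flip}_R$ flips a uniformly random $R$-subset of positions: permuting the positions before or after sampling the subset gives the same distribution, and xor-ing the input by $z$ and the output by $z$ cancel (as a subset of bit positions is flipped regardless of their values).

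The only slightly delicate step is the reduction in the forward direction from the invariance conditions to the statement that $p(\cdot \mid 0^n)$ depends only on Hamming weight. The key is to note that $S_n$ acts transitively on the set of length-$n$ strings of a fixed Hamming weight; combined with permutation invariance at the fixed point $0^n$, this forces $p(\cdot \mid 0^n)$ to be constant on each such level set. After this observation, the remainder is bookkeeping: translate the per-string probabilities into a distribution on $[0..n]$ via the factor $\binom{n}{r}$, and recognize the uniform sample from a Hamming sphere as the action of $\text{flip}_r$.
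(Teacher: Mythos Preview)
The paper does not prove this lemma; it states it as a citation (Lemma~1 in \cite{DoerrDY16}) and only remarks that the characterization is known in more general form from \cite{DoerrKLW13}. So there is no proof in the paper to compare against, and your proposal must be judged on its own merits.

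Your argument is correct. The forward direction is exactly the natural one: use $\oplus$-invariance with $z=x$ to center everything at $0^n$, then permutation invariance together with the transitivity of $S_n$ on each Hamming level to conclude that $p(\cdot\mid 0^n)$ is constant on Hamming spheres; the per-level masses multiplied by $\binom{n}{r}$ then give the distribution on $[0..n]$. Your verification that sampling the radius and then applying $\text{flip}_r$ reproduces $p(y\mid x)$ is also correct. For the converse, your reduction to checking that each $\text{flip}_r$ is unbiased, and the observation that the unbiasedness conditions are preserved under mixtures, is clean.

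One point worth making explicit: your forward direction actually shows more than the lemma states, namely that the family $(r_{p,x})_x$ can be taken \emph{independent of $x$}. This is not accidental---in fact it must be so: if the radii distributions genuinely varied with $x$, the resulting operator would violate $\oplus$-invariance (since $p(y\mid x)$ would depend on $x$ through $r_{p,x}$ rather than only through $y\oplus x$). So the ``family'' in the lemma statement is really a single distribution, and your use of scare quotes around ``family'' in the converse direction suggests you already see this. It would strengthen the write-up to say this outright.
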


From this characterization, we easily see that neither the somatic contiguous hyper-mutation operator used in artificial immune systems (which selects a random position $i\in [n]$ and a random length $\ell \in [n]$ and flips the $\ell$ consecutive bits in positions $i,i+1 \mod n,\ldots,i+\ell \mod n$, cf.~\cite[Algorithm~3]{CorusHJOSZ17}), nor the asymmetric nor the position-dependent mutation operators regarded in~\cite{JansenS10} and~\cite{PKLFoga11,DoerrDK15}, respectively, are unbiased.

$2$-ary operators, also called \emph{binary} operators, are often referred to as \emph{crossover operators.} A prime example for a binary unbiased variation operator is \emph{uniform crossover}. Given two search points $x$ and $y$, the uniform crossover operator creates an offspring $z$ from $x$ and $y$ by choosing independently for each index $i \in [n]$ the entry $z_i \in \{x_i,y_i\}$ uniformly at random. In contrast, the standard \emph{one-point crossover operator}---which, given two search points $x,y \in \{0,1\}^n$ picks uniformly at random an index $k \in [n]$ and outputs from $x$ and $y$ one or both of the two offspring $x':=x_1 \ldots x_k y_{k+1} \ldots y_n$ and $y':=y_1 \ldots y_k x_{k+1} \ldots x_n$---is not permutation-invariant, and therefore not an unbiased operator.

Some works refer to the unbiased black-box model allowing variation operators of arbitrary arity as the \emph{$\ast$-ary unbiased black-box model.} Black-box complexities in the $\ast$-ary unbiased black-box model are of the same asymptotic order as those in the unrestricted model. This has been formally shown in~\cite{ABB}, for a general notion of unbiasedness that is not restricted to pseudo-Boolean optimization problems (cf. Definition~\ref{def:BBCABB}).
\begin{theorem}[Corollary~1 in~\cite{ABB}]
The $\ast$-ary unbiased black-box complexity of a problem class $\F$ is the same as its unrestricted one. 
\end{theorem}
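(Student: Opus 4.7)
The plan is to prove both directions of the equality separately. The inequality that the unrestricted complexity does not exceed the $\ast$-ary unbiased complexity is immediate from Lemma~\ref{lem:BBCunrestrictedlower}, since every $\ast$-ary unbiased algorithm is in particular an unrestricted black-box algorithm. Hence the heart of the argument is the reverse inequality, which I would establish by a symmetrization construction: starting from an optimal unrestricted algorithm $A$ for $\F$, build a $\ast$-ary unbiased algorithm $A'$ with the same worst-case expected running time.

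The construction leverages two facts. First, the only $0$-ary unbiased distribution on $\{0,1\}^n$ is uniform, so $A'$'s initial query $x^{(1)}$ is forced to be uniformly distributed. Second, the $\ast$-ary model lets the step-$t$ operator depend on \emph{all} previously queried points $x^{(1)}, \ldots, x^{(t)}$. I would use the random $x^{(1)}$ implicitly to pin down a uniformly random Hamming automorphism $\alpha \in \Gamma$ and then have $A'$ simulate $A$ in the ``virtual frame'' determined by $\alpha$. Concretely, the step-$t$ variation operator takes the inputs $x^{(1)}, \ldots, x^{(t)}$, forms the virtual history $H_t := (\alpha(x^{(i)}), f(x^{(i)}))_{i \le t}$, feeds $H_t$ to $A$ to obtain a next-query distribution $D^A_t$ over $\{0,1\}^n$, samples $y \sim D^A_t$, and finally queries $\alpha^{-1}(y)$.

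Two checks remain. For \emph{unbiasedness}, I would verify that for any $\beta \in \Gamma$ and any history $H'$, the probability that $A'$ outputs $y$ from $H'$ equals the probability it outputs $\beta(y)$ from $\beta(H')$; this reduces to the change of variables $\alpha \mapsto \alpha\beta^{-1}$ in the average over $\alpha$. For \emph{running time}, observe that by construction the sequence of queries $A'$ produces on input $f$ is in distribution exactly the sequence $A$ would produce on the function $f \circ \alpha^{-1}$, so $\E[T(A', f)] = \E_\alpha[T(A, f \circ \alpha^{-1})]$. Provided $f \circ \alpha^{-1} \in \F$ for every $\alpha \in \Gamma$, the right-hand side is bounded by $\sup_{g \in \F} \E[T(A,g)]$, which equals the unrestricted complexity of $\F$.

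The main obstacle is precisely the proviso in the last step: for a class $\F$ that is not invariant under the full symmetry group, $f \circ \alpha^{-1}$ can escape $\F$ and the performance of $A$ on it is not a priori controlled. This is why the result of~\cite{ABB} is formulated using the more flexible general notion of unbiasedness of Definition~\ref{def:BBCABB}, which is tailored to a symmetry group that stabilises the class under consideration rather than the fixed Hamming-automorphism group. Under this reading, the symmetrization is carried out only over operators that preserve $\F$, so $f \circ \alpha^{-1} \in \F$ holds by construction and the final estimate closes cleanly. A secondary technical point is to package the step-$t$ map of $A'$ as a genuine $\ast$-ary unbiased operator on the current multiset of queried points, but this is a routine rewriting of the symmetrization.
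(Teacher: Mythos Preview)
The paper is a survey and does not supply its own proof of this theorem; it merely cites the result from~\cite{ABB} and points to the general notion of unbiasedness in Definition~\ref{def:BBCABB} as the setting in which it holds. Your proposal correctly reconstructs the symmetrization argument that underlies the result, and in particular you put your finger on the one genuine subtlety: the averaging must be over the $\F$-preserving group $\Pi(\F)$ of Definition~\ref{def:BBCABB}, not the full Hamming-automorphism group, precisely so that $f\circ\alpha^{-1}\in\F$ and the worst-case bound for $A$ closes the estimate. That observation is the content of the corollary.

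One expository quibble: saying that the uniformly random first query $x^{(1)}$ ``implicitly pins down'' the automorphism $\alpha$ is misleading, since a single point does not determine an element of the group. The cleaner and more standard packaging is to let $A'$ sample $\alpha$ uniformly from $\Pi(\F)$ as an internal coin flip at time zero, then at each step~$t$ feed $A$ the virtual history $(\alpha(x^{(i)}),f(x^{(i)}))_{i\le t}$ and query $\alpha^{-1}$ of whatever $A$ proposes; unbiasedness of each step then follows from the change of variables $\alpha\mapsto\alpha\beta^{-1}$ exactly as you indicate. This is a cosmetic rewrite of your construction, not a gap.
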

Apart from the work~\cite{ABB}, most research on the unbiased black-box model assumes a restriction on the arity of the variation operators. We therefore concentrate in the remainder of this chapter on these restricted setting. 

\subsection{The Unbiased Black-Box Model for Pseudo-Boolean Optimization}\label{sec:BBCunbiased-model}

With Definition~\ref{def:BBCunbiased-variation} and its characterizations at hand, we can now introduce the unbiased black-box models. The $k$-ary unbiased black-box model covers all algorithms that follow the blueprint of Algorithm~\ref{alg:BBCunbiasedAlgo}. Figure~\ref{fig:BBCunbiased} illustrates these algorithms. As in previous sections, the \emph{$k$-ary unbiased black-box complexity} of some class of functions $\F$ is the complexity of $\F$ with respect to all $k$-ary unbiased black-box algorithms.

\begin{figure}[t]
\begin{framed}
\begin{center}
\includegraphics[scale=1.5]{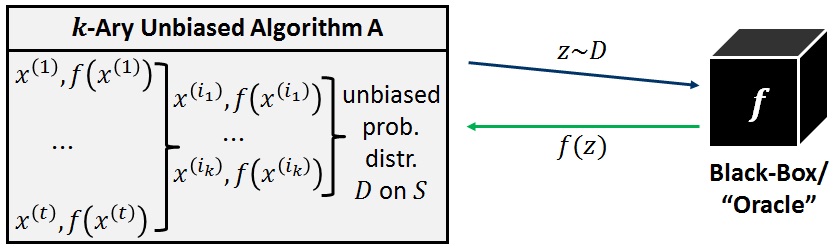}
\end{center}
\caption{In the $k$-ary unbiased black-box model, the algorithms can store the full query history. For every already evaluated search point $x$ the algorithm has access to the absolute function value $f(x) \in \R$. The distributions $D$ from which new solution candidates are sampled have to be unbiased. They can depend on up to $k$ previously evaluated solution candidates.}
\end{framed}
\label{fig:BBCunbiased}
\end{figure}

\begin{algorithm2e}
	\textbf{Initialization:} Sample $x^{(0)} \in \{0,1\}^n$ uniformly at random and query $f(x^{(0)})$\;
	 \textbf{Optimization:}
\For{$t=1,2,3,\ldots$}{
		Depending on $\big(f(x^{(0)}),\ldots, f(x^{(t-1)})\big)$ choose up to $k$ indices $i_1,\ldots,i_k \in [0..t-1]$ and a $k$-ary unbiased distribution $(D( \cdot \mid y^{(1)},\ldots,y^{(k)}))_{y^{(1)},\ldots,y^{(k)} \in \{0,1\}^n}$\;
		Sample $x^{(t)}$ according to $D(\cdot \mid x^{(i_1)},\ldots,x^{(i_k)})$ and query $f(x^{(t)})$\;
}
\caption{Scheme of a $k$-ary unbiased black-box algorithm}
\label{alg:BBCunbiasedAlgo}
\end{algorithm2e}

As Figure~\ref{fig:BBCunbiased} indicates, it is important to note, that in line~3 of Algorithm~\ref{alg:BBCunbiasedAlgo}, the $k$ selected previously evaluated search points $x^{(i_1)},\ldots,x^{(i_k)}$ do not necessarily have to be the $k$ \emph{immediately previously} queried ones. That is, the algorithm can store and is allowed to choose from \emph{all} previously sampled search points.

Note further that for all $k \leq \ell$, each $k$-ary unbiased black-box algorithm is contained in the $\ell$-ary unbiased black-box model. This is due to the fact that we do not require the indices to be pairwise distinct.

The unary unbiased black-box model captures most of the commonly used mutation-based algorithms like $(\mu + \lambda)$ and $(\mu,\lambda)$ EAs, Simulated Annealing, the Metropolis algorithm, and Randomized Local Search. The binary unbiased model subsumes many traditional genetic algorithms, such as $(\mu+\lambda)$ and $(\mu,\lambda)$~GAs using uniform crossover. As we shall discuss in Section~\ref{sec:BBCalgo}, the $(1+(\lambda,\lambda))$~GA introduced in~\cite{DoerrDE15} is also binary unbiased.

As a word of warning, we note that in~\cite{Sudholt13} and~\cite{Witt13j} lower bounds are proven for what the authors call \emph{mutation-based algorithms.} Their definitions are more restrictive than what Algorithm~\ref{alg:BBCunbiasedAlgo} proposes. The lower bounds proven in~\cite{Sudholt13,Witt13j} do therefore not (immediately) apply to the unary unbiased black-box model. A comparison of Theorem~12 from~\cite{Sudholt13} and Theorem~3.1(5) in~\cite{Witt13j} with Theorem~9 from~\cite{DoerrDY16} shows that there can be substantial differences (in this case, a multiplicative factor $\approx e$ in the lower bound for the complexity of \onemax with respect to all mutation-based and all unary unbiased black-box algorithms, respectively). One of the main differences between the different models is that in~\cite{Sudholt13,Witt13j} only algorithms using standard bit mutation are considered. This definition excludes algorithms like RLS for which the radius $r$ fed into the $\text{flip}_r$ variation operator is one deterministically and thus not sampled from a binomial distribution $\text{Bin}(n,p)$. When using the term ``mutation-based algorithms'', we should therefore always make precise which algorithmic framework we refer to. Here in this chapter, we will exclusively refer to the unary unbiased black-box algorithms defined via Algorithm~\ref{alg:BBCunbiasedAlgo}.

\subsection{Existing Results for Pseudo-Boolean Problems}\label{sec:BBCunbiased-results}
We survey existing bounds for the unbiased black-box complexity of several classical benchmark functions. As in previous sections, we proceed by function class, and not in historical order. 

\subsubsection{Functions with Unique Global Optimum}\label{sec:BBCunbiasedunique}

As discussed in Section~\ref{sec:BBCclasses}, the unrestricted black-box complexity of every function class $\F=\{f\}$ containing only one function $f$ is one, certified by the algorithm that simply queries a point $x \in \arg\max f$ in the first step. The situation is different in the unbiased black-box model, as the following theorem reveals.

\begin{theorem}[Theorem 6 in~\cite{LehreW10}]
\label{thm:BBCunbiasedunique}
Let $f:\{0,1\}^n \to \R$ be a function that has a single global optimum (i.e., in the case of maximization, the size of the set $\arg\max f$ is one). 
The unary unbiased black-box complexity of $f$ is $\Omega(n \log n)$.
\end{theorem}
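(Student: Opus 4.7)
The plan is to reduce the problem to a drift analysis on the Hamming distance from the current best query to the unique optimum $z := \arg\max f$.

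I start by sharpening Lemma~\ref{lem:BBCunbiased-unarycharacterization}: since Hamming automorphisms preserve the offspring distribution of any unbiased operator and act transitively on $\{0,1\}^n$, the family $(r_{p,x})_{x}$ appearing there must actually be independent of $x$. Thus every unary unbiased operator acts on a parent $y$ by drawing a radius $r \in [0..n]$ from some distribution (possibly depending on the history of function values) and flipping $r$ positions of $y$ chosen uniformly at random. In Algorithm~\ref{alg:BBCunbiasedAlgo} the algorithm may adaptively choose the parent $x^{(s_t)}$ as well as the distribution on $r$ based on the full sequence of previously observed function values. Let $D_t := H(x^{(t)}, z)$ and $D^*_t := \min_{s \le t} D_s$. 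Conditional on history, parent-distance $d_{\mathrm{par}} \ge D^*_t$, and radius $r$, we have $D_{t+1} = d_{\mathrm{par}} + r - 2K$, where the number $K$ of flipped ``wrong'' bits is $\text{Hyp}(n, d_{\mathrm{par}}, r)$-distributed with $E[K] = r d_{\mathrm{par}}/n$. Since $x^{(0)}$ is uniform, $D_0 \sim \Bin(n,1/2)$ and hence $D^*_0 \ge n/4$ with probability $1-o(1)$ by Chernoff.

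The technical core is a uniform one-step drift bound on $D^*_t$: I will show that there is a constant $C > 0$ such that for every $d \in (0, n/4]$ and every adversarial choice of $d_{\mathrm{par}} \ge d$ and of the radius distribution,
\[
E\bigl[D^*_t - D^*_{t+1} \,\big|\, D^*_t = d,\, \mathcal{H}_t\bigr] \;\le\; \frac{Cd}{n}.
\]
A decrease $D^*_{t+1} \le d - k$ requires $K \ge (d_{\mathrm{par}} + r - d + k)/2$. Since $E[K] = r d_{\mathrm{par}}/n \le r/4$ whenever $d_{\mathrm{par}} \le n/4$, Chv\'atal-style tail bounds for the hypergeometric distribution yield $P(K \ge (d_{\mathrm{par}} + r - d + k)/2) \le \exp(-\Omega(r + k))$. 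Summing this bound over $k \ge 1$ and optimizing over $r$ gives a quantity of order $1/n$, with the worst contribution coming from $r = 1$, for which a direct computation produces exactly $d/n$. The regime $d_{\mathrm{par}} > n/2$ is reduced to the previous one via the identity in distribution ``flip $r$ random bits of $x$ = flip $n-r$ random bits of $\bar x$''; the initial traversal from $D^*_0 \approx n/2$ down to $n/4$ contributes only $O(n)$ queries and is negligible.

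Applying the lower-bound version of the multiplicative drift theorem to $D^*_t$ with drift rate $\delta = C/n$, starting from $D^*_0 = \Omega(n)$, yields $E[T] \ge (n/C)\ln(\Omega(n)) = \Omega(n \log n)$, where $T := \inf\{t : D_t = 0\}$. The main obstacle is the uniformity of the drift bound: because the algorithm can condition on the entire history of function values, and in principle on perfect knowledge of $f$, one cannot optimize a single operator but must dominate the drift against \emph{every} admissible adversarial strategy. The key insight that makes this possible is that large radii $r$ are useless for the adversary---the hypergeometric ``mean gap'' $r(1/2 - d_{\mathrm{par}}/n)$ grows linearly in $r$ when $d_{\mathrm{par}} < n/2$, so the large-deviation events that would produce a big drop in $D^*$ are exponentially rare in $r$.
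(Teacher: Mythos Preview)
Your overall strategy---multiplicative drift on the minimum Hamming distance of queried points to the optimum---is the same as the paper's, and your use of the hypergeometric description of $\text{flip}_r$ together with tail bounds is exactly the right machinery. However, there is a genuine gap in your choice of potential.

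You define $D^*_t := \min_{s\le t} H(x^{(s)},z)$, tracking only the distance to $z$. The paper instead uses $P(t):=\min_{s\le t}\min\{H(x^{(s)},z),\,H(x^{(s)},\bar z)\}$, i.e., the distance to $z$ \emph{or} its bitwise complement, and this is not a cosmetic choice. With your potential the claimed one-step bound $E[D^*_t-D^*_{t+1}\mid D^*_t=d]\le Cd/n$ is simply false against an adversarial algorithm. Nothing in $D^*_t$ prevents the algorithm from walking some queried point toward $\bar z$ (this never decreases $D^*$, so it is ``free''), and once it has a parent with $d_{\mathrm{par}}=n-1$, applying $\text{flip}_n$ produces an offspring at distance~$1$ from $z$, giving a one-step drop of order $d$, not $d/n$.

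Your attempted reduction ``$\text{flip}_r$ on $x$ equals $\text{flip}_{n-r}$ on $\bar x$'' is distributionally correct but does not rescue the drift bound: after the reduction the effective parent $\bar x$ has distance $n-d_{\mathrm{par}}$ to $z$, and this quantity is \emph{not} bounded below by $D^*_t=d$ (indeed it can be $1$ in the example above), so the hypergeometric large-deviation argument you carried out for $d_{\mathrm{par}}\ge d$ does not apply. The fix, as in the paper, is to include the distance to $\bar z$ in the potential; then approaching $\bar z$ is itself progress that must be paid for, and the symmetry $\text{flip}_r\leftrightarrow\text{flip}_{n-r}$ genuinely reduces the regime $d_{\mathrm{par}}>n/2$ to the regime $d_{\mathrm{par}}\le n/2$ with respect to the \emph{same} potential. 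The paper additionally shows that large jumps in $P(t)$ are rare (needed for the lower-bound version of multiplicative drift) and restricts the drift analysis to the range $c\log\log n\le P(t)\le n/5$; you will need analogous ingredients once the potential is corrected.
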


Theorem~\ref{thm:BBCunbiasedunique} gives a $\Omega(n \log n)$ lower bound for the unary unbiased black-box complexity of several standard benchmark functions, such as \onemax, \leadingones, etc. We shall see below that for some of these classes, including \onemax, this bound is tight, since it is met by different unary unbiased heuristics, such as the $(1+1)$ EA or RLS. For other classes, including \leadingones, the lower bound can be improved through problem-specific arguments. 

The proof of Theorem~\ref{thm:BBCunbiasedunique} uses multiplicative drift analysis. To this end, the potential $P(t)$ of an algorithm at time $t$ is defined as the smallest Hamming distance of any of the previously queried search points $x^{(1)},\ldots,x^{(t)}$ to the unique global optimum $z$ or its bit-wise complement $\bar{z}$. The algorithm has identified $z$ (or its complement) if and only if $P(t)=0$. The distance to $\bar{z}$ needs to be regarded as the algorithm that first identifies $\bar{z}$ and then flips all bits obtains $z$ from $\bar{z}$ in only one additional query. As we have discussed for \textsc{jump} in Section~\ref{sec:BBCunrestrictedjump}, for some functions it can be substantially easier to identify $\bar{z}$ than to identify $z$ itself. This is true in particular if there are paths leading to $\bar{z}$ such as in the original jump functions $f_{\ell,z}$ discussed in Section~\ref{sec:BBCjumpalternative}. The key step in the proof of Theorem~\ref{thm:BBCunbiasedunique} is to show that in one iteration $P(t)$ decreases by at most $200 P(t)/n$, in expectation, provided that $P(t)$ is between $c\log\log n$ (for some positive constant $c>0$) and $n/5$. Put differently, in this case, $\E[P(t)-P(t+1)\mid P(t)] \leq \delta P(t)$ for $\delta:=200/n$. It is furthermore shown that the probability to make very large gains in potential is very small. These two statements allow the application of a multiplicative drift theorem, which bounds the total expected optimization time by $\Omega((\log(n/10)-\log\log(n))/\delta) = \Omega(n \log n)$, provided that the algorithm reaches a state $t$ with $P(t)\in (n/10,n/5]$. A short proof that every unary unbiased black-box algorithm reaches such a state with probability $1-e^{-\Omega(n)}$ then concludes the proof of Theorem~\ref{thm:BBCunbiasedunique}.

\subsubsection{OneMax}\label{sec:BBCunbiasedOM}

\textbf{The unary unbiased black-box complexity of OneMax.} Being a unimodal function, the lower bound of Theorem~\ref{thm:BBCunbiasedunique} certainly applies to \onemax, thus showing that no unary unbiased black-box optimization can optimize \onemax faster than in expected time $\Omega(n \log n)$. This bound is attained by several classical heuristics such as Randomized Local Search (RLS), the (1+1)~Evolutionary Algorithm (EA), and others. While the (1+1)~EA has an expected optimization time of $(1\pm o(1))e n \ln(n)$~\cite{DoerrFW10,Sudholt13}, that of RLS is only $(1\pm o(1)) n \ln(n)$. More precisely, it is $n \ln (n/2) + \gamma n \pm o(1)$~\cite{DoerrD16impact}, where $\gamma \approx 0.5772156649\ldots$ is the Euler-Mascheroni constant. The unary unbiased black-box complexity of \onemax is just slightly smaller than this term. It had been slightly improved by an additive $\sqrt{n \log n}$ term in~\cite{LaillevaultDD15} through iterated initial sampling. In~\cite{DoerrDY16} the following very precise bound for the unary unbiased black-box complexity of \onemax is shown. It is smaller than the expected running time of RLS by an additive term that is between $0.138 n \pm o(n)$ and $0.151 n \pm o(n)$. It is also proven in~\cite{DoerrDY16} that a variant of RLS that uses fitness-dependent neighborhood structures attains this optimal bound, up to additive $o(n)$ lower order terms. 

\begin{theorem}[Theorem~9 in~\cite{DoerrDY16}]
\label{thm:BBConemaxunary}
The unary unbiased black-box complexity of \onemax is $n \ln(n) - cn \pm o(n)$ for a constant $c$ between $0.2539$ and $0.2665$.
\end{theorem}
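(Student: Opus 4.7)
By Lemma~\ref{lem:BBCunbiased-unarycharacterization}, every unary unbiased variation operator applied to a parent $x$ is equivalent to first drawing a radius $r \in [0..n]$ from some distribution (possibly fitness-history-dependent) and then applying $\text{flip}_r$ to $x$. Under the instance $\OM_z$, if $x$ is at Hamming distance $d$ from $z$, the offspring's distance to $z$ is $d + r - 2K$, where $K \sim \mathrm{Hyp}(n,d,r)$ counts how many of the $r$ flipped positions were incorrect in $x$. Both bounds reduce to analyzing this hypergeometric progress law under the best radius schedule.

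\textbf{Upper bound ($c \ge 0.2539$).} I would construct and analyze an adaptive unary unbiased algorithm of RLS type that maintains a single best-so-far $x$, reads the level $d$ from $\OM_z(x) = n - d$, and applies $\text{flip}_{r^\ast(d)}$ with an explicitly tuned radius schedule $r^\ast$, accepting the offspring iff its $\OM_z$-value is at least $n - d$. Near the optimum (small $d$) the optimal choice coincides with the RLS choice $r^\ast(d) = 1$, which already yields the $n H_{n/2}$ term. Farther from the optimum one finds that a carefully chosen $r^\ast(d) > 1$ increases the expected per-iteration decrease of $d$ despite occasional failures: the schedule $r^\ast(d)$ is obtained by maximizing the hypergeometric drift $\E[\max(0, 2K-r)]$ for $K \sim \mathrm{Hyp}(n,d,r)$. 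Summing the resulting per-level waiting times $1/g_d(r^\ast(d))$ from the initial level (which is $n/2 \pm O(\sqrt n)$ with overwhelming probability after the uniform initialization, possibly after one application of $\text{flip}_n$) down to $d=0$ gives, by a routine integral approximation, total expected time at most $n \ln n - 0.2539 n + o(n)$. A tail bound on the improvement distribution, together with a standard restart-based argument, converts the expected-progress analysis into a bound on the expected optimization time.

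\textbf{Lower bound ($c \le 0.2665$).} I would apply Yao's minimax principle (Theorem~\ref{thm:BBCYao}) with the uniform distribution over $\onemax_n$, i.e., over uniformly random target strings $z$. For any deterministic unary unbiased algorithm, let $D_t := \min_{s \le t} H(x^{(s)}, z)$; success occurs at the first $t$ with $D_t = 0$. The key step is to bound, uniformly in the history $\mathcal{F}_{t-1}$, the expected drift $\E[D_{t-1} - D_t \mid \mathcal{F}_{t-1}]$. Because $z$ is uniform a priori and the algorithm is unbiased, symmetry implies that the posterior distribution of $H(x^{(i)}, z)$ for each stored parent $x^{(i)}$ is concentrated at its known value, and conditional on the choice of parent $x^{(i)}$ and radius $r$ at step $t$, the offspring's distance to $z$ follows exactly the hypergeometric transition law above. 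Maximizing the expected decrease over all pairs $(i,r)$ gives an upper bound on the one-step drift that matches, up to the constant in the linear term, the per-level progress rate achieved by the upper-bound algorithm. A sharp variable-drift theorem applied to the sequence $(D_t)_{t\ge 0}$ then yields the lower bound $n \ln n - 0.2665 n - o(n)$.

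\textbf{Main obstacle.} The principal difficulty is obtaining the tight linear-order constants. On the upper-bound side one must exhibit the optimal radius schedule $r^\ast$ and show that no significant time is lost to occasional deviations from the expected trajectory, which requires combining concentration of the hypergeometric increments with a high-probability argument for the total running time. On the lower-bound side the drift inequality must be sharp enough to rule out every adaptive strategy a unary unbiased algorithm could use---including restarts from older samples, randomized radius mixtures, and strategies that attempt to extract information from near-neutral offspring---while still matching the behavior of $r^\ast$ across all fitness levels. The residual gap between $0.2539$ and $0.2665$ reflects precisely this mismatch: the upper bound does not exploit all information-theoretically available progress, and the lower bound cannot yet exclude every exotic schedule. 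Closing this gap remains an open problem even for this most classical benchmark.
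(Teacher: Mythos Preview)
The survey itself does not prove this theorem; it only cites \cite{DoerrDY16} and remarks that ``a variant of RLS that uses fitness-dependent neighborhood structures attains this optimal bound, up to additive $o(n)$ lower order terms.'' Your proposal is consistent with that one-line description and is, in outline, the approach taken in the cited source: the upper bound comes from an elitist $(1+1)$-type algorithm that at distance $d$ from the optimum applies $\text{flip}_{r^\ast(d)}$ with $r^\ast(d)$ chosen to maximise the hypergeometric drift, and the lower bound comes from bounding the best possible one-step drift of any unary unbiased algorithm and integrating.

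Two remarks on details where your sketch diverges slightly from how the argument is actually carried out. First, Yao's minimax principle is unnecessary for the lower bound: by unbiasedness (Definition~\ref{def:BBCunbiased-variation}) the expected optimization time of any unary unbiased algorithm is identical on every $\OM_z$, so one may fix $z$ and analyse a single instance directly rather than randomising over targets. Second, the genuinely delicate step in the lower bound---which you flag but do not resolve---is showing that the drift of $D_t=\min_{s\le t}H(x^{(s)},z)$ is maximised by always mutating a current-best individual. The full-history model permits the algorithm to mutate \emph{any} previously queried point, and one must argue that applying $\text{flip}_r$ to a point at distance $d'>D_{t-1}$ cannot yield larger expected progress toward $D_{t-1}-1$ than applying the optimal $\text{flip}_{r^\ast(D_{t-1})}$ to a point at distance $D_{t-1}$. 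This monotonicity of the optimal hypergeometric drift in the parent's distance is what reduces the general unary unbiased lower bound to the elitist analysis, and it is the place where your sketch would need the most work to become a proof.
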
 

\textbf{The binary unbiased black-box complexity of OneMax.} When Lehre and Witt initially defined the unbiased black-box model, they conjectured that also the binary black-box complexity of \onemax was $\Omega(n \log n)$ [personal communication of Lehre in 2010]. In light of understanding the role and the usefulness of crossover in black-box optimization, such a bound would have indicated that crossover cannot be beneficial for simple hill climbing tasks. Given that in 2010 all results seemed to indicate that it is at least very difficult, if not impossible, to rigorously prove advantages of crossover for problems with smooth fitness landscapes, this conjecture came along very naturally. It was, however, soon refuted. In~\cite{DoerrJKLWW11}, a binary unbiased algorithm is presented that achieves linear expected running time on \onemax. 
\begin{theorem}[Theorem~9 in~\cite{DoerrJKLWW11}]
\label{thm:BBConemaxbinary}
The binary unbiased black-box complexity of \onemax and that of any other monotone function is at most linear in the problem dimension $n$.
\end{theorem}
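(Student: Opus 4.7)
The plan is to exhibit a single binary unbiased algorithm $\A$ and to bound its expected running time on every monotone function by $O(n)$. The $\Omega(n\log n)$ lower bound of Theorem~\ref{thm:BBCunbiasedunique} applies to every member of $\textsc{Monotone}_n$ since each such function has a unique optimum, so a linear upper bound must genuinely rely on a binary operator; unary unbiasedness suffers from a coupon-collector effect near the optimum, and uniform crossover is what should restore the ability to make large improvements per query.

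The algorithm $\A$ I would construct maintains a current best $x^{*}$ and, in each phase, does the following. First, produce two mutants $y$ and $y'$ by applying two independent instances of the unary unbiased $\text{flip}_r$ operator of Definition~\ref{def:BBCflip_r} to $x^{*}$, with a phase-dependent radius $r = r(t)$ that does not depend on absolute $f$-values but only on the history of accepted versus rejected moves. Second, form a child $c$ by uniform crossover of $y$ and $y'$, which is a binary unbiased operator. Third, replace $x^{*}$ by $c$ whenever $f(c) > f(x^{*})$. The per-phase cost is a constant number of queries, and the schedule of radii is designed so that the expected number of phases times the per-phase cost is $O(n)$. Because every decision $\A$ makes is of the form ``is $f(\text{new}) > f(\text{old})$?'', the algorithm is comparison-based, which is what will let the bound transfer from \onemax to arbitrary monotone $f$. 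A final application of Lemma~\ref{lem:BBChighprobability} converts any constant-probability success claim into an $O(n)$ expectation via independent restarts.

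The hard step is to prove that a uniform crossover of two independent $\text{flip}_r$-mutants of $x^{*}$ produces, with at least constant probability, a child whose agreement set with $z$ \emph{strictly contains} that of $x^{*}$, because then monotonicity forces $f(c) > f(x^{*})$ and the child is accepted. The heart of the analysis partitions the bits of $x^{*}$ into classes depending on which of $\{y,y'\}$ flipped them and on whether the flip was a correction (wrong bit to right) or a spoil (right bit to wrong); a binomial anti-concentration argument then shows that with constant probability the crossover keeps many corrections while discarding most spoils. Calibrating $r(t)$ so that the expected gain in agreement with $z$ is $\Theta(r(t))$ per accepted child and summing over phases yields the $O(n)$ bound. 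The comparison-based nature of the algorithm is what extends it from \onemax to all of $\textsc{Monotone}_n$: the argument invokes only the monotonicity axiom (strict containment implies strict $f$-increase) and never the absolute values of $f$.
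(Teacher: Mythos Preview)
Your proposal has a genuine gap: the algorithm you describe is, despite appearances, a \emph{unary} unbiased algorithm in disguise, and hence cannot beat the $\Omega(n\log n)$ bound of Theorem~\ref{thm:BBCunbiasedunique}. The point is that $y$ and $y'$ are two independent samples from the same unary unbiased distribution applied to $x^{*}$, and uniform crossover of $y$ and $y'$ is then another randomized function of $x^{*}$ alone. A short calculation shows that for each bit position the probability that $c$ differs from $x^{*}$ is exactly $r/n$, \emph{independently of whether that bit is currently correct or wrong}. So the composite map $x^{*}\mapsto c$ is itself a unary unbiased operator; your claimed ``anti-concentration'' step, in which crossover ``keeps many corrections while discarding most spoils,'' cannot hold, because nothing in the construction of $c$ distinguishes corrections from spoils before the final comparison with $x^{*}$. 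The whole scheme is therefore an elitist unary unbiased algorithm and is stuck at $\Theta(n\log n)$ on \onemax.

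What you are missing is that the second argument of the binary operator must carry \emph{information accumulated from previous fitness comparisons}, not just fresh randomness. The paper's algorithm does this by maintaining two strings $x$ and $y$ that agree precisely on the bits already determined to be optimal and disagree everywhere else; in each step it flips a uniformly random bit among the positions where $x$ and $y$ differ (a binary unbiased operation) in one of the two strings and accepts if the fitness increases. Each accepted step fixes one more bit, and the probability of picking the right string is $1/2$, so $(2+\eps)n$ steps suffice with high probability, and Lemma~\ref{lem:BBChighprobability} gives the $O(n)$ expectation. The essential idea you should take away is that the pair $(x,y)$ encodes the set of unsettled bits, and the binary operator exploits this encoding to avoid the coupon-collector effect; merely recombining two fresh mutants of the same parent cannot do this.
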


This bound is attained by the algorithm that keeps in the memory two strings $x$ and $y$ that agree in those positions for which the optimal entry has been identified already, and which differ in all other positions. In every iteration, the algorithm flips a fair random coin and, depending on the outcome of this coin flip, flips exactly one bit in $x$ or one bit in $y$. The bit to be flipped is chosen uniformly at random from those bits in which $x$ and $y$ disagree. The so-created offspring replaces its parent if and only if its function value is larger. In this case, the Hamming distance of $x$ and $y$ reduces by one. Since the probability to choose the right parent equals $1/2$, it is not difficult to show that, with high probability, for all constants $\eps>0$, this algorithm has optimized \onemax after at most $(2+\eps)n$ iterations. Together with Lemma~\ref{lem:BBChighprobability}, this proves Theorem~\ref{thm:BBConemaxbinary}.

A drawback of this algorithm is that it is very problem-specific, and it has been an open question whether or not a ``natural'' binary evolutionary algorithm can achieve an $o(n \log n)$ (or better) expected running time on \onemax. This question was affirmatively answered in~\cite{DoerrDE15} and~\cite{DoerrD15self}, as we shall discuss in Section~\ref{sec:BBCalgo}.

Whether or not the linear bound of Theorem~\ref{thm:BBConemaxbinary} is tight remains an open problem. In general, proving lower bounds for the unbiased black-box models of arities larger than one remains one of the biggest challenges in black-box complexity theory. Due to the greatly enlarged computational power of black-box algorithms using higher arity operators, proving lower bounds in these models seems to be significantly harder than in the unary unbiased model. As a matter of fact, the best lower bound that we have for the binary unbiased black-box complexity of \onemax is the $\Omega(n/\log n)$ one stated in Theorem~\ref{thm:BBCunrestrictedOM}, and not even constant-factor improvements of this bound exist.

\textbf{The $k$-ary unbiased black-box complexity of OneMax.} In~\cite{DoerrJKLWW11}, a general bound for the $k$-ary unbiased black-box complexity of \onemax of order $n/\log k$ had been presented (cf. Theorem~9 in~\cite{DoerrJKLWW11}). This bound has been improved in~\cite{DoerrW14arity}.

\begin{theorem}[Theorem~3 in~\cite{DoerrW14arity}]
\label{thm:BBConemaxkary}
For every $2 \leq k \leq \log n$, the $k$-ary unbiased black-box complexity of \onemax is of order at most $n/k$. For $k \ge \log n$, it is $\Theta(n/\log n)$.
\end{theorem}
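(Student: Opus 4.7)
The plan is to treat the two regimes of $k$ separately, with the lower bound common to both.

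For $k\ge\log n$, the lower bound $\Omega(n/\log n)$ is inherited directly from Theorem~\ref{thm:BBCunrestrictedOM} combined with Lemma~\ref{lem:BBCunrestrictedlower}, since every $k$-ary unbiased algorithm is in particular an unrestricted one. For the matching upper bound, the plan is to realize the fixed sampling sequence of Theorem~\ref{thm:BBCunrestrictedOMderandomized} inside the $k$-ary unbiased model. I would first spend $O(\log n)$ unary unbiased queries to construct $\lceil\log_2 n\rceil$ reference strings $a^{(1)},\ldots,a^{(\lceil\log_2 n\rceil)}$ whose bitwise signatures $(a^{(1)}_i,\ldots,a^{(\lceil\log_2 n\rceil)}_i)$ are pairwise distinct across $i\in[n]$; starting from a uniform initial string this is achievable with high probability because each new random reference halves the expected size of each signature class, and a constant number of retries can be tolerated. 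Once such a ``positional address system'' is in place and $k\ge\log n$, any desired bit string $y\in\{0,1\}^n$, including each of the $\Theta(n/\log n)$ Erd\H{o}s--R\'enyi probes and the final optimum, can be produced by a single $k$-ary unbiased operator, because the address system breaks precisely the symmetries that $\oplus$- and permutation-invariance impose.

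For $2\le k\le\log n$, the plan is a block-wise version of the same idea. Pick $k':=\min\{k-1,\lceil\log n\rceil-k+1\}$ and sample $k'$ reference strings via unary unbiased operators. Their joint bit signatures partition $[n]$, with overwhelming probability, into $2^{k'-1}$ classes $C_1,\ldots,C_{2^{k'-1}}$ (modulo complementation) of size $m=\Theta(n/2^{k'-1})$ each. For each class $C_j$, I would run an Erd\H{o}s--R\'enyi subroutine consisting of $O(m/\log m)$ queries produced by a $k$-ary unbiased operator whose $k$ arguments are the $k'$ references together with the current best string $z$: the operator randomises uniformly the bits of the query inside $C_j$ and copies $z$ outside $C_j$. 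Pairwise differences of the resulting \onemax values isolate the within-$C_j$ contribution, so Theorem~\ref{thm:BBCunrestrictedOMderandomized} applied to the sub-instance of size $m$ identifies the target bits in $C_j$. After processing all blocks the full target is known, and a final $k$-ary unbiased query (using the Erd\H{o}s--R\'enyi samples themselves as auxiliary addresses to resolve the within-class patterns) outputs it. The cost bookkeeping splits along the choice of $k'$: for $k'=k-1$ (small $k$) the total is $O(n/\log m)=O(n/\log n)\subseteq O(n/k)$; for $k'=\lceil\log n\rceil-k+1$ (large $k$) each of the $n/2^k$ blocks has size $\Theta(2^k)$ and cost $O(2^k/k)$, giving the target $O(n/k)$.

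The main obstacle will be verifying that each of the constructed operators is genuinely $k$-ary unbiased, i.e.\ satisfies both conditions of Definition~\ref{def:BBCunbiased-variation}. The clean route is to describe each operator uniformly as: compute, for every position $i$, its signature with respect to the $k$ arguments, assign to each signature class (modulo the $\oplus$-action) an output distribution that depends only on the class and is exchangeable over the positions of that class, and assemble. Both invariances then follow from the equivariance of signatures under Hamming automorphisms, as in the characterisation behind Lemma~\ref{lem:BBCunbiased-characterization}. A secondary technical point is to confirm that the current best $z$, which plays the role of one argument, can itself be maintained by $k$-ary unbiased updates throughout the block sweep; this will use the Erd\H{o}s--R\'enyi queries of earlier blocks as additional signature material so that the needed single-block flips can be expressed within the arity budget.
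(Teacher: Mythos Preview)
Your approach and the paper's share the block-wise-plus-Erd\H{o}s--R\'enyi skeleton, but there is a genuine gap in the step you call ``secondary'': writing the learned bits back into the current best string within the arity budget. With your $k'$ random references plus $z$ as arguments, all positions inside a block $C_j$ carry the \emph{same} signature, so any $k$-ary unbiased operator built from these inputs is symmetric on $C_j$; it can sample uniformly there (which suffices for learning) but cannot set the bits of $C_j$ to a specific pattern. Your proposed fix---feeding Erd\H{o}s--R\'enyi samples back in as extra address material---does not survive the arity constraint: a single $k$-ary operator accepts only $k$ arguments, and $k$ random strings distinguish at most $2^{k-1}$ positions, far fewer than your block size $m=n/2^{k-2}$ in the small-$k$ regime. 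A sanity check makes the overclaim visible: for $k=2$ your small-$k$ branch yields $k'=1$, a single block of size $n$, and a claimed cost $O(n/\log n)$---which would improve the best known binary unbiased upper bound of $O(n)$ (Theorem~\ref{thm:BBConemaxbinary}) down to the $\Omega(n/\log n)$ lower bound, an explicitly open problem in this survey.

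The paper's route avoids this by working with blocks of size $2^{k-2}$ and, crucially, by constructing the $k$ operator arguments so that each of the $2^{k-2}$ block positions sits in its own signature class relative to these arguments. That is precisely the ``encoding strategy'' the sketch highlights: with such inputs a $k$-ary unbiased operator has unrestricted control over the block, so one can both run the (derandomised) Erd\H{o}s--R\'enyi probes \emph{and} write the identified target bits before advancing to the next block. The arithmetic is then $\Theta(n/2^{k-2})$ blocks times $\Theta(2^{k-2}/k)$ queries each, giving $O(n/k)$. Your random references let you read but not write; the missing ingredient is exactly this unrestricted-access encoding of a size-$2^{k-2}$ window.
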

Note that for $k \ge \log n$, the lower bound in Theorem~\ref{thm:BBConemaxkary} follows from the $\Omega(n/\log n)$ unrestricted black-box complexity of \onemax discussed in Theorem~\ref{thm:BBCunrestrictedOM}. 

The main idea to achieve the results of Theorem~\ref{thm:BBConemaxkary} can be easily described. For a given $k$, the bit string is split into blocks of size $k-2$. This has to be done in an unbiased way, so that the ``blocks'' are not consecutive bit positions, but some random $k-2$ not previously optimized ones. Similarly to the binary case, two reference strings $x$ and $y$ are used to encode which $k-2$ bit positions are currently under investigation; namely the $k-2$ bits in which $x$ and $y$ disagree. By the same encoding, two other strings $x'$ and $y'$ store which bits have been optimized already, and which ones have not been investigated so far. To optimize the $k-2$ bits in which $x$ and $y$ differ, the derandomized version of the result of Erd{\H{o}}s and R{\'e}nyi (Theorem~\ref{thm:BBCunrestrictedOMderandomized}) is used. Applied to our context, this result states that there exists a sequence of $\Theta(k/\log k)$ queries which uniquely determines the entries in the $k-2$ positions. Since $\Theta(n/k)$ such blocks need to be optimized, the claimed total expected optimization time of $\Theta(n/\log k)$ follows. Some technical difficulties need to be overcome to implement this strategy in an unbiased way. To this end, in~\cite{DoerrW14arity} a generally applicable \textbf{encoding strategy} is presented that with $k$-ary unbiased variation operators simulates a memory of $2^{k-2}$ bits that can be accessed in an unrestricted fashion.

\subsubsection{LeadingOnes}\label{sec:BBCunbiasedLO}

\textbf{The unary unbiased black-box complexity of LeadingOnes.} 
Being a classic benchmark problem, non-surprisingly, Lehre and Witt presented already in~\cite{LehreW12} a first bound for the unbiased back-box complexity of \leadingones. 

\begin{theorem}[Theorem~2 in~\cite{LehreW12}]
\label{thm:BBCLOunary}
The unary unbiased black-box complexity of \leadingones is $\Theta(n^2)$.
\end{theorem}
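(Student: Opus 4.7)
The plan is to establish matching upper and lower bounds of order $n^2$. For the upper bound, Randomized Local Search (RLS)---which in each iteration applies $\text{flip}_1$ to the current incumbent and accepts offspring of no smaller LO-value---is a unary unbiased algorithm in the sense of Algorithm~\ref{alg:BBCunbiasedAlgo}, since $\text{flip}_1$ is unbiased by Definition~\ref{def:BBCflip_r} and the selection rule depends only on the LO-values. A fitness-level argument shows its expected optimization time on $\LO_{z,\sigma}$ is $O(n^2)$: from the best-so-far point with LO-value $k$, an improvement requires flipping the unique position $\sigma(k+1)$, an event of probability $1/n$, and summing the expected waiting times over $k = 0, \dots, n-1$ yields the bound.

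For the lower bound, I would apply Yao's minimax principle (Theorem~\ref{thm:BBCYao}) to the uniform distribution over $\leadingones_n$: it suffices to show that every deterministic unary unbiased algorithm $A$ has expected running time $\Omega(n^2)$ on a uniformly random instance $\LO_{z,\sigma}$. Let $X_t$ denote the largest LO-value observed among the first $t$ queries of $A$, and let $H_t$ denote the history up to time $t$. The core step is an additive-drift estimate
\[
\E[X_{t+1} - X_t \mid H_t, X_t = k] \le \frac{c}{n}
\]
valid for some absolute constant $c$ and all $k < n/2$, from which the additive drift theorem yields $\E[T(A, \LO_{z,\sigma})] \ge (n/2)/(c/n) = \Omega(n^2)$.

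To prove this drift bound, note that by Lemma~\ref{lem:BBCunbiased-unarycharacterization} the next query $x^{(t+1)}$ is obtained by applying $\text{flip}_r$ (with $r$ drawn from an $A$-chosen distribution depending on $H_t$) to some previously evaluated point $x^{(i)}$. For $X_{t+1} > X_t$ the offspring must agree with $z$ on all positions $\sigma(1), \dots, \sigma(k+1)$. The crucial symmetry, provable via a Hamming-automorphism argument (Lemma~\ref{lem:BBCunbiased-characterization}) combining the unbiasedness of $A$ with the uniform prior on $(z,\sigma)$, is that conditional on $H_t$ and $X_t = k$ the position $\sigma(k+1)$ is uniformly distributed over a subset of $[n]$ of size at least $n-k$. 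Consequently, for any fixed choice of $r$ and $x^{(i)}$ the flipped set of $r$ positions hits $\sigma(k+1)$ with the correct polarity and simultaneously avoids disturbing the prefix only with probability $O(1/n)$; averaging over $r$ preserves this estimate.

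The main obstacle will be formalizing this conditional-uniformity claim in the presence of accumulated information. The history $H_t$ may reveal substantial partial information about the prefix set $\{\sigma(1), \dots, \sigma(k)\}$ and the corresponding bits of $z$; one must argue that, despite this, any two ``unresolved'' candidate positions for $\sigma(k+1)$ play interchangeable roles under a Hamming automorphism compatible with $A$'s decision process. Once this is made precise, the remaining computation for general $r$ is routine, since larger $r$ increases the probability of destroying the established prefix faster than it increases the probability of landing on $\sigma(k+1)$, so the $r = 1$ case essentially governs the bound.
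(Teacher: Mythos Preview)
Your upper bound via RLS matches the paper. For the lower bound, the high-level plan---additive drift on a prefix-length potential---is right, but your execution differs from the paper's in two ways, and your stated drift bound has a gap.

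First, the paper does not use Yao's minimax principle. Since every unary unbiased algorithm behaves identically in distribution on all instances of $\leadingones_n$ (an immediate consequence of unbiasedness, Lemma~\ref{lem:BBCunbiased-characterization}), one simply fixes the instance to $\LO = \LO_{(1,\ldots,1),\id}$ and runs the drift analysis directly over the algorithm's own randomness. Your reduction to ``deterministic unary unbiased algorithms'' is conceptually awkward: unbiased variation operators other than $\text{flip}_0$ and $\text{flip}_n$ are intrinsically random, so the class of deterministic algorithms you would hand to Yao is not the natural one, and in any case the detour is unnecessary.

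Second---and this is where your drift claim breaks---the paper's potential is not $X_t = \max_i \LO(x^{(i)})$ but
\[
P_t \;=\; \max_i \,\max\!\bigl(\LO(x^{(i)}),\, \LO(\overline{x^{(i)}})\bigr),
\]
i.e., it also tracks the number of leading \emph{zeros}. This symmetrization is essential: $P_t$ is invariant under complementation, which neutralizes large-$r$ moves. Under your potential it does not, and your assertion that ``the $r=1$ case essentially governs the bound'' fails at the extreme $r=n$. Applying $\text{flip}_n$ to a stored string of LO-value~$0$ produces an offspring whose expected LO-value is $\Theta(1)$, so for constant $k$ your drift $\E[X_{t+1}-X_t \mid H_t, X_t=k]$ is $\Theta(1)$, not $O(1/n)$; your bound ``for all $k<n/2$'' is therefore false as stated. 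With the symmetrized potential the paper proves an expected one-step increase of at most $4/(k+1)$ once $k\ge n/2$, argues that $P_t$ passes through the window $[n/2,\,3n/4]$ with probability $1-e^{-\Omega(n)}$, and then applies additive drift to the remaining gain of $n/4$ to obtain $\Omega(n^2)$. Your conditional-uniformity idea could likely be salvaged by restricting to $k\ge c\log n$ and handling $r$ near $n$ separately, but that is precisely the work the paper's choice of potential does for free.
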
 

Theorem~\ref{thm:BBCLOunary} can be proven by drift analysis. To this end, in~\cite{LehreW12} a potential function is defined that maps the state of the search process at time $t$ (i.e., the sequence $\{\big(x^{(1)},f(x^{(1)})\big), \ldots, \big(x^{(t)},f(x^{(t)})\big)\}$ of the pairs of search points evaluated so far and their respective function values) to the largest number of initial ones and initial zeros in any of the $t+1$ strings $x^{(1)},\ldots,x^{(t)}$. It is then shown that a given a potential $k$ cannot increase in one iteration by more than an additive $4/(k+1)$ term, in expectation, provided that $k$ is at least $n/2$. Since with probability at least $1-e^{-\Omega(n)}$ any unary unbiased black-box algorithm reaches a state in which the potential is between $n/2$ and $3n/4$, and since from this state a total potential of at least $n/4$ must be gained, the claimed $\Omega(n^2)$ bound follows from a variant of the additive drift theorem. More precisely, using these bounds, the additive drift theorem allows shows that the total optimization time of any unary unbiased black-box algorithm is at least $(n/4)/\big(4/(n/2))\big)=\Omega(n^2)$.

\textbf{The binary unbiased black-box complexity of LeadingOnes.} 
Similarly to the case of \onemax, the binary unbiased black-box complexity of \leadingones is much smaller than its unary one. 

\begin{theorem}[Theorem~14 in~\cite{DoerrJKLWW11}]
\label{thm:BBCLObinary}
The binary unbiased black-box complexity of \leadingones is $O(n \log n)$.
\end{theorem}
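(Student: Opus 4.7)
The plan is to design a binary unbiased algorithm that optimises every $\LO_{z,\sigma}\in\leadingones_n$ within $O(n\log n)$ expected queries. The algorithm maintains a current-best string $x$ with $\LO_{z,\sigma}(x)=k$ and spends $O(\log n)$ queries per \emph{level} to identify the next target position $\sigma(k+1)$ and upgrade $x$ to some $x'$ with $\LO_{z,\sigma}(x')\geq k+1$; summing over the $n$ levels gives $O(n\log n)$. By Lemma~\ref{lem:BBChighprobability}, which applies in the unbiased model since restarts are permitted, it suffices to achieve this within $O(n\log n)$ queries with constant success probability.

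The workhorse is a binary unbiased \emph{half-split crossover} $\mathrm{split}$ defined as follows: given $u,v\in\{0,1\}^n$, compute the disagreement set $D=\{i\in[n]\mid u_i\neq v_i\}$, sample a uniformly random subset $D_1\subseteq D$ of size $\lceil|D|/2\rceil$, and return the string $z$ with $z_i=v_i$ for $i\in D_1$ and $z_i=u_i$ otherwise. Both $\oplus$-invariance (the set $D$ is preserved by simultaneous XOR-shifts of the inputs, and the output shifts accordingly) and permutation invariance (the uniform choice of $D_1$ within $D$ is symmetric under coordinate permutations) are immediate from Definition~\ref{def:BBCunbiased-variation}; equivalently, $\mathrm{split}$ commutes with every Hamming automorphism in the sense of Lemma~\ref{lem:BBCunbiased-characterization}. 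Now suppose we hold a reference $x$ with $\LO(x)=k$ and a \emph{witness} $y$ with $\LO(y)>k$. Positions $\sigma(1),\ldots,\sigma(k)$ carry the same correct bit in $x$ and $y$ and hence lie outside $D$, while $\sigma(k+1)$ lies in $D$ because only $y$ is correct there. One query to $z:=\mathrm{split}(x,y)$ reveals, via $\LO(z)$, whether $\sigma(k+1)\in D_1$ (in which case $\LO(z)\geq k+1$) or $\sigma(k+1)\in D\setminus D_1$ (in which case $\LO(z)=k$). We then recurse on the pair $(x,z)$ or $(z,y)$ whose disagreement set equals the surviving half of $D$. After $O(\log n)$ iterations the active disagreement set shrinks to the singleton $\{\sigma(k+1)\}$ and a final $\mathrm{split}$ call copies the correct bit from the witness, producing the promoted $x'$ with $\LO(x')\geq k+1$.

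The main obstacle is ensuring that a valid witness $y$ with $\LO(y)>\LO(x)$ is available at every level without exceeding the budget. Two observations make this manageable. First, any witness with $\LO(y)=m$ stays valid for every intermediate level $\LO(x),\ldots,m-1$, so a single witness amortises over $m-\LO(x)$ binary searches. Second, a uniformly random string (obtained via a $0$-ary unbiased sample) satisfies $\LO(y)\geq k+1$ with probability $2^{-(k+1)}$, which is affordable as long as $k=O(\log n)$ and supplies the initial witnesses essentially for free. For larger $k$ one generates fresh witnesses by a unary unbiased mutation $\mathrm{flip}_r$ of the current $x$ at a suitably tuned rate, and crucially one re-uses byproduct strings produced during previous binary searches whose $\LO$-values happen to exceed the next target level. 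Pairing each witness with the range of levels it covers and charging its generation cost to those levels, an amortised analysis along the lines of~\cite{DoerrJKLWW11} bounds the total witness-generation cost by $O(n\log n)$, matching the total binary-search cost and establishing the theorem. Making this accounting tight is the technical heart of the proof—naive ``one fresh witness per level'' bookkeeping yields only $O(n^2)$, and obtaining $O(n\log n)$ genuinely requires the interplay between witness reuse and binary-search byproducts.
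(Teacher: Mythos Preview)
Your binary-search idea via the half-split crossover is essentially what the paper does; the crossover is binary unbiased and isolating $\sigma(k+1)$ within the disagreement set in $O(\log n)$ queries is exactly the intended mechanism. The gap is in your handling of witnesses. The amortisation you sketch does not give $O(n\log n)$: a fresh witness produced from $x$ by $\mathrm{flip}_r$ must flip $\sigma(k+1)$ while preserving $\sigma(1),\ldots,\sigma(k)$, which costs $\Theta(n)$ expected trials regardless of $r$, and the resulting witness satisfies $\LO(y)-k=1+\mathrm{Geom}(1/2)$, so it covers only $O(1)$ levels in expectation. That is $\Theta(n^2)$, and the ``byproduct'' strings from earlier binary searches do not systematically reach beyond the current witness's $\LO$-value either, since the witness is wrong at $\sigma(m+1)$ and the byproducts inherit that bit from the witness on the relevant half with constant probability.

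The paper's route eliminates the witness problem altogether by borrowing the two-string encoding from the binary \onemax algorithm: maintain a pair $(x,y)$ that \emph{agree} on the decided positions $\sigma(1),\ldots,\sigma(k)$ (with the correct values) and \emph{differ on every undecided position}. Start with a uniform $x$ and its bitwise complement $y=\bar{x}$ (a unary unbiased step). Then at every level exactly one of $x,y$ is correct at $\sigma(k+1)$, so one of the two already has $\LO\geq k+1$ and serves as the witness for free; the binary search runs inside $D(x,y)$ as you describe. After isolating $\sigma(k+1)$, flipping that single bit in the lagging string restores the invariant for the next level with no regeneration cost. This is the ``two strings encode the optimised bits'' idea the paper refers to, and it is what turns your per-level $O(\log n)$ search into a clean $O(n\log n)$ total without any amortisation.
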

The algorithm achieving this bound borrows its main idea from the binary unbiased one used to optimize \onemax in linear time, which we have described after Theorem~\ref{thm:BBConemaxbinary}. We recall that the key strategy was to use two strings to encode those bits that have been optimized already. In the $O(n \log n)$ algorithm for \leadingones this approach is combined with a \emph{binary search} for the (unique) bit position that needs to be flipped next. Such a binary search step requires $O(\log n)$ steps in expectation. Iterating it $n$ times gives the claimed $O(n \log n)$ bound. 

As in the case of \onemax, it is not known whether the bound of Theorem~\ref{thm:BBCLObinary} is tight. The best known lower bound is the $\Omega(n \log\log n)$ one of the unrestricted black-box model discussed in Theorem~\ref{thm:BBCunrestrictedLO}.

\textbf{The complexity of LeadingOnes in the unbiased black-box models of higher arity.} 
The $O(n \log n)$ bound presented in Theorem~\ref{thm:BBCLObinary} further reduces to at most $O(n \log (n) / \log \log n)$ in the ternary unbiased black-box model. 

\begin{theorem}[Theorems~2 and~3 in~\cite{DoerrW11EA}]
\label{thm:BBCLOternary}
For every $k\ge 3$, the $k$-ary unbiased black-box complexity of \leadingones is of order at most $n \log (n) / \log \log n$. This bound also holds in the combined $k$-ary unbiased ranking-based black-box model, in which instead of absolute function values the algorithm can make use only of the ranking of the search points induced by the optimization problem instance $f$.
\end{theorem}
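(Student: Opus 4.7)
The plan is to sharpen the binary-unbiased algorithm behind Theorem~\ref{thm:BBCLObinary} by using the extra arity ($k\ge 3$) to locate each newly-needed bit of the hidden permutation in $O(\log n/\log\log n)$ queries instead of $O(\log n)$. Since $n$ bits are to be identified in total, this yields the claimed $O(n\log n/\log\log n)$ bound.

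First I would set up an encoding that generalises the two-reference-string trick of Theorem~\ref{thm:BBCLObinary}. At any time the algorithm holds a current best $x$ with $\LO(x)=\ell$ together with a constant-size collection of helper strings from which, via the patterns of agreement/disagreement of their bits, two things are reconstructible without reference to absolute positions: (i) the set $U$ of \emph{undecided} positions $\{\sigma(\ell+1),\dots,\sigma(n)\}$, and (ii) an implicit assignment of a label from an alphabet of size $b=\Theta(\log n)$ to each position in $U$. Because the helpers are built up only by $k$-ary unbiased operations from the history, this ``labelling'' lives entirely inside the bit patterns of the helpers, and no operator ever needs to know which concrete bit position carries which label. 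This is exactly the synthetic-memory device that underlies Theorem~\ref{thm:BBConemaxkary}, and for $k\ge 3$ a fixed-size helper set already suffices to encode the required $b$-way labelling of $U$.

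Second, to identify $\sigma(\ell+1)$ I would replace binary search by a $b$-way probe strategy. Each probe is produced by applying a $k$-ary unbiased operator to $x$ and the helpers; by design it flips all undecided positions whose label lies in a chosen subset of $[b]$. The value $\LO(\mathrm{probe})-\ell$ then depends on the labels of $\sigma(\ell+1),\sigma(\ell+2),\dots$ and, over a small family of such probes, can pin down the label of $\sigma(\ell+1)$ in $O(1)$ queries. Repeating this label-refinement $O(\log_b n)=O(\log n/\log\log n)$ times shrinks the candidate set from $|U|$ down to a single element, which is then flipped by a final unbiased operation; this advances $\ell$ by one at amortised cost $O(\log n/\log\log n)$ queries. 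Summing over all $n$ bits gives the bound. The ranking-based extension is immediate: every branching decision is of the form ``did the probe strictly increase $\LO$?'' or ``which probe in a small batch yielded the largest $\LO$?'', both of which depend only on the ranking of the queried points.

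The main obstacle I expect is Step~2: arranging that each probe extracts $\Omega(\log\log n)$ bits of information about the still-unknown label of $\sigma(\ell+1)$, while the probe is producible by a $k$-ary unbiased operator that cannot see bit positions. The unbiasedness constraint forces all structural information --- which positions are decided, which label each undecided position carries, which labels are being tested by the probe --- to live inside the contents of the helper strings and to be accessed only by position-oblivious coordinate-wise rules. Verifying that such probes admit the necessary information-theoretic yield (so that $O(1)$ of them determines one label) and that the helpers can be cheaply refreshed between label-refinement rounds is the delicate combinatorial heart of the argument, but the simulated-memory construction behind Theorem~\ref{thm:BBConemaxkary} provides a template on which this can be built.
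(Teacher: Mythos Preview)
Your approach diverges from the paper's and, as stated, contains a real gap.

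The paper does not speed up the per-bit search by widening the branching factor. Instead it \emph{parallelises} the binary search across several consecutive target positions: the optimisation is split into phases, and in each phase the algorithm identifies $\kappa=O(\sqrt{\log n})$ positions $\sigma(\ell+1),\dots,\sigma(\ell+\kappa)$ together, using $O(\kappa^{3}/\log(\kappa^{2}))$ queries. With $n/\kappa$ phases this gives $O(n\kappa^{2}/\log(\kappa^{2}))=O(n\log n/\log\log n)$. The gain comes from the fact that a single probe's $\LO$-value carries information about a whole prefix $\sigma(\ell+1),\sigma(\ell+2),\dots$ simultaneously; this is exploitable when several positions are sought in parallel, but not when you hunt one position at a time.

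The gap in your plan is the claim that, for fixed $k\ge 3$, a \emph{constant-size} helper collection can encode a $b=\Theta(\log n)$-way labelling of the undecided positions. Under unbiasedness, an operator applied to $(x,h_{1},\dots,h_{k-1})$ can distinguish a position $i$ only by its pattern of agreements/disagreements across the $k$ inputs; with $O(1)$ helpers that yields $O(1)$ position classes, not $\Theta(\log n)$. The encoding device behind Theorem~\ref{thm:BBConemaxkary} supplies $2^{k-2}$ bits of simulated memory---two bits for $k=3$---which lets you manipulate a block of $k-2$ positions, not attach $\log\log n$-bit tags to all $n$ positions. To realise a genuine $\Theta(\log n)$-way labelling you would need $\Theta(\log\log n)$ helper strings, and then either the arity must grow to $\Theta(\log\log n)$ (contradicting fixed $k$), or each probe must be assembled by a chain of $\Theta(\log\log n)$ ternary compositions, each of which \emph{is itself a query}; the extra queries exactly cancel the branching-factor gain and you are back to $\Theta(\log n)$ queries per bit. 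The ranking-based claim compounds this: with only comparisons available, ``did the probe beat $x$?'' yields one bit and ``which of an $O(1)$-size batch is best?'' yields $O(1)$ bits, so pinning down one of $\Theta(\log n)$ labels in $O(1)$ ranking queries is information-theoretically impossible without the parallel-prefix structure the paper actually uses.
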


The algorithm that certifies the upper bound of Theorem~\ref{thm:BBCLOternary} uses the additional power gained through the larger arity to \emph{parallelize} the binary search of the binary unbiased algorithm described after Theorem~\ref{thm:BBCLObinary}. More precisely, the optimization process is split into phases. In each phase, the algorithm identifies the entries of up to $k:=O(\sqrt{\log n})$ positions. It is shown that each phase takes $O(k^3/\log k^2)$ steps in expectation. Since there are $n/k$ phases, a total expected optimization time of $O(nk^2/\log k^2) = O(n \log (n) / \log \log n)$ follows. 

The idea to parallelize the search for several indices was later taken up and further developed in~\cite{AfshaniADLMW12}; where an iterative procedure with \emph{overlapping} phases is used to derive the asymptotically optimal $\Theta(n \log\log n)$ unrestricted black-box algorithm that proves Theorem~\ref{thm:BBCunrestrictedLO}. 

It seems plausible that higher arities allow a larger degree of parallelization, but no formal proof of this intuition exists. In the context of \leadingones, it would be interesting to derive a lower bound for the smallest value of $k$ such that an asymptotically optimal $k$-ary unbiased $\Theta(n \log\log n)$ black-box algorithm for \leadingones exists. As a first step towards answering this question, the above-sketched encoding and sampling strategies could be applied to the algorithm presented in~\cite{AfshaniADLMW12}, to understand the smallest arity needed to implement this algorithm in an unbiased way.

\subsubsection{Jump}\label{sec:BBCunbiasedjump}

Jump functions are benchmark functions, which are observed as difficult for evolutionary approaches because of their large plateaus of constant and low fitness around the global optimum. One would expect that this is reflected in its unbiased back-box complexity, at least in the unary model. Surprisingly, this is not the case. In~\cite{DoerrDK15jump} it is shown that even extreme jump functions that reveal only the three different fitness values $0$, $n/2$, and $n$ have a small polynomial unary unbiased black-box complexity. That is, they can be optimized quite efficiently by unary unbiased approaches. This result indicates that efficient optimization is not necessarily restricted to problems for which the function values reveal a lot of information about the instance at hand.  

As discussed in Section~\ref{sec:BBCunrestrictedjump}, the literature is not unanimous with respect to how to generalize the jump function defined in~\cite{DrosteJW02} to a problem class. The results stated in the following apply to the jump function defined in~\eqref{def:BBCjump}. In the unbiased black-box model, we can assume without loss of generality that the underlying target string is the all-ones string $(1,\ldots,1)$. That is, to simplify our notation, we drop the subscript $z$ and assume that for every $\ell < n/2$ we regard the function that assigns to every $x \in \{0,1\}^n$ the function value
$$
\jump_{\ell}(x) := 
\begin{cases}
n,				&\mbox{if }|x|_1 = n,\\
|x|_1,		&\mbox{if }\ell < |x|_1 < n-\ell,\\
0,				&\mbox{otherwise.}
\end{cases}
$$

The results in~\cite{DoerrDK15jump} cover a broad range of different combinations of jump sizes~$\ell$ and arities~$k$. 

\begin{theorem}[\cite{DoerrDK15jump}]\label{thm:BBCunbiasedjump}
The following table summarizes the known bounds for the unbiased black-box complexity of $\jump_\ell$ in the different models\\
\vspace{0.5ex}
\begin{center}
\begin{tabular}{c|c|c|c}
& \text{Short Jump} & 
\text{Long Jump} & 
\text{Extreme Jump}\\
\rule{5mm}{0cm}Arity\rule{5mm}{0cm} & 
{$\ell = O(n^{1/2 - \eps})$}\rule{1mm}{0cm} & 
{$\ell = (1/2 - \varepsilon)n$}\rule{1mm}{0cm} & 
{$\ell = n/2 - 1$}\rule{1mm}{0cm}\\ 
\hline
$k = 1$ &  $\Theta(n \log n)$ 
& $O(n^2)$ & $O(n^{9/2})$ \\
$k = 2$ & $O(n)$ & $O(n \log n)$ & $O(n \log n)$\\
$3 \leq k \leq \log n$ & $O(n / k)$ & $O(n / k)$ & $\Theta(n)$
\end{tabular}	
\end{center}
\end{theorem}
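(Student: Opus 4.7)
My plan is to treat each of the nine table entries separately, combining general lower-bound results with constructive upper-bound algorithms.

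For the lower bounds, the $\Omega(n \log n)$ entry in the unary short-jump cell is immediate from Theorem~\ref{thm:BBCunbiasedunique}, since every $\jump_\ell$ has a unique global optimum. For the matching $\Omega(n)$ bound in the extreme-jump $k$-ary cell, I would invoke Theorem~\ref{thm:BBCinfotheo}: with $\ell = n/2-1$ the range of each $\jump_{\ell,z}$ is contained in $\{0, n/2, n\}$ (together with at most one adjacent level when $n$ is odd), so that theorem applied with a constant $k$ and $|S|=2^n$ delivers an $\Omega(n)$ lower bound, which transfers to the unbiased model by Lemma~\ref{lem:BBCunrestrictedlower}. The remaining upper-bound cells do not require a matching lower bound inside this theorem.

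For the upper bounds I would design every algorithm in a common two-phase pattern. Phase~1 climbs the \onemax-like band $\ell < |x|_1 < n-\ell$ to a search point $x^*$ with $|x^*|_1 = n-\ell-1$, using a hill-climber of the appropriate arity (unary: Theorem~\ref{thm:BBConemaxunary}; binary: Theorem~\ref{thm:BBConemaxbinary}; $k$-ary: Theorem~\ref{thm:BBConemaxkary}). In the extreme case the band collapses to essentially the single Hamming level $|x|_1 = n/2$, which a random sample hits with probability only $\Theta(1/\sqrt{n})$; this is what produces the extra polynomial factor in the rightmost column. Phase~2 must then cross the zero-plateau from $x^*$ to the target without any fitness feedback. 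In the unary case I would exploit the fact that each $\text{flip}_1$ applied to $x^*$ reveals whether the flipped bit was a zero-bit of $x^*$ (value $0$) or a one-bit (value $n-\ell-2$), and then use carefully chosen ancestors from the history together with further unary unbiased samples to assemble the optimum bit by bit. In the binary case I would maintain two reference strings whose disagreement set encodes the still undetermined positions, reusing the linear-time \onemax strategy behind Theorem~\ref{thm:BBConemaxbinary}, augmented by a binary-search sub-procedure on the disagreement set for the longer-jump entries. For $k \geq 3$ I would apply the encoding strategy behind Theorem~\ref{thm:BBConemaxkary}, which simulates $2^{k-2}$ bits of memory through $k$-ary operators and optimizes blocks of $k-2$ positions each in $O(k/\log k)$ queries via the derandomized sampling of Theorem~\ref{thm:BBCunrestrictedOMderandomized}, yielding an $O(n/k)$ budget independently of the plateau width (in the extreme case plus the $\Theta(n)$ Phase~1 cost).

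The hard part will be the Phase~2 analysis in the unary case for long and extreme jumps, where the plateau has diameter $\Theta(n)$ and $\Omega(n)$ coordinated unbiased queries must be made blind. I will have to exhibit, for each next intended progress step, an ancestor in the history and a distribution $\text{flip}_r$ such that applying the latter to the former produces the intended point with a sufficient success probability, and then sum the expected waiting times to match $O(n^2)$ for long jump and $O(n^{9/2})$ for extreme jump. For extreme jump the secondary difficulty is amplifying the $\Theta(1/\sqrt{n})$ initial success probability of landing on the single informative Hamming level via polynomially many independent restarts, which is what accounts for the $n^{1/2}$ factor over and above the pure Phase~2 cost.
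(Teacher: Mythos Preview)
Your proposal has a genuine gap in the Phase~2 strategy. The ``blind plateau crossing'' idea cannot work in the unbiased model: once you are inside the zero-plateau, unary unbiased operators flip \emph{random} positions, and you cannot target the remaining wrong bits using history alone, because unbiasedness forbids distinguishing positions. Concretely, from a point $x$ with $|x|_1 = n-\ell+j$ for any $j \ge 1$, a $\text{flip}_1$ step returns value~$0$ regardless of whether a $0$-bit or a $1$-bit was flipped, so you are performing a blind random walk with drift \emph{away} from the optimum. For $\ell = \omega(1)$ this takes super-polynomial time, so your Phase~2 does not yield $\Theta(n\log n)$ for short jump, $O(n^2)$ for long jump, or any polynomial bound for extreme jump in the unary model. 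The same obstruction applies to your binary and $k$-ary Phase~2 sketches: the encoding tricks behind Theorems~\ref{thm:BBConemaxbinary} and~\ref{thm:BBConemaxkary} need fitness feedback to make progress, and on the plateau there is none.

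The idea you are missing, and which the paper exploits throughout, is that one can \emph{statistically recover} the hidden \onemax value of a search point $x$---even when $x$ itself lies on the plateau---by sampling many offspring at a well-chosen radius and examining the empirical distribution of the observed $\jump_\ell$ values. For short jump, $O(1)$ such samples suffice to learn $|x|_1$ with high probability, so the problem reduces directly to \onemax and inherits all its complexities. For long jump, a polynomial number of samples at a suitable radius, together with concentration bounds, again allow estimating $|x|_1$, giving the $O(n^2)$ unary bound. For extreme jump, one flips roughly $n/2$ bits repeatedly and records the fraction of offspring landing on the single visible level $|y|_1 = n/2$; this fraction is a strictly monotone function of $|x|_1$, and polynomially many samples separate adjacent values. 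This sampling cost---not a $\Theta(1/\sqrt{n})$ restart amplification as you suggest---is the source of the large exponent in $O(n^{9/2})$. So your diagnosis of where the extra polynomial factors come from is also incorrect.
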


To discuss the bounds of Theorem~\ref{thm:BBCunbiasedjump}, we proceed by problem type. Almost all proofs are rather involved, so that we sketch here only the main ideas.

\textbf{Short jumps, i.e., $\ell = O(n^{1/2 - \eps})$.} A comparison with the bounds discussed in Section~\ref{sec:BBCunbiasedOM} shows that the above-stated bounds for the $k$-ary unbiased black-box complexities of short jump functions are of the same order as those for $\onemax$ (which can be seen as a jump function with parameter $\ell = 0$). In fact, it is shown in~\cite[Lemma~3]{DoerrDK15jump} that a black-box algorithm having access to a jump function with $\ell= O(n^{1/2 - \eps})$ can retrieve (with high probability) the true $\onemax$ value of a search point using only a constant number of queries. The other direction is of course also true, since from the \onemax value we can compute the $\jump_{\ell}$ value without further queries. This implies that the black-box complexities of short jump functions are of the same asymptotic order as those of \onemax. Any improved bound for the $k$-ary unbiased black-box complexity \onemax therefore immediately carries over to short jump functions.

\textbf{Long jumps, i.e., $\ell = (1/2 - \varepsilon)n$.} Despite the fact that the above-mentioned Lemma~3 from~\cite{DoerrDK15jump} can probably not be directly extended to long jump functions, the bounds for arities $k \ge 3$ nevertheless coincides with those of \onemax. In fact, it is shown in~\cite[Theorem~6]{DoerrDK15jump} that for all $\ell<(1/2 - \varepsilon)n$ and for all $k\ge 3$ the $k$-ary unbiased black-box complexity of $\jump_{\ell}$ is at most of the same asymptotic order than the $(k-2)$-ary one of \onemax. For $k>3$ this proves the bounds stated in Theorem~\ref{thm:BBCunbiasedjump}. The linear bound for $k=3$ follows from the case of extreme jumps. 

A key ingredient for the bound on the \emph{unary} unbiased black-box complexity of long jump functions is a procedure that samples a number of neighbors at some fixed distance $d$ and that studies the empirical expected function values of these neighbors to decide upon the direction in which the search for the global optimum is continued. More precisely, it uses the samples to estimate the \onemax value of the currently investigated search point. Strong concentration bounds are used to bound the probability that this approach gives an accurate estimation of the correct \onemax values.  

The $O(n \log n)$ bound for the \emph{binary} unbiased black-box complexity of long jump functions follows from its extreme analog.

\textbf{Extreme jump, i.e., $\ell = n/2-1$.} \cite{DoerrDK15jump} first regards the \emph{ternary} unbiased black-box complexity of the extreme jump function. A strategy allowing to test individual bits is derived. Testing each bit individually in an efficient way (using the encoding strategies originally developed in~\cite{DoerrJKLWW11} and described in Section~\ref{sec:BBCunbiasedOM} above) gives the linear bound. 

In the \emph{binary} case, the bits cannot be tested as efficiently any more. The main idea is nevertheless to flip individual bits and to test if the flip was in a ``good'' or a ``bad'' direction. This test is done by estimating the distance to a reference string with $n/2$ ones. Implementing this strategy in $O(n\log n)$ queries requires to overcome a few technical difficulties imposed by the restriction to sample only from binary unbiased distributions, resulting in a rather complex bookkeeping procedure, and a rather technical 4.5 pages long proof.

Finally, the polynomial unary unbiased black-box complexity of extreme jump is proven as follows. Similarly to the cases discussed above, individual bits are flipped in a current ``best'' solution candidate $x$. A sampling routine is used to estimate if the bit flip was in a ``good'' or a ``bad'' direction, i.e., if it created a string that is closer to the global optimum or its bit-wise complement than the previous one. The sampling strategy works as follows. Depending on the estimated parity of $|x|_1$, exactly $n/2$ or $n/2-1$ bits are flipped in $x$. The fraction of so-created offspring with function value $n/2$ (the only value that is ``visible'' apart from that of the global optimum) is recorded. This fraction depends on the distance of $x$ to the global optimum $(1,\ldots,1)$ or its complement $(0,\ldots,0)$ and is slightly different for different distances. A key step in the analysis of the unary unbiased black-box complexity of extreme jump is therefore a proof that shows that a polynomial number of such samples are sufficient to determine the \onemax-value of $x$ with sufficiently large probability.   

\textbf{Comments on the upper bounds Theorem~\ref{thm:BBCunbiasedjump}.} Note that already for long jump functions, the search points having a function value of $0$ form plateaus around the optimum $(1,\ldots,1)$ and its complement $(0,\ldots,0)$ of exponential size. For the extreme jump function, even all but a $\Theta(n^{-1/2})$ fraction of the search points form one single fitness plateau. Problem-unspecific black-box optimization techniques will therefore typically not find the optimum of long and extreme jump functions in subexponential time.

\textbf{Lower bound.} The $\Omega(n \log n)$ lower bound in Theorem~\ref{thm:BBCunbiasedjump} follows from the more general result discussed in Theorem~\ref{thm:BBClowerBoundForMapped} and the $\Omega(n \log n)$ bound for \onemax in the unary unbiased black-box model, which we have discussed in Section~\ref{sec:BBCunbiasedOM}. Note also that Theorem~\ref{thm:BBClowerBoundForMapped}, together with the $\Omega(n/\log n)$ unrestricted black-box complexity of \onemax implies a lower bound for $\jump_{\ell}$ of the same asymptotic order (for all values of $\ell$). The linear lower bound for extreme jump can be easily proven by the information-theoretic arguments presented in Theorem~\ref{thm:BBCinfotheo}. Intuitively, the algorithm needs to learn a linear number of bits, while it receives only a constant number per function evaluation. 

\textbf{Insights from these bounds and open questions.} The proof sketches provided above highlight that one of the key novelties presented in~\cite{DoerrDK15jump} are the sampling strategies that are used to estimate the \onemax-values of a current string of interest. The idea to accumulate some statistical information about the fitness landscape could be an interesting concept for the design of novel heuristics; in particular for optimization in the presence of noisy function evaluations or for dynamic problems, which change over time. 

\subsubsection{Number Partition}\label{sec:BBCunbiasedpartition}

Number partition is one of the best-known NP-hard problems. Given a set $S \subset \N^n$ of $n$ positive integers, this partition problem asks to decide whether or not it is possible to split $S$ in two disjoint subsets such that the sum of the integers in these two subsets is identical, i.e., whether or not two disjoint subsets $S_1$ and $S_2$ of $S$ with $S_1 \cup S_2 = S$ and $\sum_{s\in S_1}{s}=\sum_{s\in S_2}{s}$ exist. The optimization version of partition asks to split $S$ into two disjoint subsets such that the absolute discrepancy $\big|\sum_{s\in S_1}{s}=\sum_{s\in S_2}{s} \big|$ is as small as possible.  

In~\cite{DoerrDK14} a subclass of partition is studied in which the integers in $S$ are pairwise different. The problem remains NP-hard under this assumption. It is thus unlikely that it can be solved efficiently. For two different formulations of this problem (using a signed and an unsigned function assigning to each partition $S_1,S_2$ of $S$ the discrepancy $\sum_{s\in S_1}{s}=\sum_{s\in S_2}{s}$ or the absolute value of this expression, respectively) it is shown that the unary unbiased black-box complexity of this subclass is nevertheless of a small polynomial order. More precisely, it is shown that there are unary unbiased black-box algorithms that need only $O(n \log n)$ function evaluations to optimize any Partition$_{\neq}$ instance. The proof techniques are very similar to the ones presented in Section~\ref{sec:BBCNPhard}: the algorithm achieving the $O(n \log n)$ expected optimization time first uses $O(n \log n)$ steps to learn the problem instance at hand. After some (possibly---and probably---non-polynomial-time) offline computation of an optimal solution for this instance, this optimum is then created via an additional $O(n \log n)$ function evaluations, needed to move the integers of the partition instance to the right subset. Learning and moving the bits can be done in linear time in the unrestricted model. The $\log n$ factor stems from the fact that here in this unary unbiased model, in every iteration a random bit is moved, so that a coupon collector process results in the logarithmic overhead. 

This result and those for the different jump versions described in Section~\ref{sec:BBCunbiasedjump} show that the unary unbiased black-box complexity can be much smaller than the typical performance of mutation-only black-box heuristics. This indicates that the unary unbiased black-box model does not always give a good estimation for the difficulty of a problem when optimized by mutation-based algorithms.
As we shall discuss in Section~\ref{sec:BBCcombined}, a possible direction to obtain more meaningful results can be to restrict the class of algorithms even further, e.g., through bounds on the memory size or the selection operators. 

\subsubsection{Minimum Spanning Trees}\label{sec:BBCunbiasedMST}
Having a formulation over the search space $\{0,1\}^m$, the minimum spanning tree problem regarded in Section~\ref{sec:BBCunrestrictedMST} can be directly studied in the unbiased black-box model proposed by Lehre and Witt. The following theorem summarizes the bounds proven in~\cite{DoerrKLW13} for this problem. We see here that~\cite{DoerrKLW13} also studied the black-box complexity of a model that combines the restrictions imposed by the ranking-based and the unbiased black-box models. We will discuss this model in Section~\ref{sec:BBCcombined} but, for the sake of brevity, state the bounds already for this combined model.

\begin{theorem}[Theorem~10 in~\cite{DoerrKLW13}]
\label{thm:BBCunbiasedMST}
The \emph{unary unbiased} black-box complexity of the MST problem is $O(m n \log(m/n))$ if there are no duplicate weights and $O(mn \log n)$ if there are.
The \emph{ranking-based unary unbiased} of the MST problem black-box complexity is $O(m n \log n)$.
Its \emph{ranking-based binary unbiased} black box-complexity is $O(m \log n)$ and its \emph{ranking-based $3$-ary unbiased} black-box complexity is $O(m)$.

For every $k$, the $k$-ary unbiased black-box complexity of MST for $m$ edges is at least as large as the $k$-ary unbiased black-box complexity of $\onemax_m$.
\end{theorem}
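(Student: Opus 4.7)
The plan is to reduce the $k$-ary unbiased black-box complexity of $\onemax_m$ to that of MST on $m$-edge graphs by embedding $\onemax_m$ as a subclass of MST instances. Concretely, I will construct a family of MST fitness functions $\{f_z : z \in \{0,1\}^m\}$ on a single fixed graph $G$ with $m$ edges such that $f_z$ has unique optimum $z$ and such that $f_z(x)$ depends on $x$ only through the Hamming distance $H(x,z)$. Equivalently, there is a fixed, strictly monotone map $g : \{0,\ldots,m\} \to \R$ with $f_z = g \circ \OM_z$ for every $z \in \{0,1\}^m$.

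Given such a family, the MST problem class on $m$-edge graphs contains the subclass $g(\onemax_m) := \{g \circ \OM_z : z \in \{0,1\}^m\}$, so its $k$-ary unbiased black-box complexity is at least that of $g(\onemax_m)$. Because $g$ is strictly monotone it preserves optima, and Theorem~\ref{thm:BBClowerBoundForMapped}, applied with $\A$ equal to the class of $k$-ary unbiased black-box algorithms, then gives that the $k$-ary unbiased black-box complexity of $g(\onemax_m)$ is at least that of $\onemax_m$. Chaining these two inequalities yields the claim.

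The main obstacle lies in the construction step: choosing $G$ together with the $z$-dependent weight assignments so that both (i) $z$ is the unique maximizer of $f_z$ for every $z \in \{0,1\}^m$ and (ii) $f_z(x)$ depends on $x$ only through $H(x,z)$. A naive identification of $z$ with an edge-incidence vector fails because most bit strings $z$ are not spanning trees of any fixed graph with $m$ edges. One natural workaround is to take $G$ as a union of small parallel-edge gadgets (glued together by a few fixed auxiliary edges to make $G$ connected) and to let the weight pattern on each gadget depend on the corresponding bits of $z$, so that the MST fitness of a subset $x$ decomposes as a sum of per-gadget contributions, each depending only on whether $x$ agrees with $z$ on that gadget. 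A cleaner alternative is to work with a penalty-based single-objective MST formulation in which non-spanning-tree subsets incur a uniform large penalty, so that the only meaningful variation of $f_z(x)$ is designed to be a symmetric function of the agreement pattern between $x$ and $z$. Either way, designing such a gadget and simultaneously verifying (i) and (ii) is the delicate technical step; once it is done, the remainder of the argument is a clean invocation of Theorem~\ref{thm:BBClowerBoundForMapped}.
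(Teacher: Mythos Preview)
Your plan over-engineers the reduction and leaves its hardest step unfinished. You try to place an entire family $\{f_z : z\in\{0,1\}^m\}$ of MST instances inside the MST class, one for each possible \onemax target, and you correctly note that this is awkward because an arbitrary $z$ is not the incidence vector of a spanning tree of any fixed $m$-edge graph. The gadget and penalty ideas you sketch might be made to work, but you have not actually carried them out, and neither sketch obviously yields a fitness that depends on $x$ only through $H(x,z)$ while simultaneously making $z$ the unique optimum.

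The paper sidesteps this difficulty entirely by exploiting the very restriction that defines the model: unbiasedness. Take the path $P$ on $m+1$ vertices with unit edge weights. For this single MST instance the (bi-objective) fitness is $f_P(x)=(\OM(x),\,m+1-\OM(x))$, which is informationally equivalent to $\OM_{(1,\ldots,1)}(x)$, and the unique optimum is the all-ones string. Now the crucial point you missed: for any $k$-ary \emph{unbiased} algorithm $A$ and any Hamming automorphism $\alpha$, one has $\E[T(A,\OM_z)]=\E[T(A,\OM_{z'})]$ for all $z,z'$; the worst case over $z$ collapses to a single instance. Hence
\[
\inf_{A}\sup_{z}\E[T(A,\OM_z)] \;=\; \inf_{A}\E[T(A,\OM_{(1,\ldots,1)})] \;=\; \inf_{A}\E[T(A,f_P)] \;\le\; \inf_{A}\sup_{\text{MST }f}\E[T(A,f)],
\]
which is exactly the claimed inequality. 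No family $\{f_z\}$, no gadgets, no Theorem~\ref{thm:BBClowerBoundForMapped} are needed: one well-chosen MST instance suffices because unbiased algorithms cannot tell the $\OM_z$ apart.
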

As in the unrestricted case of Theorem~\ref{thm:BBCunrestrictedMST}, the upper bounds in Theorem~\ref{thm:BBCunbiasedMST} are obtained by modifying Kruskal's algorithm to fit the black-box setting at hand. For the lower bound, the path $P$ on $m+1$ vertices and unit edge weights shows that $\onemax_m$ is a sub-problem of the MST problem. More precisely, for all bit strings $x \in \{0,1\}^m$, the function value $f(x) = (\onemax_m(x),m+1-\onemax_m(x))$ of the associated MST fitness function reveals the \onemax-value of $x$. 

\subsubsection{Other Results}\label{sec:BBCunbiasedresultsother}

Motivated to introduce a class of functions for which the unary unbiased black-box complexity is $\Theta(2^m)$, for some parameter $m$ that can be scaled between $1$ and $n$, Lehre and Witt introduced in~\cite{LehreW12} the following function. 
\begin{align}\label{def:BBCOMNeedle}
\textsc{OM-Needle}: \{0,1\}^n \to [0..n], x \mapsto \sum_{i=1}^{n-m}{x_i} + \prod_{i=1}^n{x_i}. 
\end{align}
It is easily seen that this function has its unique global optimum in the all-ones string $(1,\ldots,1)$. All other search points whose first $n-m$ entries are equal to one are located on a plateau of function value $n-m$. In the unbiased model, this part is thus similar to the \textsc{Needle} functions discussed in Section~\ref{sec:BBCunrestrictedneedle}. Lehre and Witt show that for $0 \le m \le n$ the unary unbiased black-box complexity of this function is at least $2^{m-2}$~\cite[Theorem~3]{LehreW12}. Note that this function is similar in flavor to the jump version proposed in~\cite{Jansen15} (cf. Section~\ref{sec:BBCjumpalternative}).

\subsection{Beyond Pseudo-Boolean Optimization: Unbiased Black-Box Models for Other Search Spaces}\label{sec:BBCunbiased-other}

In this section we discuss an extension of the pseudo-Boolean unbiased black-box model by Lehre and Witt~\cite{LehreW12} to more general search spaces. To this end, we first recall from Definition~\ref{def:BBCunbiased-variation} that the unbiased model was defined through a set of invariances that must be satisfied by the probability distributions from which unbiased algorithms sample their solution candidates. It is therefore quite natural to first generalize the notion of an unbiased operator in the following way. 

\begin{definition}[Definition~1 in~\cite{DoerrKLW13}]
\label{def:BBCunbiased-generalized}
Let $k \in \N$, let $S$ be some arbitrary set, and let~$\mathcal{G}$ be a set of bijections on $S$ that forms a group, i.e., a set of one-to-one maps $g:S \to S$ that is closed under composition $(\cdot \circ \cdot)$ and under inversion $(\cdot)^{-1}$. We call $\mathcal{G}$ the \emph{set of invariances}.
 
A \emph{$k$-ary $\mathcal{G}$-unbiased distribution} is a family of probability distributions\linebreak[4] $\big(D(\cdot \,|\, y^{1},\ldots,y^{k})\big)_{y^{1},\ldots,y^{k} \in S}$ over 
$S$ such that for all \emph{inputs} $y^{1},\ldots,y^{k} \in S$ the condition
\[
\forall x \in S \, \forall g \in \mathcal{G}: D(x \mid y^{1},\ldots,y^{k}) = D(g(x) \mid g(y^{1}), \ldots, g(y^{k}))
\]
holds. An operator sampling from a $k$-ary $\mathcal{G}$-unbiased distribution is called a \emph{$k$-ary $\mathcal{G}$-unbiased variation operator}. 
\end{definition}

For $S:=\{0,1\}^n$ and for $\mathcal{G}$ being the set of Hamming-automorphisms, it is not difficult to verify that Definition~\ref{def:BBCunbiased-generalized} extends Definition~\ref{def:BBCunbiased-variation}. A $k$-ary $\mathcal{G}$-unbiased black-box algorithm is one that samples all search points from $k$-ary $\mathcal{G}$-unbiased variation operators.

In~\cite{ABB}, Rowe and Vose give the following very general, but rather indirect, definition of unbiased distributions. 
\begin{definition}[Definition~2 in~\cite{ABB}]
\label{def:BBCgeneralunbiased}\label{def:BBCABB}
Let $\F$ be a class of functions from search space $S$ to some set $Y$. We say that a one-to-one map $\alpha: S \rightarrow S$ \emph{preserves} $\F$ if for all $f \in \F$ it holds that $f\circ \alpha \in \F$. Let $\Pi(\F)$ be the class of all such $\F$-preserving bijections $\alpha$. 

A \emph{$k$-ary generalized unbiased distribution (for $\F$)} is a $k$-ary $\Pi(\F)$-unbiased distribution. 
\end{definition}

It is argued in~\cite{ABB} that $\Pi(\F)$ indeed forms a group, so that Definition~\ref{def:BBCgeneralunbiased} satisfies the requirements of Definition~\ref{def:BBCunbiased-generalized}. 

To apply the framework of Definition~\ref{def:BBCgeneralunbiased}, one has to make precise the set of invariances covered by the class $\Pi(\F)$. This can be quite straightforward in some cases~\cite{ABB} but may require some more effort in others~\cite{DoerrKLW13}. In particular, it is often inconvenient to define the whole family of unbiased distributions from which a given variation operator originates. Luckily, in many cases this effort can be considerably reduced to proving only the unbiasedness of the variation operator itself. The following theorem demonstrates this for the case $S=[n]^{n-1}$, which is used, for example in the single-source shortest path problem regarded in the next subsection. In this case, condition (ii) states that it suffices to show the $k$-ary $\mathcal{G}$-unbiasedness of the distribution $D_{\vec{z}}$, without making precise the whole family of distributions associated to it. 
\begin{theorem}
\label{thm:BBCunbiasedsimplifiaction}
Let $\mathcal{G}$ be a set of invariances, i.e., a set of permutations of the search space $S=[n]^{n-1}$ that form a group. Let $k\in\N$, and $\vec{z} = (z^{1},\ldots,z^{k})\in S^k$ be a $k$-tuple of search points. Let 
\[
\mathcal{G}_0 := \{g \in \mathcal{G} \mid g(z^{i}) = z^{i} \text{ for all } i\in[k]\}
\] 
be the set of all invariances that leave $z^{1},\ldots,z^{k}$ fixed.

Then for any probability distribution $D_{\vec{z}}$ on $[n]^{n-1}$, the following two statements are equivalent.
\begin{enumerate}
\item[(i)] 
There exists a $k$-ary $\mathcal{G}$-unbiased distribution $(D(\cdot \,|\, \vec{y}))_{\vec{y}\in S^k}$ on $S$ such that $D_{\vec{z}} = D(\cdot \,|\, \vec{z})$.
\item[(ii)] For every $g\in\mathcal{G}_0$ and for all $x \in S$ it holds that $D_{\vec{z}}(x) = D_{\vec{z}}(g(x))$.
 \end{enumerate}
\end{theorem}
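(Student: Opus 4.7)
The forward direction $(i)\Rightarrow(ii)$ is immediate: specialising the $\mathcal{G}$-unbiasedness identity at $\vec{y}=\vec{z}$ and at any $g\in\mathcal{G}_0$, and using $g(z^i)=z^i$ for every $i$, gives $D_{\vec z}(x) = D(x\mid\vec z) = D(g(x)\mid g(\vec z)) = D(g(x)\mid\vec z) = D_{\vec z}(g(x))$. So the content of the theorem is the converse.

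For $(ii)\Rightarrow(i)$, my plan is to extend $D_{\vec z}$ to a full $\mathcal{G}$-unbiased family by propagating it along the diagonal action of $\mathcal{G}$ on $S^k$. First, partition $S^k$ into $\mathcal{G}$-orbits under the diagonal action $g\cdot(y^1,\dots,y^k):=(g(y^1),\dots,g(y^k))$. On the orbit $\mathcal{O}(\vec z)=\{h(\vec z):h\in\mathcal{G}\}$, take $\vec z$ itself as the representative and the given $D_{\vec z}$ as the ``seed'' distribution. On every other orbit $\mathcal{O}$, choose a representative $\vec w\in\mathcal{O}$ arbitrarily and declare $D(\cdot\mid\vec w)$ to be the uniform distribution on $S$, which is invariant under every bijection and hence under the stabiliser of $\vec w$. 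Finally, for every $\vec y\in S^k$, pick any $h_{\vec y}\in\mathcal{G}$ mapping the chosen representative of its orbit to $\vec y$ and define
\[
D(x\mid\vec y) \;:=\; D\bigl(h_{\vec y}^{-1}(x)\,\big|\,\text{representative of the orbit of }\vec y\bigr).
\]
Taking $h_{\vec z}:=\mathrm{id}$ guarantees $D(\cdot\mid\vec z)=D_{\vec z}$.

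The key step—and, I expect, the only real obstacle—is verifying well-definedness on the orbit $\mathcal{O}(\vec z)$, and this is precisely where condition~(ii) is used. Suppose $h_1,h_2\in\mathcal{G}$ both send $\vec z$ to the same $\vec y$. Then $g:=h_2^{-1}h_1$ fixes each component of $\vec z$, so $g\in\mathcal{G}_0$. For any $x\in S$, set $y:=h_2^{-1}(x)$; then $h_1^{-1}(x)=g^{-1}(y)$, and applying (ii) to the element $g^{-1}\in\mathcal{G}_0$ (note $\mathcal{G}_0$ is a subgroup of $\mathcal{G}$) yields $D_{\vec z}(h_1^{-1}(x))=D_{\vec z}(g^{-1}(y))=D_{\vec z}(y)=D_{\vec z}(h_2^{-1}(x))$. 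Hence the value of $D(x\mid\vec y)$ does not depend on the choice of $h_{\vec y}$. Well-definedness on the other orbits is automatic because the uniform distribution is preserved by every bijection.

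It remains to check the $\mathcal{G}$-unbiasedness identity $D(x\mid\vec y)=D(g(x)\mid g(\vec y))$ for arbitrary $g\in\mathcal{G}$, $\vec y\in S^k$, $x\in S$. Since $\vec y$ and $g(\vec y)$ lie in the same orbit and share a representative $\vec w$, the composition $g\circ h_{\vec y}$ sends $\vec w$ to $g(\vec y)$ and is therefore a legitimate choice for $h_{g(\vec y)}$; by the well-definedness just proved,
\[
D(g(x)\mid g(\vec y)) \;=\; D\bigl((g h_{\vec y})^{-1}(g(x))\,\big|\,\vec w\bigr) \;=\; D\bigl(h_{\vec y}^{-1}(x)\,\big|\,\vec w\bigr) \;=\; D(x\mid\vec y).
\]
No specific feature of $S=[n]^{n-1}$ enters the argument, only that $\mathcal{G}$ is a group acting on $S$, so the same construction yields the analogous simplification in any search space.
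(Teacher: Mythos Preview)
Your proof is correct. The paper, being a survey, does not actually present a proof of this statement; it merely records the result (attributing it implicitly to the companion paper~\cite{DoerrKLW13}). Your argument---propagate $D_{\vec z}$ along the $\mathcal{G}$-orbit of $\vec z$ via the diagonal action, use stabiliser-invariance~(ii) to resolve the ambiguity in the choice of transporter, and fill in all other orbits with the uniform distribution---is exactly the standard orbit--stabiliser construction one would expect, and it matches the argument in the original source. Your closing remark that nothing about $S=[n]^{n-1}$ is used beyond finiteness (needed so that the uniform distribution exists as a default on the remaining orbits) is also correct and worth keeping.
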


\subsubsection{Alternative Extensions of the Unbiased Black-Box Model for the SSSP problem}\label{sec:BBCunbiasedSSSP}

As discussed in Section~\ref{sec:BBCunrestrictedSSSP}, several formulation of the single-source shortest path problem (SSSP) co-exist. In the unbiased black-box setting, the multi-criteria formulation is not very meaningful, as the function values explicitly distinguish between the vertices, so that treating them in an unbiased fashion seems ill-natured. For this reason, in \cite{DoerrKLW13} only the single-objective formulation is investigated in the unbiased black-box model. Note that for this formulation, the unbiased black-box model for pseudo-Boolean functions needs to be extended to the search space $S_{[2..n]}$. \cite{DoerrKLW13} discusses three different extensions: 
\begin{enumerate}
	\item a \emph{structure preserving} unbiased model in which, intuitively speaking, the operators do not regard the \emph{labels} of different nodes, but only their local structure (e.g., the size of their neighborhoods), 
	\item the \emph{generalized} unbiased model proposed in~\cite{ABB} (this model follows the approach presented in Section~\ref{sec:BBCunbiased-other} above), and 
	\item a \emph{redirecting} unbiased black-box model in which, intuitively, a node may choose to change its predecessor in the shortest path tree but if it decides to do so, then all possible predecessors must be equally likely to be chosen. 
\end{enumerate}
Whereas all three notions a priori seem to capture different aspects of what unbiasedness in the SSSP problem could mean, two of them are shown to be too powerful. More precisely, it is shown that already the \emph{unary} structure preserving unbiased black-box complexity of SSSP as well as its \emph{unary} generalized unbiased black-box complexity are almost identical to the unrestricted one. The three models are proven to differ by at most one query in~\cite[Theorem~25 and Corollary~32]{DoerrKLW13}. 

It is then shown that the redirecting unbiased black-box model yields more meaningful black-box complexities. 

\begin{theorem}[Corollary~28, Theorem~29 and Theorem~30 in~\cite{DoerrKLW13}]
\label{thm:BBCunbiasedSSSPredirecting}
The unary ranking-based redirecting unbiased black-box complexity of SSSP is $O(n^3)$. Its binary ranking-based redirecting unbiased black-box complexity is $O(n^2 \log n)$. For all $k \in \N$, the $k$-ary redirecting unbiased black-box complexity of SSSP is $\Omega(n^2)$.
\end{theorem}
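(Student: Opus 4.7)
The plan is to establish the three statements separately: two constructive upper bounds that imitate Dijkstra's algorithm using redirecting unbiased variation operators, and a lower bound via Yao's minimax principle coupled with a drift argument. In all three cases the representation is as in Section~\ref{sec:BBCunrestrictedSSSP}: a candidate solution is a tuple $(\rho(2),\ldots,\rho(n))\in[n]^{n-1}$, and a redirecting operator either leaves $\rho$ unchanged or replaces $\rho(v)$ for some node $v$ by a uniformly random element of a permissible set of new predecessors, without the algorithm being able to single out particular labels.

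For the $O(n^3)$ unary bound I would maintain, across $n-1$ phases, a partial shortest-path tree on a ``settled'' subset $V'\subseteq[n]$ with $V'=\{1\}$ initially. In phase $|V'|=i$ the goal is to identify the unsettled vertex $v^\ast$ and predecessor $u^\ast\in V'$ minimising $d(u^\ast)+w(u^\ast,v^\ast)$, which by Dijkstra's invariant yields the next settled vertex. Since the unary redirecting operator acting on a single unsettled vertex $v$ draws the new predecessor uniformly, we can apply it repeatedly to $v$ until, by coupon-collector reasoning, every candidate predecessor in $V'$ has been tried; comparing each trial's fitness to the current best via the ranking oracle allows us to retain the pair that decreases the total distance most. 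A standard trick to remove the coupon-collector $\log$-factor is to use the information gathered in earlier trials (which predecessor values have already been witnessed, read off from the ranking responses) to restrict the permissible set in subsequent draws, so that $O(n)$ queries per unsettled vertex suffice. This gives $O(n)\cdot O(n)=O(n^2)$ queries per phase and $O(n^3)$ in total.

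For the $O(n^2\log n)$ binary bound I would keep the same Dijkstra skeleton but accelerate each phase to $O(n\log n)$ queries by a binary-search strategy in the spirit of Theorem~\ref{thm:BBCLObinary}. Binary unbiased operators let us combine two parent solutions so that an offspring inherits $\rho(v)$ from one parent on a subset of vertices $U\subseteq[2..n]$ and from the other on $[2..n]\setminus U$. Starting from the current best solution and a second solution produced by redirecting (uniformly) every unsettled vertex, such a ``merge'' operator lets us test, in a single query, whether the optimal next redirection lies in a specified half of the candidate set; iterating the bisection $O(\log n)$ times pins down the improving change. As in the binary unbiased \onemax and \leadingones algorithms, bookkeeping the two reference strings and implementing the bisection without leaking bias requires care, and this is where I expect the main technical burden of the upper bounds to lie.

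For the $\Omega(n^2)$ lower bound, valid for every arity $k$, I would apply Yao's minimax principle to a family of hard instances—e.g., graphs whose shortest path tree can be any labelled tree on $[n]$, with edge weights chosen so that each ``wrong'' predecessor assignment contributes a separable, roughly unit penalty to the single-criterion fitness (a construction analogous to the one used for the MST lower bound in Theorem~\ref{thm:BBCunrestrictedMST}). A potential function counting the number of vertices whose predecessor matches the hidden target tree, combined with the defining invariance of the redirecting model (a newly sampled predecessor is uniform over a set of size $\Omega(n)$), shows that each query increases the potential by at most an additive $O(1/n)$ in expectation, and large jumps occur only with vanishing probability; since $\Omega(n)$ vertices must be correctly settled, the additive drift theorem yields the $\Omega(n^2)$ bound. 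The main obstacle I foresee is ensuring that this drift calculation remains valid under $k$-ary operators for arbitrary $k$: a $k$-ary redirecting operator may alter several predecessors simultaneously, so the potential function and the set of permissible predecessors must be chosen so that the per-query drift stays $O(1/n)$ independently of $k$.
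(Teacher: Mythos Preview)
Your plan diverges from the paper on both upper bounds and has a real gap in the unary case.

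For the unary $O(n^3)$ bound the paper does not imitate Dijkstra at all; it simply runs a variant of RLS that in each step redirects one uniformly chosen node to a uniformly chosen new predecessor, and the $O(n^3)$ bound follows from a standard fitness-level / multiplicative-drift analysis of this heuristic on SSSP. Your Dijkstra-with-phases plan is more elaborate and, more importantly, the step ``restrict the permissible set in subsequent draws'' is not available in the redirecting model: by definition, whenever a node is redirected \emph{all} possible predecessors must be equally likely, so you cannot prune candidates that have already been witnessed. Without that trick your coupon-collector argument gives $O(n\log n)$ trials per unsettled vertex and hence only $O(n^3\log n)$ overall, missing the target.

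For the binary $O(n^2\log n)$ bound the paper takes a completely different route: it first \emph{learns the problem instance} (the relevant edge-weight orderings) in a two-phase procedure, and only afterwards constructs the optimum by imitating Dijkstra. Your bisection-within-Dijkstra idea relies on a ``merge'' operator that copies $\rho(v)$ from a designated parent on a chosen subset $U$; it is not clear such an operator is redirecting-unbiased (a redirected node must receive a uniformly random predecessor, not one inherited from a specific parent), so you would first have to argue carefully that the bisection can be implemented within the model's constraints. The paper's learn-then-build strategy sidesteps this issue.

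Your lower-bound plan is close in spirit to the paper's: both use drift analysis and the observation that a uniformly redrawn predecessor is correct with probability $O(1/n)$. The paper, however, uses a single concrete hard instance---a path on $n$ vertices---rather than a distribution over labelled trees; this makes the per-step drift bound straightforward and avoids the worry you flag about $k$-ary operators changing several predecessors at once (on the path each vertex has a unique correct predecessor, and the uniform-redraw constraint bounds the expected number of newly correct entries by $O(1/n)$ per redirected node regardless of arity).
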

The unary bound is obtained by a variant of RLS which redirects in every step one randomly chosen node to a random predecessor. For the binary bound, the problem instance is learned in a 2-phase step. An optimal solution is then created by an imitation of Dijkstra's algorithm. For the lower bound, drift analysis is used to prove that every redirecting unbiased algorithm needs $\Omega(n^2)$ function evaluations to reconstruct a given path on $n$ vertices.

\section{Combined Black-Box Complexity Models}\label{sec:BBCcombined}

The black-box models discussed in previous sections study either the complexity of a problem with respect to \emph{all} black-box algorithms (in the unrestricted model) or they restrict the class of algorithms with respect to \emph{one} particular feature of common optimization heuristics, such as the size of their memory, their selection behavior, or their sampling strategies. As we have seen, many classical black-box optimization algorithms are members of several of these classes. At the same time, a non-negligible number of the upper bounds stated in the previous sections can, to date, only be certified by algorithms that satisfy an individual restriction, but clearly violate other requirements that are not controlled by the respective model. In the unbiased black-box model, for example, several of the upper bounds are obtained by algorithms that make use of a rather large memory size. It is therefore natural to ask if and how the black-box complexity of a problem increases if two or more of the different restrictions proposed in the previous sections are combined into a new black-box model. This is the focus of this section, which surveys results obtained in such combined black-box complexity models.

\subsection{Unbiased Ranking-Based Black-Box Complexity}
Already some of the very early works on the unbiased black-box model regarded a combination with the ranking-based model. In fact, the binary unbiased algorithm from~\cite{DoerrJKLWW11}, which solves \onemax with $\Theta(n)$ queries on average, only uses comparisons, and does not make use of knowing absolute fitness values. It was shown in~\cite{DoerrW14ranking} that also the other upper bounds for the $k$-ary black-box complexity of \onemax proven in~\cite{DoerrJKLWW11} hold also in the ranking-based version of the $k$-ary unbiased black-box model. 
\begin{theorem}[Theorem~6 and Lemma~7 in~\cite{DoerrW14ranking}]
\label{thm:BBCunbiasedrankingOM}
The unary unbiased ranking-based black-box complexity of $\onemax_n$ is $\Theta(n \log n)$. For constant $k$, the $k$-ary unbiased ranking-based black-box complexity of $\onemax_n$ and that of every strictly monotone function is at most $4n-5$. For $2 \leq k \leq n$, the $k$-ary unbiased ranking-based black-box complexity of $\onemax_n$ is $O(n/\log k)$.
\end{theorem}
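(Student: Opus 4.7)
The plan is to prove each of the three claims using results already established earlier in the chapter.

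For the first claim, the lower bound $\Omega(n \log n)$ is immediate from Theorem~\ref{thm:BBCunbiasedunique}: the class of unary unbiased ranking-based algorithms is contained in the class of unary unbiased algorithms (ranking access is a strict restriction relative to absolute-fitness access), so any lower bound for the latter transfers. Since every $\OM_z$ has a unique global optimum, the hypothesis is satisfied. For the matching upper bound, I would exhibit Randomized Local Search as a witness: it uses only the operator $\text{flip}_1$ (Definition~\ref{def:BBCflip_r}), which is unary unbiased, and its $(1+1)$ selection step is a single pairwise comparison between parent and offspring, hence ranking-based. The standard coupon-collector analysis then gives expected running time $O(n \log n)$ on \onemax.

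For the second claim, the plan is to adapt the binary unbiased algorithm underlying Theorem~\ref{thm:BBConemaxbinary} to the ranking-based setting. That algorithm maintains two search points $x,y$ whose disagreement positions encode which bits of the target are still unknown; it is initialized with $x$ uniform and $y=\bar{x}$, and each iteration flips one random disagreement position in one of $x$ or $y$ and decides acceptance by a single pairwise comparison. Because acceptance depends only on the order of the two fitness values, the algorithm is ranking-based. It generalizes without change to arbitrary strictly monotone functions, since on such functions flipping a single bit toward (respectively away from) the target strictly increases (respectively decreases) the objective. A careful count of the successful iterations (each reducing the number of unidentified positions by one and succeeding with probability $1/2$), the unsuccessful ones, and the initialization queries yields the stated bound $4n-5$.

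For the third claim, I would apply the block-wise encoding strategy of~\cite{DoerrW14arity} that underlies Theorem~\ref{thm:BBConemaxkary}. Using two auxiliary reference strings to keep track of already-optimized positions, the $n$ bits are processed in $\Theta(n/k)$ disjoint blocks of size $k-2$. Within each block, a derandomized Erd\H{o}s--R\'enyi query sequence (Theorem~\ref{thm:BBCunrestrictedOMderandomized}) of length $\Theta(k/\log k)$ identifies the correct bit values inside the block. To implement this in the ranking-based model, each block-phase would be augmented by a constant number of \emph{calibration queries} whose pairwise ranking relations with the probing queries suffice to recover the \onemax-values of the probes relative to one another---which is all that is needed to decode the block. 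Summing over all blocks gives $\Theta(n/k)\cdot\Theta(k/\log k) = O(n/\log k)$.

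The main obstacle lies in the third claim. In the absolute-fitness setting the derandomized Erd\H{o}s--R\'enyi argument produces a direct map from the $\Theta(k/\log k)$ fitness values to the $k-2$ target bits; in the ranking-based setting one only receives the order of these values. Designing a small calibration set whose ranking interaction with the probes restores enough information to decode the block, while ensuring that every sample is drawn by a $k$-ary unbiased operator from the current state, is the technically delicate step. The first two parts are, by contrast, essentially immediate consequences of the tools developed earlier in the chapter.
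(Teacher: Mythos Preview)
Your treatment of the first two claims matches the paper's. The lower bound in the unary case is inherited from Theorem~\ref{thm:BBCunbiasedunique}, RLS certifies the upper bound, and for the $4n-5$ bound the paper explicitly notes that the binary unbiased algorithm from~\cite{DoerrJKLWW11} behind Theorem~\ref{thm:BBConemaxbinary} ``only uses comparisons, and does not make use of knowing absolute fitness values'', so it is already ranking-based.

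For the third claim you diverge from the paper, and the route you chose has a real gap. The paper does \emph{not} obtain the $O(n/\log k)$ ranking-based bound by adapting the block-wise Erd\H{o}s--R\'enyi strategy of~\cite{DoerrW14arity} (Theorem~\ref{thm:BBConemaxkary}); it instead points to the earlier $O(n/\log k)$ algorithms of~\cite{DoerrJKLWW11}, which~\cite{DoerrW14ranking} shows are comparison-based as they stand. Your plan attacks precisely the direction the paper flags as open: immediately after the theorem the survey remarks that carrying the~\cite{DoerrW14arity} approach over to the ranking-based model ``seems plausible'' but that no such result is known. The concrete obstacle is the one you name but do not resolve. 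Erd\H{o}s--R\'enyi decoding of a size-$(k-2)$ block relies on the \emph{numerical} $\OM$-values of the $\Theta(k/\log k)$ probes; rankings give only their order. A ``constant number of calibration queries'' cannot recover these values: the probes take values in an interval of width $k-2$, so pinning each probe down to an exact integer via rankings against references requires on the order of $k$ reference levels, not $O(1)$. Even relaxing to $O(\log k)$ calibration queries per probe (binary search against references built by flipping known-correct bits) would change the per-block cost and would need a careful argument that such references can be manufactured with $k$-ary unbiased operators from the current state. As written, the third part is a proof sketch of an open problem rather than of the stated theorem; you should instead invoke the~\cite{DoerrJKLWW11} algorithms directly and verify their comparison-based nature.
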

In light of Theorem~\ref{thm:BBConemaxkary}, it seems plausible that the upper bounds for the case $2 \leq k \leq n$ can be reduced to $O(n/k)$ but we are not aware of any result proving such a claim. 

Also the binary unbiased algorithm achieving an expected $O(n \log n)$ optimization time on \leadingones uses only comparisons. 
\begin{theorem}[follows from the proof of Theorem~14 in~\cite{DoerrJKLWW11}, cf. Theorem~\ref{thm:BBCLObinary}]
\label{thm:BBCunbiasedrankingLO}
The binary unbiased ranking-based black-box complexity of \leadingones is $O(n \log n)$.
\end{theorem}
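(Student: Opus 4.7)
The plan is to observe that the $O(n\log n)$ binary unbiased algorithm constructed in the proof of Theorem~\ref{thm:BBCLObinary} can be implemented using only fitness comparisons. Recall that this algorithm maintains two reference strings $x$ and $y$ whose pairwise Hamming difference encodes the current set of \emph{candidate positions}, that is, those positions in $[n]$ that have not yet been identified as belonging to the prefix $\sigma([k])$, where $k = \LO_{z,\sigma}(x)$ is the length of the prefix already located in the current best search point $x$. To extend this prefix by one, the algorithm performs a binary search on the candidate set to locate the next prefix index $\sigma(k+1)$.

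First I would verify that each binary-search step is ranking-based. Let $C$ denote the current candidate set and let $T \subseteq C$ be the next half to test. The offspring $x'$ obtained from $x$ by flipping the bits at positions in $T$ (a binary unbiased operation using $x$ and a second parent chosen so that $x$ and this parent disagree exactly on $T$) satisfies the following: if $\sigma(k+1) \in T$, then no bit of $\sigma([k])$ is flipped and at least one further prefix bit becomes correct, so $\LO_{z,\sigma}(x') \ge k+1 > \LO_{z,\sigma}(x)$; otherwise, again no bit of $\sigma([k])$ is flipped, and $\LO_{z,\sigma}(x') = k = \LO_{z,\sigma}(x)$. Hence the \emph{sign} of the comparison between $\LO_{z,\sigma}(x')$ and $\LO_{z,\sigma}(x)$ alone determines whether the target lies in $T$ or in $C \setminus T$; no absolute fitness value is ever consulted.

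Second, I would check that the remaining bookkeeping --- initialising $y$ to set up a controlled initial candidate set, refining $y$ after each binary-search step so that $C$ halves correctly, and promoting the new best once the prefix has grown --- relies only on binary unbiased variation operators applied to stored search points and on qualitative comparisons of their fitness values. In the algorithm of~\cite{DoerrJKLWW11} all these selection decisions are already of the form ``retain whichever parent or offspring has strictly larger fitness'', which is purely ranking-based. The running time analysis is then inherited verbatim from~\cite{DoerrJKLWW11}: each of the (at most) $n$ prefix bits is located after $O(\log n)$ binary-search queries in expectation, for a total of $O(n\log n)$ queries.

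The plan therefore does not require any genuinely new argument beyond the proof of Theorem~\ref{thm:BBCLObinary}; the one point that deserves care --- and is the mild technical step rather than a true obstacle --- is to re-express every ``flip a specific subset $T$ of bits'' as a binary unbiased operator. This is done by exploiting the unbounded memory of the model to store, at each round, a pair of search points whose Hamming difference equals $T$, after which flipping those bits in a parent is simply a $\oplus$- and permutation-invariant recombination of the two stored parents.
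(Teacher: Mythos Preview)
Your proposal is correct and follows exactly the approach the paper indicates: the paper simply asserts that ``the binary unbiased algorithm achieving an expected $O(n \log n)$ optimization time on \leadingones uses only comparisons'' and declares the result to follow from the proof of Theorem~\ref{thm:BBCLObinary}. Your write-up is a faithful unpacking of precisely this observation, verifying in more detail than the paper does that each binary-search step and each bookkeeping operation requires only the sign of a fitness comparison.
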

For the ternary black-box complexity we have mentioned already in Theorem~\ref{thm:BBCLOternary} that the $O(n \log (n) / \log \log n)$ bound also holds in the ranking-based version of the ternary unbiased black-box model.

Also for the two combinatorial problems MST and SSSP it has been mentioned already in Theorems~\ref{thm:BBCunbiasedMST} and \ref{thm:BBCunbiasedSSSPredirecting} that the bounds hold also in the models in which we require the algorithms to base all decisions only on the ranking of previously evaluated search points, and not on absolute function values.  

\subsection{Parallel Black-Box Complexity}\label{sec:BBCparallel}

The (unary) unbiased black-box model was also the starting point for the authors of~\cite{BadkobehLS14}, who introduce a black-box model to investigate the effects of a parallel optimization. Their model can be seen as a unary unbiased $(\infty+\lambda)$ memory-restricted black-box model. More precisely, their model covers all algorithms following the scheme of Algorithm~\ref{alg:BBCparallel}. 

The model covers $(\mu + \lambda)$ and $(\mu,\lambda)$ EAs, cellular EAs and unary unbiased EAs working in the island model. The restriction to unary unbiased variation operators can of course be relaxed to obtain general models for $\lambda$-parallel $k$-ary unbiased black-box algorithms. 

We see that Algorithm~\ref{alg:BBCparallel} forces the algorithms to query $\lambda$ new solution candidates in every iteration. Thus, intuitively, for every two positive integers $k$ and $\ell$ with $k/\ell \in \N$ and for all problem classes $\F$, the $\ell$-parallel unary unbiased black-box complexity of $\F$ is at most as large as its $k$-parallel unary unbiased one.

\begin{algorithm2e}
 \textbf{Initialization:} \\
 \Indp
	\lFor{$i=1,\ldots,\mu$}{Sample $x^{(i,0)}$ uniformly at random from $S$ and query $f\big(x^{(i,0)}\big)$}
	$\mathcal{I} \assign \{ f\big(x^{(1,0)}\big), \ldots, f\big(x^{(\lambda,0)}\big)\}$\;
 \Indm
 \textbf{Optimization:}	
 \For{$t=1,2,3,\ldots$}{
		\For{$i=1,\ldots, \lambda$}{
		Depending only on the multiset $\mathcal{I}$ choose a pair of indices $(j,\ell) \in [\lambda] \times [0..t-1]$\;
 		Depending only on the multiset $\mathcal{I}$ choose a unary unbiased probability distribution $D^{(i,t)}(\cdot)$ on $S$, 
		sample $x^{(i,t)} \assign D^{(i,t)}(x^{(j,\ell)})$ and query $f\big(x^{(i,t)}\big)$\;}
	$\mathcal{I} \assign \mathcal{I} \cup \{ f\big(x^{(1,t)}\big), \ldots, f\big(x^{(\lambda,t)}\big)\}$\;
	 }
 \caption{A blueprint for $\lambda$-parallel unary unbiased black-box algorithms for the optimization of an unknown function $f:S \rightarrow \R$}
\label{alg:BBCparallel}
\end{algorithm2e}

The following bounds for the $\lambda$-parallel unary unbiased black-box complexity are known. 

\begin{theorem}[Theorems~1, 3 and~4 in~\cite{BadkobehLS14}]
The $\lambda$-parallel unary unbiased black-box complexity of \leadingones is $\Omega\big(\frac{\lambda n}{\max\{1,\log(\lambda/n)\}}+n^2 \big)$. It is of order at most $\lambda n + n^2$.

For any $\lambda \le e^{\sqrt{n}}$, the $\lambda$-parallel unary unbiased black-box complexity of any function having a unique global optimum is $\Omega\big(\frac{\lambda n}{\log(\lambda)}+n \log n \big)$. This bound is asymptotically tight for \onemax.
\end{theorem}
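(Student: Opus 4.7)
The plan is to prove the upper and lower bounds separately for each of the two problem classes, using parallel drift techniques throughout. For the upper bound $O(\lambda n + n^2)$ on \leadingones, I would analyze a parallel Randomized Local Search: in each generation, $\lambda$ offspring are created from the current best parent by flipping a single uniformly chosen bit (an unbiased unary operator). When the current \leadingones value is $k$, only flipping the $(k{+}1)$-st target position yields an improvement, which occurs with probability $1/n$ per offspring. Hence the probability that at least one of the $\lambda$ offspring improves equals $1-(1-1/n)^\lambda = \Omega(\min\{1,\lambda/n\})$, so the expected number of generations needed to advance by one \leadingones level is $O(\max\{1, n/\lambda\})$. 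Summing over the at most $n$ level improvements and multiplying by $\lambda$ queries per generation yields $O(\max\{\lambda n, n^2\}) = O(\lambda n+n^2)$.

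For the lower bound $\Omega(\lambda n / \max\{1,\log(\lambda/n)\} + n^2)$ on \leadingones, the $n^2$ summand is inherited from the sequential case of Theorem~\ref{thm:BBCLOunary}, since every $\lambda$-parallel unary unbiased algorithm is in particular a unary unbiased algorithm. The first summand is nontrivial only when $\lambda > n$. Here I would lift the Lehre-Witt drift argument to the parallel setting: for a uniformly random target string and permutation, define the potential as the largest length of a joint prefix found so far, and show via symmetry that a single unary unbiased query at potential $k \geq n/2$ has, conditional on a strict increase, an increment distributed like $1 + \Geom(1/2)$. Taking the maximum of $\lambda$ independent such trials gives an expected per-generation potential gain of $O((1+\log(\max\{1,\lambda/(n-k)\}))/(n-k))$; an additive-drift bound over the interval $[n/2, 3n/4]$ (which is crossed with high probability) then delivers $\Omega(n/\log(\lambda/n))$ generations and hence $\Omega(\lambda n/\log(\lambda/n))$ queries. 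The main obstacle here is obtaining sharp tail bounds on the per-query potential increase that translate cleanly into an additive drift estimate.

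For the second statement, the $n\log n$ summand follows directly from Theorem~\ref{thm:BBCunbiasedunique}, which applies to any unary unbiased algorithm. For the $\lambda n/\log\lambda$ summand, I would lift the multiplicative drift argument of Theorem~\ref{thm:BBCunbiasedunique}: set $P(t)$ to be the minimum Hamming distance, over all individuals queried up to generation $t$, to the unique optimum $z$ or its bit-wise complement $\bar z$. The proof of Theorem~\ref{thm:BBCunbiasedunique} implicitly yields strong concentration on the per-query change of $P$: its negative increments have exponentially decaying tails around the expectation $O(P(t)/n)$. Combining these into an expected best-of-$\lambda$ estimate shows that for $P(t)\in[c\log\log n,n/5]$ and $\lambda\le e^{\sqrt n}$ (the latter condition being exactly what preserves the concentration regime), the expected per-generation decrease of $P$ is $O(\log\lambda\cdot P(t)/n)$; a multiplicative drift theorem then gives $\Omega(n/\log\lambda)$ generations and $\Omega(\lambda n/\log\lambda)$ queries. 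The technical crux is turning the single-query tail bounds of Theorem~\ref{thm:BBCunbiasedunique} into a uniform best-of-$\lambda$ estimate valid throughout the drift regime. Finally, the matching upper bound on \onemax would be certified by a \oplea with standard bit mutation at a tuned rate $c\log(\lambda)/n$: with this rate each generation makes expected $\Theta(\log\lambda)$ progress while $n - \OM_z(x) = \Omega(\log\lambda)$, yielding a total of $O(\lambda n/\log\lambda + n\log n)$ queries, which matches the lower bound.
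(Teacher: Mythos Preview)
Your plan for the \leadingones bounds matches the paper's account closely: the upper bound is certified by a $(1+\lambda)$-style hill climber (the paper cites a $(1+\lambda)$~EA from~\cite{LassigS14TCS}, you use parallel RLS, but the analysis is the same), and the lower bound lifts the Lehre--Witt drift argument of Theorem~\ref{thm:BBCLOunary} to the parallel setting, exactly as you outline.

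For the second statement, however, your proposal diverges from the paper in two places. First, the paper explicitly says the lower bound for functions with a unique optimum is obtained by \emph{additive} drift analysis, not multiplicative, and that the crucial technical ingredient is a sharp tail bound on \emph{hypergeometric} random variables (Lemma~2 of~\cite{BadkobehLS14}). Your claimed per-generation multiplicative drift bound $\E[P(t)-P(t+1)\mid P(t)]=O(\log\lambda\cdot P(t)/n)$ is not what falls out of a best-of-$\lambda$ analysis: for a single unbiased unary query the probability of decreasing $P(t)$ by at least $d$ behaves like $(P(t)/n)^{\Theta(d)}$, so the expected maximum over $\lambda$ queries is of order $\log\lambda/\log(n/P(t))$, which is an \emph{additive} (not multiplicative) drift and the right object to feed into an additive drift theorem. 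If your multiplicative bound held throughout the regime $P(t)\in[c\log\log n,n/5]$, it would in fact yield $\Omega(n\log n/\log\lambda)$ generations, i.e., a bound strictly stronger than what the theorem states; this is a sign that the drift form you assume is too optimistic.

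Second, for the matching \onemax upper bound the paper points to a $(1+\lambda)$~EA with \emph{fitness-dependent} mutation rates (Theorem~4 of~\cite{BadkobehLS14}), not a single static rate $c\log(\lambda)/n$. A static rate may well suffice, but that is a separate result and not what the paper invokes; if you go that route you would need to supply the analysis yourself rather than appeal to~\cite{BadkobehLS14}.
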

For \leadingones, the upper bound is attained by a $(1+\lambda)$~EA investigated in~\cite{LassigS14TCS}. The lower bound is shown by means of drift analysis, building upon the arguments used in~\cite{LehreW12} to prove Theorem~\ref{thm:BBCLOunary}. 

For \onemax, a $(1+\lambda)$~EA with fitness-dependent mutation rates is shown to achieve the $O\big(\frac{\lambda n}{\log(\lambda)}+n \log n \big)$ expected optimization time in~\cite[Theorem~4]{BadkobehLS14}, confer Chapter~\ref{chap:adaptive} [link to the chapter on non-static parameter choices will be added] for details. 

The lower bound for the $\lambda$-parallel unary unbiased black-box complexity of functions having a unique global optimum uses additive drift analysis. The proof is similar to the proof of Theorem~\ref{thm:BBCunbiasedunique} in~\cite{LehreW12}, but requires a very precise tail bound for hypergeometric variables (Lemma~2 in~\cite{BadkobehLS14}).

\subsection{Distributed Black-Box Complexity}\label{sec:BBCdistributed}

To study the effects of the migration topology on the efficiency of distributed evolutionary algorithms, the $\lambda$-parallel unary unbiased black-box model was extended in~\cite{BadkobehLS15} to a distributed version, in which the islands exchange their accumulated information along a given graph topology. Commonly employed topologies are the complete graph (in which all nodes exchange information with each other), the ring topology, the grid of equal side lengths, and the torus. \cite{BadkobehLS15} presents an unrestricted and a unary unbiased version of the distributed black-box model. In this context, it is interesting to study how the black-box complexity of a problem increases with sparser migration topologies or with the infrequency of migration. The model of \cite{BadkobehLS15} allows all nodes to share all the information that they have accumulated so far. Another interesting extension of the distributed model would be to study the effects of bounding the amount of information that can be shared in any migration phase. We do not present the model nor all results obtained in~\cite{BadkobehLS15} in detail. The main result which is interpretable and comparable to the others presented in this book chapter is summarized by the following theorem. 

\begin{theorem}[Table~1 in~\cite{BadkobehLS15}]
\label{thm:BBCdistributed}
The $\lambda$-distributed unary unbiased black-box complexity of the class of all unimodal functions with $\Theta(n)$ different function values satisfies the following bounds:\\
\vspace{0.5ex}
\begin{center}
\begin{tabular}{c|c|c|c}
& \text{Ring Topology} & 
\text{Grid/Torus} & 
\text{Complete Topology}\\ 
\hline
Upper Bound 
& $O(\lambda n^{3/2}+n^2)$ 
& $O(\lambda n^{4/3}+n^2)$  
& \multirow{2}{*}{$\Theta(\lambda n +n^2)$}\\
Lower Bound 
& $\Omega(\lambda n+ \lambda^{2/3} n^{5/3}+n^2)$ 
& $\Omega(\lambda n+ \lambda^{3/4} n^{3/2}+n^2)$ &
\end{tabular}	
\end{center}
The lower bound for the grid applies to arbitrary side lengths, while the upper bound holds for the grid with $\sqrt{\lambda}$ islands in each of the two dimensions. 
\end{theorem}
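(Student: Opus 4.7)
The plan is to prove the upper and lower bounds separately, and to distinguish between the three migration topologies. For each topology, I would first design an explicit $\lambda$-distributed unary unbiased algorithm that achieves the upper bound, and then apply Yao's minimax principle to a carefully chosen distribution over unimodal test instances, coupled with drift arguments, for the lower bound. The common terms $\lambda n$ and $n^2$ in the lower bounds are essentially trivial: $\lambda n$ comes from the fact that $\Omega(n)$ generations are needed simply to traverse the $\Theta(n)$ fitness levels of the function, each generation spending $\lambda$ evaluations; the $n^2$ term is the unary unbiased lower bound inherited from Theorem~\ref{thm:BBCunbiasedunique} and the standard RLS analysis on unimodal functions with linearly many levels, since a distributed model is not asymptotically faster than a sequential one for intrinsically serial problems.

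For the upper bounds, I would let every island run a Randomized Local Search-style unary unbiased routine (flip one uniformly random bit and keep the better individual), and at the end of each generation migrate the current best individual to its neighbours in the migration graph. On the complete graph, one migration step suffices to synchronise all islands, so the analysis reduces to that of a $(1+\lambda)$ EA with one-bit mutations, yielding $O(\lambda n + n^2)$. On the ring and the grid, information propagates one hop per generation, so whenever an island finds an improvement it takes $\Theta(D)$ generations for the rest of the islands to learn about it, where $D$ is the diameter ($\Theta(\lambda)$ or $\Theta(\sqrt{\lambda})$). Balancing the number of improvements before a synchronisation ($\Theta(n/D)$ per block) against the cost per block ($\Theta(\lambda D)$) and summing over blocks gives $O(\lambda n^{3/2} + n^2)$ for the ring and $O(\lambda n^{4/3} + n^2)$ for the grid, assuming a worst-case unimodal instance with $\Theta(n)$ levels.

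For the topology-dependent lower bounds, I would apply Yao's principle to a distribution over unimodal instances in which the ``improving direction'' at each fitness level is an unknown random Hamming-symmetric choice, so that by unbiasedness no island can gain an advantage from positions it has not seen. Letting $k$ be a parameter, I would group the $\lambda$ islands into $k$ clusters of diameter roughly $D/k$ in the topology, and argue via an information-flow potential that within a span of $t$ generations only clusters at graph distance $O(t)$ from a discovering island can exploit that discovery. Balancing the cost of redundant parallel work (roughly $\lambda n /k$) against the delay $\Omega(k)$ of migration (in the ring) or $\Omega(k^{1/2})$ (in the grid) and optimising over $k$ produces the stated $\lambda^{2/3} n^{5/3}$ and $\lambda^{3/4} n^{3/2}$ terms.

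The main obstacle will be the last step. Defining the right potential function that simultaneously tracks each island's current distance to the optimum and the ``information deficit'' induced by the finite propagation speed through the graph is delicate, and the drift computation has to survive arbitrary unary unbiased operators rather than only standard bit flips. The hypergeometric-tail style concentration bounds used in the proof of Theorem~\ref{thm:BBCunbiasedunique} and in~\cite{BadkobehLS14} would be my starting point, but they have to be refined to a multi-island setting where the relevant ``best-so-far'' is only approximately synchronised across the topology.
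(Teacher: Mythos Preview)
Your upper-bound strategy matches the paper's sketch: a parallel $(1+1)$~EA (equivalently RLS) on each island with migration of the full information each round is exactly what is used, and the analysis indeed hinges on the diameter of the migration graph.

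There is, however, a genuine gap on the lower-bound side. The paper does not argue against an abstract ``random improving direction at each level'' class but against one concrete sub-problem, \emph{random short path}: the global optimum sits at a point with exactly $n/2$ ones; a random Hamming-$1$ path leads there from the all-ones string; and every off-path point has fitness equal to its number of ones, pulling the search towards the all-ones start of the path. This structure is what makes the lower bound work: progress along the path is inherently sequential (only one specific bit flip advances you), yet the function is still unimodal with $\Theta(n)$ levels. Your proposal never pins down such an instance, and without it the drift/information-flow argument has nothing concrete to bite on. Moreover, ``group the $\lambda$ islands into $k$ clusters and optimise over $k$'' is not a lower-bound technique: in a lower bound you must handle \emph{every} algorithm, not pick a convenient clustering. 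What the actual argument exploits is that on the ring (resp.\ grid) at most $O(t)$ (resp.\ $O(t^2)$) islands can possibly know the current path-head after $t$ generations, which caps the effective parallelism directly---no free parameter is optimised.

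A smaller but real error: the $\Omega(n^2)$ term is not ``inherited from Theorem~\ref{thm:BBCunbiasedunique}''. That theorem gives only $\Omega(n\log n)$ for functions with a unique optimum. The quadratic bound comes from the random-short-path instance itself (or, equivalently, a \leadingones-style argument as in Theorem~\ref{thm:BBCLOunary}), where each of the $\Theta(n)$ path steps requires $\Omega(n)$ unary unbiased trials in expectation even for a single sequential searcher.
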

The upper bounds in Theorem~\ref{thm:BBCdistributed} are achieved by a parallel (1+1)~EA, in which every node migrates its complete information after every round. The lower bounds are shown to hold already for a sub-problem called ``random short path'', which is a collection of problems which all have a global optimum in some point with exactly $n/2$ ones. A short path of Hamming-1 neighbors leads to this optimum. The paths start at the all-ones string. Search points that do not lie on the path lead the optimization process towards the all-ones string; their objective values equal the number of ones in the string.

\subsection{Elitist Black-Box Complexity}\label{sec:BBCelitist}

One of the most relevant questions in black-box optimization is how to avoid getting stuck in local optima. Essentially, two strategies have been developed. 

\textbf{Non-elitist selection.} The first idea is to allow the heuristics to direct their search towards search points that are, a priori, less favorable than the current-best solutions in the memory. This can be achieved, for example, by accepting into the memory (``population'') search points of function values that are smaller than the current best solutions. We refer to such selection procedures as \emph{non-elitist selection}. Non-elitist selection is used, for example, in the Metropolis algorithm~\cite{Metropolis53}, Simulated Annealing~\cite{SA83}, and, more recently, in the biology-inspired ``Strong Selection, Weak Mutation'' framework~\cite{SSWM}. 

\textbf{Global Sampling.} A different strategy to overcome local optima is \emph{global sampling.} This approach is used, most notably, by evolutionary and genetic algorithms, but also by swarm optimizers like ant colony optimization techniques~\cite{ACOBuch} and estimation-of-distribution algorithms (EDAs, cf.~Chapter~\ref{chap:EDA} of this book [link to the chapter on EDAs will be added]). The underlying idea of global sampling is to select new solution candidates not only locally in some pre-defined neighborhood of the current population, but to reserve some positive probability to sample far away from these solutions. Very often, a truly global sampling operation is used, in which \emph{every} point $x \in S$ has a positive probability of being sampled. This probability typically decreases with increasing distance to the current-best solutions. Standard bit mutation with bit flip probabilities $p<1/2$ is such a global sampling strategy. 

Global sampling and non-elitist selection can certainly be combined, and several attempts in this direction have been made. The predominant selection strategy used in combination with global sampling, however, is \emph{truncation selection}. Truncation selection is a natural implementation of Darwin's ``survival of the fittest'' paradigm in an optimization context: given a collection $\mathcal{P}$ of search points, and a population size $\mu$, truncation selection chooses from $\mathcal{P}$ the $\mu$ search points of largest function values and discards the other, breaking ties arbitrarily or according to some rule such as favoring offspring over parents or favoring geno- or phenotypic diversity. 

\begin{figure}[t]
\begin{framed}
\begin{center}
\includegraphics[scale=1.5]{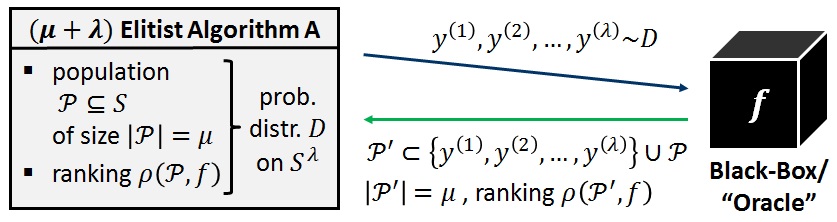}
\end{center}
\caption{A $(\mu+\lambda)$ elitist black-box algorithm stores the $\mu$ previously evaluated search points of largest function values (ties broken arbitrarily or according to some specified rule) and the ranking of these points induced by $f$. Based on this information, it decides upon a strategy from which the next $\lambda$ search points are sampled. From the $\mu+\lambda$ parent and offspring solutions, those $\mu$ search points that have the largest function values form the population for the next iteration.}
\end{framed}
\label{fig:BBCelitist}
\end{figure}

To understand the influence that this \emph{elitist selection} behavior has on the performance of black-box heuristics, the \emph{elitist black-box model} has been introduced in~\cite{DoerrL15model} (the journal version is to appear as~\cite{DoerrL17ECJ}). The elitist black-box model combines features of the memory-restricted and the ranking-based black-box models with an enforced truncation selection. More precisely, the $(\mu+\lambda)$ \emph{elitist black-box model} covers all algorithms that follow the pseudo-code in Algorithm~\ref{alg:BBCelitist}. We use here an adaptive initialization phase. A non-adaptive version as in the $(\mu+\lambda)$ memory-restricted black-box model can also be considered. This and other subtleties like the tie-breaking rules for search points of equal function values can result in different black-box complexities. It is therefore important to make very precise the model with respect to which a bound is claimed or shown to hold.

\begin{algorithm2e}
 \textbf{Initialization:} \\
 \Indp
 $X \assign \emptyset$\;
 \For{$i=1,\ldots,\mu$}{
 Depending only on the multiset $X$ and the ranking $\rho(X,f)$ of $X$ induced by $f$, choose a probability distribution $D^{(i)}$ over $S$ and sample from it $x^{(i)}$\;
 Set $X \assign X \cup \{ x^{(i)}\}$ and query the ranking $\rho(X,f)$ of $X$ induced by $f$\;
 }
 \Indm
 \textbf{Optimization:}	
 \For{$t=1,2,3,\ldots$}{
 		Depending only on the multiset $X$ and the ranking $\rho(X,f)$ of $X$ induced by $f$
	\label{line:BBCelitist-mut}	choose a probability distribution $D^{(t)}$ on $S^{\lambda}$ and 
		sample from it $y^{(1)},\ldots,y^{(\lambda)} \in S$\;
		Set $X \assign X \cup \{y^{(1)},\ldots,y^{(\lambda)}\}$ and query the ranking $\rho(X,f)$ of $X$ induced by $f$\;
  \lFor{$i=1,\ldots, \lambda$}{
  	\label{line:BBCelitist-selection} Select $x \in \arg\min X$ and update $X \assign X \setminus \{x\}$}
	 }
 \caption{The $(\mu+\lambda)$ elitist black-box algorithm for maximizing an unknown function $f:S \rightarrow \R$}
\label{alg:BBCelitist}
\end{algorithm2e}

The elitist black-box model models in particular all $(\mu+\lambda)$~EAs, RLS, and other hill climbers. It does not cover algorithms using non-elitist selection rules like Boltzmann selection, tournament selection, or fitness-proportional selection. Figure~\ref{fig:BBCelitist} illustrates the $(\mu+\lambda)$ elitist black-box model. As a seemingly subtle, but possible influential difference to the parallel black-box complexities introduced in Section~\ref{sec:BBCparallel}, note that in the elitist black-box model the offspring sampled in the optimization phase do not need to be independent of each other. If, for example, an offspring $x$ is created by crossover, in the $(\mu+\lambda)$ elitist black-box model with $\lambda \ge 2$ we allow to also create another offspring $y$ from the same parents whose entries $y_i$ in those positions $i$ in which the parents do not agree equals $1-x_i$. These two offspring are obviously not independent of each other. It is nevertheless required in the $(\mu+\lambda)$ elitist black-box model that the $\lambda$ offspring are created \emph{before} any evaluation of the offspring happens. That is, the $k$-th offspring may \emph{not} depend on the ranking or fitness of the first $k-1$ offspring.

Combining already several features of previous black-box model, the elitist black-box model can be further restricted to cover only those elitist black-box algorithms that sample from unbiased distributions. For this \emph{unbiased elitist black-box model,} we require that the distribution $p^{(t)}$ in line~\ref{line:BBCelitist-mut} of Algorithm~\ref{alg:BBCelitist} is unbiased (in the sense of Section~\ref{sec:BBCunbiased}). Some of the results mentioned below also hold for this more restrictive class. 

\subsubsection{(Non-)Applicability of Yao's Minimax Principle}\label{sec:BBCelitistYao} 
An important difficulty in the analysis of elitist black-box complexities is the fact that Yao's minimax principle (Theorem~\ref{thm:BBCYao}) cannot be directly applied to the elitist black-box model, since in this model the previously exploited fact that randomized algorithms are convex combinations of deterministic ones does not apply, cf.~\cite[Section 2.2]{DoerrL17ECJ} for an illustrated discussion. As discussed in the previous sections, Yao's minimax principle is \emph{the} most important tool for proving lower bounds in the black-box complexity context, and we can hardly do without. A natural workaround that allows to nevertheless employ this technique is to extend the collection $\A$ of elitist black-box algorithms to some superset $\A'$ in which every randomized algorithm \emph{can} be expressed as a probability distribution over deterministic ones. A lower bound shown for this broader class $\A'$ trivially applies to all elitist black-box algorithms. Finding extensions $\A'$ that do not decrease the lower bounds by too much is the main difficulty to overcome in this strategy.


\subsubsection{Exponential Gaps to Previous Models}\label{sec:BBCeitistexponential}

In~\cite[Section~3]{DoerrL17ECJ} it is shown that already for quite simple function classes there can be an exponential gap between the efficiency of elitist and non-elitist black-box algorithms; and this even in the very restrictive (1+1) unary unbiased elitist black-box complexity model. This shows that heuristics can sometimes benefit quite crucially from eventually giving preference to search points of fitness inferior to that of the current best search points. The underlying intuition for these results is that elitist algorithms do not work very well if there are several local optima that the algorithm needs to explore in order to determine the best one of them. 

\subsubsection{The Elitist Black-Box Complexity of OneMax}\label{sec:BBCeitistOM}
 
As we have discussed in Sections~\ref{sec:BBCmemory} and~\ref{sec:BBCcomparison}, respectively, the (1+1) memory-restricted and the ranking-based black-box complexity of \onemax is only of order $n/\log n$. In contrast, it is easy to see that the combined (1+1) memory-restricted ranking-based black-box model does not allow for algorithms that are faster than linear in $n$, as can easily be seen by standard information-theoretic considerations. In~\cite{DoerrL15OM} (journal version is to appear as~\cite{DoerrL17OM}) it is shown that this linear bound is tight. Whether or not it applies to the (1+1) elitist model remains unsolved, but it is shown in~\cite{DoerrL17OM} that the expected time needed to optimize \onemax with probability at least $1-\eps$ is linear for every constant $\eps>0$. This is the so-called Monte Carlos black-box complexity that we shall briefly discuss in Section~\ref{sec:BBCconclusions}. The following theorem summarizes the bounds presented in~\cite{DoerrL17OM}. Without detailing this further, we note that~\cite[Section~9]{DoerrL17OM} also introduces and studies a comma-variant of the elitist black-box model. 

\begin{theorem}[\cite{DoerrL17OM}] 
\label{thm:BBCelitistOM}
The (1+1) memory-restricted ranking-based black-box complexity of \onemax is $\Theta(n)$. 

For $1<\lambda < 2^{n^{1-\eps}}$, $\eps>0$ being an arbitrary constant, the $(1+\lambda)$ memory-restricted ranking-based black-box complexity of \onemax is $\Theta(n/\! \log \lambda)$ (in terms of generations), 
while for $\mu=\omega(\log^2(n)/\log\log n)$ its $(\mu+1)$ memory-restricted ranking-based black-box complexity is $\Theta(n/\! \log \mu)$. 

For every constant $0<\eps<1$ there exists a (1+1) elitist black-box algorithm that finds the optimum of any \onemax instance in time $O(n)$ with probability at least $1-\eps$, and this running time is asymptotically optimal.

For constant $\mu$, the $(\mu+1)$ elitist black-box complexity of \onemax is at most $n+1$.

For $\delta>0$, $C >0$, $2 \leq \lambda< 2^{n^{1-\delta}}$, and suitable chosen $\eps = O(\log^2(n)\log \log(n)\log( \lambda)/n)$ there exists a $(1+\lambda)$ elitist black-box algorithm that needs at most $O(n/\! \log \lambda)$ generations to optimize \onemax with probability at least $1-\eps$.

For $\mu=\omega(\log^2 (n)/\! \log\log n) \cap O(n/\! \log n)$ and every constant $\eps>0$, there is a $(\mu+1)$ elitist black-box algorithm optimizing \onemax in time $\Theta(n/\! \log \mu)$ with probability at least $1-\eps$.

There exists a constant $C>1$ such that for $\mu\geq Cn/\! \log n$, the $(\mu+1)$ elitist black-box complexity is $\Theta(n/\! \log n)$.
\end{theorem}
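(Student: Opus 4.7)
The lower bound $\Omega(n/\log n)$ is free: Lemma~\ref{lem:BBCunrestrictedlower} embeds every $(\mu+1)$ elitist algorithm into the unrestricted class, and Theorem~\ref{thm:BBCunrestrictedOM} gives the matching unrestricted lower bound. The plan is therefore to construct a $(\mu+1)$ elitist algorithm that, for $\mu \geq C n/\log n$ with a suitably large absolute constant~$C$, optimizes any $\OM_z$ instance in expected $O(n/\log n)$ queries.

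The algorithm I would construct has two phases. In the initialization phase, which an elitist algorithm is forced to spend on $\mu$ queries anyway, it samples $x^{(1)}, \dots, x^{(\mu)}$ independently and uniformly from $\{0,1\}^n$. In the first optimization step the algorithm computes, purely from the stored search points and the visible ranking $\rho(X,f)$, the set $S \subseteq \{0,1\}^n$ of targets $z'$ for which $\OM_{z'}$ would induce on $(x^{(1)}, \dots, x^{(\mu)})$ exactly the observed ranking; it then queries the lex-smallest element of $S$. If $S = \{z\}$, this query ends the run in $\mu+1$ evaluations. Whenever the memory still contains no pair of maximum value, a fallback phase takes over: the algorithm switches to a standard unary unbiased hill climber (e.g., the $(1+1)$~EA rule applied to the best-ranked memory element), which fits the $(\mu+1)$ elitist template and which optimizes \onemax in $O(n \log n)$ expected queries from any starting state.

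The heart of the proof is that, for $\mu \geq C n/\log n$ with $C$ a sufficiently large constant, the observed ranking of $\mu$ uniform samples uniquely determines $z$ with probability $1 - 2^{-\Omega(n)}$. Information-theoretically this is plausible because $\log_2 \mu!$ already exceeds $n$ for $\mu$ of order $n/\log n$. To make this rigorous, one first bounds, for any fixed $z' \neq z$ at Hamming distance $d$ from $z$, the probability that a single uniform sample $x$ gives $\OM_z(x) = \OM_{z'}(x)$ by
\begin{align*}
\Pr[\OM_z(x) = \OM_{z'}(x)] \;=\; \binom{d}{d/2}\, 2^{-d} \;=\; O\!\left(1/\sqrt{d}\right)
\end{align*}
for even~$d$ (and $0$ for odd~$d$), and then extends this to a bound on ranking-collisions between $z$ and $z'$ across the $\mu$ independent samples. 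Summing the resulting bounds over $z' \neq z$ (splitting the sum at, say, $d = n/3$) yields a failure probability of $2^{-\Omega(n)}$. Combined with the fallback estimate,
\begin{align*}
E[T] \;\leq\; (\mu + 1) \;+\; 2^{-\Omega(n)} \cdot O(n \log n) \;=\; O(n / \log n).
\end{align*}

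The main obstacle I expect is precisely the ranking-based analogue of the Erd\H{o}s--R\'enyi calculation: in the elitist model the algorithm only sees the relative order of the $\OM_z(x^{(i)})$, not the values themselves, and for pairs $(z, z')$ at small Hamming distance the induced rankings are highly correlated, so one cannot naively replace ``value collision'' by ``ranking collision'' and reuse the $O(1/\sqrt{d})$ bound. The key step is therefore to control the probability that $z$ and a random competitor $z'$ produce exactly the same ranking on $\mu$ independent samples; I would attack this via a concentration-type argument on the simultaneous signs of the differences $\OM_z(x^{(i)}) - \OM_{z'}(x^{(i)}) = 2 B_i - d$ with $B_i \sim \Bin(d, 1/2)$, and show that for $\mu = \Omega(n/\log n)$ the probability of a full-ranking match is small enough to absorb the union bound over the $2^n - 1$ competing targets. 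A minor secondary check is that the fallback hill climber, started from whatever $\mu$-sized memory is left after a failed first phase, still optimizes \onemax in $O(n \log n)$ expected queries, which is a routine drift argument once the elitist truncation rule is taken into account.
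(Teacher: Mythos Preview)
The paper is a survey and does not prove Theorem~\ref{thm:BBCelitistOM}; it simply quotes the collection of bounds from~\cite{DoerrL17OM}. Your proposal addresses only the final clause (the Las-Vegas $(\mu+1)$ bound for $\mu \ge Cn/\log n$), so even if it succeeded it would cover one of seven statements.

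For that clause, your high-level plan---spend the $\mu$ initialization queries on uniform samples, recover $z$ from the observed ranking, query it, and fall back to a hill climber on failure---is the right shape. The gap is precisely where you flag it, but the attack you sketch does not close it. You propose to control ranking collisions via the sign vector of $\OM_z(x^{(i)}) - \OM_{z'}(x^{(i)}) = 2B_i - d$. This is the wrong object: whether $z$ and $z'$ induce the same ranking depends on the pairwise comparisons $\OM_z(x^{(i)}) - \OM_z(x^{(j)})$ versus $\OM_{z'}(x^{(i)}) - \OM_{z'}(x^{(j)})$. Writing $\OM_z(x^{(i)}) = A_i + B_i$ and $\OM_{z'}(x^{(i)}) = A_i + (d - B_i)$ with $A_i$ counting matches outside the $d$ differing positions, ranking agreement becomes a condition on the signs of $(A_i - A_j) + (B_i - B_j)$ versus $(A_i - A_j) - (B_i - B_j)$ across all pairs $i,j$. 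The individual signs of $2B_i - d$ say essentially nothing about this, so a concentration argument on them will not deliver the $2^{-\Omega(n)}$ per-competitor bound you need to absorb the union over $2^n - 1$ targets.

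The route actually used in the source, and sketched in this very survey for the non-elitist ranking-based model (see the discussion after Theorem~\ref{thm:BBCrankingOM}), sidesteps the union bound entirely: with $\Theta(n/\log n)$ uniform samples, every \onemax value in a window $[n/2 - \kappa\sqrt{n},\, n/2 + \kappa\sqrt{n}]$ appears at least once with high probability, and the median-ranked sample has value $n/2$; this calibrates the ranking back to absolute function values on the bulk of the samples, after which the Erd\H{o}s--R\'enyi argument of Theorem~\ref{thm:BBCunrestrictedOM} identifies $z$ directly. That approach transfers to the elitist initialization phase without the combinatorial difficulty you are running into.
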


\subsubsection{The Elitist Black-Box Complexity of LeadingOnes}\label{sec:BBCeitistLO}

The (1+1) elitist black-box complexity of \leadingones is studied in~\cite{DoerrL16} (journal version is to appear as~\cite{DoerrL17LO}). Using the approach sketched in Section~\ref{sec:BBCelitistYao}, the following result is derived.

\begin{theorem}[Theorem~1 in~\cite{DoerrL17LO}] 
\label{thm:BBCelitistLO}
The (1+1) elitist black-box complexity of \leadingones is $\Theta(n^2)$. This bound holds also in the case that the algorithms have access to (and can make use of) the absolute fitness values of the search points in the population, and not only their rankings, i.e., in the (1+1) memory-restricted black-box model with enforced truncation selection. 
\end{theorem}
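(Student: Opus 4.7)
The standard $\oea$ is a $(1+1)$ elitist black-box algorithm and optimises every $\LO_{z,\sigma}$ in expected $O(n^2)$ evaluations, which gives the $O(n^2)$ upper bound in both the fitness- and the ranking-based variants. For the matching $\Omega(n^2)$ lower bound I would proceed in three steps: enlarge the algorithm class to make Yao's minimax principle (Theorem~\ref{thm:BBCYao}) applicable; apply Yao's principle against the uniform distribution on $(z,\sigma) \in \{0,1\}^n \times S_n$; and bound the per-step fitness drift of an arbitrary deterministic algorithm in the enlarged class by $O(1/n)$, then invoke additive drift on a $\Theta(n)$-sized fitness interval.

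\textbf{Enlarging the class.} As outlined in Section~\ref{sec:BBCelitistYao}, randomised $(1+1)$ elitist algorithms are not naturally convex combinations of deterministic ones, because in the elitist model a deterministic algorithm's next query is a function only of the Markovian state $(x^{(t)}, f(x^{(t)}))$, whereas a randomised algorithm uses a fresh coin at each step. I would enlarge the deterministic class by permitting the next query $y^{(t)}$ to depend on $(x^{(t)}, f(x^{(t)}))$ \emph{and on the step index $t$}. Pre-sampling the random bits of any $(1+1)$ elitist algorithm then realises it as a distribution over these time-indexed deterministic algorithms, so a lower bound on the enlarged class transfers to the original model. Importantly, this enlargement does \emph{not} give the algorithm access to the fitnesses of rejected offspring, preserving the essence of the one-point elitist memory.

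\textbf{Yao and the drift bound.} Equip $\{\LO_{z,\sigma}\}$ with the uniform distribution on $(z,\sigma)$ and fix a deterministic time-indexed algorithm $A$. Set $\Phi(t) := \LO_{z,\sigma}(x^{(t)})$; by elitism $\Phi$ is non-decreasing and the process halts when $\Phi = n$. The goal is to show $\E[\Phi(t+1) - \Phi(t) \mid \Phi(t) = k] = O(1/n)$ for all $k$ in an interval such as $[n/4, 3n/4]$. The key is that, by $\oplus$- and permutation-invariance of the prior on $(z,\sigma)$, the conditional distribution of the hidden tail given only the Markovian state $(x^{(t)}, k)$ is uniform over all $(z,\sigma)$ with $\LO_{z,\sigma}(x^{(t)}) = k$. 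Under this posterior, for any query $y$ with flipped-bit set $T := \{i : y_i \ne x^{(t)}_i\}$, the event $\LO_{z,\sigma}(y) \ge k+j$ forces $\sigma(k+1) \in T$, $\{\sigma(1), \ldots, \sigma(k)\} \cap T = \emptyset$, and $y$ must additionally match $z$ on $j-1$ further $\sigma$-positions. Direct counting bounds this probability by $O((|T|/n)(1-|T|/n)^k \cdot 2^{-(j-1)})$, which is $O(2^{-(j-1)}/n)$ uniformly in $|T|$ for $k \ge n/4$. Summing over $j$ yields the $O(1/n)$ drift. Combined with the standard observation that the initial fitness is at most $n/4$ with probability $1 - 2^{-\Omega(n)}$, additive drift on $[n/4, 3n/4]$ produces the $\Omega(n^2)$ lower bound.

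\textbf{Main obstacle.} The delicate point is to keep the symmetry claim valid as the algorithm accumulates a long chain of accepted states $(x^{(t_0)}, k_0), (x^{(t_1)}, k_1), \ldots$, each of which further constrains $(z,\sigma)$; one must check that the posterior remains symmetric enough for the direct counting bound above. A clean way to handle this, in the spirit of the Lehre--Witt argument for Theorem~\ref{thm:BBCLOunary}, is to combine the drift on $\Phi$ with an information-theoretic term that tracks how much the algorithm has learned about $\sigma$ from its accepted trajectory. This information grows by at most $O(\log n)$ bits per accepted improvement and is therefore insufficient to localise $\sigma(k+1)$ among the $n-k$ candidate positions without $\Omega(n-k)$ further queries, closing the drift computation.
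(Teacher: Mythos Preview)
Your proposal follows the same high-level route the survey indicates: the $O(n^2)$ upper bound via the \oea\ (or RLS), and the $\Omega(n^2)$ lower bound by first enlarging the $(1+1)$ elitist class so that Yao's minimax principle becomes applicable (exactly as outlined in Section~\ref{sec:BBCelitistYao}), then running a drift argument against the uniform prior on $(z,\sigma)$. Your time-indexed enlargement is a valid and essentially minimal choice: pre-sampling the randomness of any randomised $(1+1)$ elitist algorithm indeed realises it as a distribution over deterministic maps $(x,f(x),t)\mapsto y$.

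One point where your sketch needs sharpening: in the enlarged class the \emph{state trajectory}---which is what one naturally conditions on in the drift theorem---is determined not only by the accepted improvements but also by which queries were \emph{rejected} (since for a deterministic time-indexed algorithm the sequence of queries is reconstructible from the state sequence, and a non-changing state at step $s$ certifies $\LO_{z,\sigma}(y^{(s)})<v_{s-1}$). Your ``Main obstacle'' paragraph speaks only of information gained ``from its accepted trajectory''; the rejected queries also constrain the posterior on $(z,\sigma)$ and can, for instance, rule out candidate positions for $\sigma(k+1)$ one at a time. The fix is not hard---each query, accepted or rejected, can eliminate only $O(1)$ candidates for $\sigma(k+1)$, so the $\Omega(n-k)$ expected waiting time per fitness level survives---but your information-theoretic patch as stated does not yet cover this, and the $O(\log n)$-bits-per-improvement accounting alone would leave a gap. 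With that adjustment the argument goes through and matches what the survey attributes to~\cite{DoerrL17LO}.
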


The $(1+1)$ elitist complexity of \textsc{LeadingOnes} is thus considerably larger than its unrestricted one, which is known to be of order $n\log\log n$, as discussed in Theorem~\ref{thm:BBCunrestrictedLO}.

It is well known that the quadratic bound of Theorem~\ref{thm:BBCelitistLO} is matched by classical $(1+1)$-type algorithms such as the (1+1)~EA, RLS, and others. 

\subsubsection{The Unbiased Elitist Black-Box Complexity of Jump}\label{sec:BBCeitistjump}

Some shortcomings of previous models can be eliminated when they are combined with an elitist selection requirement. This is shown in~\cite{DoerrL17ECJ} for the already discussed $\jump_k$ function. 
\begin{theorem}[Theorem~9 in~\cite{DoerrL17ECJ}] 
\label{thm:BBCelitistjump}
For $k=0$ the unary unbiased (1+1) elitist black-box complexity of $\jump_k$ is $\Theta(n \log n)$. For $1 \le k \le \tfrac{n}{2}-1$ it is of order $\binom{n}{k+1}$.
\end{theorem}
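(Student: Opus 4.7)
The case $k=0$ is immediate: $\jump_0$ equals $\onemax$ (the ``blanked-out'' region is empty), so $\Theta(n\log n)$ follows from Theorem~\ref{thm:BBCunbiasedunique} for the lower bound and from the fact that RLS and the $(1+1)$~EA, both unary unbiased $(1+1)$ elitist algorithms, optimize $\onemax$ in $\Theta(n\log n)$ expected time.

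For $k \in [1, \tfrac{n}{2}-1]$, WLOG take the target string to be $(1,\ldots,1)$ and write $d(x):=n-|x|_1$. The backbone of both bounds is a structural reduction: by Lemma~\ref{lem:BBCunbiased-unarycharacterization} every unary unbiased operator draws a radius $r$ and applies $\text{flip}_r$; in the $(1+1)$ elitist model the sampling distribution can depend only on the current (single) point $x$ (the ranking of a singleton is trivial), and unbiasedness under Hamming-translation automorphisms (the $\oplus$-shifts $z\mapsto z\oplus a$) forces the radius distribution $q=(q_r)_{r\in[0..n]}$ to be the same for every $x$. Thus the entire class reduces to a single probability vector $q$ on $[0..n]$, and in particular it is constant across iterations.

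For the upper bound I would take the concrete algorithm with $q_1=q_{k+1}=1/2$. With probability $1-e^{-\Omega(n)}$ the initial $d$ lies in the ``visible'' band $[k+1,n-k-1]$. The $\text{flip}_1$ steps then drive $d$ down to $k+1$ in $O(n\log n)$ expected iterations by a standard coupon-collector estimate, while the interleaved $\text{flip}_{k+1}$ steps cannot reach the optimum from $d>k+1$ and hence are either accepted improvements or harmless rejections. Once $d(x)=k+1$, elitism rejects every offspring with $d\in\{k+2,\ldots,n-k-1\}$ (strictly inferior fitness) and every offspring with $d\in[1,k]\cup[n-k,n-1]$ (fitness $0$), so the only productive move is the simultaneous flip of all $k+1$ zero-bits. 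A $\text{flip}_{k+1}$ achieves this with probability $\binom{n}{k+1}^{-1}$, giving an expected second-phase length of $2\binom{n}{k+1}$. For $k\ge 1$ the first phase is absorbed, yielding the claimed $O(\binom{n}{k+1})$.

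For the lower bound Yao's principle is unavailable (Section~\ref{sec:BBCelitistYao}), so I argue directly against an arbitrary $q$. Conditioning on the high-probability event $d(x^{(0)})\in[k+1,n-k-1]$, elitism keeps every subsequent current best in $\{0\}\cup[k+1,n-k-1]$, since every point outside this set has fitness $0$ and is therefore rejected. In any pre-optimum iteration with current point $x$, an offspring equals the target iff $r=d(x)$ \emph{and} the flipped positions are exactly the zeros of $x$, so the success probability is at most $q_{d(x)}\binom{n}{d(x)}^{-1}\le\binom{n}{k+1}^{-1}$, using that $\binom{n}{j}$ is minimized on $[k+1,n-k-1]$ at its endpoints. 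Stochastic domination by a geometric random variable of parameter $\binom{n}{k+1}^{-1}$ yields $\E[T]=\Omega(\binom{n}{k+1})$. The main obstacle is precisely this direct (Yao-free) treatment of elitism together with certifying that the algorithm almost surely stays in the visible band so that the per-iteration success bound applies; a secondary wrinkle occurs when $k$ is close to $n/2$, where the visible band is thin and one must argue separately that a fitness-$0$ start quickly reaches a visible (or optimal) point under the fixed distribution $q$.
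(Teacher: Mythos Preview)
Your overall strategy is right: reduce to radius distributions via Lemma~\ref{lem:BBCunbiased-unarycharacterization}, use RLS-plus-$(k{+}1)$-flips for the upper bound, and bound the per-step success probability from the visible band by $\binom{n}{k+1}^{-1}$ for the lower bound. Two issues deserve attention.

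\textbf{The ``$q$ is the same for every $x$'' claim is not justified by the model, but also not needed.} In the unbiased elitist model as defined in Section~\ref{sec:BBCelitist}, the sampling distribution in line~\ref{line:BBCelitist-mut} of Algorithm~\ref{alg:BBCelitist} may depend on the multiset $X$; the unbiasedness constraint applies to the \emph{operator}, not to the algorithm's choice of which unbiased operator to apply. So an algorithm may legitimately use a different radius law $q^{(x)}$ for different $x$. The symmetrisation you seem to have in mind (apply a random Hamming automorphism up front) is not available here, because the $(1+1)$ memory cannot store the automorphism. Fortunately your lower-bound argument never uses constancy of $q$: from any $x$ in the visible band the success probability is $q^{(x)}_{d(x)}/\binom{n}{d(x)} \le 1/\binom{n}{k+1}$ regardless of how $q^{(x)}$ varies with~$x$. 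Just drop the claim and carry the superscript.

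\textbf{The large-$k$ wrinkle is a genuine gap.} For $k \ge n/2 - O(\sqrt{n})$ the probability that the uniformly random initial point lands in the visible band is only $\Theta((n-2k)/\sqrt{n})$, so your geometric-domination step gives $\E[T]=\Omega\big(\binom{n}{k+1}/\sqrt{n}\big)$, which is short of the claimed order. To recover the full bound you must show that, starting at fitness~$0$, the process enters the visible band before hitting the optimum with probability $\Omega(1)$. This does hold: at any fitness-$0$ state $x$ and for any radius $r$, the offspring is uniform on the Hamming sphere of radius $r$ around $x$; conditioned on the offspring having positive fitness, the visible band is hit at least as often as the single optimum except when $d(x)$ is so small (or so close to $n$) that the sphere of radius $d(x)$ misses the band entirely. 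Those exceptional distances have probability $e^{-\Omega(n)}$ under a random target, and one can argue (via the posterior on $z$ given only tie/acceptance information) that the process does not drift into them. This is the part that needs to be written out; without it the lower bound is incomplete for $k$ close to $n/2$. The analogous issue on the upper-bound side is benign: for such $k$ the target $\binom{n}{k+1}$ is exponential, so any polynomial bound on the fitness-$0$ phase is absorbed.
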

The bound in Theorem~\ref{thm:BBCelitistjump} is non-polynomial for $k=\omega(1)$. This is in contrast to the unary unbiased black-box complexity of $\jump_k$, which, according to Theorem~\ref{thm:BBCunbiasedjump}, is polynomial even for extreme values of~$k$.

\section{Summary of Known Black-Box Complexities for OneMax and LeadingOnes}
\label{sec:BBCtables}

For a better identification of open problems concerning the black-box complexity of the two benchmark functions \onemax and \leadingones, we summarize the bounds that have been presented in previous sections. 

The first table summarizes known black-box complexities of $\onemax_n$ in the different models. The bound for the $\lambda$-parallel black-box model assumes $\lambda \le e^{\sqrt{n}}$. The bounds for the $(1+\lambda)$ and the $(1,\lambda)$ elitist model assume $1< \lambda < 2^{n^{1-\eps}}$ for some $\eps>0$. Finally, the bound for the $(\mu+1)$ model assumes that $\mu = \omega(\log^2 n/\! \log \log n)$ and $\mu \leq n$. \vspace{2ex}

\begin{center}
\begin{tabular}{lcc}
\textbf{Model}    
& {\textbf{Lower Bound}} 
& {\textbf{Upper Bound}}\\ \hline 
unrestricted 
	& \multicolumn{2}{c}{$\Theta(n/\! \log n)$}\\
	\hline
unbiased, arity $1$
	& \multicolumn{2}{c}{$\Theta(n \log n)$}
	\\ 
unbiased, arity $2 \leq k \leq \log n$ 
	& $\Omega(n/\! \log n)$ 
	& $O(n/k)$  
	\\ \hline
ranking-based (unrestricted)  
	&  \multicolumn{2}{c}{$\Theta(n/\! \log n)$}
	\\ 	
ranking-based unbiased, arity  $1$
	&  \multicolumn{2}{c}{$\Theta(n \log n)$}
	\\ 
ranking-based unbiased, arity $ 2 \leq k \leq n$
	& $\Omega(n/\! \log n)$ 
	& $O(n /\log k)$  
	\\
	\hline
(1+1) comparison-based
	&  \multicolumn{2}{c}{$\Theta(n)$}
	\\ \hline				
(1+1) memory-restricted  
	&  \multicolumn{2}{c}{$\Theta(n/\! \log n)$}
	\\ \hline		
$\lambda$-parallel unbiased, arity $1$
	&  \multicolumn{2}{c}{$\Theta\big(\frac{\lambda n}{\log(\lambda)}+n \log n \big)$}
	\\ \hline					
(1+1) elitist Las Vegas
	& $\Omega(n)$ 
	& $O(n\log n)$  
	\\ 				
(1+1) elitist $\log n/n$-Monte Carlo
	&  \multicolumn{2}{c}{$\Theta(n)$}
	\\ 				
(2+1) elitist Monte Carlo/Las Vegas
	&  \multicolumn{2}{c}{$\Theta(n)$}
	\\ 				
(1+$\lambda$) elitist Monte Carlo ($\#$ generations)	
	&  \multicolumn{2}{c}{$\Theta(n/\! \log \lambda)$}
	\\ 		
($\mu$+1) elitist Monte Carlo
	&  \multicolumn{2}{c}{$\Theta(n/\! \log \mu)$}
	\\ 		
($1,\lambda$) elitist Monte Carlo/Las Vegas ($\#$ generations)
	& \multicolumn{2}{c}{$\Theta(n/\! \log \lambda)$}
\end{tabular}
\end{center}

%

The next table summarizes known black-box complexities of $\leadingones_n$. The upper bounds for the unbiased black-box models also hold in the ranking-based variants.\vspace{2ex}

\begin{center}
\begin{tabular}{lcc}
\textbf{Model}    
& {\textbf{Lower Bound}} 
& {\textbf{Upper Bound}}\\ \hline 
unrestricted 
	& \multicolumn{2}{c}{$\Theta(n \log\log n)$} 
	\\ \hline
unbiased, arity $1$
	&\multicolumn{2}{c}{$\Theta(n^2)$}
	\\ 
unbiased, arity $2$ 
	& $\Omega(n \log\log n)$
	& $O(n \log n)$ 
	\\ 
unbiased, arity $\geq 3$ 
	& $\Omega(n \log\log n)$ 
	& $O(n \log (n)/\log\log n)$ 
	\\ 	\hline	
$\lambda$-parallel unbiased, arity $1$ 
	&  $\Omega\big(\frac{\lambda n}{\log(\lambda/n)}+n^2 \big)$
	& $O(\lambda n + n^2)$
	\\ \hline				
(1+1) elitist 
	& $\Omega(n^2)$ 
	& $O(n^2)$ 
	\\ 	\hline		
\end{tabular}
\end{center}

\section{From Black-Box Complexity to Algorithm Design}\label{sec:BBCalgo}

In the previous sections, the focus of our attention has been on computing performance limits for black-box optimization heuristics. In some cases, e.g., for \onemax in the unary unbiased black-box model and for \leadingones in the (1+1) elitist black-box model, we have obtained lower bounds that are matched by the performance of well-known standard heuristics like RLS or the (1+1)~EA. For several other models and problems, however, we have obtained black-box complexities that are much smaller than the expected running times of typical black-box optimization techniques. As discussed in the introduction, two possible reasons for this discrepancy exist. Either the respective black-box models do not capture very well the complexity of the problems for heuristics approaches, or there are ways to improve classical heuristics by novel design principles. 

In the restrictive models discussed in Sections~\ref{sec:BBCmemory} to~\ref{sec:BBCcombined}, we have seen that there is some truth in the first possibility. For several optimization problems, we have seen that their complexity increases if the class of black-box algorithms is restricted to subclasses of heuristics that all share some properties that are commonly found in state-of-the-art optimization heuristics. Here in this section we shall demonstrate that this is nevertheless not the end of the story. We discuss two examples where a discrepancy between black-box complexity and the performance of classical heuristics can be observed, and we show how the analysis of the typically rather artificial problem-tailored algorithms can inspire the design of new heuristics. 

\subsection{The \texorpdfstring{$(1+(\lambda,\lambda))$~Genetic Algorithm}{(1+(lambda,lambda))~Genetic Algorithm}}\label{sec:BBCalgoga}

Our first example is a binary unbiased algorithm, which optimizes \onemax more efficiently than any classical unbiased heuristic, and provably faster than any unary unbiased black-box optimizer.
 
We recall from Theorems~\ref{thm:BBConemaxunary}, and~\ref{thm:BBConemaxbinary} that the unary unbiased black-box complexity of $\onemax_n$ is $\Theta(n \log n)$, while its binary unbiased one is only $O(n)$. The linear-time algorithm flips one bit at a time, and uses a simple, but clever encoding to store which bits have been flipped already. This way, it is a rather problem-specific algorithm, since it ``knows'' that a bit that has been tested already does not need to be tested again. The algorithm is therefore not very suitable for non-separable problems, where the influence of an individual bit depends on the value of several or all other bits. 

Until recently, all existing running time results indicated that general-purpose unbiased heuristics need $\Omega(n \log n)$ function evaluations to optimize \onemax, so that the question if the binary unbiased black-box model is too ``generous'' imposed itself. In~\cite{DoerrDE15,DoerrD15self} this question was answered negatively; through the presentation of a novel binary unbiased black-box heuristic that optimizes \onemax in expected linear time. This algorithm is the $(1+(\lambda,\lambda))$~GA. Since the algorithm itself will be discussed in more detailed in Chapter~\ref{chap:adaptive} [link to the chapter on non-static parameter choices will be added], we present here only the main ideas behind it.

Disregarding some technical subtleties, one observation that we can make when regarding the linear-time binary unbiased algorithm for \onemax is that when it test the value of a bit, the amount of information that it obtains is the same whether or not the offspring has a better function value. In other words, the algorithm equally benefits from offspring that are better or worse than the previously best. A similar observation applies to all $O(n/\log n)$ algorithms for \onemax discussed in Sections~\ref{sec:BBCunrestrictedResults} to~\ref{sec:BBCcombined}. All these algorithms do not strive to sample search points of large objective value, but rather aim at maximizing the amount of information that they can learn about the problem instance at hand. This way, they substantially benefit also from those search points that are (much) worse than other already evaluated ones. 

Most classical black-box heuristics are different. They store only the best so far solutions, or use inferior search points only to create diversity in the population. Thus, in general, they are not very efficient in learning from ``bad'' samples (where we consider a search point to be ``bad'' if it has small function value). When a heuristic is close to a local or a global optimum (in the sense that it has identified search points that are not far from these optima) it samples, in expectation, a fairly large number of search points that are wore than the current-best solutions. Not learning from these offspring results in a significant number of ``wasted'' iterations from which the heuristics do not benefit. This observation was the starting point for the development of the $(1+(\lambda,\lambda))$~GA. 

Since the unary unbiased black-box complexity of \onemax is $\Omega(n \log n)$, it was clear for the development of the $(1+(\lambda,\lambda))$~GA that a $o(n \log n)$ unbiased algorithm must be at least binary. This has led to the question how recombination can be used to learn from inferior search points. The following idea emerged. For illustration purposes, assume that we have identified a search point $x$ of function value $\OM_z(x)=n-1$. From the function value, we know that there exists exactly one bit that we need to flip in order to obtain the global optimum $z$. Since we want to be unbiased, the best mutation seems to be a random 1-bit flip. This has probability $1/n$ of returning $z$. If we did this until we identified $z$, the expected number of samples would be $n$, and even if we stored which bits have been flipped already, we would need $n/2$ samples on average. 

Assume now that in the same situation we flip $\ell>1$ bits of $x$. Then, with a probability that depends on $\ell$, we have only flipped already optimized bits (i.e., bits in positions $i$ for which $x_i=z_i$) to $1-z_i$, thus resulting in an offspring of function value $n-1-\ell$. However, the probability that the position $j$ in which $x$ and $z$ differ is among the $\ell$ positions is $\ell/n$. If we repeat this experiment some $\lambda$ times, independently of each other and always starting with $x$ as ``parent'', then the probability that $j$ has been flipped in at least one of the offspring is $1-(1-\ell/n)^{\lambda}$. For moderately large $\ell$ and $\lambda$ this probability is sufficiently large for us to assume that among the $\lambda$ offspring there is at least one in which $j$ has been flipped. Such an offspring distinguishes itself from the others by a function value of $n-1-(\ell-1)+1=n-\ell+1$ instead of $n-\ell-1$. Assume that there is one such offspring $x'$ among the $\lambda$ independent samples crated from $x$. When we compare $x'$ with $x$, they differ in exactly $\ell$ positions. In $\ell-1$ of these, the entry of $x$ equals that of $z$. Only in the $j$-th position the situation is reversed: $x'_j=z_j \neq x_j$. We would therefore like to identify this position $j$, and to incorporate the bit value $x'_j$ into $x$. 

So far we have only used mutation, which is a unary unbiased operation. At this point, we want to compare and merge two search points, which is one of the driving motivations behind \emph{crossover}. Since $x$ clearly has more ``good'' bits than $x'$, a uniform crossover, which takes for each position $i$ its entry uniformly at random from any of its two parents, does not seem to be a good choice. We would like to add some bias to the decision making process, in favor of choosing the entries of $x$. This yields to a biased crossover, which takes for each position $i$ its entry $y_i$ from $x'$ with some probability $p<1/2$ and from $x$ otherwise. The hope is to choose $p$ in a way that in the end only good bits are chosen. Where $x$ and $x'$ are identical, there is nothing to worry, as these positions are correct already (and, in general, we have no indication to flip the entry in this position). So we only need to look at those $\ell$ positions in which $x$ and $x'$ differ. The probability to make only good choices; i.e., to select $\ell-1$ times the entry from $x$ and only for the $j$-th position the entry from $x'$ equals $p(1-p)^{\ell-1}$. This probability may not be very large, but when we do again $\lambda$ independent trials, then the probability to have created $z$ in at least one of the trials equals $1-(1-p(1-p)^{\ell-1})^{\lambda}$. As we shall see, for suitable parameter values $p$, $\lambda$, and $\ell$, this expression is sufficiently large to gain over the $O(n)$ strategies discussed above. Since we want to sample exactly one out of the $\ell$ bits in which $x$ and $x'$ differ, it seems intuitive to set $p=1/\ell$, cf. the discussion in~\cite[Section~2.1]{DoerrDE15}. 

Before we summarize the main findings, let us briefly reflect on the structure of the algorithm. In the \emph{mutation step} we have created $\lambda$ offspring from $x$, by a mutation operator that flips some $\ell$ random bits in $x$. This is a unary unbiased operation. From these $\lambda$ offspring, we have selected one offspring $x'$ with largest function value among all offspring (ties broken at random). In the \emph{crossover phase,} we have then created again $\lambda$ offspring, by recombining $x$ and $x'$ with a biased crossover. This biased crossover is a binary unbiased variation operator. The algorithm now chooses from these $\lambda$ recombined offspring one that has largest function vales (For \onemax ties can again be broken at random, but for other problems it can be better to favor individuals that are different from $x$, cf.~\cite[Section~4.3]{DoerrDE15}). This selected offspring $y$ replaces $x$ if it is at least as good as $x$, i.e., if $f(y) \ge f(x)$. 

We see that we have only employed unbiased operations, and that the largest arity in use is two. Both variation operators, mutation and biased crossover, are standard operators in the evolutionary computation literature. What is novel is that crossover is used as a \emph{repair mechanism,} and after the mutation step. 

We also see that this algorithm is ranking-based, and even comparison-based in the sense that it can be implemented in a way in which instead of querying absolute function values only a comparison of the function values of two search points is asked for. Using information-theoretic arguments as described in Section~\ref{sec:BBClower} it is then not difficult to show that for any (adaptive or non-adaptive) parameter setting the best expected performance of the $(1+(\lambda,\lambda))$~GA on $\onemax_n$ is at least linear in the problem dimension $n$.  

The following theorem summarizes some of the results on the expected running time of the $(1+(\lambda,\lambda))$~GA on \onemax. An exhaustive discussion of these results can be found in~\cite{DoerrD18ga}. In particular the fitness-dependent and the self-adjusting choice of the parameters will also be discussed in Section~\ref{sec:SAselfga} of this book [link to the chapter on non-static parameter choices will be added]. 

\begin{theorem}[from \cite{DoerrDE15,DoerrD15self,DoerrD15tight,Doerr16}]
\label{thm:BBCga}
The $(1+(\lambda,\lambda))$~GA is a binary unbiased black-box algorithm. For mutation strength $\ell$ sampled from the binomial distribution $\text{Bin}(n,k/n)$ and a crossover bias $p=1/k$ the following holds:
\begin{itemize}
	\item For $k=\lambda=\Theta(\sqrt{\log(n) \log\log(n)/\log\log\log(n)})$ the expected optimization time of the $(1+(\lambda,\lambda))$~GA on $\onemax_n$ is $O(n \sqrt{\log(n) \log\log\log(n) / \log\log(n)})$.
	\item No static parameter choice of $\lambda \in [n]$, $k \in [0..n]$, and $p \in [0,1]$ can give a better expected running time. 
	\item There exists a fitness-dependent choice of $\lambda$ and $k=\lambda$ such that the $(1+(\lambda,\lambda))$~GA has a linear expected running time on \onemax. 
	\item A linear expectec running time can also be achieved by a self-adjusting choice of $k=\lambda$. 
\end{itemize}
\end{theorem}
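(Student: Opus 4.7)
The plan is to establish all four parts through a unified one-generation analysis of the $(1+(\lambda,\lambda))$~GA, followed by a drift-style summation that converts per-generation progress into bounds on the total expected number of function evaluations. First I would verify that the algorithm is binary unbiased: by Lemma~\ref{lem:BBCunbiased-unarycharacterization} the mutation operator (sampling $\ell \sim \text{Bin}(n, k/n)$ and applying $\text{flip}_\ell$) is unary unbiased, and the biased uniform crossover with parameter $p = 1/k$ is binary unbiased since its conditional distribution $D(\cdot \mid x, x')$ treats every coordinate and every bit value symmetrically.

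Next, for a parent $x$ with potential $d := n - \OM_z(x)$, I would derive a lower bound on the probability $p_{\text{gen}}(d, k, \lambda)$ that one generation produces an offspring $y$ with $\OM_z(y) > \OM_z(x)$. The argument factors into (i) a mutation bound: the probability that among the $\lambda$ mutation offspring the fittest one $x'$ differs from $x$ in $\ell$ positions, exactly one of which is "wrong" in $x$, which by elementary calculations with binomial and hypergeometric tails is of order $\Omega\!\bigl(\lambda\ell d/n \cdot (1 - \ell/n)^{\ell-1}\bigr)$; and (ii) a crossover bound: conditional on such an $x'$, the probability that at least one of the $\lambda$ biased-crossover offspring picks the one corrective bit from $x'$ and keeps all other differing bits from $x$, which is $1 - \bigl(1 - (1/k)(1-1/k)^{\ell-1}\bigr)^{\lambda}$. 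Taking expectation over $\ell \sim \text{Bin}(n, k/n)$ and setting $k = \lambda$ yields $p_{\text{gen}}(d, \lambda, \lambda) = \Omega(\min\{1, \lambda^2 d/n\})$ in the principal regime. Since each generation costs $2\lambda$ evaluations, a fitness-level / additive drift argument gives the expected total cost as $\sum_{d=1}^{n} O(\lambda/p_{\text{gen}}(d,\lambda,\lambda))$. Optimizing over a static $\lambda$ yields part (a); the unusual $\sqrt{\log n \, \log\log\log n /\log\log n}$ factor arises from the crossover bottleneck in the small-$d$ regime where the exponential factor $(1-1/\lambda)^{\ell-1}$ becomes dominant and must be balanced against the $\lambda^2 d/n$ factor.

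For the lower bound in part (b), my plan is to exploit the fact that a $(1+(\lambda,\lambda))$~GA with static parameters is a comparison-based binary unbiased algorithm, and to adapt the drift-based argument underlying Theorem~\ref{thm:BBCunbiasedunique}. One would define a potential measuring how much information about $z$ the current state carries, show that its expected one-generation decrease is bounded in terms of $\lambda$, $k$, and $d$, and verify via case analysis that no choice of $(k, \lambda, p)$ can avoid a regime that forces the total expected running time to be at least $\Omega(n\sqrt{\log n \, \log\log\log n/\log\log n})$. The main obstacle here is managing the interaction between $k$, $\lambda$, and $p$ across several distance regimes simultaneously.

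For parts (c) and (d), I would let $\lambda$ depend on the current state. The one-generation analysis above suggests the fitness-dependent choice $\lambda(d) = k(d) = \lceil\sqrt{n/d}\,\rceil$, for which $p_{\text{gen}}(d,\lambda(d),\lambda(d)) = \Omega(1)$ and the cost per generation is $O(\sqrt{n/d})$. An additive drift argument on $d$ then yields expected total cost $\sum_{d=1}^{n} O(\sqrt{n/d}) = O(n)$, proving part (c). For part (d), I would employ a one-fifth-success-rule style update: multiplying $\lambda$ by some $F > 1$ after every failing generation and dividing it by $F^4$ after every successful one. Showing that the self-adjusted $\lambda$ stays concentrated around the fitness-dependent optimum $\Theta(\sqrt{n/d})$ throughout the run, via a drift theorem on the joint potential formed by $d$ and $\log\lambda$, then yields linear expected optimization time. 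The main obstacle in part (d) is controlling the rare events in which $\lambda$ drifts far from its optimal value, which requires careful tail bounds for the joint drift.
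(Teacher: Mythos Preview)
This survey chapter does not itself prove Theorem~\ref{thm:BBCga}; it states the result with pointers to~\cite{DoerrDE15,DoerrD15self,DoerrD15tight,Doerr16} and precedes it only by the informal one-generation heuristic you have correctly absorbed. So there is no in-paper proof to compare against, and the relevant question is whether your plan would recover the arguments of the cited papers.

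For the unbiasedness claim and for parts~(a), (c), and~(d) your outline is essentially the approach of~\cite{DoerrDE15,DoerrD15self,Doerr16}: a one-generation success probability of the form $\Omega(\min\{1,\lambda^2 d/n\})$ when $k=\lambda$, a fitness-level summation over $d$, the fitness-dependent choice $\lambda(d)=\Theta(\sqrt{n/d})$ giving linear cost, and a one-fifth-rule style self-adjustment tracked by a joint drift potential. That is the right skeleton; the actual work lies in the tail estimates you allude to, but nothing in your plan is misdirected there.

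Part~(b) has a genuine gap. You propose to ``adapt the drift-based argument underlying Theorem~\ref{thm:BBCunbiasedunique}'' and to use an information-type potential. That theorem is a \emph{unary} unbiased black-box lower bound; the \opllga is binary unbiased, and by Theorem~\ref{thm:BBConemaxbinary} the binary unbiased black-box complexity of \onemax is $O(n)$. Hence no black-box or information-theoretic argument of that kind can yield a super-linear lower bound here: the bound in~(b) is not a statement about the model but about this \emph{specific} algorithm with static parameters. The proof in~\cite{DoerrD15tight,Doerr16} proceeds instead by upper-bounding, for every fixed triple $(k,\lambda,p)$, the probability that one generation of the \opllga strictly improves the parent at distance~$d$, and then exhibiting for each parameter triple a range of distances in which this improvement probability is so small that the accumulated cost already reaches the claimed bound. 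This is a direct algorithm-specific computation (essentially the converse of your upper-bound analysis for part~(a)), not a black-box drift argument; your reference to Theorem~\ref{thm:BBCunbiasedunique} points in the wrong direction.
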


Note that these results answer one of the most prominent long-standing open problems in evolutionary computation: the usefulness of crossover in an optimization context. Previous and other recent examples exist where crossover has been shown to be beneficial~\cite{JansenW02,FischerW04,Sudholt05,DoerrT09,DoerrHK12,Sudholt12,DoerrJKNT13tcs,KotzingGECCO16,Dang2017}, but in all of these works, either non-standard problems or operators are regarded or the results hold only for uncommon parameter settings, or substantial additional mechanisms like diversity-preserving selection schemes are needed to make crossover really work. To our knowledge, Theorem~\ref{thm:BBCga} is thus the first example that proves advantages of crossover in a natural algorithmic setting for a simple hill climbing problem. 
 
Without going into detail, we mention that the $(1+(\lambda,\lambda))$~GA has also been analyzed on a number of other benchmark problems, both by theoretical~\cite{BuzdalovD17} and by empirical~\cite{DoerrDE15,MironovichB15,GoldmanP15} means. These results indicate that the concept of using crossover as a repair mechanism can be useful far beyond \onemax.

\subsection{Randomized Local Search with Self-Adjusting Mutation Strengths}\label{sec:BBCalgoRLS}
Another example highlighting the impact that black-box complexity studies can have on the design of heuristic optimization techniques was presented in~\cite{DoerrDY16PPSN}. This works build on~\cite{DoerrDY16}, where the tight bound for the unary unbiased black-box complexity of \onemax stated in Theorem~\ref{thm:BBConemaxunary} was presented. This bound is attained, up to an additive difference that is sublinear in $n$, by a variant of Randomized Local Search that in each iterations chooses a value $r$ that depends on the function value $\OM_z(x)$ of the current-best search point $x$ and then uses the $\text{flip}_r$ variation operator introduced in Definition~\ref{def:BBCflip_r} to create an offspring~$y$. The offspring $y$ replaces $x$ if and only if $\OM_z(y) \ge \OM_z(x)$. The dependence of $r$ on the function value $\OM_z$ is rather complex and difficult to compute directly, cf. the discussion in~\cite{DoerrDY16}. Quite surprisingly, a self-adjusting choice of $r$ is capable of identifying the optimal mutation strengths $r$ in all but a small fraction of the iterations. This way, RLS with this self-adjusting parameter choice achieves an expected running time on \onemax that is only by an additive $o(n)$ term worse than that that of the theoretically optimal unary unbiased black-box algorithm. 

The algorithm from~\cite{DoerrDY16PPSN} will be discussed in Chapter~\ref{chap:adaptive} of this book [link to the chapter on non-static parameter choices will be added]. For the context of black-box optimization, it is interesting to note that the idea to take a closer look into self-adjusting parameter choices, as well as our ability to investigate the optimality of such non-static parameter choices are deeply rooted in the study of black-box complexities.

\section{From Black-Box Complexity to Mastermind}
\label{sec:BBCmastermind}

In~\cite{DoerrDST16} the black-box complexity studies for \onemax were extended to the following generalization of \onemax to functions over an alphabet of size $k$. For a given string $z \in [0..k-1]^n$, the \emph{Mastermind} function $f_{z}$ assigns to each search point $x \in [0..k-1]^n$ the number of positions in which $x$ and $z$ agree. Thus, formally, 
$$f_z:[0..k-1]^n \to \R, x \mapsto |\{ i \in [n] \mid x_i=z_i\}|.$$ 
The collection $\{f_{z} \mid z \in [0..k-1]^n \}$ of all such Mastermind functions forms the Mastermind problem of $n$ \emph{positions} and $k$ \emph{colors.}

The Mastermind problem models the homonymous board game, which had been very popular in North America and in the Western parts of Europe in the seventies and eighties of the last century. More precisely, it models a variant of this game, as in the original Mastermind game information is provided also about \emph{colors} $x_i$ that appear in $z$ but not in the same position $i$; cf.~\cite{DoerrDST16} for details and results about this Mastermind variant using \emph{black and white pegs}.

Mastermind and similar \emph{guessing games} have been studied in the Computer Science literature much before the release of Mastermind as a commercial board game. As we have discussed in Section~\ref{sec:BBCunrestrictedResults}, the case of $k=2$ colors (this is the \onemax problem) has already been regarded by Erd\H os and R\'enyi and several other authors in the early 1960s. These authors were mostly interested in the complexity- and information-theoretic aspects of this problem, and/or its cryptographic nature. The playful character of the problem, in turn, was the motivation of Knuth~\cite{Knuth77}, who computed an optimal strategy that solves any Mastermind instance with $n=4$ positions and $k=6$ colors in at most five guesses. 

The first to study the general case with arbitrary values of $k$ was Chv\'atal~\cite{Chvatal83}. 

\begin{theorem}[Theorem~1 in~\cite{Chvatal83}]
\label{thm:BBCmastermindchvatal} 
For every $k\ge 2$ the unrestricted black-box complexity of the Mastermind game with $n$ positions and $k$ colors is $\Omega(n \log k / \log n)$. For $\eps>0$ and $k \le n^{1-\eps}$, it is at most $(2+\eps) n (1+2 \log k)/\log (n/k)$. 
\end{theorem}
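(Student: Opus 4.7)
The lower bound is an immediate corollary of the simple information-theoretic bound of Theorem~\ref{thm:BBCinfotheo}, applied to the class $\F:=\{f_z\mid z\in[0..k-1]^n\}$ with search space $S:=[0..k-1]^n$. Each function $f_z$ has the unique optimum $z$, and $f_z(S)\subseteq\{0,1,\ldots,n\}$ has size at most $n+1$. Hence the unrestricted black-box complexity is at least $\lceil\log_{n+1}|S|\rceil-1=\lceil n\log k/\log(n+1)\rceil-1=\Omega(n\log k/\log n)$.

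For the upper bound I would generalise the Erd{\H{o}}s--R\'enyi random-sampling strategy that gave the analogous result for \onemax in Theorem~\ref{thm:BBCunrestrictedOM}. Fix $N:=\lceil(2+\eps)n(1+2\log k)/\log(n/k)\rceil-1$. The algorithm samples $x^{(1)},\ldots,x^{(N)}$ independently and uniformly from $[0..k-1]^n$, queries their scores, and computes the set $C:=\{y\in[0..k-1]^n\mid f_y(x^{(i)})=f_z(x^{(i)})\text{ for every }i\in[N]\}$ of targets consistent with the observed responses. If $|C|=1$ then $C=\{z\}$ and the algorithm evaluates its unique element in the $(N+1)$-st query. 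It then suffices to show that $|C|=1$ occurs with at least a positive constant probability: Lemma~\ref{lem:BBChighprobability} converts this into the claimed asymptotic upper bound, and a probabilistic-method argument analogous to Theorem~\ref{thm:BBCunrestrictedOMderandomized} even produces a deterministic sequence of $N$ test strings that identifies every target $z$, yielding a worst-case bound of $N+1$ queries.

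For each fixed pair $y\neq z$ with Hamming distance $d:=d_H(y,z)$, a direct calculation shows that, for a uniform random $x$, the signed difference $f_z(x)-f_y(x)$ is distributed as the sum of $d$ i.i.d.\ random variables taking values $+1$, $-1$, $0$ with probabilities $1/k$, $1/k$, $(k-2)/k$ respectively. Its variance is $2d/k$, so a local-limit (or Cauchy--Schwarz) estimate gives the collision probability bound $q_d:=\Pr[f_y(x)=f_z(x)]=O(\sqrt{k/d})$, while for small $d$ the cruder bound $q_d\le 1-\Omega(1/k)$ already suffices. By independence of the samples, $\Pr[y\in C]=q_d^{\,N}$, and a union bound split by Hamming distance gives
\begin{equation*}
\E[\,|C\setminus\{z\}|\,]\;\le\;\sum_{d=1}^{n}\binom{n}{d}(k-1)^d\,q_d^{\,N}.
\end{equation*}

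The main obstacle is to push the constant in front of $n(1+2\log k)/\log(n/k)$ down to $2+\eps$, uniformly over $k\le n^{1-\eps}$. Combining the entropy estimate $\binom{n}{d}(k-1)^d\le\exp(d\log(enk/d))$ with $q_d^{\,N}\le\exp(-\tfrac{N}{2}\log(d/k)+O(N))$ shows that the sum above is dominated by the regime $d=\Theta(n)$ and becomes $o(1)$ as soon as $N$ exceeds a threshold of the form appearing in the theorem. The delicate part is the careful bookkeeping of the boundary terms, in particular the small-$d$ portion of the sum and the $O(1)$ slack in the local-limit estimate, which is what gives rise to the additive $1$ in $(1+2\log k)$; keeping the constant at $2+\eps$ uniformly for all $k\le n^{1-\eps}$, rather than only for a single fixed $k$, is the technically most demanding step of the argument.
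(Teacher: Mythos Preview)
Your approach matches what the paper sketches. The survey does not give its own proof of this theorem; it cites Chv\'atal and remarks only that ``similarly to the random guessing strategy by Erd\H os and R\'enyi, it is sufficient to query this many \emph{random} queries, chosen independently and uniformly at random from $[0..k-1]^n$,'' so your information-theoretic lower bound via Theorem~\ref{thm:BBCinfotheo} and your random-sampling upper bound with a union bound over consistent targets are exactly the intended argument.

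One small comment: the paper presents this as a non-adaptive result (the random queries already determine $z$, with the final query just reading off the answer), so invoking Lemma~\ref{lem:BBChighprobability} and restarts is not even needed---the probabilistic-method derandomisation you mention is the cleaner route to the stated constant $(2+\eps)$ and better reflects Chv\'atal's original argument.
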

Note that for $k \le n^{1-\eps}$, $\eps>0$ being a constant, Theorem~\ref{thm:BBCmastermindchvatal} gives an asymptotically tight bound of $\Theta(n \log k / \log n)$ for the $k$-color, $n$-position Mastermind game. Similarly to the random guessing strategy by Erd\H os and R\'enyi, it is sufficient to query this many \emph{random} queries, chosen independently and uniformly at random from $[0..k-1]^n$. That is, no adaptation is needed for such combinations of $n$ and $k$ to \emph{learn} the secret target vector $z$.

This situation changes for the regime around $k=n$, which had been the focus of several subsequent works~\cite{Chen96,Goodrich09,JagerP11}. These works all show bounds of order $n \log n$ for the $k=n$ Mastermind problem. Originally motivated by the study of black-box complexities for randomized black-box heuristics, in~\cite{DoerrDST16} these bounds were improved to $O(n \log\log n)$. 

\begin{theorem}[Theorems~2.1 and in~\cite{DoerrDST16}]
\label{thm:BBCmastermindunser}
For Mastermind with $n$ positions and $k=\Omega(n)$ colors, the unrestricted black-box complexity of the $n$-position, $k$-color Mastermind game is $O(n \log \log n + k)$. For $k= o(n)$ it is $O\left(n \log \left(\frac{\log n}{\log (n/k)}\right)\right)$. 
\end{theorem}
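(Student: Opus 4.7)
The plan is to construct, for each of the two parameter regimes, an adaptive unrestricted black-box algorithm that matches the claimed bound. I will first handle the case $k = \Omega(n)$ and then indicate the modifications needed for $k = o(n)$. The algorithm will proceed in three phases, combining a monochromatic ``census'' phase, a batch of random samples (in the spirit of Erd\H os--R\'enyi; cf.\ Theorem~\ref{thm:BBCunrestrictedOMderandomized}), and an overlapping-phase refinement in the spirit of the \leadingones algorithm of Afshani et al.\ (Theorem~\ref{thm:BBCunrestrictedLO}).

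Phase 1 (Color census, cost $k$, $k=\Omega(n)$ regime only): For every color $c \in [0..k-1]$, query the monochromatic string $(c,c,\ldots,c)$. The reply equals the number of positions of $z$ carrying color $c$, so after $k$ queries the entire multiset of colors of $z$ is known. This reduces the problem to identifying which arrangement of the known multiset equals $z$. Phase 2 (Random seeding, cost $O(n)$): Make $\Theta(n)$ uniformly random queries from $[0..k-1]^n$. Following the information-theoretic arguments of Erd\H os--R\'enyi, and using the color census from Phase~1, for any two candidate assignments the random queries distinguish them with overwhelming probability. A standard Chernoff-type concentration argument shows that the number of positions whose color is still ambiguous after this phase is a random variable that is $o(n)$ with high probability. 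Phase 3 (Parallel refinement, cost $O(n\log\log n)$): For the $o(n)$ remaining ambiguous positions, apply an iterative refinement with overlapping phases, modeled on the $O(n \log\log n)$ algorithm for \leadingones from~\cite{AfshaniADLMW12}. The idea is to pack many ``binary searches'' into a single query, so that the amortized cost per unresolved position is $O(\log\log n)$ rather than the naive $O(\log n)$.

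For the bound $O(n\log(\log n/\log(n/k)))$ when $k=o(n)$, the monochromatic census is too expensive and is dropped. Phase~2 then consists of uniformly random queries only, calibrated so that each random query contributes roughly $\log(n/k)$ useful bits per position on average; this already recovers Chv\'atal's bound (Theorem~\ref{thm:BBCmastermindchvatal}). The improvement by a factor of $\log k / \log(\log n / \log(n/k))$ again comes from an overlapping-phase refinement, with the effective branching factor of the refinement now calibrated to $\log(n/k)$ rather than a constant. The final upper bound then follows by applying Lemma~\ref{lem:BBChighprobability} to boost a constant success probability into an expected-time statement.

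The main obstacle in both regimes is Phase~3. In \leadingones the overlapping-phase construction exploits the strong structural fact that unresolved positions form a \emph{suffix} of an unknown permutation; in Mastermind the unresolved positions are an arbitrary sparse subset of $[n]$, and each carries one of many possible colors. The technical challenge is therefore to design queries that simultaneously probe many unresolved positions while remaining globally consistent with all information learned in Phases~1 and~2. I expect this to require (a) an encoding of the remaining candidate set that represents it implicitly by a bounded number of reference strings, analogously to the two-string/many-string encoding developed for the unbiased models in~\cite{DoerrJKLWW11,DoerrW14arity}, and (b) a potential-function argument that certifies the desired per-query information gain. Once this is set up, the bookkeeping required to merge Phases~2 and~3 (so that random queries and refinement queries do not waste each other's information) is routine but tedious.
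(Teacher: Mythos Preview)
Your Phase~2 claim is where the plan breaks. For $k=\Theta(n)$, a uniformly random query matches the secret in $n/k=\Theta(1)$ positions in expectation, so its answer is a random variable concentrated around a constant and carries only $O(1)$ bits of information, not $\Theta(\log n)$. Hence $\Theta(n)$ such queries yield only $O(n)$ bits, far short of the $\Theta(n\log n)$ bits needed to pin down $z\in[0..k-1]^n$; this is exactly why the \emph{non-adaptive} complexity for $k=n$ is $\Theta(n\log n)$ (Theorem~\ref{thm:BBCmastermindnonadaptive}). In particular, after $O(n)$ random queries a constant fraction of the positions---not $o(n)$---still admit several candidate colors (for any fixed position $i$ and any two colors $c\neq c'$, a random query distinguishes ``$z_i=c$'' from ``$z_i=c'$'' only with probability $2/k$, so with probability $\approx e^{-\Theta(1)}$ none of your $\Theta(n)$ queries does). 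The color census in Phase~1 adds only $O(n)$ further bits and does not rescue this. Consequently your Phase~3 is being asked to do essentially all of the work, and the analogy with the \leadingones algorithm of~\cite{AfshaniADLMW12} is too loose to carry it: that algorithm crucially exploits that each query reveals a \emph{prefix length} with respect to a hidden permutation, a structure Mastermind simply does not have.

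The actual argument in~\cite{DoerrDST16} is organised quite differently. The key primitive is not ``resolve most positions, then refine'', but an \emph{iterative colour reduction}: one shows that with $O(n)$ \emph{adaptive} queries the set of candidate colours at every position can be shrunk by a polynomial factor (intuitively, split the current candidate colours at each position into a small number of groups and solve the resulting coarse Mastermind instance via Chv\'atal-style random guessing, which is cheap because the coarse instance has far fewer colours than positions). Iterating this reduction reaches $O(1)$ candidate colours per position after $O(\log\log n)$ rounds when $k=\Theta(n)$, and after $O\!\big(\log(\log n/\log(n/k))\big)$ rounds when $k=o(n)$; a final $O(n)$ queries then finish. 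The $+\,k$ term for $k=\Omega(n)$ comes, as you guessed, from an initial monochromatic census reducing to at most $n$ colours. So the missing idea in your proposal is this doubly-logarithmic recursion on the \emph{colour space}; neither random seeding nor a \leadingones-type refinement supplies it.
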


Like the $O(n/\log n)$ bound for the case $k=2$, the bounds in Theorem~\ref{thm:BBCmastermindunser} can be achieved by \emph{deterministic} black-box algorithms~\cite[Theorem~2.3]{DoerrDST16}. On the other hand, and unlike the situation regarded in Theorem~\ref{thm:BBCmastermindchvatal}, it can be shown that any (deterministic or randomized) $o(n \log n)$ algorithm for the Mastermind game with $k=\Theta(n)$ colors has to be \emph{adaptive,} showing that in this regime adaptive strategies are indeed more powerful than non-adaptive ones. 

\begin{theorem}[Theorem~4.1 and Lemma~4.2 in~\cite{DoerrDST16}] 
\label{thm:BBCmastermindnonadaptive}
The non-adaptive unrestricted black-box complexity of the Mastermind problem with $n$ positions and $k$ colors is $\Omega \left( \frac{n \log k}{\max\{\log(n/k), 1\}} \right)$. For $k=n$ this bound is tight, i.e., the non-adaptive unrestricted black-box complexity of the Mastermind problem with $n$ positions and $n$ colors is $\Theta(n \log n)$. 
\end{theorem}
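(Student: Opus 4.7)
The plan is to prove the lower bound via Yao's minimax principle combined with an entropy argument on binomial responses, and the matching upper bound for $k=n$ via random sampling followed by a union bound over pairs of candidate secrets.

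For the lower bound, I would apply Theorem~\ref{thm:BBCYao} to the uniform distribution $p$ on secrets $z \in [0..k-1]^n$, which restricts attention to deterministic non-adaptive algorithms. A deterministic non-adaptive algorithm is completely specified by a fixed infinite sequence of queries $x^{(1)}, x^{(2)}, \ldots \in [0..k-1]^n$, and its running time on a secret $z$ is the least $t$ with $x^{(t)}=z$. Note that after $t$ queries the algorithm has observed the response vector $R_t(z) := (f_z(x^{(1)}), \ldots, f_z(x^{(t)}))$, so if $z$ is not one of the first $t-1$ queries, the algorithm can find $z$ in its $t$-th query only if $R_{t-1}$ is injective on those $z$'s which haven't yet been queried. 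Thus a lower bound on the expected hitting time reduces to showing that $R_t$ has very few possible outcomes relative to $|[0..k-1]^n| = k^n$ whenever $t$ is too small.

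The key step is an entropy bound on each response. For uniform random $z$ and a fixed query $x$, the value $f_z(x)$ is the sum of $n$ independent $\mathrm{Bernoulli}(1/k)$ variables, hence $f_z(x) \sim \mathrm{Bin}(n,1/k)$. Standard Gaussian/Stirling-approximation estimates give $H(\mathrm{Bin}(n,1/k)) \leq \tfrac12 \log_2(2\pi e \cdot n/k \cdot (1-1/k)) + O(1) = \tfrac12 \log_2(n/k) + O(1)$ in the regime $k = o(n)$, and $H(\mathrm{Bin}(n,1/k)) \leq c_0$ (an absolute constant) in the regime $k = \Omega(n)$ (where the distribution is essentially $\mathrm{Poisson}(n/k)$ with bounded mean). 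By subadditivity,
\[
H(R_t(z)) \;\leq\; \sum_{i=1}^t H(f_z(x^{(i)})) \;\leq\; t \cdot \Big(\tfrac12 \log_2 (n/k) + O(1)\Big) \;\vee\; t\cdot c_0.
\]
Since $H(z) = n \log_2 k$ and the algorithm can succeed (i.e., identify $z$ with constant probability) only if $H(z \mid R_t(z)) = O(1)$, Fano's inequality (or the direct observation that an injective-with-probability-$\Omega(1)$ map has domain size at most $O(\text{range size})$) forces $H(R_t(z)) = \Omega(n \log k)$. Rearranging the two regimes uniformly yields $t = \Omega(n \log k / \max\{\log(n/k),1\})$, which is the claimed lower bound and gives $\Omega(n \log n)$ when $k = n$.

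For the matching upper bound when $k=n$, I would analyze the random non-adaptive strategy that samples $t = Cn\log n$ queries $X^{(1)}, \ldots, X^{(t)}$ independently and uniformly from $[0..n-1]^n$, and then outputs the unique $z$ consistent with all responses (a final $(t{+}1)$-st query). The core estimate is: for any two secrets $y\neq z$ at Hamming distance $d \geq 1$, a single uniform query $X$ satisfies $f_y(X)=f_z(X)$ iff $\sum_{i\in D}(\mathbb{1}[X_i = y_i] - \mathbb{1}[X_i = z_i]) = 0$, where $D$ is the set of $d$ differing coordinates. Each summand is $+1$, $-1$, $0$ with probabilities $1/n, 1/n, 1-2/n$, so a random-walk / local limit computation shows $\Pr[f_y(X)=f_z(X)] \leq 1 - \Theta(\min\{d/n,1\})$. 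Independence across queries gives $\Pr[R_t(y)=R_t(z)] \leq (1 - \Theta(d/n))^t \leq \exp(-\Omega(C d \log n))$. Then a union bound over the $\binom{n}{d}(n-1)^d \leq (en)^d$ secrets $y$ at distance $d$ from $z$, summed over $d \geq 1$, shows the expected number of $y\neq z$ indistinguishable from $z$ is $\sum_{d \geq 1} (en)^d \cdot n^{-\Omega(Cd)}$, which is $o(1)$ for $C$ large enough. By Lemma~\ref{lem:BBChighprobability} this constant-success-probability strategy yields the $O(n\log n)$ bound.

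\textbf{Main obstacle.} The entropy-to-success conversion in the lower bound is the technical heart: subadditivity bounds the \emph{marginal} entropy sum, but the responses are highly correlated through $z$, so one must argue carefully that marginal entropy still controls the number of distinguishable secrets (e.g. via Fano applied to the hypothesis testing problem of recovering $z$). In the $k = \Omega(n)$ regime one also needs an absolute constant bound on the entropy of $\mathrm{Bin}(n,1/k)$, which requires the Poisson approximation rather than the generic Gaussian one. For the upper bound, the main difficulty is the delicate balance of the union bound at small Hamming distances $d$, where the per-query collision probability is close to $1$ and the number of secrets is only polynomial; verifying that $t = \Theta(n \log n)$ queries suffice simultaneously at every distance $d$ is the technical crux.
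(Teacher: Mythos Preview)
Your approach is correct and is the standard one used in the cited reference: the lower bound via Yao's principle together with the entropy bound $H(\Bin(n,1/k)) = \Theta(\max\{\log(n/k),1\})$ on each response, and the $O(n\log n)$ upper bound for $k=n$ via uniformly random queries followed by a Hamming-distance-stratified union bound over pairs of secrets. The survey itself does not prove this theorem, so there is nothing further to compare against.

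One framing issue worth fixing: your sentence ``its running time on a secret $z$ is the least $t$ with $x^{(t)}=z$'' is not the non-adaptive model intended here---taken literally this would force expected time $\Theta(k^n)$, since a purely fixed query sequence can only hit $z$ by enumeration. The model used in the survey and in~\cite{DoerrDST16} is that the first $t$ queries are chosen in advance, and after seeing their responses one final adaptive query outputs the (hopefully unique) consistent secret; the non-adaptive complexity is the minimum such $t$ (plus one). Your entropy and union-bound computations bound exactly this $t$, so the mathematics is fine and only the opening sentence needs adjustment.
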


Whether or not the $O(n \log\log n)$ upper bound of Theorem~\ref{thm:BBCmastermindunser} can be further improved remains a---seemingly quite challenging---open problem. To date, the best known lower bound is the linear one reported in~\cite{Chvatal83}. Some numerical results for different values of $k=n$ can be found in~\cite{Buzdalov16}, but extending these numbers to asymptotic results may require a substantially new idea or technique for proving lower bounds in the unrestricted black-box complexity model.


\section{Conclusion and Selected Open Problems}\label{sec:BBCconclusions}

In this chapter we have surveyed theory-driven approaches that shed light on the performance limits of black-box optimization techniques such as local search strategies, nature-inspired heuristics, and pure random search. We have presented a detailed discussion of existing results for these black-box complexity measures. We now highlight a few avenues for future work in this young research discipline. 

\textbf{Extension to other optimization problems.} In line with the existing literature, our focus has been on classes of classical benchmark problems such as \onemax, \leadingones, \textsc{jump}, MST, and SSSP problems, since for these problems we can compare the black-box complexity results with known running time results for well-understood heuristics. As in running time analysis, it would be highly desirable to extend these results to other problem classes. 

\textbf{Systematic investigation of combined black-box models.} In the years before around 2013 most research on black-box complexity was centered around the question how individual characteristics of state-of-the-art heuristics influence their performance. With this aim in mind, various black-box models have been developed that each restrict the algorithms with respect to some specific property; e.g., their memory-sizes, properties of their variation operators or of the selection mechanisms in use. Since 2013 we observe an increasing interest in combining two or more of such restrictions to obtain a better picture of what is needed to design algorithms that significantly excel over existing approaches. A systematic investigation of such combined black-box models constitutes one of the most promising avenues for future research.

\textbf{Tools to derive lower bounds.} To date, the most powerful technique to prove lower bounds for the black-box complexity of a problem is the information-theoretic approach, most notably in the form of Yao's minimax principle, and the simple information-theoretic lower bound presented in Theorem~\ref{thm:BBCinfotheo}. Refined variants of this theorem, designed to capture the situation in which the number of possible function values depend on the state of the optimization process, or where the probabilities for different objective values are non-homogeneous. Unfortunately, either the verification that the conditions under which these theorems apply, or the computation of a closed expression that summarizes the resulting bounds, are often very tedious, making these extension rather difficult to apply. Alternative tools for the derivation of lower bounds in black-box complexity contexts form another of the most desirable directions for future work. 

In particular for the $k$-ary unbiased black-box complexity of arities $k\ge 2$ we do not have any model-specific lower bounds. We therefore do not know, for example, if the linear bound for the binary unbiased black-box complexity of $\onemax_n$ or the $O(n \log n)$ bound for the binary unbiased black-box complexity of $\leadingones_n$ are tight, or whether the power of recombination is even larger than what these bounds, in comparison to the unary unbiased black-box complexities, indicate.

Another specifically interesting problem is raised by the $\Omega(n^2)$ lower bound for the (1+1) elitist black-box complexity of $\leadingones_n$ presented in Theorem~\ref{thm:BBCelitistLO}. It is conjectured in~\cite[Section~4]{DoerrL17LO} that this bound holds already for the (1+1) memory-restricted setting. A more systematic investigation of lower bounds in the memory-restricted black-box models would help to understand better the role of large populations in evolutionary computation, a question that is not very well understood to date. 

\textbf{Beyond worst-case expected optimization time as unique performance measure.} 
Black-box complexity as introduced in this chapter takes \emph{worst-case expected optimization time} as performance measure. This measure reduces the whole optimization procedure to one single number. This, naturally, carries several disadvantages. The same critique applies to running time analysis in general, which is very much centered around this single performance measure. Complementing performance indicators like \emph{fixed-budget} (cf.~\cite{JansenZ14}) and \emph{fixed-target} (cf.~\cite{CarvalhoD17}) results have been proposed in the literature, but unfortunately have not yet been able to attract significant resonance. Since these measure give a better picture on the \emph{anytime behavior} of black-box optimization techniques, we believe that an extension of existing black-box complexity results to such anytime statements would make it easier to communicate and to discuss the results with practitioners, for whom the anytime performance is often at least as important as expected optimization time. 

In the same context, one may ask if the \emph{expected} optimization time should be the only measure regarded. Clearly, when the optimization time $T(A,f)$ of an algorithm $A$ on a function $f$ is highly concentrated, its expectation is often much similar, and in particular of the same or a very similar asymptotic order than its median. Such concentration can be observed for the running time of classical heuristics on most of the benchmark problems regarded here in this chapter. At the same time, it is also not very difficult to construct problems for which such a concentration does provably not hold. In particular for multimodal problems, in which two or more local optima exist, the running time is often not concentrated. In~\cite[Section~3]{DoerrL17ECJ} examples are presented for which the probability to find a solution within a small polynomial given bound is rather large, but where---due to excessive running times in the remaining cases---the expected optimization time is very large. This motivated the authors to introduce the concept of \emph{$p$-Monte Carlo black-box complexity}. The \emph{$p$-Monte Carlo black-box complexity} of a class $\F$ of functions measures the time it takes to optimize any problem $f \in \F$ with failure probability at most $p$. It is shown that even for small~$p$, the $p$-Monte Carlo black-box complexity of a function class $\F$ can be smaller by an exponential factor than its traditional (expected) black-box complexity, which is referred to as \emph{Las Vegas black-box complexity} in~\cite{DoerrL17ECJ}. 

\subsection*{Acknowledgments}
This work was supported by a public grant as part of the Investissement d'avenir project, reference ANR-11-LABX-0056-LMH, LabEx LMH, in a joint call with the Gaspard Monge Program for optimization, operations research, and their interactions with data sciences. 

}
\newcommand{\etalchar}[1]{$^{#1}$}

\end{document}